\documentclass[11pt]{article}
\usepackage[margin=1in]{geometry}
\PassOptionsToPackage{authoryear}{natbib}
\usepackage[utf8]{inputenc}
\usepackage[T1]{fontenc}
\usepackage{hyperref}
\usepackage{url}
\usepackage{booktabs}
\usepackage{amsfonts}
\usepackage{nicefrac}
\usepackage{microtype}
\usepackage{xcolor}
\usepackage{mathtools}
\usepackage{amsmath}
\usepackage{amssymb}
\usepackage{amsthm}
\usepackage{enumitem}
\setlist[itemize]{leftmargin=*,itemsep=2pt,topsep=2pt,parsep=0pt}
\setlist[enumerate]{leftmargin=*,nosep}
\theoremstyle{plain}
\newtheorem{theorem}{Theorem}[section]
\newtheorem{lemma}[theorem]{Lemma}
\newtheorem{proposition}[theorem]{Proposition}
\newtheorem{corollary}[theorem]{Corollary}
\theoremstyle{definition}

\newtheorem{assumption}[theorem]{Assumption}
\theoremstyle{remark}
\newtheorem{remark}[theorem]{Remark}

\usepackage{makecell}
\usepackage{adjustbox}
\usepackage{multirow}
\usepackage{graphicx}
\usepackage[compact]{titlesec}
\usepackage{natbib}
\setcitestyle{authoryear,round}
\renewcommand{\eqref}[1]{Eq.~(\ref{#1})}

\title{Reward Transfer from Inverse Reinforcement Learning: A Coupled Minimax Approach}
\author{
Guang-Yuan Hao$^{1}$\thanks{Corresponding author: \href{mailto:gh463@cornell.edu}{gh463@cornell.edu}},
Lars van der Laan$^{2,3}$,
Aurélien Bibaut$^{2}$,
Nathan Kallus$^{1,2}$
\\[1.5em]
$^{1}$Cornell Tech, Cornell University
\\
$^{2}$Netflix Research
\\
$^{3}$Department of Statistics, University of Washington
}

\begin{document}

\maketitle

\begin{abstract}
We study the transfer of rewards learned using inverse reinforcement learning from expert demonstrations in one environment to reinforcement learning in a new, different environment.
This arises naturally when demonstrations are collected in a controlled environment.
We formulate the problem as a joint system of Bellman equations across the source and target environments and develop minimax estimators for the target soft-$q$-function. Whereas a sequential solution approach first estimates the source reward
and then plugs it into the target control problem, a coupled
approach solves the source and target system of equations jointly. We show that, in contrast to the sequential approach, the coupled approach
removes the first-order influence of source Bellman residual error. We characterize the local behavior of each approach, develop finite-sample soft-$q$-function error bounds, and prove regret guarantees for the resulting soft-control
policy. An empirical investigation using a sepsis simulator validates the theoretical comparison.
\end{abstract}

\section{Introduction}

Expert demonstrations, such as those from car drivers, help navigate
environments with unknown rewards, but are often collected in controlled
settings, such as closed-course test tracks, while learned control policies must
be deployed in new environments, such as city streets. We can imitate experts to
perform well in the same source environment where demonstrations are observed,
and we may even use inverse reinforcement learning (IRL) to improve on simple
behavior cloning \citep{ng2000algorithms,abbeel2004apprenticeship,ziebart2008maximum,fu2018learning,geng2020deep}.
But the target environment may have a different transition law, discount factor,
or soft-control regularization. For this, IRL is crucial: we can learn a reward
from demonstrations in the source environment and transfer it to the target
environment, learning a policy that optimizes the same reward function in a new
setting \citep{fu2018learning,schlaginhaufen2024towards}.
In this paper, we characterize how well this transfer can be done and which
approaches are preferable. In particular, we show the value in a coupled approach that
takes the target environment into account even when learning from the source.

In ordinary offline control, the Bellman equation uses a known reward, so the
main statistical error comes from target transitions. In reward transfer, the
target reward is itself estimated from source demonstrations and is defined only
after fixing a normalization, because rewards that induce the same source
behavior are not uniquely identified
\citep{ng1999policy,cao2021identifiability,skalse2023invariance,
rust1987optimal,hotz1993conditional,geng2020deep}. Target error therefore has
two sources: Bellman error from target transitions and reward error from source
demonstrations.

The most straightforward approach is modular: first estimate an
anchor-normalized source reward, then treat that estimate as fixed when solving
the target Bellman equation. This is simple and natural, but it lets source
Bellman residual error enter the target equation with first-order influence. We
show that the same population transfer target admits a coupled estimation
strategy with different local geometry. By solving the source and target
residual equations as one saddle-point system and eliminating the source KKT
block, the target equation acquires a Schur-complement correction. The resulting
profiled target residual is first-order insensitive to perturbations in the
source value nuisance.

\textbf{Contributions.}
\begin{itemize}\vspace{-0.5em}
    \item We formulate offline reward transfer from source IRL to target
    soft-control under shifts in dynamics, discounting, and regularization.
    \item We compare modular and coupled minimax transfer estimators that share
    the same population target but have different local KKT geometry.
    \item We derive the profiled target equation induced by the coupled KKT
    system and show that it cancels first-order sensitivity to the source
    Bellman residual.
    \item We give first-order error decompositions showing that both estimators
    retain source operator and target empirical error, while only modular
    transfer retains the direct source-residual term.
    \item We prove finite-sample high-probability value bounds over finite
    function classes and convert them into regret bounds for the target
    soft-control policy.
    \item We demonstrate the methods empirically in a sepsis simulator, illustrating our theory's predictions.
\end{itemize}

\section{Problem Setup: Reward Transfer from Source IRL to Target Control}
\label{sec:setup}

\paragraph{Environments.} We consider two Markov decision processes (MDPs) over
a shared state space \(\mathcal S\) and finite action space \(\mathcal A\),
with a shared reward function
\(r^\star:\mathcal S\times\mathcal A\to\mathbb R\), but potentially distinct
transition kernels \(P_1\) and \(P_2\). We refer to the former, indexed by
\(1\), as the \textit{source} environment and the latter, indexed by \(2\), as
the \textit{target} environment.

\paragraph{Soft control.} We consider soft control problems in both
environments: in the source, this specifies the agent we assume to observe, and
in the target, it specifies the task we aim to solve. For
\(k\in\{1,2\}\), fix a known full-support reference policy
\(\pi_{k,\rm ref}\), discount \(\gamma_k\in(0,1)\), and temperature
\(\tau_k>0\). In environment \(k\), policy \(\pi_k^\star\) maximizes, over
all measurable policies \(\pi\),
\begin{equation}
\label{eq:soft_V_pi}
\textstyle
V^\pi_k(s):=\mathbb E_{P_k,\pi}\!\bigl[
\sum_{t=0}^\infty \gamma_k^t
\{r^\star(s_t,a_t)-\tau_k
\log \frac{\pi(a_t\mid s_t)}{\pi_{k,\rm ref}(a_t\mid s_t)}\}
\mid s_0=s\bigr],
\end{equation}
where the expectation is taken over
\(a_t\mid s_t\sim\pi(\cdot\mid s_t)\) and
\(s_{t+1}\mid s_t,a_t\sim P_k(\cdot\mid s_t,a_t)\). We normalize
\(\tau_1=1\) without loss of generality by measuring rewards in units of the
source temperature; the target temperature \(\tau_2\) is therefore interpreted
relative to the source temperature. The reference policies are part of the
soft-control objectives, not necessarily the policies that generated the data.
If \(\pi_{1,\rm ref}\) is uniform, the source problem reduces to the usual
maximum-entropy IRL setting \citep{ziebart2008maximum,rust1987optimal}.

\paragraph{Data.} From each
environment, \(k\in\{1,2\}\), we collect $n_k$ i.i.d.~reward-free transitions
\[\mathcal D_k=\{s_i^{(k)},a_i^{(k)},{s_i^{(k)}}':i=1,\ldots,n_k\}\quad\text{with
${s_i^{(k)}}'\mid s_i^{(k)},a_i^{(k)}\sim P_k$.}\]
We let \(\rho_k\),
\(\rho_k^{\mathcal S}\), and \(\pi_{bk}\) denote the joint state-action,
marginal state, and conditional action distributions in dataset \(k\), so that
$(s_i^{(k)},a_i^{(k)},{s_i^{(k)}})\sim\rho_k\times P_k$ and \(\rho_k(s,a)=\rho_k^{\mathcal S}(s)\pi_{bk}(a\mid s)\).

\paragraph{Expert.}
In following with maximum entropy IRL \cite{ziebart2008maximum,rust1987optimal}, we further assume that the source behavior policy is the source soft-optimal
policy: \(\pi_{b1}=\pi_1^\star\). This key assumption allows us to infer the reward function shared between environments. (The target behavior policy \(\pi_{b2}\) is unrestricted.)

\paragraph{Learning objective.}
Our statistical target is the target soft \(q\)-function:
\begin{align}
    Q_2^\star(s,a)
    &:=
    r^\star(s,a)
    +\gamma_2\,
    \mathbb E_{s'\sim P_2(\cdot\mid s,a)}
    [V_2^{\pi_2^\star}(s')],
\end{align}
where \(\pi_2^\star\) and \(V_2^{\pi_2^\star}\) are defined by the target
soft-control problem in \eqref{eq:soft_V_pi}. This is the value of taking the
fixed action \(a\) at state \(s\) and then following the target soft-optimal
policy.

\subsection{Source IRL and reward normalization}
\label{subsec:source_irl_module}

Let \(\pi_{b1}\) be the source expert behavior policy. IRL identifies rewards
only up to state-dependent transformations, so we select a representative by
fixing an anchor policy \(\mu\) and anchor function
\(g:\mathcal S\to\mathbb R\) through \((\mu r)(s)=g(s)\). Thus \(g(s)\) fixes
the mean one-step reward in state \(s\) under \(\mu(\cdot\mid s)\). The anchor
policy is distinct from the soft-control reference \(\pi_{1,\rm ref}\):
\(\pi_{1,\rm ref}\) appears in the expert objective, while \(\mu\) only selects
one reward from an equivalence class. Once fixed, transfer becomes a standard
statistical problem of estimating and using this representative. Such
normalizations are standard in dynamic discrete-choice and IRL models:
anchor-action, outside-option, sum-to-zero, and data-driven value
normalizations all arise as choices of \((\mu,g)\)
\citep{rust1987optimal,hotz1993conditional,bajari2010identification,
geng2020deep,van2025efficient}; see Appendix~\ref{app:normalizations}.

Following \citet{van2025efficient}, define
\(u_b^\star(s,a):=\log\{\pi_{b1}(a\mid s)/\pi_{1,\rm ref}(a\mid s)\}\) and
\(u_g^\star:=u_b^\star-g\).
The source value representation solves
\begin{equation}
\label{eq:source_q1_equation}
q_1^\star
=u_g^\star+\gamma_1P_1^\mu q_1^\star
=(I-\gamma_1P_1^\mu)^{-1}u_g^\star .
\end{equation}
The anchor-normalized reward is then
\begin{equation}
\label{eq:source_reward_formula}
r^\star=(I-\Pi_\mu)q_1^\star+g,
\qquad
r^\star(s,a)=q_1^\star(s,a)-(\mu q_1^\star)(s)+g(s).
\end{equation}
The variable \(q_1^\star\) is a source-side representation of the
normalized reward, not the target value function. This distinction is important:
all target consequences of source estimation error pass through the linear map
\(q_1\mapsto(I-\Pi_\mu)q_1+g\). The main analysis treats \(\pi_{b1}\) as known
and focuses on transfer after source identification. Appendix~\ref{sec:learned_behavior_policy}
gives the extension in which \(\pi_{b1}\) is learned from demonstrations, and
Appendix~\ref{sec:learned_behavior_policy_nuisance} describes how to include
that policy estimator as an additional nuisance.

\subsection{Target soft control}
\label{subsec:target_control_module}

Given the normalized source reward, transfer means solving the target control
problem with the target dynamics and target regularization. The target stage
does not imitate the source policy; it optimizes the same recovered task
objective in the new environment. For the minimax inequality formulation, use
the shifted reward \(r_C^\star:=r^\star+C=(I-\Pi_\mu)q_1^\star+g+C\) and shifted
target action value \(q_2^\star:=Q_2^\star+C/(1-\gamma_2)\). The constant shift
leaves the induced target policy unchanged. Define
\[
\textstyle
\Omega_{\tau_2,\pi_{2,\rm ref}}(q)(s)
:=\tau_2\log\sum_{a\in\mathcal A}
\pi_{2,\rm ref}(a\mid s)\exp\{q(s,a)/\tau_2\}.
\]
We write \(\Omega:=\Omega_{\tau_2,\pi_{2,\rm ref}}\). The transferred target value and
policy satisfy
\begin{align*}
    q_2^\star
    &=(I-\Pi_\mu)q_1^\star+g+C+\gamma_2P_2\Omega(q_2^\star),
    \quad
    \pi_2^\star(a\mid s)
    =
    \frac{\pi_{2,\rm ref}(a\mid s)\exp(q_2^\star(s,a)/\tau_2)}
    {\sum_{a'}\pi_{2,\rm ref}(a'\mid s)\exp(q_2^\star(s,a')/\tau_2)} .
\end{align*}
Thus source error enters target control only through the reward map
\(q_1\mapsto(I-\Pi_\mu)q_1+g+C\).
\subsection{Coupled residual system}
\label{subsec:coupled_transfer_system}

It is useful to write source recovery and target control as one residual
system. This does not change the population solution, but it exposes the path by
which source estimation error can enter the target equation. The population
transfer target solves \(b_1(q_1^\star)=0\) and
\(b_2(q_1^\star,q_2^\star)=0\), where
\begin{equation}
\label{eq:coupled_residuals}
    b_1(q_1):=u_g^\star+\gamma_1P_1^\mu q_1-q_1,
    \qquad
    b_2(q_1,q_2):=(I-\Pi_\mu)q_1+g+C+\gamma_2P_2\Omega(q_2)-q_2 .
\end{equation}
This coupled residual representation is the central object of the paper: it
makes explicit that transfer is a source--target estimation problem, not a
standard target-control problem with a known reward.

\section{Minimax Estimators for Offline Reward Transfer}
\label{sec:minimax_estimators}

Both estimators below are built from the residuals in
\eqref{eq:coupled_residuals}. Let \(\widehat b_1\) and \(\widehat b_2\) denote
the empirical residuals obtained by replacing \(P_1^\mu\) and \(P_2\) with
sample analogues. Let \(\mathcal Q_1,\mathcal Q_2\) be primal classes and
\(\mathcal L_1,\mathcal L_2\) be dual classes.
The primal functions are candidate source and target value representations. The
dual functions test Bellman residuals: if a residual is large in a direction
represented by the dual class, the saddle objective penalizes it. The quadratic
terms select stable solutions among functions that approximately satisfy the
residual equations. This form lets us compare two estimators built from the same
equations but coupled differently.

\subsection{Modular and coupled empirical estimators}
\label{subsec:modular_coupled_estimators}

\paragraph{Modular minimax transfer.}
The modular estimator first solves source recovery,
\begin{equation}
\label{eq:mod_stage1_minimax}\textstyle
(\widehat q_1^{\rm mod},\widehat l_1^{\rm mod})
\in
\arg\min_{q_1\in\mathcal Q_1}\max_{l_1\in\mathcal L_1}
\left\{\frac12\|q_1\|_{\rho_1}^2+
\langle l_1,\widehat b_1(q_1)\rangle_{\rho_1}\right\},
\end{equation}
and then plugs \(\widehat q_1^{\rm mod}\) into the target problem,
\begin{equation}\textstyle
\label{eq:mod_stage2_minimax}
(\widehat q_2^{\rm mod},\widehat l_2^{\rm mod})
\in
\arg\min_{q_2\in\mathcal Q_2}\max_{l_2\in\mathcal L_2}
\left\{\frac12\|q_2\|_{\rho_2}^2+
\langle l_2,\widehat b_2(\widehat q_1^{\rm mod},q_2)\rangle_{\rho_2}\right\}.
\end{equation}

This separation is attractive computationally and conceptually: the source IRL
module can be developed without access to the target data. Its statistical cost
is that any source residual left after the first stage is inserted directly into
the target reward.

\paragraph{Coupled minimax transfer.}
The coupled estimator solves a single saddle-point problem. It uses the same
residuals as the modular estimator, but the source block is allowed to adjust in
the presence of the target block. For \(\beta\ge0\), define
\begin{equation}
\label{eq:empirical_coupled_lagrangian}
\widehat{\mathcal L}_{\rm coup}^{\beta}(q_1,q_2,l_1,l_2)
:=
\frac{\beta}{2}\|q_1\|_{\rho_1}^2
+
\frac12\|q_2\|_{\rho_2}^2
+
\langle l_1,\widehat b_1(q_1)\rangle_{\rho_1}
+
\langle l_2,\widehat b_2(q_1,q_2)\rangle_{\rho_2}.
\end{equation}
Then
\begin{equation}
\label{eq:coup_minimax_estimator}
(\widehat q_1^{\rm coup},\widehat q_2^{\rm coup},
\widehat l_1^{\rm coup},\widehat l_2^{\rm coup})
\in
\arg\min_{q_1\in\mathcal Q_1,\,q_2\in\mathcal Q_2}
\max_{l_1\in\mathcal L_1,\,l_2\in\mathcal L_2}
\widehat{\mathcal L}_{\rm coup}^{\beta}(q_1,q_2,l_1,l_2).
\end{equation}
The coupled estimator uses the same residual equations as the modular estimator,
but it does not freeze the source block before solving the target block.

The parameter \(\beta\) controls how strongly the coupled objective regularizes
the source representation. The main comparison is not about changing the
population target; it is about changing the empirical geometry around that
target.

\subsection{Population target and KKT geometry}
\label{subsec:population_equivalence_kkt_geometry}

Before comparing errors, we establish two facts. First, the modular and coupled
procedures define the same population transfer target. Second, even when
the population target is the same, their KKT systems can transmit empirical
source errors differently.

The corresponding population coupled Lagrangian is
\[
\mathcal L_{\rm coup}^{\beta}(q_1,q_2,l_1,l_2)
:=
\frac{\beta}{2}\|q_1\|_{\rho_1}^2
+
\frac12\|q_2\|_{\rho_2}^2
+
\langle l_1,b_1(q_1)\rangle_{\rho_1}
+
\langle l_2,b_2(q_1,q_2)\rangle_{\rho_2}.
\]

\begin{theorem}[Well-posedness and tightness]
\label{thm:population_well_posed_tightness}
Assume \(\gamma_1,\gamma_2\in[0,1)\), \(P_1^\mu\) and \(P_2\) are Markov
operators, and \(\Omega_{\tau_2,\pi_{2,\rm ref}}\) is the log-sum-exp soft value
operator. Then the source equation has a unique solution \(q_1^\star\), and,
given \(q_1^\star\), the target soft Bellman equation has a unique solution
\(q_2^\star\). If \(r_C^\star=(I-\Pi_\mu)q_1^\star+g+C\ge0\), then the relaxed
constraint \(b_2(q_1^\star,q_2)\le0\) is tight at every minimum
\(\rho_2\)-norm feasible solution. Hence the relaxed and equality formulations
recover the same \((q_1^\star,q_2^\star)\).
\end{theorem}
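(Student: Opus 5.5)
Existence and uniqueness come from contraction arguments in the sup-norm on bounded functions. For the source, \(P_1^\mu\) is a Markov operator, so \(\|\gamma_1P_1^\mu\|_{\infty\to\infty}\le\gamma_1<1\). Hence \(T_1(q_1)=u_g^\star+\gamma_1P_1^\mu q_1\) is a \(\gamma_1\)-contraction. Banach's fixed point theorem, equivalently the Neumann series for \((I-\gamma_1P_1^\mu)^{-1}\), gives the unique \(q_1^\star\) in \eqref{eq:source_q1_equation}; this uses boundedness of \(u_g^\star\), which holds under full support of \(\pi_{b1},\pi_{1,\rm ref}\) and bounded \(g\). Given \(q_1^\star\), the reward \(r_C^\star=(I-\Pi_\mu)q_1^\star+g+C\) is a fixed bounded function. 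The map \(T_2(q)=r_C^\star+\gamma_2P_2\Omega(q)\) is a \(\gamma_2\)-contraction because \(\Omega\) is monotone and translation-equivariant: \(\Omega(q+c)=\Omega(q)+c\) for constants \(c\). Hence \(|\Omega(q)-\Omega(q')|\le\|q-q'\|_\infty\) pointwise. Banach again gives the unique \(q_2^\star\).

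For tightness, first record that \(T_2\) is monotone (\(q\le q'\Rightarrow T_2q\le T_2q'\)), since both \(\Omega\) and \(P_2\) preserve order. The relaxed constraint \(b_2(q_1^\star,q)\le0\) means \(q\ge T_2q\). Iterating gives \(q\ge T_2q\ge T_2^2q\ge\cdots\to q_2^\star\), so every feasible \(q\) satisfies \(q\ge q_2^\star\) pointwise. Next, use \(r_C^\star\ge0\) to get \(q_2^\star\ge0\). Start value iteration at \(0\): \(T_2(0)=r_C^\star+\gamma_2P_2\Omega(0)=r_C^\star\ge0\), using \(\Omega(0)=\tau_2\log\sum_a\pi_{2,\rm ref}(a|s)=0\). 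Monotonicity then gives \(T_2^k0\ge0\) for all \(k\), and the limit is \(q_2^\star\ge0\). Therefore every feasible \(q\) satisfies \(q\ge q_2^\star\ge0\), so \(q^2\ge (q_2^\star)^2\) pointwise and \(\|q\|_{\rho_2}\ge\|q_2^\star\|_{\rho_2}\). Since \(q_2^\star\) is itself feasible, with equality, it is a minimum-norm feasible point.

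Any other minimizer \(\tilde q\) has \(\|\tilde q\|_{\rho_2}=\|q_2^\star\|_{\rho_2}\) together with \(\tilde q\ge q_2^\star\ge0\). Hence \(\int(\tilde q^2-(q_2^\star)^2)\,d\rho_2=0\) with a nonnegative integrand, so \(\tilde q=q_2^\star\) \(\rho_2\)-a.e. and the constraint is tight there. The main obstacle is this step, because the identification is only \(\rho_2\)-a.e., and the constraint involves \(P_2\) evaluated off the support of \(\rho_2\). I would handle it in one of two ways. The first is to state tightness \(\rho_2\)-a.e., which is what the \(\langle l_2,\cdot\rangle_{\rho_2}\) formulation can see. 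The second is to assume \(\rho_2\) has full support, or that \(P_2(\cdot|s,a)\) is absolutely continuous w.r.t.\ \(\rho_2^{\mathcal S}\). Under the second option, \(\Omega(\tilde q)=\Omega(q_2^\star)\) \(P_2\)-a.s., so \(T_2\tilde q=T_2q_2^\star=q_2^\star=\tilde q\) and \(b_2(q_1^\star,\tilde q)=0\).

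Combining the two parts, the relaxed inequality program and the equality system share the solution \((q_1^\star,q_2^\star)\). The source block has no relaxation, so \(q_1^\star\) is forced by uniqueness.
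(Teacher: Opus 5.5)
Your proof follows the paper's route step for step. Both use sup-norm contraction for the two fixed points; you also justify the nonexpansiveness of $\Omega$, which the paper only asserts. Both then run value iteration from $0$ to get $q_2^\star\ge0$, use the supersolution chain $q\ge T_2q\ge T_2^2q\ge\cdots\to q_2^\star$ for every feasible $q$, and compare squares pointwise. The only place you stall is the final step, and the obstacle you describe there does not exist. You already know $\tilde q\ge q_2^\star$ everywhere and that $T_2$ is monotone, so at every $(s,a)$ with $\tilde q(s,a)=q_2^\star(s,a)$,
\[
q_2^\star(s,a)=\tilde q(s,a)\ge (T_2\tilde q)(s,a)\ge (T_2q_2^\star)(s,a)=q_2^\star(s,a).
\]
This forces $b_2(q_1^\star,\tilde q)(s,a)=0$ regardless of what $\tilde q$ does at next states outside the support of $\rho_2$. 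Hence tightness holds $\rho_2$-a.e.\ with no extra assumption. This is the paper's closing argument in contrapositive form: strict slack at $(s,a)$ gives $\tilde q(s,a)>(T_2\tilde q)(s,a)\ge q_2^\star(s,a)$, which is strict domination on a set of positive $\rho_2$-measure and contradicts equality of norms. The paper's conclusion is exactly tightness $\rho_2$-a.s. So your first option is the right statement, but it needs this one line to be a proof. Your second option is unnecessary. Its absolute-continuity variant would also need $\rho_2$ to charge every action at the next state, since $\Omega(\tilde q)(s')$ depends on all of $\tilde q(s',\cdot)$.

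Your caution that identification holds only up to $\rho_2$-null sets is still apt for the last sentence of the statement. The paper's appeal to ``uniqueness of the source and target fixed points'' does not by itself upgrade $\rho_2$-a.e.\ tightness to $\tilde q=q_2^\star$ everywhere. The same sandwich takes you further without any support assumption. For $\gamma_2>0$, equality in the middle inequality of the display forces $\Omega(\tilde q)=\Omega(q_2^\star)$ $P_2(\cdot\mid s,a)$-a.s. Since $\tilde q\ge q_2^\star$ and $\pi_{2,\rm ref}$ has full support, this gives $\tilde q(s',\cdot)=q_2^\star(s',\cdot)$ at those next states. Iterating, $\tilde q=q_2^\star$ on every state--action pair forward-reachable from the support of $\rho_2$. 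That is all a program whose objective only sees $\rho_2$ can pin down.
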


The tightness condition rules out spurious solutions created by relaxing the
target Bellman equality into an inequality. It is a device for the minimax
formulation: once the shifted reward is
nonnegative, the minimum-norm feasible target value must satisfy the Bellman
equation with equality.

\begin{proposition}[Population equivalence]
\label{prop:population_equivalence}
Under Theorem~\ref{thm:population_well_posed_tightness}, the modular population
procedure and the coupled population minimax problem have the same primal
solution \((q_1^\star,q_2^\star)\).
\end{proposition}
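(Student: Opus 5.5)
The plan is to reduce both population problems to the constrained programs they Lagrangianize, and then apply Theorem~\ref{thm:population_well_posed_tightness}. The population modular procedure has two stages. Its first stage is $\min_{q_1}\max_{l_1}\{\tfrac12\|q_1\|_{\rho_1}^2+\langle l_1,b_1(q_1)\rangle_{\rho_1}\}$ with an unrestricted dual. The inner maximum is $+\infty$ unless $b_1(q_1)=0$ $\rho_1$-a.e., so the stage returns a minimum-norm element of $\{q_1:b_1(q_1)=0\}$. By the theorem this set is the singleton $\{q_1^\star\}$, which makes the choice of norm irrelevant.

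For the second stage, the dual $l_2$ ranges over a cone: nonnegative functions for the relaxed inequality formulation, or all functions for the equality one. In the relaxed case the stage is $\min\{\tfrac12\|q_2\|_{\rho_2}^2: b_2(q_1^\star,q_2)\le0\}$. The tightness part of Theorem~\ref{thm:population_well_posed_tightness} says every minimizer satisfies $b_2(q_1^\star,q_2)=0$, so every minimizer equals the unique soft Bellman solution $q_2^\star$. I would also check that the feasible set is nonempty: $q_2^\star$ itself is feasible. The modular primal solution is therefore $(q_1^\star,q_2^\star)$.

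For the coupled problem I would use the same sup-over-duals argument on $\mathcal L^\beta_{\rm coup}$. Since $l_1$ and $l_2$ enter additively and separately, $\max_{l_1,l_2}\mathcal L^\beta_{\rm coup}$ equals $\tfrac\beta2\|q_1\|^2+\tfrac12\|q_2\|^2$ on the joint feasible set $\{b_1(q_1)=0,\ b_2(q_1,q_2)\le0\}$ (or $=0$ in the equality version), and equals $+\infty$ off it. The feasibility constraint $b_1(q_1)=0$ forces $q_1=q_1^\star$ by uniqueness, whatever $\beta\ge0$ is; in particular $\beta=0$ poses no problem, because the objective never has to select $q_1$. The joint program therefore collapses to $\min\{\tfrac12\|q_2\|^2:b_2(q_1^\star,q_2)\le0\}$, which is exactly the modular second stage. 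The same tightness argument then gives $q_2=q_2^\star$, and both procedures share the primal solution.

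The main obstacle is making the statement ``the primal solution of the minimax problem is the solution of the constrained program'' rigorous. Reading the min-max in the stated order (min outside, max inside), the argument above needs no strong duality: it only needs the inner supremum to be taken over a dual class rich enough to detect violations. That is the population convention, e.g.\ $\mathcal L_k=L^2(\rho_k)$ or its nonnegative cone. I would also note that ``$b_1=0$'' holds only $\rho_1$-a.e., so uniqueness of $q_1^\star$ must be read in $L^2(\rho_1)$. If the paper intends saddle points rather than min-max values, I would add a Slater/convexity remark: $b_2$ is convex in $q_2$, since $\Omega$ is convex and $P_2$ is positive, so $-b_2$ is concave and the inequality constraint set is convex. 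I would also exhibit KKT multipliers at $(q_1^\star,q_2^\star)$ to show that a saddle point exists and that its primal part is the unique minimizer.
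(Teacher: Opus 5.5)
Your proposal is correct and is essentially the paper's argument. The inner supremum is finite only on the residual-feasible set, and the source constraint $b_1(q_1)=0$ forces $q_1=q_1^\star$ for every $\beta\ge0$. Uniqueness of the target fixed point then forces $q_2=q_2^\star$ for both procedures; in the relaxed, nonnegative-dual case this also uses the tightness clause of Theorem~\ref{thm:population_well_posed_tightness}, a case you treat explicitly and the paper's proof does not write out.

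One small caution on your side remark: if the residual equations are imposed only $\rho_k$-a.e., uniqueness in $L^2(\rho_k)$ does not follow from Theorem~\ref{thm:population_well_posed_tightness} alone. It also needs the relevant supports to be closed under the transitions entering $b_1,b_2$, which the coverage constants of Assumption~\ref{ass:population_dual_coverage} guarantee. The paper leaves this point implicit as well, by treating the equations as holding everywhere.
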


Thus any difference between the two estimators is not a population bias. It
comes from how sampling error perturbs the two estimating systems.

\begin{theorem}[Population saddle point and quadratic growth]
\label{thm:population_saddle_kkt}
Assume Theorem~\ref{thm:population_well_posed_tightness}. Suppose the dual
classes contain \((l_1^\star,l_2^\star)\), with \(l_2^\star\ge0\), satisfying
\begin{equation}
\label{eq:dual_adjoint_l2}
\left\langle l_2^\star,(I-\gamma_2P_2^{\pi_2^\star})h\right\rangle_{\rho_2}
=
\langle q_2^\star,h\rangle_{\rho_2},
\qquad \forall h,
\end{equation}
\begin{equation}
\label{eq:dual_adjoint_l1}
\left\langle l_1^\star,(I-\gamma_1P_1^\mu)h\right\rangle_{\rho_1}
=
\beta\langle q_1^\star,h\rangle_{\rho_1}
+
\left\langle l_2^\star,(I-\Pi_\mu)h\right\rangle_{\rho_2},
\qquad \forall h.
\end{equation}
Here \(P_2^{\pi_2^\star}:=P_2D\Omega(q_2^\star)\). Then
\((q_1^\star,q_2^\star,l_1^\star,l_2^\star)\) is a saddle point of
\(\mathcal L_{\rm coup}^{\beta}\) satisfying
\begin{equation}
\label{eq:kkt_stationarity_explicit}
\beta q_1^\star-(I-\gamma_1P_1^\mu)^\top l_1^\star+(I-\Pi_\mu)^\top l_2^\star=0,
\qquad
q_2^\star-(I-\gamma_2P_2^{\pi_2^\star})^\top l_2^\star=0.
\end{equation}
Moreover, for every \((q_1,q_2)\),
\begin{equation}
\label{eq:population_saddle_quadratic_growth}
\mathcal L_{\rm coup}^{\beta}(q_1,q_2,l_1^\star,l_2^\star)
-
\mathcal L_{\rm coup}^{\beta}(q_1^\star,q_2^\star,l_1^\star,l_2^\star)
\ge
\frac{\beta}{2}\|q_1-q_1^\star\|_{\rho_1}^2+
\frac12\|q_2-q_2^\star\|_{\rho_2}^2 .
\end{equation}
\end{theorem}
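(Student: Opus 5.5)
The plan is to establish the quadratic growth inequality \eqref{eq:population_saddle_quadratic_growth} first, and then read off the saddle-point property and the KKT conditions \eqref{eq:kkt_stationarity_explicit} from it.

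\textbf{Stationarity.} The adjoint identities \eqref{eq:dual_adjoint_l2}--\eqref{eq:dual_adjoint_l1} are, by definition of the $\rho_k$-adjoint, exactly the two equations in \eqref{eq:kkt_stationarity_explicit}. Here $(I-\Pi_\mu)^\top$ denotes the adjoint of the map $L^2(\rho_1)\to L^2(\rho_2)$, and the sign convention for the $q_1$ equation is matched by noting $\partial_{q_1}b_1=-(I-\gamma_1P_1^\mu)$ and $\partial_{q_1}b_2=(I-\Pi_\mu)$. Concretely, the Gateaux derivative of $q_1\mapsto\mathcal L^\beta_{\rm coup}(q_1,q_2^\star,l_1^\star,l_2^\star)$ at $q_1^\star$ in direction $h$ is
\[
\beta\langle q_1^\star,h\rangle_{\rho_1}-\langle l_1^\star,(I-\gamma_1P_1^\mu)h\rangle_{\rho_1}+\langle l_2^\star,(I-\Pi_\mu)h\rangle_{\rho_2},
\]
and this vanishes by \eqref{eq:dual_adjoint_l1}. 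Similarly, using $D\Omega(q_2^\star)$ in the chain rule, the $q_2$-derivative vanishes by \eqref{eq:dual_adjoint_l2}.

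\textbf{Dual side of the saddle.} For fixed $(q_1^\star,q_2^\star)$, Theorem~\ref{thm:population_well_posed_tightness} gives $b_1(q_1^\star)=0$ and $b_2(q_1^\star,q_2^\star)=0$. Hence $\mathcal L^\beta_{\rm coup}(q_1^\star,q_2^\star,l_1,l_2)$ does not depend on $(l_1,l_2)$, so $(l_1^\star,l_2^\star)$ trivially maximizes. If the relaxed formulation restricts $l_2\ge0$, the same conclusion holds.

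\textbf{Primal side and quadratic growth.} Write $q_k=q_k^\star+h_k$. The map $q_1\mapsto b_1(q_1)$ is affine, and $b_2$ is affine in $q_1$. Hence
\[
\mathcal L(q,l^\star)-\mathcal L(q^\star,l^\star)=\text{(linear part)}+\tfrac\beta2\|h_1\|^2_{\rho_1}+\tfrac12\|h_2\|^2_{\rho_2}+\gamma_2\bigl\langle l_2^\star,P_2[\Omega(q_2^\star+h_2)-\Omega(q_2^\star)-D\Omega(q_2^\star)h_2]\bigr\rangle_{\rho_2}.
\]
The linear part is exactly the directional derivative computed above, so it vanishes.

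\textbf{Main obstacle: sign of the remainder.} The remaining step is to show that the last term is nonnegative. The log-sum-exp operator $\Omega$ is convex pointwise in $q(s,\cdot)$, so
\[
\Omega(q_2^\star+h_2)-\Omega(q_2^\star)-D\Omega(q_2^\star)h_2\ge0
\]
pointwise. The Markov operator $P_2$ preserves nonnegativity, and $l_2^\star\ge0$ with $\gamma_2\ge0$, so the inner product is nonnegative. This is precisely where the hypothesis $l_2^\star\ge0$ is used. Dropping this term yields \eqref{eq:population_saddle_quadratic_growth}.

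\textbf{Conclusion.} Quadratic growth shows that $(q_1^\star,q_2^\star)$ minimizes $\mathcal L(\cdot,l^\star)$. Combined with the dual side, this gives the saddle point.
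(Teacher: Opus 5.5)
Your proposal is correct and matches the paper's proof essentially step for step. You expand $\mathcal L_{\rm coup}^{\beta}(\cdot,\cdot,l_1^\star,l_2^\star)$ around $(q_1^\star,q_2^\star)$, cancel the linear terms with the two adjoint identities (taking $h=q_1-q_1^\star$ and $h=q_2-q_2^\star$), sign the remaining $\Omega$-term using convexity of log-sum-exp, positivity of $P_2$, and $l_2^\star\ge0$, and get the dual side of the saddle from primal feasibility making the Lagrangian independent of $(l_1,l_2)$; the only cosmetic difference is that you isolate the Bregman remainder of $\Omega$, whereas the paper bounds $\Omega(q_2)-\Omega(q_2^\star)\ge D\Omega(q_2^\star)[\Delta_2]$ directly inside the target bracket.
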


The adjoint identity \eqref{eq:dual_adjoint_l1} is the key difference between
coupled and modular transfer: the coupled source stationarity equation includes
the target-to-source feedback term
\(\langle l_2^\star,(I-\Pi_\mu)h\rangle_{\rho_2}\).
Modular source estimation has
only the source self term.
The quadratic-growth inequality says that, after the correct dual certificate
is chosen, the population saddle objective controls squared error in the primal
variables. This later lets residual concentration imply value-function
concentration.

The next assumption is a coverage condition for these dual certificates. It
requires the sampling distributions to cover the discounted trajectories whose
residuals the dual variables represent. The cross-coverage terms appear only
because coupled transfer sends target information back through the source
adjoint equation.

\begin{assumption}[Coverage for dual boundedness]
\label{ass:population_dual_coverage}
Let \(d^{P_2}_{\pi_2^\star,\rho_2}\) and \(d^{P_1}_{\mu,\rho_1}\) denote the
usual discounted occupancies induced by \(\rho_2\) and \(\rho_1\). Assume
\(\|q_1^\star\|_\infty\le B_{Q1}\), \(\|q_2^\star\|_\infty\le B_{Q2}\),
\[
\left\|d^{P_2}_{\pi_2^\star,\rho_2}/\rho_2\right\|_\infty\le\kappa_2,
\qquad
\left\|d^{P_1}_{\mu,\rho_1}/\rho_1\right\|_\infty\le\kappa_1,
\]
and, for the coupled cross term,
\[
\left\|d^{P_2}_{\pi_2^\star,\rho_2}/\rho_1\right\|_\infty\le\kappa_{12},
\qquad
\left\|\mu/\pi_{b1}\right\|_\infty\le\kappa_{\mu\mid\pi_{b1}},
\qquad
\left\|d^{P_2,\mathcal S}_{\pi_2^\star,\rho_2}/\rho_1^{\mathcal S}\right\|_\infty
\le\kappa_{12}^{\mathcal S}.
\]
\end{assumption}

Here, \(\kappa_2\) controls the target
dual, \(\kappa_1\) controls ordinary source evaluation under \(\mu\), and
\(\kappa_{12},\kappa_{12}^{\mathcal S}\) control the target-to-source feedback
needed by the coupled KKT system.

\begin{proposition}[Dual representations and boundedness]
\label{prop:dual_representation_coverage}
\label{prop:dual_boundedness_coverage}
Under Assumption~\ref{ass:population_dual_coverage}, we have
\begin{align*}
\textstyle l_2^\star
&\textstyle =
\frac{(I-\gamma_2(P_2^{\pi_2^\star})^\top)^{-1}(\rho_2\odot q_2^\star)}{\rho_2},
\qquad
l_{1,{\rm mod}}^\star =l_{1,{\rm self}}^\star
=
\frac{(I-\gamma_1(P_1^\mu)^\top)^{-1}(\rho_1\odot q_1^\star)}{\rho_1},
\\\textstyle
l_{1,{\rm coup}}^\star
&\textstyle =
\beta l_{1,{\rm self}}^\star+l_{1,{\rm cross}}^\star,
\qquad
l_{1,{\rm cross}}^\star
=
\frac{(I-\gamma_1(P_1^\mu)^\top)^{-1}(I-\Pi_\mu)^\top(\rho_2\odot l_2^\star)}{\rho_1}.
\end{align*}
Consequently,
\begin{align*}
\|l_2^\star\|_\infty
&\textstyle\le \frac{\kappa_2}{1-\gamma_2}B_{Q2},
~
\|l_{1,{\rm mod}}^\star\|_\infty
\le \frac{\kappa_1}{1-\gamma_1}B_{Q1},
~
\|l_{1,{\rm coup}}^\star\|_\infty
\le
\beta\frac{\kappa_1}{1-\gamma_1}B_{Q1}
+
\frac{\kappa_1(\kappa_{12}+\kappa_{\mu\mid\pi_{b1}}\kappa_{12}^{\mathcal S})}
{(1-\gamma_1)(1-\gamma_2)}B_{Q2}.
\end{align*}
\end{proposition}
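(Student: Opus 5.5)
The plan is to solve the adjoint identities \eqref{eq:dual_adjoint_l2} and \eqref{eq:dual_adjoint_l1} explicitly by rewriting the $\rho_k$-weighted inner products as unweighted pairings. Then each operator can be inverted by a Neumann series, and each resulting measure can be bounded via the coverage ratios in Assumption~\ref{ass:population_dual_coverage}. Write $\langle f,h\rangle_{\rho}=\sum f\,\rho\,h$, so that $\langle l,Ah\rangle_\rho=\langle A^\top(\rho\odot l),h\rangle$ in the unweighted (counting/Lebesgue) pairing.

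\textbf{Target dual.} Identity \eqref{eq:dual_adjoint_l2} for all $h$ becomes $(I-\gamma_2(P_2^{\pi_2^\star})^\top)(\rho_2\odot l_2^\star)=\rho_2\odot q_2^\star$. Since $P_2^{\pi_2^\star}=P_2D\Omega(q_2^\star)$ is a Markov operator (the softmax gradient is a probability vector), $\gamma_2$-contraction makes the inverse a convergent Neumann series. This yields the stated formula for $l_2^\star$. For the bound, write $\rho_2\odot|q_2^\star|\le B_{Q2}\rho_2$. The Neumann series applied to $\rho_2$ equals $(1-\gamma_2)^{-1}d^{P_2}_{\pi_2^\star,\rho_2}$ under the normalization $d=(1-\gamma)\sum_t\gamma^t(P^\top)^t\rho$. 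Positivity of the operator then gives $|l_2^\star|\le B_{Q2}(1-\gamma_2)^{-1}d^{P_2}_{\pi_2^\star,\rho_2}/\rho_2\le \kappa_2B_{Q2}/(1-\gamma_2)$. The same argument with $\beta=1$ and no cross term, i.e.\ the modular stationarity $\langle l_1,(I-\gamma_1P_1^\mu)h\rangle_{\rho_1}=\langle q_1^\star,h\rangle_{\rho_1}$, gives $l_{1,\rm self}^\star$ and its bound $\kappa_1B_{Q1}/(1-\gamma_1)$.

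\textbf{Coupled source dual.} Identity \eqref{eq:dual_adjoint_l1} becomes
\[
(I-\gamma_1(P_1^\mu)^\top)(\rho_1\odot l_1^\star)=\beta\rho_1\odot q_1^\star+(I-\Pi_\mu)^\top(\rho_2\odot l_2^\star).
\]
Linearity splits the solution into $\beta l_{1,\rm self}^\star+l_{1,\rm cross}^\star$, which gives the representation. The self part is bounded as above, so the remaining work is bounding the cross part. First bound the nonnegative measure $\nu:=\rho_2\odot l_2^\star$. From the target step, $0\le\nu\le B_{Q2}(1-\gamma_2)^{-1}d^{P_2}_{\pi_2^\star,\rho_2}$; here $l_2^\star\ge0$ is assumed, and in any case absolute values can be used. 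Next, $(I-\Pi_\mu)^\top\nu=\nu-\Pi_\mu^\top\nu$, where $(\Pi_\mu^\top\nu)(s,a)=\mu(a\mid s)\nu^{\mathcal S}(s)$ and $\nu^{\mathcal S}$ is the state marginal. The triangle inequality then gives
\[
|(I-\Pi_\mu)^\top\nu|\le\nu+\mu\,\nu^{\mathcal S}.
\]
Compare each term with $\rho_1=\rho_1^{\mathcal S}\pi_{b1}$. The first satisfies $\nu\le B_{Q2}(1-\gamma_2)^{-1}\kappa_{12}\rho_1$. The second satisfies
\[
\mu\nu^{\mathcal S}\le B_{Q2}(1-\gamma_2)^{-1}\kappa_{\mu\mid\pi_{b1}}\kappa_{12}^{\mathcal S}\,\pi_{b1}\rho_1^{\mathcal S}.
\]
Together, $|(I-\Pi_\mu)^\top\nu|\le c\,\rho_1$ with $c=B_{Q2}(\kappa_{12}+\kappa_{\mu\mid\pi_{b1}}\kappa_{12}^{\mathcal S})/(1-\gamma_2)$. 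Since the Neumann series operator $(I-\gamma_1(P_1^\mu)^\top)^{-1}$ is positive, applying it and then dividing by $\rho_1$ gives $|l_{1,\rm cross}^\star|\le c\,(1-\gamma_1)^{-1}d^{P_1}_{\mu,\rho_1}/\rho_1\le c\,\kappa_1/(1-\gamma_1)$. Adding the $\beta$-scaled self bound gives the claim.

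\textbf{Main obstacle.} The delicate step is the cross term. Three points need care: the signed operator $(I-\Pi_\mu)^\top$ must be dominated by a positive measure before the positivity of the resolvent can be used; the resulting dominating measure lives on $\rho_2$'s occupancy and must be transported to $\rho_1$ using both the joint and the state-marginal ratios; and the normalization convention of $d$ must be fixed consistently so that the factors $(1-\gamma_k)^{-1}$ come out exactly as stated. One also needs $\rho_k>0$ where the measures are supported, which is implied by the finite coverage ratios.
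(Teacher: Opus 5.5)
Your proposal is correct and follows essentially the same route as the paper. You rewrite the weighted adjoint identities as measure equations inverted by the positive Neumann-series resolvent, and identify the resolvent applied to $\rho_k$ with $(1-\gamma_k)^{-1}$ times the discounted occupancy. For the cross term, you dominate $|(I-\Pi_\mu)^\top(\rho_2\odot l_2^\star)|$ by the occupancy plus its $\mu$-weighted state marginal, then transport to $\rho_1$ via $\kappa_{12}$, $\kappa_{\mu\mid\pi_{b1}}$, and $\kappa_{12}^{\mathcal S}$ before applying the source resolvent and $\kappa_1$.
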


The resolvent formulas are backward occupancy representations. They show why
coverage appears in the constants: a dual variable measures how residual error
accumulates along discounted trajectories before being compared under the
sampling distribution. The coupled source dual has an additional cross term
because target residuals are pushed back through the source reward map.

\begin{corollary}[First-order contrast between modular and coupled transfer]
\label{cor:first_order_modular_coupled_contrast}
Let \(\Delta_i=q_i-q_i^\star\), and let \(R_\Omega(\Delta_2)\) denote the
second-order remainder from Taylor expanding \(\Omega\) around \(q_2^\star\).
With coupled dual variables,
\[
\mathcal L_{\rm coup}^{\beta}(q_1^\star+\Delta_1,q_2^\star+\Delta_2,l_1^\star,l_2^\star)
-
\mathcal L_{\rm coup}^{\beta}(q_1^\star,q_2^\star,l_1^\star,l_2^\star)
=
\frac{\beta}{2}\|\Delta_1\|_{\rho_1}^2+
\frac12\|\Delta_2\|_{\rho_2}^2+R_\Omega(\Delta_2).
\]
With the modular source dual \(l_{1,{\rm mod}}^\star\), the same expansion
contains the surviving linear term
\[
\left\langle l_2^\star,(I-\Pi_\mu)\Delta_1\right\rangle_{\rho_2}.
\]
Thus modular transfer is first-order sensitive to source perturbations, while
coupled transfer cancels that sensitivity through \eqref{eq:dual_adjoint_l1}.
\end{corollary}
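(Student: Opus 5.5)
The argument is a direct expansion of the population coupled Lagrangian around \((q_1^\star,q_2^\star)\) with the dual variables held fixed. I will separate the terms that are linear in \((\Delta_1,\Delta_2)\) from those that are not, and then use the adjoint identities of Theorem~\ref{thm:population_saddle_kkt} to show that the linear part vanishes. Write \(q_i=q_i^\star+\Delta_i\).

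First, by linearity of \(b_1\) and primal feasibility \(b_1(q_1^\star)=0\), we have \(b_1(q_1)=-(I-\gamma_1P_1^\mu)\Delta_1\). For the target residual, Taylor expand \(\Omega(q_2^\star+\Delta_2)=\Omega(q_2^\star)+D\Omega(q_2^\star)\Delta_2+r_\Omega(\Delta_2)\). Using \(b_2(q_1^\star,q_2^\star)=0\), this gives
\[
b_2(q_1,q_2)=(I-\Pi_\mu)\Delta_1-(I-\gamma_2P_2^{\pi_2^\star})\Delta_2+\gamma_2P_2r_\Omega(\Delta_2).
\]
I define \(R_\Omega(\Delta_2):=\langle l_2^\star,\gamma_2P_2r_\Omega(\Delta_2)\rangle_{\rho_2}\). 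Since \(l_2^\star\ge0\) and the log-sum-exp is convex, this remainder is nonnegative, which is consistent with the quadratic growth in \eqref{eq:population_saddle_quadratic_growth}.

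The quadratic terms expand as
\[
\tfrac{\beta}{2}\|q_1\|^2=\tfrac\beta2\|q_1^\star\|^2+\beta\langle q_1^\star,\Delta_1\rangle+\tfrac\beta2\|\Delta_1\|^2,
\]
and analogously for \(q_2\). Collecting terms, the Lagrangian difference equals \(\frac\beta2\|\Delta_1\|_{\rho_1}^2+\frac12\|\Delta_2\|_{\rho_2}^2+R_\Omega(\Delta_2)\) plus a linear part
\[
\beta\langle q_1^\star,\Delta_1\rangle_{\rho_1}-\langle l_1^\star,(I-\gamma_1P_1^\mu)\Delta_1\rangle_{\rho_1}+\langle l_2^\star,(I-\Pi_\mu)\Delta_1\rangle_{\rho_2}
+\langle q_2^\star,\Delta_2\rangle_{\rho_2}-\langle l_2^\star,(I-\gamma_2P_2^{\pi_2^\star})\Delta_2\rangle_{\rho_2}.
\]
Applying \eqref{eq:dual_adjoint_l2} with \(h=\Delta_2\) removes the \(\Delta_2\) terms. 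Applying \eqref{eq:dual_adjoint_l1} with \(h=\Delta_1\) removes the three \(\Delta_1\) terms, and this establishes the coupled identity.

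For the modular contrast, I replace \(l_1^\star\) by \(l_{1,\rm mod}^\star\). By Proposition~\ref{prop:dual_representation_coverage}, this dual satisfies the self adjoint equation \(\langle l_{1,\rm mod}^\star,(I-\gamma_1P_1^\mu)h\rangle_{\rho_1}=\langle q_1^\star,h\rangle_{\rho_1}\), which contains no cross term. To match the corollary as stated, I take \(\beta=1\), which is the modular source penalty. The self terms then cancel, but \(\langle l_2^\star,(I-\Pi_\mu)\Delta_1\rangle_{\rho_2}\) has nothing to cancel against and therefore survives. This term is generically nonzero: \(l_2^\star\) is a nonzero nonnegative function whenever \(q_2^\star\ne0\), and \(I-\Pi_\mu\) has a nontrivial range.

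The main obstacle is bookkeeping rather than depth. The adjoint identities hold for all \(h\), but the two inner products live under different measures (\(\rho_1\) versus \(\rho_2\)), so the cross term must be kept under \(\rho_2\) exactly as it appears in \eqref{eq:dual_adjoint_l1}. The Taylor remainder also has to be handled at the level of the residual, so that \(R_\Omega\) is defined through \(l_2^\star\) and needs no further analysis.
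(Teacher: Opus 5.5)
Your proposal is correct and follows the paper's proof: expand the coupled Lagrangian with the duals held fixed, remove the $\Delta_2$-linear terms with \eqref{eq:dual_adjoint_l2} and the $\Delta_1$-linear terms with \eqref{eq:dual_adjoint_l1}, and note that the modular source dual cancels only the self terms, so $\langle l_2^\star,(I-\Pi_\mu)\Delta_1\rangle_{\rho_2}$ survives. The one cosmetic difference is how the $\beta$ mismatch is handled: you take $\beta=1$, whereas the paper writes the modular adjoint identity with $\beta\langle q_1^\star,h\rangle_{\rho_1}$ on the right-hand side, which amounts to using the rescaled dual $\beta\, l_{1,\rm self}^\star$; the surviving cross term is the same in either case.
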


\section{Structural Orthogonality of Coupled Minimax Transfer}
\label{sec:structural_orthogonality}

The preceding corollary shows the cancellation in the fixed-dual objective. We
now express the same phenomenon at the residual level. Let
\(\eta=(q_1,l_1)\) be the source block and \(\theta=(q_2,l_2)\) the target block.
The goal is to derive the target equation that remains after the source KKT
equations have been profiled out. This is the equation the coupled estimator
implicitly uses near the population solution. Write the population KKT system as
\[
S_N(\theta,\eta)=
\begin{pmatrix}
D_{q_1}\mathcal L_{\rm coup}^{\beta}(q_1,q_2,l_1,l_2)\\
b_1(q_1)
\end{pmatrix}=0,
\qquad
S_T(\theta,\eta)=
\begin{pmatrix}
D_{q_2}\mathcal L_{\rm coup}^{\beta}(q_1,q_2,l_1,l_2)\\
b_2(q_1,q_2)
\end{pmatrix}=0.
\]

\begin{theorem}[Profiled target equation and structural orthogonality]
\label{thm:profiled_target_orthogonality}
Assume \(\gamma_1<1\) and \(\|P_1^\mu\|\le1\). Then
\(I-\gamma_1P_1^\mu\) and
\[
D_\eta S_N(\theta^\star,\eta^\star)=
\begin{pmatrix}
\beta I & -(I-\gamma_1P_1^\mu)^\top\\
-(I-\gamma_1P_1^\mu) & 0
\end{pmatrix}
\]
are invertible. Define
\begin{equation}
\label{eq:psi_coup_def}
\Psi_{\rm coup}(\theta,\eta)
:=
S_T(\theta,\eta)
-
D_\eta S_T(\theta^\star,\eta^\star)
\bigl[D_\eta S_N(\theta^\star,\eta^\star)\bigr]^{-1}
S_N(\theta,\eta).
\end{equation}
Then \(\Psi_{\rm coup}\) is the Schur-complement target equation obtained by
linearized elimination of the source KKT block, and
\[
D_\eta\Psi_{\rm coup}(\theta^\star,\eta^\star)=0.
\]
\end{theorem}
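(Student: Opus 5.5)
The plan has three steps: invert $I-\gamma_1P_1^\mu$ by a Neumann series, invert the block Jacobian $D_\eta S_N$ by exhibiting its inverse explicitly, and then get the orthogonality by differentiating \eqref{eq:psi_coup_def} directly.

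\textbf{Step 1: invertibility of $A:=I-\gamma_1P_1^\mu$.} Since $\|\gamma_1P_1^\mu\|\le\gamma_1<1$, the Neumann series $\sum_{t\ge0}(\gamma_1P_1^\mu)^t$ converges in operator norm and gives $A^{-1}$, with $\|A^{-1}\|\le(1-\gamma_1)^{-1}$. The adjoint $A^\top$ is then invertible as well, with $(A^\top)^{-1}=(A^{-1})^\top$. This is the same fact used for uniqueness of $q_1^\star$ in Theorem~\ref{thm:population_well_posed_tightness}.

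\textbf{Step 2: the Jacobian $D_\eta S_N$.} First I compute it. With $\eta=(q_1,l_1)$, the first component of $S_N$ is
\[
D_{q_1}\mathcal L^\beta_{\rm coup}=\beta q_1-A^\top l_1+(I-\Pi_\mu)^\top l_2,
\]
read off from \eqref{eq:kkt_stationarity_explicit} as a Riesz representation. Its derivative in $(q_1,l_1)$ is $(\beta I,\,-A^\top)$. The second component is $b_1(q_1)=u_g^\star-Aq_1$, whose derivative is $(-A,\,0)$. So $D_\eta S_N=M$ has the stated block form, independent of the evaluation point.

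To invert $M=\begin{pmatrix}\beta I&-A^\top\\-A&0\end{pmatrix}$ for any $\beta\ge0$ (including $\beta=0$), I would not use the Schur complement on the top-left block. Instead I solve $M(x,y)=(f,h)$ directly:
\begin{itemize}
\item the second row gives $x=-A^{-1}h$;
\item the first row then gives $y=(A^\top)^{-1}(\beta x-f)$.
\end{itemize}
Hence
\[
M^{-1}=\begin{pmatrix}0&-A^{-1}\\-(A^\top)^{-1}&-\beta (A^\top)^{-1}A^{-1}\end{pmatrix},
\]
which is bounded by Step 1. I would verify $MM^{-1}=I$ and $M^{-1}M=I$ by direct block multiplication.

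\textbf{Step 3: orthogonality.} Set $K:=D_\eta S_T(\theta^\star,\eta^\star)M^{-1}$, a fixed linear operator, so $\Psi_{\rm coup}(\theta,\eta)=S_T(\theta,\eta)-K\,S_N(\theta,\eta)$. Differentiating in $\eta$ and evaluating at the truth gives
\[
D_\eta\Psi_{\rm coup}(\theta^\star,\eta^\star)=D_\eta S_T(\theta^\star,\eta^\star)-D_\eta S_T(\theta^\star,\eta^\star)M^{-1}M=0.
\]
For the interpretation as linearized elimination, I would linearize $S_N(\theta,\eta)=0$ around $(\theta^\star,\eta^\star)$ and solve for $\eta-\eta^\star=-M^{-1}[S_N(\theta,\eta^\star)+\dots]$. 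Substituting this into the linearization of $S_T$ gives exactly the Schur complement of the block $M$ in the full KKT Jacobian, which is the operator that $\Psi_{\rm coup}$ reproduces to first order. For concreteness, I would also record
\[
D_\eta S_T=\begin{pmatrix}0&0\\ I-\Pi_\mu&0\end{pmatrix},
\]
since $D_{q_2}\mathcal L$ does not depend on $\eta$ and $b_2$ depends on $q_1$ only through $(I-\Pi_\mu)q_1$.

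\textbf{Expected obstacle.} The main delicate point is functional-analytic bookkeeping, not algebra. One must fix the spaces: $q_1,l_1\in L^2(\rho_1)$ and $l_2\in L^2(\rho_2)$, with adjoints taken relative to those inner products, so that $(I-\Pi_\mu)^\top$ maps $L^2(\rho_2)$ into $L^2(\rho_1)$. One must also check that $\|P_1^\mu\|\le1$ holds in the norm being used, so that the Neumann series converges there. Once this is fixed, the result is the identity $M^{-1}M=I$.
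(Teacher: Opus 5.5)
Your proposal is correct and follows essentially the same route as the paper. The paper also uses a Neumann series for $I-\gamma_1P_1^\mu$, then an explicit block solve of $D_\eta S_N$ (second row first, then the first row), which gives the same inverse you wrote down. It then uses the one-line identity $D_\eta S_T-D_\eta S_T(D_\eta S_N)^{-1}D_\eta S_N=0$, and obtains the Schur complement by linearly eliminating $\Delta\eta$. Your explicit formula for $D_\eta S_T$ and your remark about fixing the $L^2(\rho_1)$/$L^2(\rho_2)$ adjoint structure are minor additions that the paper leaves implicit; the former is what produces its Corollary~\ref{cor:orthogonalized_target_residual}.
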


The correction in \eqref{eq:psi_coup_def} agrees with the exact reduced target
equation on the local profile manifold \(S_N(\theta,\eta(\theta))=0\), and its
\(\theta\)-derivative is the Schur complement of the full KKT Jacobian. The
important point is interpretive: no external orthogonal score is introduced;
the correction is the target block after eliminating the source block.

Thus the orthogonality is not a design choice layered on top of the estimator.
It is the local target equation induced by solving the source and target blocks
as one system.

\begin{corollary}[Explicit orthogonalized target residual]
\label{cor:orthogonalized_target_residual}
For the reward-transfer system, the residual component of the profiled target
KKT equation is
\begin{equation}
\label{eq:orthogonalized_target_residual}
\widetilde b_{2,{\rm coup}}(q_1,q_2)
=
b_2(q_1,q_2)
+
(I-\Pi_\mu)(I-\gamma_1P_1^\mu)^{-1}b_1(q_1).
\end{equation}
Consequently,
$D_{q_1}\widetilde b_{2,{\rm coup}}(q_1^\star,q_2^\star)=0.$
\end{corollary}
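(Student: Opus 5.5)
We will obtain \eqref{eq:orthogonalized_target_residual} by computing the correction term in \eqref{eq:psi_coup_def} explicitly and reading off its residual (second) component. The derivative claim then follows either from Theorem~\ref{thm:profiled_target_orthogonality} or from a one-line direct check.

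\textbf{Step 1: the blocks.} Write \(A:=I-\gamma_1P_1^\mu\), which is invertible by Theorem~\ref{thm:profiled_target_orthogonality} since \(\gamma_1<1\) and \(\|P_1^\mu\|\le1\), so a Neumann series applies. From \(\mathcal L_{\rm coup}^\beta\), the source stationarity is \(D_{q_1}\mathcal L=\beta q_1-A^\top l_1+(I-\Pi_\mu)^\top l_2\). The target stationarity \(D_{q_2}\mathcal L=q_2-(I-\gamma_2P_2D\Omega(q_2))^\top l_2\) does not depend on \(\eta=(q_1,l_1)\). The target residual is \(b_2(q_1,q_2)\), whose \(\eta\)-derivative is \(\bigl((I-\Pi_\mu),\,0\bigr)\), because \(b_2\) is affine in \(q_1\) and free of \(l_1\). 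Hence
\[
D_\eta S_T(\theta^\star,\eta^\star)=\begin{pmatrix}0&0\\ (I-\Pi_\mu)&0\end{pmatrix}.
\]
The top row vanishes, so the stationarity component of \(S_T\) is left unchanged by profiling.

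\textbf{Step 2: invert the KKT block.} We need \(M^{-1}\) for \(M=\begin{pmatrix}\beta I&-A^\top\\-A&0\end{pmatrix}\). Block elimination gives
\[
M^{-1}=\begin{pmatrix}0&-A^{-1}\\-A^{-\top}&-\beta A^{-\top}A^{-1}\end{pmatrix}.
\]
We verify this by multiplying out. The first row gives \(\beta\cdot0+A^\top A^{-\top}=I\) and \(-\beta A^{-1}+\beta A^{-1}=0\). The second row gives \(0\) and \(AA^{-1}=I\). This is valid for all \(\beta\ge0\), including \(\beta=0\).

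\textbf{Step 3: the correction.} Multiply \(D_\eta S_T\,M^{-1}\). Only the second row is nonzero, and it equals
\[
(I-\Pi_\mu)\cdot(\text{first row of }M^{-1})=\bigl(0,\;-(I-\Pi_\mu)A^{-1}\bigr).
\]
Applying this to \(S_N=(D_{q_1}\mathcal L,\,b_1(q_1))\) yields \(-(I-\Pi_\mu)A^{-1}b_1(q_1)\). Subtracting it in \eqref{eq:psi_coup_def} gives the residual component
\[
b_2(q_1,q_2)+(I-\Pi_\mu)(I-\gamma_1P_1^\mu)^{-1}b_1(q_1),
\]
which is \eqref{eq:orthogonalized_target_residual}. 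Notably the source stationarity component, and hence \(\beta\) and \(l_1\), drop out.

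\textbf{Step 4: derivative.} By direct differentiation, \(D_{q_1}b_2=(I-\Pi_\mu)\) and \(D_{q_1}b_1=-A\). Therefore
\[
D_{q_1}\widetilde b_{2,\rm coup}=(I-\Pi_\mu)-(I-\Pi_\mu)A^{-1}A=0.
\]
This holds at every point, in particular at \((q_1^\star,q_2^\star)\), consistent with \(D_\eta\Psi_{\rm coup}=0\) from Theorem~\ref{thm:profiled_target_orthogonality}.

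The main obstacle is bookkeeping rather than analysis. The ordering and sign conventions of the components of \(S_N\), \(S_T\) and the block inverse must be tracked consistently; a sign slip would turn the \(+\) in \eqref{eq:orthogonalized_target_residual} into a \(-\). We will guard against this by verifying \(MM^{-1}=I\) explicitly, and by confirming that the final derivative cancels.
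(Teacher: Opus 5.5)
Your proposal is correct and follows the paper's argument: compute the Schur-complement correction, read off its residual component, then differentiate directly in $q_1$. Your Step 4 is exactly the paper's final computation. The one difference is that the paper writes the correction directly as $-D_{q_1}b_2\,[D_{q_1}b_1]^{-1}b_1$, eliminating through the residual blocks alone. You instead form the full inverse of $D_\eta S_N$ and show its first row is $(0,\,-(I-\gamma_1P_1^\mu)^{-1})$. That explicit step is what justifies the paper's shortcut as the residual component of \eqref{eq:psi_coup_def}, and it explains why $\beta$ and $l_1$ drop out.
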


Since \(b_1(q_1)=u_g^\star-(I-\gamma_1P_1^\mu)q_1\), the \(q_1\)-terms in
\eqref{eq:orthogonalized_target_residual} cancel at the population level. The
profiled residual therefore removes the direct first-order effect of a source
Bellman residual. It does not remove error in the source transition operator, which still appears inside the source resolvent.

This distinction drives the rest of the analysis. Coupling cannot make source
data irrelevant, but it can prevent an avoidable source fitting residual from
being treated as a target reward perturbation. Appendix~\ref{app:profiling}
relates this Schur-complement orthogonality to classical profiling.

\section{First-Order Error Propagation in Offline Reward Transfer}
\label{sec:error_propagation}

We now turn the structural orthogonality result into error decompositions. Let
\(\widehat P_1^\mu\) and \(\widehat P_2\) be empirical transition operators and
\[
\widehat b_1(q_1):=u_g^\star+\gamma_1\widehat P_1^\mu q_1-q_1.
\]
Let \(P_2^{\pi_2^\star}:=P_2D\Omega(q_2^\star)\). The linear target operator is
\(I-\gamma_2P_2^{\pi_2^\star}\).

The decompositions below are local: they describe the leading terms obtained by
linearizing around \((q_1^\star,q_2^\star)\). Their purpose is not to give the
final high-probability rate, but to identify the error channels that any
finite-sample bound must control.

\begin{lemma}[Source error decomposition]
\label{lem:source_error_decomposition}
If \(I-\gamma_1\widehat P_1^\mu\) is invertible, then every
\(\widehat q_1\) satisfies
\begin{equation}
\label{eq:source_error_decomposition}
\widehat q_1-q_1^\star
=
\gamma_1(I-\gamma_1\widehat P_1^\mu)^{-1}(\widehat P_1^\mu-P_1^\mu)q_1^\star
-
(I-\gamma_1\widehat P_1^\mu)^{-1}\widehat b_1(\widehat q_1).
\end{equation}
\end{lemma}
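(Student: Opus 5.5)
The identity is purely algebraic, so I will verify it by rewriting the empirical residual in terms of the error and then inverting. No minimax property of \(\widehat q_1\) is needed, which is why the statement holds for every \(\widehat q_1\).

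First, use the population source equation \eqref{eq:source_q1_equation} in the form \(u_g^\star=(I-\gamma_1P_1^\mu)q_1^\star\). Substituting this into the definition \(\widehat b_1(q_1)=u_g^\star+\gamma_1\widehat P_1^\mu q_1-q_1\) gives, for any \(\widehat q_1\),
\[
\widehat b_1(\widehat q_1)
=(I-\gamma_1P_1^\mu)q_1^\star-(I-\gamma_1\widehat P_1^\mu)\widehat q_1 .
\]
Next, add and subtract \((I-\gamma_1\widehat P_1^\mu)q_1^\star\) so that the empirical operator multiplies the error \(\widehat q_1-q_1^\star\):
\[
\widehat b_1(\widehat q_1)
=\gamma_1(\widehat P_1^\mu-P_1^\mu)q_1^\star-(I-\gamma_1\widehat P_1^\mu)(\widehat q_1-q_1^\star).
\]
Solving for \((I-\gamma_1\widehat P_1^\mu)(\widehat q_1-q_1^\star)\) and applying the assumed inverse \((I-\gamma_1\widehat P_1^\mu)^{-1}\) on the left yields exactly \eqref{eq:source_error_decomposition}.

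There is no substantive obstacle; the only care needed is bookkeeping of signs and of which operator carries the hat. Specifically, the perturbation term must be \((\widehat P_1^\mu-P_1^\mu)q_1^\star\), evaluated at the population solution, while the resolvent must be the empirical one. This is the order that the add-and-subtract step above produces. One should also note that the argument uses only linearity of \(\widehat P_1^\mu\) and the stated invertibility, with no contraction or norm bound. This matches the hypothesis of Lemma~\ref{lem:source_error_decomposition}.
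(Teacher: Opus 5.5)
Your proposal is correct and follows essentially the same route as the paper: rewrite $\widehat b_1(\widehat q_1)=u_g^\star-(I-\gamma_1\widehat P_1^\mu)\widehat q_1$, substitute $u_g^\star=(I-\gamma_1P_1^\mu)q_1^\star$, subtract $(I-\gamma_1\widehat P_1^\mu)q_1^\star$ to isolate the operator-difference term, and apply the assumed inverse. As you note, the argument needs only linearity of $\widehat P_1^\mu$ and the stated invertibility, and no optimality property of $\widehat q_1$.
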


The first term is source operator estimation error. The second is the empirical
source Bellman residual left after fitting. Modular and coupled transfer treat
these two terms differently.

The operator term is intrinsic: if the source transition operator is estimated
incorrectly, the recovered reward representation changes. The residual term is
algorithmic: it measures how far the fitted source object is from solving its
empirical Bellman equation.

\begin{theorem}[First-order error propagation for modular and coupled transfer]
\label{thm:first_order_transfer_error_propagation}
Under the local smoothness, invertibility, and stochastic equicontinuity
conditions stated in Appendix~\ref{app:first_order_error_propagation},
\begin{align}
\label{eq:mod_first_order_decomposition}
(I-\gamma_2P_2^{\pi_2^\star})(\widehat q_2^{\rm mod}-q_2^\star)
&=
\gamma_1(I-\Pi_\mu)(I-\gamma_1\widehat P_1^\mu)^{-1}
(\widehat P_1^\mu-P_1^\mu)q_1^\star
\nonumber\\
&\quad
-(I-\Pi_\mu)(I-\gamma_1\widehat P_1^\mu)^{-1}
\widehat b_1(\widehat q_1^{\rm mod})
\nonumber\\
&\quad
+\gamma_2(\widehat P_2-P_2)\Omega(q_2^\star)+\mathrm{Rem}_{\rm mod},
\\[0.4em]
\label{eq:coup_first_order_decomposition}
(I-\gamma_2P_2^{\pi_2^\star})(\widehat q_2^{\rm coup}-q_2^\star)
&=
\gamma_1(I-\Pi_\mu)(I-\gamma_1\widehat P_1^\mu)^{-1}
(\widehat P_1^\mu-P_1^\mu)q_1^\star
\nonumber\\
&\quad
+\gamma_2(\widehat P_2-P_2)\Omega(q_2^\star)+\mathrm{Rem}_{\rm coup}.
\end{align}
The remainders are second-order Taylor terms and local empirical-process
increments; sufficient conditions for their first-order negligibility are given
in Appendix~\ref{app:first_order_error_propagation}.
\end{theorem}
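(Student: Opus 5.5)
The plan is to linearize each estimator's defining target equation around \((q_1^\star,q_2^\star)\), using Lemma~\ref{lem:source_error_decomposition} for the source block in the modular case and Corollary~\ref{cor:orthogonalized_target_residual} in the coupled case.

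\textbf{Step 1 (target linearization, common to both estimators).} I would begin from the empirical target residual evaluated at the fitted pair. By the definition of \(\widehat b_2\),
\[
\widehat b_2(\widehat q_1,\widehat q_2)-b_2(q_1^\star,q_2^\star)
=(I-\Pi_\mu)\Delta_1+\gamma_2\widehat P_2\Omega(\widehat q_2)-\gamma_2P_2\Omega(q_2^\star)-\Delta_2 ,
\]
where \(\Delta_1=\widehat q_1-q_1^\star\) and \(\Delta_2=\widehat q_2-q_2^\star\). I would then split
\[
\widehat P_2\Omega(\widehat q_2)-P_2\Omega(q_2^\star)
=P_2D\Omega(q_2^\star)\Delta_2+(\widehat P_2-P_2)\Omega(q_2^\star)+\text{remainder}.
\]
The remainder has two pieces:
\begin{itemize}
\item the Taylor term \(P_2R_\Omega(\Delta_2)\), which is quadratic because log-sum-exp has Hessian bounded by \(1/\tau_2\);
\item the empirical-process increment \((\widehat P_2-P_2)\{\Omega(\widehat q_2)-\Omega(q_2^\star)\}\), which is negligible by the stochastic equicontinuity assumption.
\end{itemize}
Using \(b_2(q_1^\star,q_2^\star)=0\), this gives
\[
(I-\gamma_2P_2^{\pi_2^\star})\Delta_2=(I-\Pi_\mu)\Delta_1+\gamma_2(\widehat P_2-P_2)\Omega(q_2^\star)-\widehat b_2(\widehat q_1,\widehat q_2)+\mathrm{Rem}.
\]
The fitted target residual \(\widehat b_2\) at the minimax solution is absorbed into the remainder. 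In the well-specified saddle with tightness from Theorem~\ref{thm:population_well_posed_tightness}, it is zero or of smaller order by the appendix conditions.

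\textbf{Step 2 (modular).} I would substitute Lemma~\ref{lem:source_error_decomposition} for \(\Delta_1=\widehat q_1^{\rm mod}-q_1^\star\) and apply \((I-\Pi_\mu)\). This produces exactly the operator term and the residual term \(-(I-\Pi_\mu)(I-\gamma_1\widehat P_1^\mu)^{-1}\widehat b_1(\widehat q_1^{\rm mod})\), which together give \eqref{eq:mod_first_order_decomposition}.

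\textbf{Step 3 (coupled).} Here I would use the profiled equation instead. By Theorem~\ref{thm:profiled_target_orthogonality} and Corollary~\ref{cor:orthogonalized_target_residual}, the coupled KKT solution locally solves the empirical analogue of \(\widetilde b_{2,\rm coup}=0\), up to remainder. That analogue is
\[
\widehat b_2(q_1,q_2)+(I-\Pi_\mu)(I-\gamma_1\widehat P_1^\mu)^{-1}\widehat b_1(q_1),
\]
with the resolvent taken empirically because elimination is performed in the empirical KKT system. This sum is the left-hand expression of Step 1 with the extra summand \((I-\Pi_\mu)(I-\gamma_1\widehat P_1^\mu)^{-1}\widehat b_1(\widehat q_1)\). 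Substituting Lemma~\ref{lem:source_error_decomposition}, that summand cancels the residual term exactly. Equivalently, \((I-\Pi_\mu)(I-\gamma_1\widehat P_1^\mu)^{-1}(u_g^\star-(I-\gamma_1\widehat P_1^\mu)q_1)\) removes the \(q_1\)-dependence, leaving \((I-\Pi_\mu)\) applied to \((I-\gamma_1\widehat P_1^\mu)^{-1}u_g^\star-q_1^\star\). That difference equals the operator term \(\gamma_1(I-\Pi_\mu)(I-\gamma_1\widehat P_1^\mu)^{-1}(\widehat P_1^\mu-P_1^\mu)q_1^\star\) by the resolvent identity. What remains is \eqref{eq:coup_first_order_decomposition}.

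\textbf{Main obstacle.} The hard part is justifying that the coupled empirical saddle point satisfies the profiled equation up to negligible error. The elimination in \eqref{eq:psi_coup_def} uses a population Jacobian, while the estimator solves the empirical KKT system over restricted classes. My approach would be:
\begin{itemize}
\item use the invertibility of \(D_\eta S_N\) from Theorem~\ref{thm:profiled_target_orthogonality};
\item show that the difference between empirical and population Jacobians applied to \(S_N\) is a product of two small terms;
\item invoke the dual realizability of Theorem~\ref{thm:population_saddle_kkt} so that the KKT equations hold rather than only as projected inequalities.
\end{itemize}
The resulting discrepancies are collected into \(\mathrm{Rem}_{\rm coup}\).
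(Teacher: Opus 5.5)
Your proposal is correct and follows essentially the same route as the paper. The modular case is the same target linearization followed by substitution of Lemma~\ref{lem:source_error_decomposition}. The coupled case uses the empirical profiled residual with the empirical resolvent $(I-\gamma_1\widehat P_1^\mu)^{-1}$; its $q_1$-free form and the resolvent identity yield the operator term exactly as in the paper. Like you, the paper does not derive that the coupled saddle point satisfies the empirical profiled equation: it asserts this and places the local profiled optimality error and higher-order profiling terms into $R_{\rm emp}^{\rm coup}$, which Assumption~\ref{ass:app_first_order_regularity} declares negligible.
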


Thus both procedures pay for source operator error and target empirical error.
Only modular transfer has the additional first-order source-residual term. This
is the precise local sense in which the coupled estimator is more robust to
imperfect source Bellman fitting.

The statement should not be read as saying that coupling dominates modular
transfer in every finite sample. The coupled estimator solves a larger empirical
problem and pays its own concentration price. The point is sharper: in the
linearized target equation, the direct source-residual channel is absent for the
coupled estimator and present for the modular estimator.

\section{Global Finite-Sample and Policy Performance Guarantees}
\label{sec:global_policy_guarantees}

We now state global high-probability guarantees obtained directly from empirical
minimax optimality and uniform concentration over finite function classes. The
finite-class setting keeps the proof focused on how source and target errors
combine; the same decomposition identifies what must be controlled under richer
function approximation. Let \(N_1=|\mathcal Q_1||\mathcal L_1|\),
\(N_2^{\rm mod}=|\mathcal Q_2||\mathcal L_2|\), and
\(N_2^{\rm coup}=|\mathcal Q_1||\mathcal Q_2||\mathcal L_2|\). Assume
\(\|q_i\|_\infty\le B_{Q_i}\) for \(i=1,2\),
\(\|u_b^\star\|_\infty\le B_U\), and \(0\le g(s)\le B_G\). Define the residual
ranges \(B_{{\rm Res},1}:=B_U+B_G+(\gamma_1+1)B_{Q1}\) and
\(B_{{\rm Res},2}:=4B_{Q1}+B_G+(\gamma_2+1)B_{Q2}\).

\begin{assumption}[Standing assumptions for global guarantees]
\label{ass:global_policy_guarantees}
The finite classes are realizable and uniformly bounded:
\(q_1^\star\in\mathcal Q_1\), \(q_2^\star\in\mathcal Q_2\),
\(l_1^\star\in\mathcal L_1\), \(l_2^\star\in\mathcal L_2\), with primal bounds
\(B_{Q1},B_{Q2}\). The target dual radius satisfies
\(B_{L2}:=\kappa_2B_{Q2}/(1-\gamma_2)\) and
\(\|d^{P_2}_{\pi_2^\star,\rho_2}/\rho_2\|_\infty\le\kappa_2\).
For the coupled source dual, let \(B_{L12}\) be the cross-environment source
radius from Proposition~\ref{prop:dual_boundedness_coverage}. For modular
propagation, assume \(\|\rho_2/\rho_1\|_\infty<\infty\),
\(\|\mu/\pi_{b1}\|_\infty\le1/\epsilon_{b1}\), and pathwise target
concentrability with constant \(\kappa_{2,{\rm path}}\) along the line segment
between the true target value and the modular plug-in target value.
\end{assumption}

Realizability enables comparison with the truth, boundedness gives concentration,
and coverage converts residual control into value control; the pathwise coverage
condition is needed only for the modular plug-in path.

\begin{theorem}[Global finite-sample bounds]
\label{thm:global_q2_bounds}
Under Assumption~\ref{ass:global_policy_guarantees}, for any \(\delta\in(0,1)\):
\begin{align}
\label{eq:modular_global_q2_bound}
 \|\widehat q_2^{\rm mod}-q_2^\star\|_{\rho_2}^2
& \le
8
M_1^{\rm mod}\sqrt{\frac{2\ln(4N_1/\delta)}{n_1}}
+
8M_2\sqrt{\frac{2\ln(4N_2^{\rm mod}/\delta)}{n_2}}
,
\\
\label{eq:coupled_global_q2_bound}
\|\widehat q_2^{\rm coup}-q_2^\star\|_{\rho_2}^2
& \le
4
M_1^{\rm coup}\sqrt{\frac{2\ln(2N_1/\delta)}{n_1}}
+
4M_2\sqrt{\frac{2\ln(2N_2^{\rm coup}/\delta)}{n_2}}
\end{align}
with probability at least \(1-\delta\), where the coupled bound uses \(\beta=0\),
and
\[
\begin{aligned}
M_1^{\rm mod}
&=
\frac{4\kappa_{2,{\rm path}}\|\rho_2/\rho_1\|_\infty}
{\epsilon_{b1}(1-\gamma_2)^2}
\left(
\tfrac12 B_{Q1}^2
+
B_{L1}^{\rm mod}B_{{\rm Res},1}
\right),\\
M_1^{\rm coup}
&=
B_{L12}B_{{\rm Res},1},
\qquad
M_2
=
\tfrac12 B_{Q2}^2+B_{L2}B_{{\rm Res},2}.
\end{aligned}
\]
\end{theorem}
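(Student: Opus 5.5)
The plan is the standard "empirical saddle optimality plus uniform concentration plus quadratic growth" argument, run once for the coupled system and twice (source stage, then target stage with error propagation) for the modular one.

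\textbf{Coupled bound.} Set \(\beta=0\) and take \((\widehat q_1,\widehat q_2)=(\widehat q_1^{\rm coup},\widehat q_2^{\rm coup})\).
\begin{enumerate}
\item Theorem~\ref{thm:population_saddle_kkt} gives
\[
\tfrac12\|\widehat q_2-q_2^\star\|_{\rho_2}^2\le \mathcal L^0_{\rm coup}(\widehat q_1,\widehat q_2,l_1^\star,l_2^\star)-\mathcal L^0_{\rm coup}(q_1^\star,q_2^\star,l_1^\star,l_2^\star).
\]
\item Add and subtract the empirical Lagrangian. Because \(l^\star\in\mathcal L_1\times\mathcal L_2\) is feasible for the inner max, \(\widehat{\mathcal L}(\widehat q,l^\star)\le\max_l\widehat{\mathcal L}(\widehat q,l)\).
\item Minimax optimality and realizability of \(q^\star\) give \(\max_l\widehat{\mathcal L}(\widehat q,l)\le\max_l\widehat{\mathcal L}(q^\star,l)\).
\item Since \(b_1(q_1^\star)=0\) and \(b_2(q_1^\star,q_2^\star)=0\), the population Lagrangian at \(q^\star\) equals \(\tfrac12\|q_2^\star\|^2\) for every \(l\).
\item Hence the excess is at most two uniform deviations of the form
\[
\sup\bigl|\langle l_1,\widehat b_1(q_1)-b_1(q_1)\rangle_{\rho_1}\bigr|+\sup\bigl|\langle l_2,\widehat b_2-b_2\rangle_{\rho_2}\bigr|.
\]
The \(\tfrac12\|q_2\|^2_{\rho_2}\) term is also empirical, which is why \(\tfrac12B_{Q2}^2\) appears in \(M_2\).
\item Each term is an average of \(n_k\) i.i.d.\ bounded summands, so apply Hoeffding plus a union bound. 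The source term ranges over \(\mathcal Q_1\times\mathcal L_1\) with envelope \(B_{L12}B_{{\rm Res},1}\). The target term ranges over \(\mathcal Q_1\times\mathcal Q_2\times\mathcal L_2\), since \(b_2\) depends on \(q_1\), with envelope \(B_{L2}B_{{\rm Res},2}\). The factor \(4\) and the \(2/\delta\) come from the two deviations, each counted twice (at \(\widehat q\) and at \(q^\star\)), with \(\delta\) split in two.
\end{enumerate}

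\textbf{Modular bound.}
\begin{enumerate}
\item Apply the same argument to stage one alone, \eqref{eq:mod_stage1_minimax}, with \(\beta=1\) and dual \(l_{1,\rm mod}^\star\). This gives \(\tfrac12\|\widehat q_1^{\rm mod}-q_1^\star\|_{\rho_1}^2\lesssim(\tfrac12B_{Q1}^2+B^{\rm mod}_{L1}B_{{\rm Res},1})\sqrt{2\ln(4N_1/\delta)/n_1}\).
\item Run stage two with the plug-in \(\widehat q_1^{\rm mod}\). Its population target \(\bar q_2\) solves \(b_2(\widehat q_1^{\rm mod},\bar q_2)=0\), so the same argument yields the \(M_2\) term for \(\|\widehat q_2^{\rm mod}-\bar q_2\|^2_{\rho_2}\). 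The dual for \(\bar q_2\) should be controlled by the same radius \(B_{L2}\) via pathwise concentrability.
\item Propagate the source error to \(\|\bar q_2-q_2^\star\|_{\rho_2}\). By the mean value theorem along the segment, \(\bar q_2-q_2^\star=(I-\gamma_2\widetilde P)^{-1}(I-\Pi_\mu)(\widehat q_1-q_1^\star)\), where \(\widetilde P\) is an averaged soft-policy kernel.
\item Pathwise concentrability with \(\kappa_{2,\rm path}\) and the Neumann series give a factor \(\kappa_{2,\rm path}/(1-\gamma_2)^{2}\) when passing to squared \(\rho_2\)-norms.
\item Move from \(\rho_2\) to \(\rho_1\) with \(\|\rho_2/\rho_1\|_\infty\). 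Bound \(\|\Pi_\mu h\|_{\rho_1}^2\le\|\mu/\pi_{b1}\|_\infty\|h\|^2_{\rho_1}\) by Jensen, which yields \(1/\epsilon_{b1}\) and the constant \(4\) in \(M_1^{\rm mod}\).
\item Combine via \((a+b)^2\le2a^2+2b^2\), splitting \(\delta\) into four. This accounts for the constants \(8\) and \(4N/\delta\).
\end{enumerate}

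\textbf{Main obstacle.} I expect the modular propagation step to be the hardest. The soft operator is nonlinear, so getting a clean linear resolvent needs the mean-value kernel to stay covered along the segment; this is exactly why the assumption imposes pathwise concentrability. Getting the squared-norm constants to match exactly may require care. In particular, I would use Cauchy–Schwarz on the \(\rho_2\)-weighted Neumann sum rather than sup-norm bounds.
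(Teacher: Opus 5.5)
Your proposal follows the paper's proof essentially step for step. The coupled bound is the same $\beta=0$ saddle comparison: Hoeffding plus union bounds over $\mathcal Q_1\times\mathcal L_1$ and $\mathcal Q_1\times\mathcal Q_2\times\mathcal L_2$ give an excess of at most $2(\varepsilon_{1,0}^{\rm coup}+\varepsilon_2^{\rm coup})$, and the target-only quadratic growth from Theorem~\ref{thm:population_saddle_kkt} converts this into the $q_2$ bound. The modular bound is the same chain of steps:
\begin{itemize}
\item a source saddle comparison giving $\|\widehat q_1^{\rm mod}-q_1^\star\|_{\rho_1}^2\le 4\varepsilon_1^{\rm mod}$;
\item a target comparison around the plug-in fixed point;
\item mean-value resolvent propagation under pathwise concentrability;
\item the $\|\rho_2/\rho_1\|_\infty$ change of measure and the Jensen bound on $\Pi_\mu$, giving $4/\epsilon_{b1}$;
\item $(a+b)^2\le 2a^2+2b^2$ with $\delta/2$ per stage.
\end{itemize}

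As in the paper, the plug-in target step tacitly requires $\bar q_2\in\mathcal Q_2$ and a nonnegative population dual for it in $\mathcal L_2$. The dual-radius issue you flag is exactly what the paper leaves implicit. Your coupled-case $\delta$ accounting is also as loose as the paper's: two-sided Hoeffding on two families, each at level $\log(2N/\delta)$, strictly yields probability $1-2\delta$ rather than $1-\delta$.
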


This is the global counterpart of the local decomposition: both estimators have
source and target terms, but modular first controls source error under
\(\rho_1\) and then propagates it, whereas coupled controls the source
correction inside one joint saddle comparison. Define
\(\mathcal B_{\rm mod}(\delta)\) and \(\mathcal B_{\rm coup}(\delta)\) as the
right-hand sides of
\eqref{eq:modular_global_q2_bound} and \eqref{eq:coupled_global_q2_bound}.

\subsection{Policy regret from transferred-value error}
\label{subsec:policy_regret}

The deployed object is a policy, so we convert \(\rho_2\)-error in the
transferred soft value into regret using smoothness of the softmax map and
coverage of the optimal target occupancy. For a policy \(\pi\), let
\(J_2(\pi):=(1-\gamma_2)^{-1}\mathbb E_{d^{P_2}_{\pi,\rho_2}}
[r_C^\star(s,a)-\tau_2\log\{\pi(a\mid s)/\pi_{2,\rm ref}(a\mid s)\}]\), and let
\(\widehat\pi_2\) be induced by \(\widehat q_2\).

\begin{assumption}
\label{ass:policy_regret_conversion}
\(\|q_2^\star\|_\infty\le B_{Q2}\),
\(\|d^{P_2}_{\pi_2^\star,\rho_2}/\rho_2\|_\infty\le\kappa_2\), and
\(\pi_2^\star(a\mid s)\ge\epsilon_{\pi2}^\star\) for all \((s,a)\).
\end{assumption}

The lower bound on \(\pi_2^\star\) is the support condition needed to translate
value error into policy error.

\begin{theorem}[Policy regret from transferred-value error]
\label{thm:policy_regret}
Under Assumption~\ref{ass:policy_regret_conversion}, for any estimate
\(\widehat q_2\) and induced policy \(\widehat\pi_2\),
\begin{equation}
\label{eq:policy_regret_from_q}
J_2(\pi_2^\star)-J_2(\widehat\pi_2)
\le
C_\pi\|\widehat q_2-q_2^\star\|_{\rho_2},
\qquad
C_\pi:=
\frac{B_{Q2}\sqrt{|\mathcal A|\kappa_2}}
{(1-\gamma_2)\tau_2\sqrt{\epsilon_{\pi2}^\star}}.
\end{equation}
Consequently, with probability at least \(1-\delta\),
\(J_2(\pi_2^\star)-J_2(\widehat\pi_2^{\rm mod})
\le C_\pi\sqrt{\mathcal B_{\rm mod}(\delta)}\) and
\(J_2(\pi_2^\star)-J_2(\widehat\pi_2^{\rm coup})
\le C_\pi\sqrt{\mathcal B_{\rm coup}(\delta)}\).
\end{theorem}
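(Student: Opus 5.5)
The plan is to go through the soft performance-difference lemma and then use the Lipschitz (smoothness) property of the softmax map to turn value error into policy error.

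\emph{Step 1 (performance difference).} Write \(V^\pi\) for the regularized value of \(\pi\) in the target MDP with shifted reward \(r_C^\star\). For any \(\pi\), the regularized performance-difference identity gives
\[
J_2(\pi_2^\star)-J_2(\pi)=\tfrac{1}{1-\gamma_2}\mathbb E_{s\sim d^{P_2,\mathcal S}_{\pi,\rho_2}}\bigl[\tau_2\,\mathrm{KL}(\pi(\cdot\mid s)\,\|\,\pi_2^\star(\cdot\mid s))\bigr].
\]
This holds because \(\pi_2^\star\) is the Boltzmann policy of \(q_2^\star\). The identity is evaluated under the occupancy of the learned policy, not of \(\pi_2^\star\). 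Since the assumption only controls \(d^{P_2}_{\pi_2^\star,\rho_2}\), I will instead use the symmetric form with roles swapped. Expanding \(J_2(\pi_2^\star)-J_2(\widehat\pi_2)\) along \(\pi_2^\star\)'s trajectories gives
\[
J_2(\pi_2^\star)-J_2(\widehat\pi_2)=\tfrac1{1-\gamma_2}\mathbb E_{d_{\pi_2^\star}}\bigl[\langle \pi_2^\star-\widehat\pi_2,\,Q^{\widehat\pi_2}\rangle-\tau_2\mathrm{KL}\text{-type terms}\bigr].
\]
The KL-type terms are \(\le 0\) or are absorbed. In either case the integrand is bounded by \(|\langle \pi_2^\star-\widehat\pi_2, q\rangle|\le \|q\|_\infty\|\pi_2^\star(\cdot|s)-\widehat\pi_2(\cdot|s)\|_1\). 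Using the bound \(B_{Q2}\) on the soft \(q\)-values then gives
\[
J_2(\pi_2^\star)-J_2(\widehat\pi_2)\le \tfrac{B_{Q2}}{1-\gamma_2}\,\mathbb E_{d_{\pi_2^\star}}\|\pi_2^\star(\cdot|s)-\widehat\pi_2(\cdot|s)\|_1.
\]
This is consistent with the constant \(C_\pi\), which contains \(B_{Q2}/(1-\gamma_2)\).

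\emph{Step 2 (softmax smoothness).} The softmax with temperature \(\tau_2\) and reference \(\pi_{2,\rm ref}\) is \(1/\tau_2\)-Lipschitz from the logits in a weighted \(\ell_2\) geometry. Its Jacobian is \((\mathrm{diag}(\pi)-\pi\pi^\top)/\tau_2\). Integrating along the segment between \(q_2^\star\) and \(\widehat q_2\) bounds \(\|\widehat\pi_2-\pi_2^\star\|_1\) by \(\tau_2^{-1}\sum_a \bar\pi(a|s)|\Delta(s,a)|\) up to constants. Cauchy–Schwarz then yields \(\tau_2^{-1}\sqrt{|\mathcal A|}\,(\sum_a \pi_2^\star(a|s)\Delta^2)^{1/2}/\sqrt{\epsilon^\star_{\pi2}}\)-type bounds. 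The role of \(\epsilon_{\pi2}^\star\) is to compare the uniform action weights \(\sum_a\Delta(s,a)^2\) with \(\pi_2^\star\)-weighted ones: \(\sum_a\Delta^2\le \epsilon_{\pi2}^{\star-1}\sum_a\pi_2^\star\Delta^2\).

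\emph{Step 3 (change of measure).} Apply Jensen to move the square root outside \(\mathbb E_{d_{\pi_2^\star}}\). The action-averaged quantity \(\mathbb E_{s\sim d}\sum_a\pi_2^\star(a|s)\Delta^2\) is exactly \(\|\Delta\|^2_{d^{P_2}_{\pi_2^\star,\rho_2}}\), and this is at most \(\kappa_2\|\Delta\|_{\rho_2}^2\). Collecting the factors gives \(C_\pi\). The high-probability corollaries follow by plugging Theorem~\ref{thm:global_q2_bounds} into this bound on the same event.

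\emph{Main obstacle.} The hardest part is Step 2. The Jacobian weights come from the intermediate policy \(\bar\pi\) on the segment, not from \(\pi_2^\star\), so the lower bound \(\epsilon_{\pi2}^\star\) does not directly apply. I would first try to avoid the intermediate point altogether. Since each \(\bar\pi(a|s)\le 1\), one can bound \(\|\widehat\pi_2-\pi_2^\star\|_1\le \tau_2^{-1}\sum_a|\Delta(s,a)|\le \tau_2^{-1}\sqrt{|\mathcal A|}\,(\sum_a\Delta^2)^{1/2}\). Then \(\epsilon_{\pi2}^\star\) is used only in the uniform-to-\(\pi_2^\star\) reweighting of Step 2, which is why it appears under a square root.
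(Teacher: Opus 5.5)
Your proposal is correct and follows the paper's proof essentially step for step. The steps are the regularized performance-difference bound along $d^{P_2}_{\pi_2^\star,\rho_2}$ with the nonpositive $-\tau_2\mathrm{KL}(\pi_2^\star\|\widehat\pi_2)$ term dropped, the per-state softmax bound $\|\widehat\pi_2(\cdot\mid s)-\pi_2^\star(\cdot\mid s)\|_1\le\frac{\sqrt{|\mathcal A|}}{\tau_2}\|\widehat q_2(s,\cdot)-q_2^\star(s,\cdot)\|_2$, and then the $\epsilon_{\pi2}^\star$ reweighting, Jensen, and the $\kappa_2$ change of measure.

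There are two small caveats. First, the paper gets the softmax bound from $\|D\sigma\|_{2\to2}\le1/\tau_2$ plus $\|\cdot\|_1\le\sqrt{|\mathcal A|}\|\cdot\|_2$. Your $\bar\pi$-weighted Jacobian bound only holds with a factor $2$, so dropping $\bar\pi\le1$ gives $2C_\pi$. To get $C_\pi$, bound $\sum_a\bar\pi(a\mid s)|\Delta(s,a)-\bar\Delta(s)|$, with $\bar\Delta(s):=\sum_b\bar\pi(b\mid s)\Delta(s,b)$, by $\max_a\Delta(s,a)-\min_a\Delta(s,a)\le\sum_a|\Delta(s,a)|$. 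Second, exactly as in the paper's performance-difference lemma, Step 1 needs $\|Q^{\widehat\pi_2}\|_\infty\le B_{Q2}$ for the learned policy's soft $Q$ defined with the first-step penalty $-\tau_2\log(\widehat\pi_2/\pi_{2,\rm ref})$. That choice of $Q$ is what makes the leftover term exactly $-\tau_2\mathrm{KL}(\pi_2^\star\|\widehat\pi_2)$, but the bound on it is an extra hypothesis not implied by $\|q_2^\star\|_\infty\le B_{Q2}$.
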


The square root appears because Theorem~\ref{thm:global_q2_bounds} controls
squared value error; longer horizons, lower temperature, weaker coverage, and
smaller optimal action probabilities all increase \(C_\pi\).

\section{Experimental Investigation}
\label{sec:experiments}
We evaluate the predicted first-order robustness of coupled minimax transfer in
the Gumbel-Max sepsis SCM simulator of \citet{oberst2019counterfactual}, a
synthetic sequential-treatment benchmark with four clinical variables and three
binary treatment decisions. We tabularize the simulator into \(128\) states and
\(8\) actions, take \(P_1\) from the original dynamics, and form \(P_2\) by
perturbing clinical transition mechanisms while preserving support; source and
target differ in dynamics, discounting, and soft-control temperature.

We compare \emph{Modular}, which estimates \(\widehat q_1^{\rm mod}\), forms
\((I-\Pi_\mu)\widehat q_1^{\rm mod}+g+C\), and then solves target control;
\emph{Coupled}, which jointly optimizes the source and target residuals; and
\emph{Coupled-Offset}, which starts from the Modular solution and learns a
joint offset correction. All methods use the same datasets, architecture,
initialization, learning rates, and evaluation protocol. Each configuration
averages \(10\) dataset draws and \(10\) optimization seeds; each episode has
\(20\) steps. The target dataset \(\mathcal D_2\) is fixed at
\(25{,}000\) episodes, while the source dataset \(\mathcal D_1\) is varied as a
fraction of a \(12{,}500\)-episode reference size; thus \(\mathcal D_1\) fraction \(0.2\)
corresponds to \(2{,}500\) source episodes. This mimics settings where expert
demonstrations are scarcer than ordinary target data. We use the
no-treatment action as the source anchor and the uniform policy as
\(\pi_{2,\rm ref}\), and report target regret, target \(q_2\) and \(V_2\)
errors, source reward error, and source \(q_1\) error.

\begin{figure}[t]
    \centering
    \includegraphics[width=0.84\linewidth]{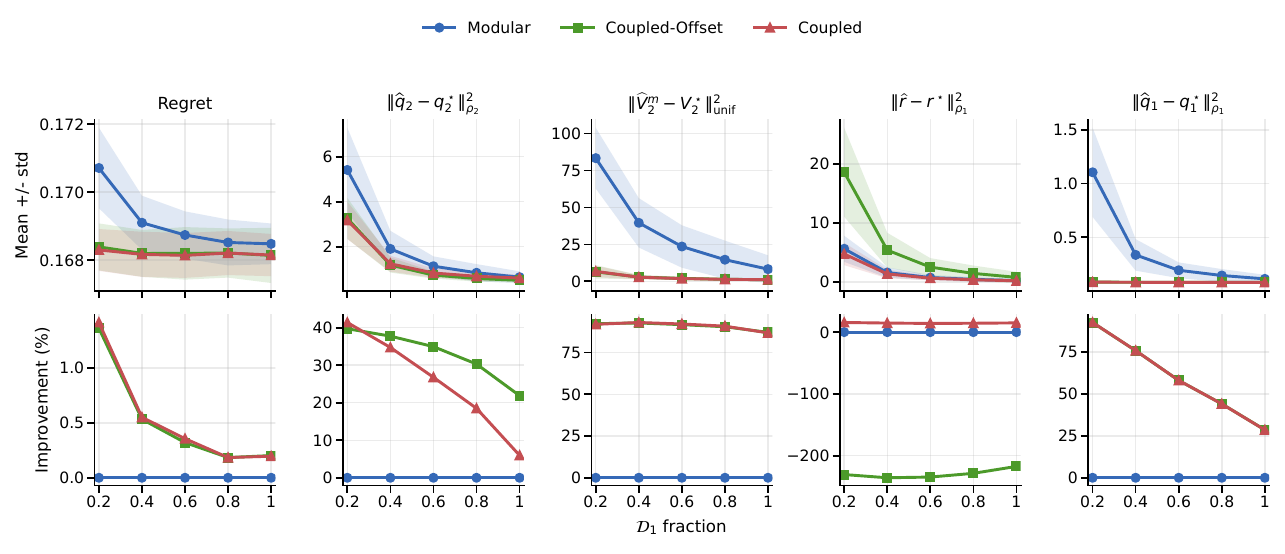}
    \caption{
    Source-size experiment for \(\tau_2=0.05\): absolute metrics over \(\mathcal D_1\)
    fractions \(0.2\)--\(1.0\) (top) and percent improvement over Modular
    (bottom); larger is better.
    }
    \label{fig:d1_subset}
\end{figure}

Figure~\ref{fig:d1_subset} shows the main source-size experiment. At
\(\tau_2=0.05\), the coupled methods improve most when source data are scarce,
exactly where the empirical source residual is most variable. This matches
Theorem~\ref{thm:first_order_transfer_error_propagation}: modular transfer
passes this residual directly into the target reward, while coupled transfer
removes that first-order channel. With \(\mathcal D_1\) fraction \(0.2\), Coupled reduces
weighted target \(q_2\) MSE from \(5.413\) to \(3.173\) and unweighted \(V_2\)
MSE from \(83.38\) to \(6.739\); Coupled-Offset gives similar target-side gains.
The effect persists across target temperatures (Table~\ref{tab:target_transfer_tau}):
at \(\tau_2=0.2\), Coupled reduces unweighted \(V_2\) MSE from \(72.77\) to
\(6.327\). Regret improvements are smaller, as expected for a downstream policy
metric, but remain consistent and are most pronounced at lower temperature,
matching the \(1/\tau_2\) dependence in Theorem~\ref{thm:policy_regret}.

\begin{table}[t]
\newcommand{\tabms}[2]{\begin{tabular}[t]{@{}r@{}}\ensuremath{#1}\\[-0.35ex]\scriptsize\textcolor{black!55}{\ensuremath{\pm #2}}\end{tabular}}
\setlength{\tabcolsep}{2.3pt}
\renewcommand{\arraystretch}{1.18}
\caption{Performance across temperatures $\tau_2$ at \(\mathcal D_1\) fraction \(0.2\)
(standard deviations below in gray).}
\label{tab:target_transfer_tau}
\centering
\small
\begin{adjustbox}{max width=\textwidth}
\begin{tabular}{c|ccc|ccc|ccc|ccc|ccc}
\toprule
\multirow{2}{*}{Method}
& \multicolumn{3}{c}{Regret}
& \multicolumn{3}{c}{\(\|\widehat q_2-q_2^\star\|_{\rho_2}^2\)}
& \multicolumn{3}{c}{\(\|\widehat V_2-V_2^\star\|_{\rm unif}^2\)}
& \multicolumn{3}{c}{\(\|\widehat r-r^\star\|_{\rho_1}^2\)}
& \multicolumn{3}{c}{\(\|\widehat q_1-q_1^\star\|_{\rho_1}^2\)} \\
\cmidrule(lr){2-4}\cmidrule(lr){5-7}\cmidrule(lr){8-10}\cmidrule(lr){11-13}\cmidrule(lr){14-16}
\(\tau_2\) & .05 & .2 & .4 & .05 & .2 & .4 & .05 & .2 & .4 & .05 & .2 & .4 & .05 & .2 & .4 \\
\midrule
Modular
& \tabms{.171}{.00118} & \tabms{.0595}{.00208} & \tabms{.0554}{.000723}
& \tabms{5.413}{1.879} & \tabms{4.060}{1.503} & \tabms{1.601}{.644}
& \tabms{83.38}{20.61} & \tabms{72.77}{21.76} & \tabms{22.60}{9.057}
& \tabms{5.632}{2.206} & \tabms{5.632}{2.206} & \tabms{5.632}{2.206}
& \tabms{1.105}{.415} & \tabms{1.105}{.415} & \tabms{1.105}{.415} \\
Coupled-Offset
& \tabms{.168}{.000691} & \tabms{.0571}{.000566} & \tabms{.0547}{.000494}
& \tabms{3.262}{.909} & \tabms{.934}{.210} & \tabms{.604}{.125}
& \tabms{6.415}{4.526} & \tabms{5.008}{2.909} & \tabms{2.677}{1.215}
& \tabms{18.62}{7.543} & \tabms{17.31}{7.368} & \tabms{17.42}{7.251}
& \tabms{.0832}{.00149} & \tabms{.0866}{.00232} & \tabms{.0847}{.00178} \\
Coupled
& \tabms{.168}{.000615} & \tabms{.0570}{.000484} & \tabms{.0546}{.000487}
& \tabms{3.173}{.834} & \tabms{1.015}{.226} & \tabms{.594}{.109}
& \tabms{6.739}{3.835} & \tabms{6.327}{3.002} & \tabms{2.872}{1.218}
& \tabms{4.756}{1.894} & \tabms{4.771}{1.910} & \tabms{4.800}{1.916}
& \tabms{.0827}{.000886} & \tabms{.0857}{.00169} & \tabms{.0842}{.00126} \\
\bottomrule
\end{tabular}
\end{adjustbox}
\end{table}

The large \(V_2\) gains come from reducing error on actions selected by the
learned policy. Decomposing \(V_2(s)=(\pi_2q_2)(s)\), the selected-action
\(q_2\) MSE at \(\tau_2=0.05\) and \(\mathcal D_1\) fraction \(0.2\) is \(82.74\) for
Modular, versus \(6.72\) for Coupled and \(6.38\) for Coupled-Offset; the
policy-mismatch term is only about \(0.016\) for all methods.
Thus Modular makes larger errors exactly where the learned target policy places
mass (see Appendix~\ref{app:v2_decomposition} for the detailed error
decomposition).

Reward diagnostics add one caution: anchor-action recovery uses \(q_1\)-error
contrasts with the no-treatment action, so baseline error affects every
recovered reward. At \(\tau_2=0.05\) and \(\mathcal D_1\) fraction \(0.2\),
Coupled-Offset and Coupled have similar weighted \(q_1\) MSEs but very different
no-treatment \(q_1\) MSEs (\(489.2\) versus \(121.8\)), explaining the reward
MSE gap (\(18.62\) versus \(4.76\)). Full plots, tables, simulator details, and
training settings are in Appendix~\ref{app:experimental_details}.

\section{Related Work}

\paragraph{IRL, DDC, and reward ambiguity.}
Classical IRL recovers rewards under which observed behavior is optimal
\citep{ng2000algorithms,abbeel2004apprenticeship}, with maximum-entropy variants
using soft Bellman equations and softmax policies
\citep{ziebart2008maximum,ziebart2010modeling} and sample-based deep variants
\citep{finn2016guided}. Rewards are generally unidentified without normalization,
due to shaping and related invariances
\citep{ng1999policy,cao2021identifiability,rolland2022identifiability,skalse2023invariance, schlaginhaufen2024towards}.
Dynamic discrete choice models use the same logit-choice Bellman structure to
recover payoffs \citep{rust1987optimal,hotz1993conditional,aguirregabiria2010dynamic,bajari2010identification};
anchor restrictions in DDC and IRL select a representative
\citep{rust1987optimal,hotz1993conditional,geng2020deep,van2025efficient}. We
take this normalization step seriously: the paper is not about resolving all
reward ambiguities, but about what happens statistically after a representative
has been chosen and transferred.

\paragraph{Reward transfer and imitation.}
Adversarial imitation methods such as GAIL and AIRL learn reward-like objects
from demonstrations \citep{ho2016generative,fu2018learning}; AIRL motivates
reward transfer under dynamics changes rather than direct policy imitation. This
paper addresses a different part of the pipeline: conditional on a chosen source
IRL signal, we analyze how source estimation error propagates through an offline
target-control problem. Recent work studies when recovered rewards generalize or
transfer across experts and environments
\citep{rolland2022identifiability,schlaginhaufen2024towards}; here we analyze
how source IRL estimation error propagates into offline target control and how
coupling removes one first-order channel.

\paragraph{Soft control and offline value estimation.}
The target problem is KL-regularized soft control
\citep{kappen2005linear,todorov2009efficient,levine2018reinforcement,haarnoja2017reinforcement,haarnoja2018soft,geist2019theory}.
These literatures give the soft Bellman equations and softmax policies used for
target control. Our contribution is not a new soft-control algorithm, but an
analysis of the extra statistical channel created when the target reward is
estimated from a separate source IRL stage.
Our estimators use minimax Bellman residual ideas from offline value estimation
\citep{ernst2005tree,munos2008finite,uehara2020minimax,xie2021batch,uehara2023offline},
but the target residual is coupled to a source residual through the learned
reward map.

\paragraph{Two-stage estimation and orthogonality.}
The modular-coupled contrast is connected to two-stage estimation, profiling,
and orthogonalization
\citep{murphy1985estimation,newey1994asymptotic,chernozhukov2018double,foster2023orthogonal};
profile-likelihood theory studies the target-score geometry after optimizing
over nuisances
\citep{severini1992profile,murphy1997semiparametric,murphy1999observed,murphy2000profile}.
In double machine learning, orthogonal scores are typically designed to reduce
sensitivity to nuisance error. Here the orthogonalized target equation is
induced by Schur-complement elimination of the source block in the coupled
Bellman KKT system.

\section{Conclusion}
\label{sec:conclusion}

We studied offline reward transfer from a source environment with expert
demonstrations to a target environment with only off-policy data. The difficulty
is that target control uses a reward recovered from the source, so source-stage
error enters the target Bellman equation. A modular plug-in estimator first
learns the reward and then solves the target problem; any source Bellman
residual left by the first stage can therefore affect target value at first
order.

The coupled estimator instead solves source recovery and target value estimation as one
single problem. It has the same population target, but its KKT equations profile
out the source block and yield a Schur-complement target equation that removes
the direct first-order effect of source residual error. The remaining terms come
from estimating the source and target Bellman operators and from coverage and
finite-sample complexity. The sepsis experiments show the predicted gains when
source data are scarce. These results give a concrete statistical reason to
couple IRL and target control, and point to richer function classes and scalable
solvers.

\bibliographystyle{plainnat}
\bibliography{reference}

\appendix

\section{Reward normalization examples}
\label{app:normalizations}

The reward normalizations introduced in
Section~\ref{subsec:source_irl_module} are standard in economics and are often
substantively meaningful
\citep{hotz1993conditional,bajari2010identification,geng2020deep,
van2025efficient}. In our notation, a normalization fixes an anchor policy
\(\mu\) and anchor function \(g:\mathcal S\to\mathbb R\) through
\[
    (\mu r)(s)=\sum_{a\in\mathcal A}\mu(a\mid s)r(s,a)=g(s),
    \qquad s\in\mathcal S .
\]
The fixed anchor-action normalization is the special case
\(\mu(a\mid s)=1\{a=a^\dagger\}\), for which
\((\mu r)(s)=r(s,a^\dagger)=g(s)\) \citep{geng2020deep}. When \(g\equiv0\),
this reduces to the classical zero-reward normalization
\(r(s,a^\dagger)=0\) used in Rust's engine-replacement model
\citep{rust1987optimal}. More generally, taking
\(\mu(a\mid s)=1\{a=a^\dagger(s)\}\) allows state-specific anchors, where the
reference action may vary with the state. DDC models often use such anchors to
normalize the payoff of a no-action, status-quo, or outside option to be zero
or otherwise known
\citep{rust1987optimal,hotz1993conditional,aguirregabiria2010dynamic,
geng2020deep}. If \(\mu(a\mid s)=1/|\mathcal A|\), then
\((\mu r)(s)\) is the statewise average reward and, when \(g\equiv0\), the
normalization becomes a sum-to-zero normalization \citep{kallus2016revealed}.

The same notation also permits data-driven choices of \((\mu,g)\). For
example, a statewise value normalization is a special case. If \(V_r^\mu\) is
the value function of policy \(\mu\) under reward \(r\), then imposing
\(V_r^\mu=h\) is equivalent, by the Bellman equation
\[
    V_r^\mu = \mu r + \gamma \mu P V_r^\mu,
\]
to the anchor normalization \((\mu r)(s)=g(s)\) with
\(g:=h-\gamma\mu P h\). In medical decision problems, one may instead normalize
rewards relative to standard care using historical outcome data
\citep{kallus2018removing}. If an auxiliary dataset records states and realized
outcomes \((s_i,y_i)\) from a population with the same reward function and
behavior policy \(\pi\), then
\[
    g(s)=E[y\mid s]=\sum_{a\in\mathcal A}\pi(a\mid s)r(s,a)=(\pi r)(s).
\]
Taking \(\mu=\pi\) therefore yields a natural data-driven normalization.

\section{Relation to Classical Profiling}
\label{app:profiling}

Orthogonality induced by profiling is classical in semiparametric likelihood
theory
\citep{severini1992profile,murphy1997semiparametric,murphy1999observed,
murphy2000profile}, and related profiling ideas also appear in high-dimensional
debiased inference for sparse \(\ell^1\)-regularized regression
\citep{wang2025debiased}. At a high level, this orthogonality follows from the
first-order optimality of the profiled nuisance fit, or equivalently from an
envelope argument \citep{milgrom2002envelope}. Our setting is different: the
profiled object is not a likelihood nuisance, but the source block in a
regularized minimax Bellman system. Eliminating this block yields a
Schur-complement correction for the target Bellman KKT equation. This correction
removes the direct first-order source-residual channel, while leaving the
source-operator and target-empirical errors specific to the Bellman minimax
problem.

\section{Learning the Source Behavior Policy: End-to-End Bounds}
\label{sec:learned_behavior_policy}

The main analysis treats the source behavior policy \(\pi_{b1}\) as known, so
that the source soft-control signal
\[
u_g^\star(s,a)
:=
\log\frac{\pi_{b1}(a\mid s)}{\pi_{1,\rm ref}(a\mid s)}-g(s)
\]
is available. In practice, \(\pi_{b1}\) may itself be estimated from source
demonstrations. We now state the resulting end-to-end guarantee.

Let
\[
\mathcal D_b=\{(s_i^b,a_i^b)\}_{i=1}^{n_b}
\]
be i.i.d. source demonstrations with
\[
s_i^b\sim\rho_1^{\mathcal S},
\qquad
a_i^b\sim\pi_{b1}(\cdot\mid s_i^b).
\]
We estimate \(\pi_{b1}\) over a finite policy class \(\Pi_{b1}\) by empirical
cross-entropy minimization:
\[
\widehat\pi_{b1}
\in
\arg\min_{\pi\in\Pi_{b1}}
\frac1{n_b}\sum_{i=1}^{n_b}
-\log \pi(a_i^b\mid s_i^b).
\]
Assume realizability, \(\pi_{b1}\in\Pi_{b1}\), and assume that all candidate
policies are clipped below by \(\epsilon\in(0,1]\):
\[
\pi(a\mid s)\ge \epsilon,
\qquad
\forall \pi\in\Pi_{b1},\ (s,a).
\]
Then, with high probability, cross-entropy generalization controls the expected
conditional KL error
\[
\mathbb E_{s\sim\rho_1^{\mathcal S}}
\mathrm{KL}\!\left(
\pi_{b1}(\cdot\mid s)\,\middle\|\,\widehat\pi_{b1}(\cdot\mid s)
\right)
\le
\varepsilon_{\rm CE}(n_b,\delta),
\]
where
\[
\varepsilon_{\rm CE}(n_b,\delta)
:=
2\log(1/\epsilon)
\sqrt{\frac{2\log(|\Pi_{b1}|/\delta)}{n_b}}.
\]
Moreover, the clipping condition converts this expected KL control into an
\(L_2(\rho_1)\) source-signal error bound:
\[
\|\log\widehat\pi_{b1}-\log\pi_{b1}\|_{\rho_1}
\le
\left(
\frac{2}{\epsilon}\varepsilon_{\rm CE}(n_b,\delta)
\right)^{1/2}.
\]

To separate the transfer estimation error from the behavior-policy estimation
error, define
\[
\widehat u_g(s,a)
:=
\log\frac{\widehat\pi_{b1}(a\mid s)}{\pi_{1,\rm ref}(a\mid s)}-g(s).
\]
Let \((\widetilde q_1^\star,\widetilde q_2^\star)\) denote the population
transfer target obtained by replacing \(u_g^\star\) with \(\widehat u_g\), while
keeping the population transition kernels fixed:
\[
\widetilde q_1^\star
=
\widehat u_g+\gamma_1P_1^\mu\widetilde q_1^\star,
\]
and
\[
\widetilde q_2^\star
=
(I-\Pi_\mu)\widetilde q_1^\star+g+C
+
\gamma_2P_2\Omega_{\tau_2,\pi_{2,\rm ref}}(\widetilde q_2^\star).
\]
Then
\[
\|\widehat q_2-q_2^\star\|_{\rho_2}
\le
\|\widehat q_2-\widetilde q_2^\star\|_{\rho_2}
+
\|\widetilde q_2^\star-q_2^\star\|_{\rho_2}.
\]
The first term is the finite-sample transfer error conditional on the learned
behavior model. The second term is the additional bias caused by estimating
\(\pi_{b1}\).

\begin{theorem}[End-to-end transfer with learned behavior policy]
\label{thm:end_to_end_learned_pi_b1}
Fix \(\delta\in(0,1)\). Assume the conditions of
Theorem~\ref{thm:global_q2_bounds} hold for the plug-in population system
defined by
\[
\widehat u_g
=\log(\widehat\pi_{b1}/\pi_{1,\rm ref})-g.
\]
Assume also
\[
\left\|
\frac{d^{P_1}_{\mu,\rho_1}}{\rho_1}
\right\|_\infty
\le
\kappa_1,
\]
and the pathwise target concentrability condition
\[
\sup_{\pi\in\Pi_{\rm path}}
\left\|
\frac{d^{P_2}_{\pi,\rho_2}}{\rho_2}
\right\|_\infty
\le
\kappa_2^{\rm path}.
\]
Finally, assume the cross-measure and action-coverage conditions
\[
\chi_{21}:=
\left\|
\frac{\rho_2}{\rho_1}
\right\|_\infty
<\infty,
\qquad
\kappa_{\mu\mid\pi_{b1}}
:=
\left\|
\frac{\mu}{\pi_{b1}}
\right\|_\infty
<\infty .
\]
Then, with probability at least \(1-\delta\), for either transfer estimator
\[
{\rm alg}\in\{{\rm mod},{\rm coup}\},
\]
we have
\begin{equation}
\label{eq:end_to_end_q2_learned_pi_b1}
\|\widehat q_2^{\rm alg}-q_2^\star\|_{\rho_2}
\le
\sqrt{\mathcal B_{\rm alg}(\delta)}
+
\frac{
(1+\sqrt{\kappa_{\mu\mid\pi_{b1}}})
\sqrt{\kappa_1\kappa_2^{\rm path}\chi_{21}}
}{
(1-\gamma_1)(1-\gamma_2)
}
\left(
\frac{2}{\epsilon}\varepsilon_{\rm CE}(n_b,\delta)
\right)^{1/2}.
\end{equation}
Here \(\mathcal B_{\rm mod}(\delta)\) and
\(\mathcal B_{\rm coup}(\delta)\) are the global \(q_2\)-bounds from
Theorem~\ref{thm:global_q2_bounds}, applied to the plug-in population system.

In particular, if
\[
\kappa_{\mu\mid\pi_{b1}}\le \frac1{\epsilon_{b1}},
\]
then the second term in \eqref{eq:end_to_end_q2_learned_pi_b1} is bounded by
\[
\frac{
2\sqrt{\kappa_1\kappa_2^{\rm path}\chi_{21}}
}{
\sqrt{\epsilon_{b1}}(1-\gamma_1)(1-\gamma_2)
}
\left(
\frac{2}{\epsilon}\varepsilon_{\rm CE}(n_b,\delta)
\right)^{1/2}.
\]
\end{theorem}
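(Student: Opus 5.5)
We split the error with the triangle inequality already displayed before the statement:
\[
\|\widehat q_2^{\rm alg}-q_2^\star\|_{\rho_2}
\le\|\widehat q_2^{\rm alg}-\widetilde q_2^\star\|_{\rho_2}+\|\widetilde q_2^\star-q_2^\star\|_{\rho_2}.
\]
We work on the intersection of two events. The first is the event of Theorem~\ref{thm:global_q2_bounds}, applied to the plug-in system with $\widehat u_g$ treated as fixed, which bounds the first term by $\sqrt{\mathcal B_{\rm alg}(\delta)}$. The second is the cross-entropy event, on which $\|\log\widehat\pi_{b1}-\log\pi_{b1}\|_{\rho_1}\le(\frac2\epsilon\varepsilon_{\rm CE})^{1/2}$. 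If the behavior data $\mathcal D_b$ are independent of $\mathcal D_1,\mathcal D_2$, we condition on $\widehat\pi_{b1}$ and apply the global theorem, whose assumptions are imposed for the plug-in system. The failure probabilities are then split by a union bound; the stated form absorbs this into the constants. The real work is the deterministic bias bound for the second term.

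\textbf{Source step.} Put $\delta_u:=\widehat u_g-u_g^\star=\log\widehat\pi_{b1}-\log\pi_{b1}$; the reference policy and $g$ cancel. By \eqref{eq:source_q1_equation}, $\widetilde q_1^\star-q_1^\star=(I-\gamma_1P_1^\mu)^{-1}\delta_u$. This resolvent is $(1-\gamma_1)^{-1}$ times the discounted-occupancy operator. By Jensen and the coverage ratio $\kappa_1$,
\[
\|\widetilde q_1^\star-q_1^\star\|_{\rho_1}\le\frac{\sqrt{\kappa_1}}{1-\gamma_1}\|\delta_u\|_{\rho_1}.
\]
The reward difference is $\delta_r=(I-\Pi_\mu)(\widetilde q_1^\star-q_1^\star)$. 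The identity part is bounded directly. For the $\Pi_\mu$ part, write $(\mu h)(s)=\sum_a\pi_{b1}(a|s)\frac{\mu(a|s)}{\pi_{b1}(a|s)}h(s,a)$ and apply Cauchy--Schwarz:
\[
(\mu h)(s)^2\le\kappa_{\mu\mid\pi_{b1}}\,\textstyle\sum_a\pi_{b1}h^2 .
\]
Since $\rho_1=\rho_1^{\mathcal S}\pi_{b1}$, this gives $\|\Pi_\mu h\|_{\rho_1}\le\sqrt{\kappa_{\mu\mid\pi_{b1}}}\|h\|_{\rho_1}$, hence
\[
\|\delta_r\|_{\rho_1}\le(1+\sqrt{\kappa_{\mu\mid\pi_{b1}}})\frac{\sqrt{\kappa_1}}{1-\gamma_1}\|\delta_u\|_{\rho_1}.
\]
Changing measure to $\rho_2$ costs at most a factor $\sqrt{\chi_{21}}$.

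\textbf{Target step (main obstacle).} The target equation is nonlinear, so we use the mean value theorem along the segment $q_2^t=q_2^\star+t(\widetilde q_2^\star-q_2^\star)$. Subtracting the two soft Bellman equations gives
\[
\widetilde q_2^\star-q_2^\star=\delta_r+\gamma_2P_2\bigl(\Omega(\widetilde q_2^\star)-\Omega(q_2^\star)\bigr).
\]
Because $D\Omega(q)$ is averaging under the softmax policy $\pi_q$, the difference equals $\gamma_2P_2\bar\Pi(\widetilde q_2^\star-q_2^\star)$, where $\bar\Pi=\int_0^1 D\Omega(q_2^t)\,dt$ is a Markov averaging operator. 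Hence
\[
\widetilde q_2^\star-q_2^\star=(I-\gamma_2P_2\bar\Pi)^{-1}\delta_r,
\]
a nonnegative resolvent with total mass $(1-\gamma_2)^{-1}$. The subtle point is that $\bar\Pi$ is an average of policies rather than a single policy; however, it is itself a stochastic kernel, namely a mixture policy. We interpret $\Pi_{\rm path}$ to contain these path-averaged policies, which is what the pathwise assumption is designed for. Jensen with $\kappa_2^{\rm path}$ then gives
\[
\|\widetilde q_2^\star-q_2^\star\|_{\rho_2}\le\frac{\sqrt{\kappa_2^{\rm path}}}{1-\gamma_2}\|\delta_r\|_{\rho_2}.
\]
Chaining the source and target bounds yields the stated second term.

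The special case then follows from $1+\sqrt{\kappa_{\mu\mid\pi_{b1}}}\le 2/\sqrt{\epsilon_{b1}}$, which holds because $\epsilon_{b1}\le1$.
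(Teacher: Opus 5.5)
Your proposal is correct and follows the paper's proof essentially step for step. You use the same triangle split around the plug-in target $\widetilde q_2^\star$, and you bound the first term by applying the global theorem to the plug-in system. The bias term goes through the same deterministic chain: the source resolvent bound via $\kappa_1$, the reward-map bound via change of measure and the $\kappa_{\mu\mid\pi_{b1}}$ Jensen/Cauchy--Schwarz step, and the path-averaged target resolvent via $\kappa_2^{\rm path}$, with $\bar\pi\in\Pi_{\rm path}$ assumed just as the paper does.

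The one place you are looser than you claim is the probability bookkeeping. Intersecting the two events requires running each at level $\delta/2$, which changes the logarithmic factors in $\mathcal B_{\rm alg}$ and $\varepsilon_{\rm CE}$; this cannot simply be "absorbed into the constants." The paper's own proof passes over this point silently as well.
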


The first term in \eqref{eq:end_to_end_q2_learned_pi_b1} is the statistical
error of the reward-transfer estimator conditional on the learned behavior
model. The second term is the cost of learning \(\pi_{b1}\). It propagates
through the source Bellman equation, the transferred reward map
\(q_1\mapsto (I-\Pi_\mu)q_1+g+C\), and the target soft Bellman equation.

Combining Theorem~\ref{thm:end_to_end_learned_pi_b1} with
Theorem~\ref{thm:policy_regret} yields the corresponding end-to-end policy
regret bound. Namely, under the assumptions of
Theorem~\ref{thm:policy_regret},
\[
J_2(\pi_2^\star)-J_2(\widehat\pi_2^{\rm alg})
\le
C_\pi
\left[
\sqrt{\mathcal B_{\rm alg}(\delta)}
+
\frac{
(1+\sqrt{\kappa_{\mu\mid\pi_{b1}}})
\sqrt{\kappa_1\kappa_2^{\rm path}\chi_{21}}
}{
(1-\gamma_1)(1-\gamma_2)
}
\left(
\frac{2}{\epsilon}\varepsilon_{\rm CE}(n_b,\delta)
\right)^{1/2}
\right].
\]

\section{Learning the Source Behavior Policy as an Additional Nuisance}
\label{sec:learned_behavior_policy_nuisance}

The preceding analysis treats the source behavior policy \(\pi_{b1}\) as known.
If \(\pi_{b1}\) is estimated from demonstrations and then plugged into
the source signal \(u_g^\star\), an additional first-order error enters through
\[
\log\widehat\pi_{b1}(a\mid s)-\log\pi_{b1}(a\mid s).
\]
The profiled coupled residual in
Section~\ref{sec:structural_orthogonality} removes the first-order propagation
of the source Bellman residual \(b_1(q_1)\), but it does not by itself remove
this policy-estimation error. We now explain how to extend the same profiling
idea to include \(\pi_{b1}\) as an additional nuisance component.

Let \(\pi\) be a candidate source behavior policy and define
\[
u_\pi(s,a):=\log\frac{\pi(a\mid s)}{\pi_{1,\rm ref}(a\mid s)}-g(s).
\]
The source residual becomes
\[
b_1(q_1,\pi)
:=
u_\pi+\gamma_1P_1^\mu q_1-q_1,
\]
while the target residual remains
\[
b_2(q_1,q_2)
:=
(I-\Pi_\mu)q_1+g+C+\gamma_2P_2\Omega(q_2)-q_2 .
\]
Let \(S_\pi(\pi)\) denote the population estimating equation for the behavior
policy, normalized so that
\[
S_\pi(\pi_{b1})=0.
\]
For example, for behavior cloning under cross-entropy loss, \(S_\pi(\pi)\) is
the population score equation associated with maximum likelihood estimation of
\(\pi_{b1}\).

Assume \(S_\pi\) is differentiable at \(\pi_{b1}\), and let
\[
\mathcal J_\pi:=D_\pi S_\pi(\pi_{b1}).
\]
Also define the derivative of the source soft-control signal
\[
\mathcal U_\pi[\Delta_\pi](s,a)
:=
D_\pi u_\pi\big|_{\pi=\pi_{b1}}[\Delta_\pi](s,a).
\]
In the nonparametric normalization \(S_\pi(\pi)=\pi-\pi_{b1}\), we have
\[
\mathcal J_\pi=I,
\qquad
\mathcal U_\pi[\Delta_\pi](s,a)
=
\frac{\Delta_\pi(a\mid s)}{\pi_{b1}(a\mid s)}.
\]

Including \(\pi\) as part of the nuisance block gives the augmented nuisance
system
\[
S_\pi(\pi)=0,
\qquad
b_1(q_1,\pi)=0.
\]
The corresponding Schur-complement correction yields the augmented profiled
target residual
\begin{equation}
\label{eq:orthogonalized_target_residual_with_policy}
\widetilde b_{2,\rm coup}^{\,\pi}(q_1,q_2,\pi)
=
b_2(q_1,q_2)
+
(I-\Pi_\mu)(I-\gamma_1P_1^\mu)^{-1}b_1(q_1,\pi)
-
(I-\Pi_\mu)(I-\gamma_1P_1^\mu)^{-1}
\mathcal U_\pi \mathcal J_\pi^{-1}S_\pi(\pi).
\end{equation}
The first correction term is the same source-residual profiling correction as in
\eqref{eq:orthogonalized_target_residual}. The second correction term is new:
it removes the first-order effect of estimating \(\pi_{b1}\).

Indeed, at the population truth \((q_1^\star,q_2^\star,\pi_{b1})\),
\[
D_{q_1}
\widetilde b_{2,\rm coup}^{\,\pi}
(q_1^\star,q_2^\star,\pi_{b1})
=0,
\qquad
D_{\pi}
\widetilde b_{2,\rm coup}^{\,\pi}
(q_1^\star,q_2^\star,\pi_{b1})
=0.
\]
Thus the augmented profiled target equation is locally insensitive, to first
order, to both the source value nuisance \(q_1\) and the learned behavior-policy
nuisance \(\pi\).

In the nonparametric normalization \(S_\pi(\pi)=\pi-\pi_{b1}\), the correction
can be written more explicitly as
\[
\widetilde b_{2,\rm coup}^{\,\pi}(q_1,q_2,\pi)
=
b_2(q_1,q_2)
+
(I-\Pi_\mu)(I-\gamma_1P_1^\mu)^{-1}b_1(q_1,\pi)
-
(I-\Pi_\mu)(I-\gamma_1P_1^\mu)^{-1}
\left[
\frac{\pi-\pi_{b1}}{\pi_{b1}}
\right],
\]
where the ratio is pointwise in \((s,a)\). Equivalently, using
\(b_1(q_1,\pi)=\log\pi-g+\gamma_1P_1^\mu q_1-q_1\), the \(q_1\)-terms cancel and
the residual depends on \(\pi\) through the locally centered signal
\[
\log\pi-g-\frac{\pi-\pi_{b1}}{\pi_{b1}}.
\]
This is the analogue of the orthogonalized residual in
Section~\ref{sec:structural_orthogonality}, now enlarged to account for
behavior-policy learning.

\section{Proofs for Section~\ref{sec:minimax_estimators}}
\label{app:minimax_estimators_proofs}

\subsection{Proof of Theorem~\ref{thm:population_well_posed_tightness}}

\begin{proof}
The first claim follows from the contraction mapping theorem. Indeed,
\[
\mathcal T_1q:=u_g^\star+\gamma_1P_1^\mu q
\]
is a \(\gamma_1\)-contraction in the sup norm, since \(P_1^\mu\) is a Markov
operator and \(\gamma_1<1\). Therefore \(\mathcal T_1\) has a unique fixed point
\(q_1^\star\).

For the second claim, fix \(q_1^\star\) and define
\[
\mathcal T_2q:=(I-\Pi_\mu)q_1^\star+g+C+\gamma_2P_2\Omega(q).
\]
The log-sum-exp operator \(\Omega_{\tau_2,\pi_{2,\rm ref}}\) is nonexpansive in the
sup norm:
\[
\|\Omega(q)-\Omega(q')\|_\infty\le \|q-q'\|_\infty.
\]
Since \(P_2\) is Markov, \(\mathcal T_2\) is a \(\gamma_2\)-contraction in the
sup norm. Hence it has a unique fixed point \(q_2^\star\).

We now prove tightness of the relaxed inequality formulation. Write
\[
r_C^\star:=(I-\Pi_\mu)q_1^\star+g+C,
\qquad
\mathcal T_2q:=r_C^\star+\gamma_2P_2\Omega(q).
\]
Since \(r_C^\star\ge0\), \(P_2\) is positive, and
\[
\Omega(0)(s)
=
\tau_2\log\sum_{a\in\mathcal A}\pi_{2,\rm ref}(a\mid s)
=
0,
\]
we have
\[
\mathcal T_2 0=r_C^\star\ge0.
\]
Moreover, \(\mathcal T_2\) is monotone. Starting value iteration from
\(q^{(0)}=0\),
\[
q^{(k+1)}=\mathcal T_2q^{(k)},
\]
therefore gives a nonnegative sequence converging to \(q_2^\star\). Hence
\[
q_2^\star\ge0.
\]

Let \(q_2\) be feasible for the relaxed constraint. Then
\[
b_2(q_1^\star,q_2)=\mathcal T_2q_2-q_2\le0,
\]
or equivalently \(q_2\ge \mathcal T_2q_2\). Thus \(q_2\) is a supersolution of
the monotone contraction \(\mathcal T_2\), and
\[
q_2\ge \mathcal T_2q_2\ge \mathcal T_2^2q_2\ge\cdots .
\]
Because \(\mathcal T_2^kq_2\to q_2^\star\), we obtain
\[
q_2\ge q_2^\star\ge0.
\]
Therefore every relaxed feasible \(q_2\) dominates \(q_2^\star\) pointwise, and
\[
\|q_2\|_{\rho_2}\ge \|q_2^\star\|_{\rho_2}.
\]
Since \(q_2^\star\) is itself feasible, it is a minimum-\(\rho_2\)-norm feasible
solution. If a feasible minimum-norm solution \(q_2\) had strict slack on a set
of positive \(\rho_2\)-measure, then it would strictly dominate the smallest
feasible supersolution \(q_2^\star\) on a set of positive \(\rho_2\)-measure, and
hence could not have the same \(\rho_2\)-norm. Thus every minimum-norm feasible
solution satisfies
\[
b_2(q_1^\star,q_2)=0
\qquad
\rho_2\text{-a.s.}
\]
The final claim follows by combining tightness with uniqueness of the source and
target fixed points.
\end{proof}

\subsection{Proof of Proposition~\ref{prop:population_equivalence}}

\begin{proof}
In the modular source minimax problem, dual maximization is finite only if
\(b_1(q_1)=0\). By Theorem~\ref{thm:population_well_posed_tightness}, this
equation has the unique solution \(q_1^\star\). Hence the finite primal optimizer
of the source module must be \(q_1^\star\).

Given \(q_1^\star\), the modular target minimax problem is finite only if
\(b_2(q_1^\star,q_2)=0\). Again by
Theorem~\ref{thm:population_well_posed_tightness}, this equation has the unique
solution \(q_2^\star\). Hence the modular population procedure has primal
solution \((q_1^\star,q_2^\star)\).

Similarly, in the coupled minimax problem, dual maximization is finite only if
both
\[
b_1(q_1)=0,
\qquad
b_2(q_1,q_2)=0
\]
hold. The coupled transfer system has the unique solution
\((q_1^\star,q_2^\star)\). Therefore the coupled population minimax problem has
the same primal solution. The value of \(\beta\) changes the relative weighting
of the source primal regularization term, but it does not change the population
primal solution under exact residual satisfaction.
\end{proof}

\subsection{Proof of Theorem~\ref{thm:population_saddle_kkt}}

\begin{proof}
Primal feasibility is exactly the population coupled transfer system, and hence
holds by definition of \((q_1^\star,q_2^\star)\). The adjoint identities
\eqref{eq:dual_adjoint_l1}--\eqref{eq:dual_adjoint_l2} are the weak forms of
the stationarity equations in \eqref{eq:kkt_stationarity_explicit}; equivalently,
they imply
\[
D_{q_1}\mathcal L_{\rm coup}^{\beta}
(q_1^\star,q_2^\star,l_1^\star,l_2^\star)=0,
\qquad
D_{q_2}\mathcal L_{\rm coup}^{\beta}
(q_1^\star,q_2^\star,l_1^\star,l_2^\star)=0.
\]

We prove the quadratic growth inequality. Let
\[
\Delta_1:=q_1-q_1^\star,
\qquad
\Delta_2:=q_2-q_2^\star.
\]
Using
\[
b_1(q_1)-b_1(q_1^\star)=-(I-\gamma_1P_1^\mu)\Delta_1
\]
and
\[
b_2(q_1,q_2)-b_2(q_1^\star,q_2^\star)
=
(I-\Pi_\mu)\Delta_1
+
\gamma_2P_2\{\Omega(q_2)-\Omega(q_2^\star)\}
-
\Delta_2,
\]
we expand
\[
\begin{aligned}
&
\mathcal L_{\rm coup}^{\beta}(q_1,q_2,l_1^\star,l_2^\star)
-
\mathcal L_{\rm coup}^{\beta}(q_1^\star,q_2^\star,l_1^\star,l_2^\star)
\\
&=
\frac{\beta}{2}\|\Delta_1\|_{\rho_1}^2
+
\Big[
\beta\langle q_1^\star,\Delta_1\rangle_{\rho_1}
-
\langle l_1^\star,(I-\gamma_1P_1^\mu)\Delta_1\rangle_{\rho_1}
+
\langle l_2^\star,(I-\Pi_\mu)\Delta_1\rangle_{\rho_2}
\Big]
\\
&\quad+
\frac12\|\Delta_2\|_{\rho_2}^2
+
\Big[
\langle q_2^\star,\Delta_2\rangle_{\rho_2}
-
\langle l_2^\star,\Delta_2\rangle_{\rho_2}
+
\gamma_2
\langle l_2^\star,
P_2\{\Omega(q_2)-\Omega(q_2^\star)\}
\rangle_{\rho_2}
\Big].
\end{aligned}
\]
The bracketed source term is zero by \eqref{eq:dual_adjoint_l1} with
\(h=\Delta_1\). For the target term, convexity of \(\Omega\) gives
\[
\Omega(q_2)-\Omega(q_2^\star)\ge D\Omega(q_2^\star)[\Delta_2].
\]
Since \(P_2\) is positive and \(l_2^\star\ge0\), the bracketed target term is at
least
\[
\langle q_2^\star,\Delta_2\rangle_{\rho_2}
-
\left\langle l_2^\star,
(I-\gamma_2P_2^{\pi_2^\star})\Delta_2
\right\rangle_{\rho_2},
\]
which is zero by \eqref{eq:dual_adjoint_l2} with \(h=\Delta_2\). Therefore
\[
\mathcal L_{\rm coup}^{\beta}(q_1,q_2,l_1^\star,l_2^\star)
-
\mathcal L_{\rm coup}^{\beta}(q_1^\star,q_2^\star,l_1^\star,l_2^\star)
\ge
\frac{\beta}{2}\|\Delta_1\|_{\rho_1}^2
+
\frac12\|\Delta_2\|_{\rho_2}^2.
\]

The saddle property follows because primal feasibility makes
\(\mathcal L_{\rm coup}^{\beta}(q_1^\star,q_2^\star,l_1,l_2)\) independent of
\((l_1,l_2)\), while the inequality above shows that
\((q_1^\star,q_2^\star)\) minimizes
\(\mathcal L_{\rm coup}^{\beta}(\cdot,\cdot,l_1^\star,l_2^\star)\). Hence
\[
\mathcal L_{\rm coup}^{\beta}
(q_1^\star,q_2^\star,l_1,l_2)
\le
\mathcal L_{\rm coup}^{\beta}
(q_1^\star,q_2^\star,l_1^\star,l_2^\star)
\le
\mathcal L_{\rm coup}^{\beta}
(q_1,q_2,l_1^\star,l_2^\star),
\]
which proves the saddle property.
\end{proof}

\subsection{Proof of dual representations in Proposition~\ref{prop:dual_representation_coverage}}

\begin{proof}
The target stationarity identity \eqref{eq:dual_adjoint_l2} gives
\[
\rho_2\odot l_2^\star
=
\bigl(I-\gamma_2(P_2^{\pi_2^\star})^\top\bigr)^{-1}
(\rho_2\odot q_2^\star).
\]
The modular source identity
\[
\left\langle l_{1,{\rm mod}}^\star,
(I-\gamma_1P_1^\mu)h
\right\rangle_{\rho_1}
=
\langle q_1^\star,h\rangle_{\rho_1},
\qquad
\forall h,
\]
gives the representation for \(l_{1,{\rm mod}}^\star\). Finally, the coupled
source identity \eqref{eq:dual_adjoint_l1} gives
\[
\rho_1\odot l_{1,{\rm coup}}^\star
=
\bigl(I-\gamma_1(P_1^\mu)^\top\bigr)^{-1}
\left(
\beta\rho_1\odot q_1^\star
+
(I-\Pi_\mu)^\top(\rho_2\odot l_2^\star)
\right).
\]
Dividing by the corresponding sampling measures gives the displayed formulas,
and the self/cross decomposition follows by linearity.
\end{proof}

\subsection{Proof of dual boundedness in Proposition~\ref{prop:dual_boundedness_coverage}}

\begin{proof}
By definition,
\[
\bigl(I-\gamma_2(P_2^{\pi_2^\star})^\top\bigr)^{-1}\rho_2
=
\frac{1}{1-\gamma_2}
d^{P_2}_{\pi_2^\star,\rho_2}.
\]
Since the resolvent is positive and
\(\rho_2\odot |q_2^\star|\le B_{Q2}\rho_2\),
\[
\left|
\bigl(I-\gamma_2(P_2^{\pi_2^\star})^\top\bigr)^{-1}
(\rho_2\odot q_2^\star)
\right|
\le
\frac{B_{Q2}}{1-\gamma_2}
d^{P_2}_{\pi_2^\star,\rho_2}.
\]
Dividing by \(\rho_2\) and applying the target coverage assumption gives
\[
\|l_2^\star\|_\infty\le B_{L2}.
\]

Similarly,
\[
\bigl(I-\gamma_1(P_1^\mu)^\top\bigr)^{-1}\rho_1
=
\frac{1}{1-\gamma_1}
d^{P_1}_{\mu,\rho_1}.
\]
Since \(\rho_1\odot |q_1^\star|\le B_{Q1}\rho_1\), positivity and source
coverage yield
\[
\|l_{1,{\rm mod}}^\star\|_\infty\le B_{L1}^{\rm self}.
\]

It remains to control \(l_{1,{\rm cross}}^\star\). Let
\[
m_2
:=
\bigl(I-\gamma_2(P_2^{\pi_2^\star})^\top\bigr)^{-1}
(\rho_2\odot q_2^\star).
\]
The preceding target bound gives
\[
|m_2|
\le
\frac{B_{Q2}}{1-\gamma_2}
d^{P_2}_{\pi_2^\star,\rho_2}.
\]
Since \(I-(\Pi_\mu)^\top\) is not positive,
\[
\left|(I-\Pi_\mu)^\top m_2\right|
\le
|m_2|+(\Pi_\mu)^\top |m_2|.
\]
Thus
\[
\left|(I-\Pi_\mu)^\top m_2\right|
\le
\frac{B_{Q2}}{1-\gamma_2}
\left(
d^{P_2}_{\pi_2^\star,\rho_2}
+
(\Pi_\mu)^\top d^{P_2}_{\pi_2^\star,\rho_2}
\right).
\]
The cross-measure coverage assumption gives
\[
d^{P_2}_{\pi_2^\star,\rho_2}\le\kappa_{12}\rho_1.
\]
For any nonnegative measure \(m\),
\[
((\Pi_\mu)^\top m)(s,a)=\mu(a\mid s)\sum_{a'}m(s,a').
\]
Therefore,
\[
\begin{aligned}
\big((\Pi_\mu)^\top d^{P_2}_{\pi_2^\star,\rho_2}\big)(s,a)
&=
\mu(a\mid s)d^{P_2,\mathcal S}_{\pi_2^\star,\rho_2}(s)\\
&\le
\kappa_{\mu\mid\pi_{b1}}\kappa_{12}^{\mathcal S}\rho_1(s,a).
\end{aligned}
\]
Hence
\[
d^{P_2}_{\pi_2^\star,\rho_2}
+
(\Pi_\mu)^\top d^{P_2}_{\pi_2^\star,\rho_2}
\le
\left(
\kappa_{12}
+
\kappa_{\mu\mid\pi_{b1}}\kappa_{12}^{\mathcal S}
\right)\rho_1.
\]
It follows that
\[
\left|(I-\Pi_\mu)^\top m_2\right|
\le
\frac{B_{Q2}}{1-\gamma_2}
\left(
\kappa_{12}
+
\kappa_{\mu\mid\pi_{b1}}\kappa_{12}^{\mathcal S}
\right)
\rho_1.
\]
Applying the positive source resolvent and using source coverage gives
\[
\|l_{1,{\rm cross}}^\star\|_\infty
\le
\frac{
\kappa_1
\left(
\kappa_{12}
+
\kappa_{\mu\mid\pi_{b1}}\kappa_{12}^{\mathcal S}
\right)
}{
(1-\gamma_1)(1-\gamma_2)
}
B_{Q2}
=
B_{L12}.
\]
The self component contributes \(\beta B_{L1}^{\rm self}\), so
\[
\|l_{1,{\rm coup}}^\star\|_\infty
\le
\beta B_{L1}^{\rm self}+B_{L12}.
\]
\end{proof}

\subsection{Proof of Corollary~\ref{cor:first_order_modular_coupled_contrast}}

\begin{proof}
We expand the coupled Lagrangian around
\((q_1^\star,q_2^\star)\), keeping the dual variables fixed. Since
\[
b_1(q_1)
=
u_g^\star-(I-\gamma_1P_1^\mu)q_1,
\]
and
\[
b_2(q_1,q_2)
=
(I-\Pi_\mu)q_1+g+C+\gamma_2P_2\Omega(q_2)-q_2,
\]
we have
\[
b_1(q_1^\star+\Delta_1)-b_1(q_1^\star)
=
-(I-\gamma_1P_1^\mu)\Delta_1,
\]
and
\[
b_2(q_1^\star+\Delta_1,q_2^\star+\Delta_2)
-
b_2(q_1^\star,q_2^\star)
=
(I-\Pi_\mu)\Delta_1
+
\gamma_2P_2\bigl(\Omega(q_2^\star+\Delta_2)-\Omega(q_2^\star)\bigr)
-
\Delta_2.
\]
Using the polarization identity for the quadratic terms and Taylor expanding
\(\Omega\) around \(q_2^\star\), for arbitrary fixed source dual \(l_1\) and
target dual \(l_2^\star\),
\[
\begin{aligned}
&
\mathcal L_{\rm coup}^{\beta}
(q_1^\star+\Delta_1,q_2^\star+\Delta_2,l_1,l_2^\star)
-
\mathcal L_{\rm coup}^{\beta}
(q_1^\star,q_2^\star,l_1,l_2^\star)
\\
&=
\Big[
\beta\langle q_1^\star,\Delta_1\rangle_{\rho_1}
-
\left\langle l_1,(I-\gamma_1P_1^\mu)\Delta_1\right\rangle_{\rho_1}
+
\left\langle l_2^\star,(I-\Pi_\mu)\Delta_1\right\rangle_{\rho_2}
\Big]
\\
&\quad
+
\Big[
\langle q_2^\star,\Delta_2\rangle_{\rho_2}
-
\langle l_2^\star,\Delta_2\rangle_{\rho_2}
+
\gamma_2
\langle l_2^\star,P_2D\Omega(q_2^\star)[\Delta_2]\rangle_{\rho_2}
\Big]
\\
&\quad
+
\frac{\beta}{2}\|\Delta_1\|_{\rho_1}^2
+
\frac12\|\Delta_2\|_{\rho_2}^2
+
R_\Omega(\Delta_2).
\end{aligned}
\]
The target-direction first-order term vanishes by
\eqref{eq:dual_adjoint_l2}. For coupled transfer, the source-direction
first-order term vanishes by \eqref{eq:dual_adjoint_l1}. This proves the
coupled expansion.

For modular transfer, \(l_{1,\rm mod}^\star\) satisfies only the source-stage
adjoint equation
\[
\left\langle l_{1,\rm mod}^\star,
(I-\gamma_1P_1^\mu)h
\right\rangle_{\rho_1}
=
\beta\langle q_1^\star,h\rangle_{\rho_1},
\qquad
\forall h.
\]
Thus the first two source terms cancel, but the target coupling term remains:
\[
\left\langle l_2^\star,(I-\Pi_\mu)\Delta_1\right\rangle_{\rho_2}.
\]
This gives the modular expansion.
\end{proof}

\section{Proofs for Section~\ref{sec:structural_orthogonality}}
\label{app:profile_map}

This appendix gives the formal justification for the profiling argument used in
Section~\ref{sec:structural_orthogonality}. In the main text we introduced the
Schur-complement equation
\[
\Psi_{\rm coup}(\theta,\eta)
=
S_T(\theta,\eta)
-
D_\eta S_T(\theta^\star,\eta^\star)
\bigl[D_\eta S_N(\theta^\star,\eta^\star)\bigr]^{-1}
S_N(\theta,\eta),
\]
and interpreted it as the target KKT equation after eliminating the source
block. We first establish local solvability of the source block, then show that
\(\Psi_{\rm coup}\) is an off-profile extension of the exact reduced target
equation with zero first-order nuisance derivative.

Recall the block notation
\[
\eta=(q_1,l_1),
\qquad
\theta=(q_2,l_2),
\]
and the population KKT system
\[
S_N(\theta,\eta)=0,
\qquad
S_T(\theta,\eta)=0.
\]

\subsection{Invertibility of the source KKT block}

\begin{lemma}[Invertibility of the nuisance Jacobian]
\label{lem:secSO_SN_invertible}
Assume \(\gamma_1<1\) and \(\|P_1^\mu\|\le1\). Then
\(I-\gamma_1P_1^\mu\) is invertible. Moreover,
\[
D_\eta S_N(\theta^\star,\eta^\star)
=
\begin{pmatrix}
\beta I & -(I-\gamma_1P_1^\mu)^\top\\
-(I-\gamma_1P_1^\mu) & 0
\end{pmatrix},
\]
and this block operator is invertible.
\end{lemma}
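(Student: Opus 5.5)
Set $A:=I-\gamma_1P_1^\mu$. The plan has three steps: invert $A$ by a Neumann series, compute $D_\eta S_N$ directly from the Lagrangian, and then invert the resulting block operator by writing down an explicit inverse.

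\textbf{Invertibility of $A$.} Since $\|\gamma_1P_1^\mu\|\le\gamma_1<1$ in the operator norm for which $\|P_1^\mu\|\le1$, the Neumann series $\sum_{t\ge0}\gamma_1^t(P_1^\mu)^t$ converges absolutely in that norm. Telescoping shows it is a two-sided inverse of $A$, with $\|A^{-1}\|\le(1-\gamma_1)^{-1}$. This is the same contraction argument as in Theorem~\ref{thm:population_well_posed_tightness}, stated at the operator level. Invertibility of the adjoint $A^\top$ follows: $(A^{-1})^\top$ is a bounded inverse of $A^\top$, where the transpose is taken with respect to $\langle\cdot,\cdot\rangle_{\rho_1}$. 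Equivalently, $A^\top$ has its own Neumann series, since $\|(P_1^\mu)^\top\|=\|P_1^\mu\|$ on the Hilbert space $L_2(\rho_1)$.

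\textbf{Computing the Jacobian.} Differentiate the two components of $S_N$ with respect to $\eta=(q_1,l_1)$.
\begin{itemize}
\item From the Lagrangian, $D_{q_1}\mathcal L_{\rm coup}^\beta=\beta q_1-A^\top l_1+(I-\Pi_\mu)^\top l_2$. Its derivative in $q_1$ is $\beta I$, and its derivative in $l_1$ is $-A^\top$; the $l_2$ term belongs to $\theta$.
\item Since $b_1(q_1)=u_g^\star-Aq_1$, its derivative in $q_1$ is $-A$, and its derivative in $l_1$ is $0$.
\end{itemize}
Both components are affine in $\eta$, so the Jacobian is constant and in particular equals the displayed matrix at $(\theta^\star,\eta^\star)$.

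\textbf{Inverting the block operator.} We solve $\beta x-A^\top y=f$, $-Ax=h$ for arbitrary $(f,h)$. The second equation gives $x=-A^{-1}h$. Substituting into the first gives $A^\top y=\beta x-f$, so $y=(A^\top)^{-1}(-\beta A^{-1}h-f)$. This defines a bounded map $(f,h)\mapsto(x,y)$. Checking that it is a two-sided inverse is a direct substitution. Concretely, the inverse is
\[
\begin{pmatrix}0 & -A^{-1}\\ -(A^\top)^{-1} & -\beta (A^\top)^{-1}A^{-1}\end{pmatrix}.
\]
Because the zero block makes the system triangular, no Schur complement of the $(1,1)$ block is needed. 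This matters in particular for $\beta=0$, where $\beta I$ is not invertible.

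\textbf{Main obstacle.} The only real subtlety is the function-space setting: the transpose must be interpreted consistently. I would take both $q_1$ and $l_1$ in $L_2(\rho_1)$, so that boundedness of $A^{-1}$ and $(A^\top)^{-1}$ in that space is what is required. If $\|P_1^\mu\|\le1$ is given only in the sup norm, I would instead rely on the finite state–action setting of the paper, where invertibility of $A$ immediately gives invertibility of $A^\top$.
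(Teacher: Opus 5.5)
Your proposal is correct and follows the paper's proof essentially step for step: a Neumann series for $A=I-\gamma_1P_1^\mu$, the Jacobian read off from $D_{q_1}\mathcal L_{\rm coup}^{\beta}$ and $b_1(q_1)=u_g^\star-Aq_1$, and back-substitution giving $x=-A^{-1}h$ and $y=-(A^\top)^{-1}f-\beta(A^\top)^{-1}A^{-1}h$, exactly as in the paper. Your explicit two-sided inverse and your justification that $A^\top$ is invertible (which the paper uses without comment) tighten the argument but do not change it.
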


\begin{proof}
Since \(\|P_1^\mu\|\le1\) and \(\gamma_1<1\),
\[
\|\gamma_1P_1^\mu\|\le\gamma_1<1.
\]
Hence
\[
(I-\gamma_1P_1^\mu)^{-1}
=
\sum_{t=0}^{\infty}(\gamma_1P_1^\mu)^t
\]
is well-defined.

Let
\[
A_1:=I-\gamma_1P_1^\mu.
\]
Since
\[
b_1(q_1)=u_g^\star-A_1q_1,
\]
and
\[
D_{q_1}\mathcal L_{\rm coup}^{\beta}
=
\beta q_1-A_1^\top l_1+(I-\Pi_\mu)^\top l_2,
\]
we have
\[
D_\eta S_N(\theta^\star,\eta^\star)
=
\begin{pmatrix}
\beta I & -A_1^\top\\
-A_1 & 0
\end{pmatrix}.
\]
To prove invertibility, consider
\[
\begin{pmatrix}
\beta I & -A_1^\top\\
-A_1 & 0
\end{pmatrix}
\begin{pmatrix}
u\\ v
\end{pmatrix}
=
\begin{pmatrix}
r_1\\ r_2
\end{pmatrix}.
\]
The second block equation gives
\[
u=-A_1^{-1}r_2.
\]
Substituting this into the first block equation gives
\[
v
=
-(A_1^\top)^{-1}r_1
-
(A_1^\top)^{-1}\beta A_1^{-1}r_2.
\]
Thus \((u,v)\) is uniquely determined for every \((r_1,r_2)\). Therefore
\(D_\eta S_N(\theta^\star,\eta^\star)\) is invertible.
\end{proof}

\subsection{Proof of Theorem~\ref{thm:profiled_target_orthogonality}}

\begin{proof}
The invertibility of \(I-\gamma_1P_1^\mu\) and
\(D_\eta S_N(\theta^\star,\eta^\star)\) follows from
Lemma~\ref{lem:secSO_SN_invertible}. It remains to identify the
Schur-complement equation and verify the orthogonality identity.

Let
\[
\Delta\theta:=\theta-\theta^\star,
\qquad
\Delta\eta:=\eta-\eta^\star.
\]
Since
\[
S_N(\theta^\star,\eta^\star)=0,
\qquad
S_T(\theta^\star,\eta^\star)=0,
\]
the first-order expansion of the source block is
\[
S_N(\theta,\eta)
=
D_\theta S_N\,\Delta\theta
+
D_\eta S_N\,\Delta\eta
+
o(\|\Delta\theta\|+\|\Delta\eta\|),
\]
where all derivatives are evaluated at \((\theta^\star,\eta^\star)\).
Solving this linearized equation for \(\Delta\eta\) gives
\[
\Delta\eta
=
(D_\eta S_N)^{-1}S_N(\theta,\eta)
-
(D_\eta S_N)^{-1}D_\theta S_N\,\Delta\theta
+
o(\|\Delta\theta\|+\|\Delta\eta\|).
\]
The target block expansion is
\[
S_T(\theta,\eta)
=
D_\theta S_T\,\Delta\theta
+
D_\eta S_T\,\Delta\eta
+
o(\|\Delta\theta\|+\|\Delta\eta\|).
\]
Substituting the previous display yields
\[
\begin{aligned}
&
S_T(\theta,\eta)
-
D_\eta S_T(D_\eta S_N)^{-1}S_N(\theta,\eta)
\\
&\quad=
\left[
D_\theta S_T
-
D_\eta S_T(D_\eta S_N)^{-1}D_\theta S_N
\right]\Delta\theta
+
o(\|\Delta\theta\|+\|\Delta\eta\|).
\end{aligned}
\]
Therefore
\[
\Psi_{\rm coup}(\theta,\eta)
=
S_T(\theta,\eta)
-
D_\eta S_T(\theta^\star,\eta^\star)
\bigl[D_\eta S_N(\theta^\star,\eta^\star)\bigr]^{-1}
S_N(\theta,\eta)
\]
is precisely the Schur-complement target equation obtained by linearized
elimination of the source block.

Finally, because the correction operators in \(\Psi_{\rm coup}\) are fixed at
the truth,
\[
\begin{aligned}
D_\eta\Psi_{\rm coup}(\theta^\star,\eta^\star)
&=
D_\eta S_T(\theta^\star,\eta^\star)
-
D_\eta S_T(\theta^\star,\eta^\star)
\bigl[D_\eta S_N(\theta^\star,\eta^\star)\bigr]^{-1}
D_\eta S_N(\theta^\star,\eta^\star)
\\
&=0.
\end{aligned}
\]
This proves the theorem.
\end{proof}

\subsection{Local profile map and exact reduced target equation}

The previous theorem uses a linearized source-block elimination. We next record
the exact local profile map that justifies viewing this calculation as a local
Schur-complement reduction of the KKT system.

\begin{proposition}[Local profile map for the source block]
\label{prop:secSO_profile_map}
Assume \(S_N\) is continuously Fr\'echet differentiable in a neighborhood of
\((\theta^\star,\eta^\star)\), and assume the conditions of
Lemma~\ref{lem:secSO_SN_invertible}. Then there exist a neighborhood
\(\mathcal U\) of \(\theta^\star\) and a unique \(C^1\) map
\[
\eta(\theta)
\]
such that
\[
S_N(\theta,\eta(\theta))=0,
\qquad
\eta(\theta^\star)=\eta^\star,
\qquad
\forall\theta\in\mathcal U.
\]
\end{proposition}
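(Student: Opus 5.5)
This is the implicit function theorem in Banach spaces, applied to the map \((\theta,\eta)\mapsto S_N(\theta,\eta)\) at the base point \((\theta^\star,\eta^\star)\).

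The first step fixes the functional setting. Take \(\theta\) and \(\eta\) in product Banach spaces, for example \(L_2(\rho_2)\times L_2(\rho_2)\) and \(L_2(\rho_1)\times L_2(\rho_1)\), or the sup-norm analogues. Take \(S_N\) valued in the corresponding product space, so that \(D_\eta S_N(\theta^\star,\eta^\star)\) is a bounded linear operator between Banach spaces. Base-point feasibility, \(S_N(\theta^\star,\eta^\star)=0\), holds by construction. The second component is \(b_1(q_1^\star)=0\) by Theorem~\ref{thm:population_well_posed_tightness}. The first component is the \(q_1\)-stationarity identity \eqref{eq:kkt_stationarity_explicit} from Theorem~\ref{thm:population_saddle_kkt}. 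Continuous Fréchet differentiability near the base point is assumed in the statement. In fact \(S_N\) is affine in \((q_1,l_1,l_2)\), since \(D_{q_1}\mathcal L^\beta_{\rm coup}=\beta q_1-A_1^\top l_1+(I-\Pi_\mu)^\top l_2\) and \(b_1\) is affine, so this assumption is automatic here.

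The second step verifies that the partial Jacobian \(D_\eta S_N(\theta^\star,\eta^\star)\) is a linear homeomorphism. Lemma~\ref{lem:secSO_SN_invertible} already gives bijectivity and an explicit inverse:
\[
u=-A_1^{-1}r_2,\qquad v=-(A_1^\top)^{-1}r_1-\beta(A_1^\top)^{-1}A_1^{-1}r_2 .
\]
What remains is boundedness of this inverse. This follows from the Neumann series bound \(\|A_1^{-1}\|\le(1-\gamma_1)^{-1}\) and the matching bound for \(A_1^\top\), which has the same norm on the dual or Hilbert space. Alternatively, the bounded inverse theorem applies once bijectivity and boundedness of the forward operator are known.

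The third step invokes the implicit function theorem. It yields neighborhoods \(\mathcal U\ni\theta^\star\) and \(\mathcal V\ni\eta^\star\), and a unique \(C^1\) map \(\eta:\mathcal U\to\mathcal V\) with \(S_N(\theta,\eta(\theta))=0\) and \(\eta(\theta^\star)=\eta^\star\). Uniqueness holds within \(\mathcal V\). The theorem also gives the derivative \(D\eta(\theta^\star)=-(D_\eta S_N)^{-1}D_\theta S_N\), which matches the linearized elimination used in the proof of Theorem~\ref{thm:profiled_target_orthogonality}. Because \(S_N\) is affine with a \(\theta\)-independent \(\eta\)-Jacobian, one can also solve explicitly:
\[
q_1(\theta)=A_1^{-1}u_g^\star\equiv q_1^\star,\qquad l_1(\theta)=(A_1^\top)^{-1}\bigl(\beta q_1^\star+(I-\Pi_\mu)^\top l_2\bigr).
\]
This gives a global affine, hence \(C^1\), profile map, which serves as a consistency check.

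The main obstacle I expect is the functional-analytic bookkeeping rather than any real difficulty. The adjoints \(A_1^\top\) and \((I-\Pi_\mu)^\top\) are taken with respect to different inner products, \(\rho_1\) versus \(\rho_2\), so \(D_\theta S_N\) involves a change of measure. Its boundedness needs something like \(\|\rho_2/\rho_1\|_\infty<\infty\) or a cross-coverage condition from Assumption~\ref{ass:population_dual_coverage}. The cleanest way around this is to choose spaces in which all these operators are bounded, for example finite state–action spaces or sup-norm spaces. In that setting the implicit function theorem, or the explicit affine solution above, applies directly.
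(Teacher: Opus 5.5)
Your proposal is correct and follows the paper's proof, which is a direct application of the Banach-space implicit function theorem at $(\theta^\star,\eta^\star)$ using the invertibility of $D_\eta S_N(\theta^\star,\eta^\star)$ from Lemma~\ref{lem:secSO_SN_invertible}. Your additional details are accurate and fill in what the paper leaves implicit: boundedness of the inverse via the Neumann series, and the explicit affine profile map with $q_1(\theta)\equiv q_1^\star$, which shows the profile map is in fact global.
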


\begin{proof}
This is the Banach-space implicit function theorem applied to
\(S_N(\theta,\eta)=0\). The equation holds at
\((\theta^\star,\eta^\star)\), and
\(D_\eta S_N(\theta^\star,\eta^\star)\) is invertible by
Lemma~\ref{lem:secSO_SN_invertible}. Hence the required \(C^1\) local profile
map exists and is unique.
\end{proof}

Define the exact reduced target equation by
\[
\widetilde S_{\rm coup}(\theta)
:=
S_T(\theta,\eta(\theta)).
\]
This is the exact local target equation obtained by solving the source KKT block
and substituting the solution into the target KKT block.

\begin{lemma}[Agreement on the profile manifold]
\label{lem:secSO_agreement}
For every \(\theta\in\mathcal U\),
\[
\Psi_{\rm coup}(\theta,\eta(\theta))
=
\widetilde S_{\rm coup}(\theta).
\]
\end{lemma}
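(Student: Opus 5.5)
The identity follows by direct substitution, using the defining property of the profile map from Proposition~\ref{prop:secSO_profile_map}. Fix \(\theta\in\mathcal U\) and evaluate the definition \eqref{eq:psi_coup_def} of \(\Psi_{\rm coup}\) at the pair \((\theta,\eta(\theta))\):
\[
\Psi_{\rm coup}(\theta,\eta(\theta))
=
S_T(\theta,\eta(\theta))
-
D_\eta S_T(\theta^\star,\eta^\star)
\bigl[D_\eta S_N(\theta^\star,\eta^\star)\bigr]^{-1}
S_N(\theta,\eta(\theta)).
\]

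The correction operator \(K:=D_\eta S_T(\theta^\star,\eta^\star)[D_\eta S_N(\theta^\star,\eta^\star)]^{-1}\) is fixed at the truth and does not depend on \(\theta\). It is well defined and bounded because \(D_\eta S_N(\theta^\star,\eta^\star)\) is invertible by Lemma~\ref{lem:secSO_SN_invertible}. The derivative \(D_\eta S_T(\theta^\star,\eta^\star)\) is a bounded linear map: from the explicit forms of \(D_{q_2}\mathcal L_{\rm coup}^\beta\) and \(b_2\), it only involves \((I-\Pi_\mu)\) acting on \(q_1\), and \(S_T\) does not depend on \(l_1\). So the correction is a fixed bounded linear operator applied to \(S_N(\theta,\eta(\theta))\).

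By Proposition~\ref{prop:secSO_profile_map}, \(S_N(\theta,\eta(\theta))=0\) for every \(\theta\in\mathcal U\). Applying the linear map \(K\) to zero gives zero, so
\[
\Psi_{\rm coup}(\theta,\eta(\theta))=S_T(\theta,\eta(\theta)).
\]
The right-hand side is \(\widetilde S_{\rm coup}(\theta)\) by definition, which proves the lemma.

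None of these steps is a real obstacle. The only point needing care is that the correction term is a linear operator evaluated at the truth, not at \((\theta,\eta(\theta))\). Otherwise one would have to argue separately that \(D_\eta S_T(\theta,\eta(\theta))\) is still well defined and bounded away from the truth. Since the definition \eqref{eq:psi_coup_def} freezes these operators at \((\theta^\star,\eta^\star)\), linearity alone makes the correction vanish on the profile manifold.
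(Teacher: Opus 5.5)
Your proposal is correct and follows the paper's proof: evaluate $\Psi_{\rm coup}$ at $(\theta,\eta(\theta))$, use $S_N(\theta,\eta(\theta))=0$ from Proposition~\ref{prop:secSO_profile_map} so that the correction term vanishes, and identify the remaining $S_T(\theta,\eta(\theta))$ with $\widetilde S_{\rm coup}(\theta)$ by definition. Your extra remarks about boundedness of the frozen operator are harmless but not needed, since applying a linear map to zero already gives zero.
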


\begin{proof}
By the definition of the profile map,
\[
S_N(\theta,\eta(\theta))=0.
\]
Therefore, when \(\Psi_{\rm coup}\) is evaluated at
\((\theta,\eta(\theta))\), the correction term vanishes:
\[
\Psi_{\rm coup}(\theta,\eta(\theta))
=
S_T(\theta,\eta(\theta)).
\]
By definition,
\[
S_T(\theta,\eta(\theta))
=
\widetilde S_{\rm coup}(\theta).
\]
\end{proof}

Although the correction term vanishes on the profile manifold, it is not
irrelevant. Its role is to define an off-manifold extension of the target
equation whose first-order sensitivity to nuisance perturbations vanishes at
the truth.

\begin{proposition}[Derivative of the reduced target equation]
\label{prop:secSO_reduced_derivative}
The derivative of the reduced target equation at the truth satisfies
\[
D\widetilde S_{\rm coup}(\theta^\star)
=
D_\theta\Psi_{\rm coup}(\theta^\star,\eta^\star).
\]
Equivalently,
\[
D\widetilde S_{\rm coup}(\theta^\star)
=
D_\theta S_T(\theta^\star,\eta^\star)
-
D_\eta S_T(\theta^\star,\eta^\star)
\bigl[
D_\eta S_N(\theta^\star,\eta^\star)
\bigr]^{-1}
D_\theta S_N(\theta^\star,\eta^\star).
\]
\end{proposition}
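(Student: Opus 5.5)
We differentiate the identity that defines the profile map and then solve for its derivative. By Proposition~\ref{prop:secSO_profile_map}, on the neighborhood \(\mathcal U\) there is a \(C^1\) map \(\eta(\theta)\) with \(S_N(\theta,\eta(\theta))=0\) and \(\eta(\theta^\star)=\eta^\star\). The plan has four steps.

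\textbf{Step 1 (derivative of the profile map).} Apply the chain rule to \(S_N(\theta,\eta(\theta))\equiv0\) at \(\theta^\star\). This gives
\[
D_\theta S_N(\theta^\star,\eta^\star)+D_\eta S_N(\theta^\star,\eta^\star)\,D\eta(\theta^\star)=0 .
\]
By Lemma~\ref{lem:secSO_SN_invertible}, \(D_\eta S_N(\theta^\star,\eta^\star)\) is invertible, so
\[
D\eta(\theta^\star)=-\bigl[D_\eta S_N(\theta^\star,\eta^\star)\bigr]^{-1}D_\theta S_N(\theta^\star,\eta^\star).
\]
Since \(S_N\) is \(C^1\) by assumption, the bounded inverse makes this rigorous in the Banach-space setting.

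\textbf{Step 2 (derivative of the reduced equation).} Differentiate \(\widetilde S_{\rm coup}(\theta)=S_T(\theta,\eta(\theta))\) by the chain rule:
\[
D\widetilde S_{\rm coup}(\theta^\star)=D_\theta S_T(\theta^\star,\eta^\star)+D_\eta S_T(\theta^\star,\eta^\star)\,D\eta(\theta^\star).
\]
Substituting the expression from Step 1 yields the displayed Schur-complement formula.

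\textbf{Step 3 (comparison with \(\Psi_{\rm coup}\)).} Compute \(D_\theta\Psi_{\rm coup}(\theta^\star,\eta^\star)\) directly from \eqref{eq:psi_coup_def}. The correction operator \(D_\eta S_T(\theta^\star,\eta^\star)[D_\eta S_N(\theta^\star,\eta^\star)]^{-1}\) is frozen at the truth, so it does not depend on \(\theta\). The \(\theta\)-derivative therefore falls only on \(S_T\) and on \(S_N\), and gives exactly the same expression. This proves the first equality.

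\textbf{Alternative route.} One could instead differentiate the agreement identity of Lemma~\ref{lem:secSO_agreement}, \(\Psi_{\rm coup}(\theta,\eta(\theta))=\widetilde S_{\rm coup}(\theta)\), and use \(D_\eta\Psi_{\rm coup}(\theta^\star,\eta^\star)=0\) from Theorem~\ref{thm:profiled_target_orthogonality}. The term involving \(D\eta(\theta^\star)\) then drops out, which gives the same conclusion in one line.

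\textbf{Main obstacle.} The only delicate point is regularity: \(S_T\) must also be Fréchet differentiable near \((\theta^\star,\eta^\star)\) for the chain rule to apply. \(S_T\) involves \(\Omega\), which is smooth as a log-sum-exp, and otherwise bilinear and linear terms, so \(S_T\) is \(C^1\) and the argument goes through. I would state this differentiability explicitly, alongside the \(C^1\) assumption on \(S_N\).
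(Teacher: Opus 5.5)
Your proposal is correct and follows the paper's proof essentially step for step: you differentiate the profile identity $S_N(\theta,\eta(\theta))=0$ to get $D\eta(\theta^\star)=-(D_\eta S_N)^{-1}D_\theta S_N$, apply the chain rule to $\widetilde S_{\rm coup}(\theta)=S_T(\theta,\eta(\theta))$, and match the result with the direct $\theta$-derivative of $\Psi_{\rm coup}$, whose correction operator is frozen at the truth. Your alternative one-line route, via Lemma~\ref{lem:secSO_agreement} together with $D_\eta\Psi_{\rm coup}(\theta^\star,\eta^\star)=0$, is also valid, and your regularity remark states explicitly what the paper leaves implicit; since $S_T$ contains $D_{q_2}\mathcal L^{\beta}_{\rm coup}$, it in fact requires $\Omega$ to be twice differentiable, which log-sum-exp is.
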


\begin{proof}
Differentiate the profile identity
\[
S_N(\theta,\eta(\theta))=0
\]
at \(\theta^\star\). This gives
\[
D_\theta S_N
+
D_\eta S_N\,D\eta(\theta^\star)
=
0,
\]
and hence
\[
D\eta(\theta^\star)
=
-
(D_\eta S_N)^{-1}D_\theta S_N,
\]
where all derivatives are evaluated at \((\theta^\star,\eta^\star)\).
Differentiating
\[
\widetilde S_{\rm coup}(\theta)=S_T(\theta,\eta(\theta))
\]
at \(\theta^\star\) yields
\[
D\widetilde S_{\rm coup}(\theta^\star)
=
D_\theta S_T
+
D_\eta S_T\,D\eta(\theta^\star).
\]
Substituting the expression for \(D\eta(\theta^\star)\) gives
\[
D\widetilde S_{\rm coup}(\theta^\star)
=
D_\theta S_T
-
D_\eta S_T(D_\eta S_N)^{-1}D_\theta S_N.
\]
On the other hand, differentiating the definition of \(\Psi_{\rm coup}\) with
respect to \(\theta\) gives the same expression:
\[
D_\theta\Psi_{\rm coup}(\theta^\star,\eta^\star)
=
D_\theta S_T
-
D_\eta S_T(D_\eta S_N)^{-1}D_\theta S_N.
\]
Therefore
\[
D\widetilde S_{\rm coup}(\theta^\star)
=
D_\theta\Psi_{\rm coup}(\theta^\star,\eta^\star).
\]
\end{proof}

\subsection{Nuisance orthogonality}

For completeness, we restate the nuisance orthogonality in the form used in the
main text. By definition,
\[
\Psi_{\rm coup}(\theta,\eta)
=
S_T(\theta,\eta)
-
D_\eta S_T(\theta^\star,\eta^\star)
\bigl[
D_\eta S_N(\theta^\star,\eta^\star)
\bigr]^{-1}
S_N(\theta,\eta).
\]
Because the correction operators are fixed at the truth,
\[
\begin{aligned}
D_\eta\Psi_{\rm coup}(\theta^\star,\eta^\star)
&=
D_\eta S_T
-
D_\eta S_T(D_\eta S_N)^{-1}D_\eta S_N
\\
&=0,
\end{aligned}
\]
where all derivatives are evaluated at \((\theta^\star,\eta^\star)\). This
identity explains the role of the Schur-complement correction: the exact
profiled equation \(S_T(\theta,\eta(\theta))\) lives on the profile manifold,
whereas \(\Psi_{\rm coup}\) is an off-manifold extension of the same equation
whose first-order sensitivity to nuisance perturbations vanishes at the truth.

\subsection{Proof of Corollary~\ref{cor:orthogonalized_target_residual}}

\begin{proof}
The residual component of the target KKT block is \(b_2(q_1,q_2)\), and the
source residual component is \(b_1(q_1)\). Since
\[
D_{q_1}b_2(q_1,q_2)=I-\Pi_\mu,
\qquad
D_{q_1}b_1(q_1)=-(I-\gamma_1P_1^\mu),
\]
the Schur-complement correction to the target residual is
\[
-
D_{q_1}b_2
\bigl[D_{q_1}b_1\bigr]^{-1}b_1
=
-
(I-\Pi_\mu)
\bigl[-(I-\gamma_1P_1^\mu)\bigr]^{-1}
b_1.
\]
Therefore
\[
\widetilde b_{2,\rm coup}(q_1,q_2)
=
b_2(q_1,q_2)
+
(I-\Pi_\mu)(I-\gamma_1P_1^\mu)^{-1}b_1(q_1),
\]
which proves \eqref{eq:orthogonalized_target_residual}.

Differentiating with respect to \(q_1\),
\[
\begin{aligned}
D_{q_1}\widetilde b_{2,\rm coup}
&=
(I-\Pi_\mu)
+
(I-\Pi_\mu)(I-\gamma_1P_1^\mu)^{-1}
\bigl[-(I-\gamma_1P_1^\mu)\bigr]
\\
&=0.
\end{aligned}
\]
In particular,
\[
D_{q_1}\widetilde b_{2,\rm coup}(q_1^\star,q_2^\star)=0.
\]
\end{proof}

\section{Proofs for Section \ref{sec:error_propagation}}
\label{app:first_order_error_propagation}

This appendix proves the first-order decompositions in
Section~\ref{sec:error_propagation}. The main point is to separate two source
uncertainty channels:
\[
\text{operator estimation error through } \widehat P_1^\mu-P_1^\mu,
\qquad
\text{residual error through } \widehat b_1(\widehat q_1).
\]
The modular target equation contains both channels at first order. The coupled
profiled equation cancels the residual channel, but still contains the source
operator channel.

\subsection{Regularity conditions}

We use the following local conditions.

\begin{assumption}[Local regularity for first-order expansions]
\label{ass:app_first_order_regularity}
The following hold in a neighborhood of
\((q_1^\star,q_2^\star)\).

\begin{enumerate}
    \item The operators
    \[
    I-\gamma_1\widehat P_1^\mu,
    \qquad
    I-\gamma_1P_1^\mu,
    \qquad
    I-\gamma_2P_2^{\pi_2^\star}
    \]
    are invertible with probability tending to one.

    \item The soft value operator
    \(\Omega_{\tau_2,\pi_{2,\rm ref}}\) is twice Fr\'echet differentiable around
    \(q_2^\star\). In particular,
    \[
    \Omega_{\tau_2,\pi_{2,\rm ref}}(q_2^\star+\Delta_2)
    -
    \Omega_{\tau_2,\pi_{2,\rm ref}}(q_2^\star)
    =
    D\Omega_{\tau_2,\pi_{2,\rm ref}}(q_2^\star)[\Delta_2]
    +
    R_\Omega(\Delta_2),
    \]
    with
    \[
    R_\Omega(\Delta_2)=O(\|\Delta_2\|^2).
    \]

    \item The local empirical-process increments generated by evaluating
    empirical transition operators at estimated functions rather than at the
    truth are negligible at the first-order scale. Concretely, the remainders
    denoted below by \(R_{\rm emp}^{\rm mod}\) and \(R_{\rm emp}^{\rm coup}\)
    satisfy the required \(o_p\)-bounds relative to the displayed leading
    empirical perturbations.

    \item The higher-order terms from replacing population profiling operators
    by empirical profiling operators are negligible at the first-order scale.
\end{enumerate}
\end{assumption}

\subsection{Source-stage decomposition}

\begin{proof}[Proof of Lemma~\ref{lem:source_error_decomposition}]
Since \(q_1^\star\) solves the population source equation,
\[
u_g^\star+\gamma_1P_1^\mu q_1^\star-q_1^\star=0,
\]
we have
\[
u_g^\star=(I-\gamma_1P_1^\mu)q_1^\star.
\]
For any source estimate \(\widehat q_1\),
\[
\widehat b_1(\widehat q_1)
=
u_g^\star+\gamma_1\widehat P_1^\mu \widehat q_1-\widehat q_1
=
u_g^\star-(I-\gamma_1\widehat P_1^\mu)\widehat q_1.
\]
Thus
\[
(I-\gamma_1\widehat P_1^\mu)\widehat q_1
=
u_g^\star-\widehat b_1(\widehat q_1).
\]
Subtract
\[
(I-\gamma_1\widehat P_1^\mu)q_1^\star
\]
from both sides. This gives
\[
(I-\gamma_1\widehat P_1^\mu)(\widehat q_1-q_1^\star)
=
u_g^\star
-
(I-\gamma_1\widehat P_1^\mu)q_1^\star
-
\widehat b_1(\widehat q_1).
\]
Using
\[
u_g^\star=(I-\gamma_1P_1^\mu)q_1^\star,
\]
the first two terms on the right-hand side become
\[
(I-\gamma_1P_1^\mu)q_1^\star
-
(I-\gamma_1\widehat P_1^\mu)q_1^\star
=
\gamma_1(\widehat P_1^\mu-P_1^\mu)q_1^\star.
\]
Therefore
\[
(I-\gamma_1\widehat P_1^\mu)(\widehat q_1-q_1^\star)
=
\gamma_1(\widehat P_1^\mu-P_1^\mu)q_1^\star
-
\widehat b_1(\widehat q_1).
\]
Multiplying by
\[
(I-\gamma_1\widehat P_1^\mu)^{-1}
\]
proves
\[
\widehat q_1-q_1^\star
=
\gamma_1
(I-\gamma_1\widehat P_1^\mu)^{-1}
(\widehat P_1^\mu-P_1^\mu)q_1^\star
-
(I-\gamma_1\widehat P_1^\mu)^{-1}
\widehat b_1(\widehat q_1).
\]
\end{proof}

\subsection{Modular target decomposition}

We first prove the modular expansion. The modular target estimator treats
\(\widehat q_1^{\rm mod}\) as fixed and solves the target equation using the
empirical target transition operator.

Define
\[
\Delta_1^{\rm mod}
:=
\widehat q_1^{\rm mod}-q_1^\star,
\qquad
\Delta_2^{\rm mod}
:=
\widehat q_2^{\rm mod}-q_2^\star.
\]
The empirical target residual in the modular stage is
\[
\widehat b_2(\widehat q_1^{\rm mod},q_2)
=
(I-\Pi_\mu)\widehat q_1^{\rm mod}+g+C
+
\gamma_2\widehat P_2\Omega_{\tau_2,\pi_{2,\rm ref}}(q_2)
-
q_2.
\]

\begin{proposition}[Modular first-order decomposition]
\label{prop:app_mod_first_order_decomposition}
Under Assumption~\ref{ass:app_first_order_regularity},
\[
\begin{aligned}
&
(I-\gamma_2P_2^{\pi_2^\star})\Delta_2^{\rm mod}
\\
&=
\gamma_1
(I-\Pi_\mu)
(I-\gamma_1\widehat P_1^\mu)^{-1}
(\widehat P_1^\mu-P_1^\mu)q_1^\star
-
(I-\Pi_\mu)
(I-\gamma_1\widehat P_1^\mu)^{-1}
\widehat b_1(\widehat q_1^{\rm mod})
\\
&\quad
+
\gamma_2(\widehat P_2-P_2)
\Omega_{\tau_2,\pi_{2,\rm ref}}(q_2^\star)
+
\mathrm{Rem}_{\rm mod}.
\end{aligned}
\]
\end{proposition}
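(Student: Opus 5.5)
The plan is to start from the approximate empirical target equation solved by the modular second stage. I will treat the minimax solution as satisfying \(\widehat b_2(\widehat q_1^{\rm mod},\widehat q_2^{\rm mod})\approx0\), with any leftover target empirical residual placed in \(\mathrm{Rem}_{\rm mod}\). Assumption~\ref{ass:app_first_order_regularity}(iii) makes that leftover negligible. Subtracting the population identity \(b_2(q_1^\star,q_2^\star)=0\) gives
\[
0=(I-\Pi_\mu)\Delta_1^{\rm mod}+\gamma_2\widehat P_2\Omega(\widehat q_2^{\rm mod})-\gamma_2P_2\Omega(q_2^\star)-\Delta_2^{\rm mod}+\widehat b_2(\widehat q_1^{\rm mod},\widehat q_2^{\rm mod}).
\]
I will then add and subtract \(\gamma_2\widehat P_2\Omega(q_2^\star)\) and \(\gamma_2P_2\Omega(\widehat q_2^{\rm mod})\). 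This splits the transition term into three pieces:
\begin{itemize}
\item the leading empirical term \(\gamma_2(\widehat P_2-P_2)\Omega(q_2^\star)\);
\item the population smooth term \(\gamma_2P_2\{\Omega(\widehat q_2^{\rm mod})-\Omega(q_2^\star)\}\);
\item the empirical increment \(\gamma_2(\widehat P_2-P_2)\{\Omega(\widehat q_2^{\rm mod})-\Omega(q_2^\star)\}\), which goes into \(R_{\rm emp}^{\rm mod}\).
\end{itemize}

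Next I will linearize the smooth term using Assumption~\ref{ass:app_first_order_regularity}(ii). This gives \(\gamma_2P_2D\Omega(q_2^\star)[\Delta_2^{\rm mod}]+\gamma_2P_2R_\Omega(\Delta_2^{\rm mod})\), and by definition \(P_2^{\pi_2^\star}=P_2D\Omega(q_2^\star)\). Moving the \(\Delta_2^{\rm mod}\) terms to the left-hand side yields
\[
(I-\gamma_2P_2^{\pi_2^\star})\Delta_2^{\rm mod}=(I-\Pi_\mu)\Delta_1^{\rm mod}+\gamma_2(\widehat P_2-P_2)\Omega(q_2^\star)+\mathrm{Rem}_{\rm mod}.
\]
Here \(\mathrm{Rem}_{\rm mod}\) collects three things: the Taylor remainder \(\gamma_2P_2R_\Omega\), the empirical increment, and the target empirical residual.

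Finally, I will substitute Lemma~\ref{lem:source_error_decomposition} with \(\widehat q_1=\widehat q_1^{\rm mod}\); its invertibility requirement holds with probability tending to one by Assumption~\ref{ass:app_first_order_regularity}(i). Applying the linear map \(I-\Pi_\mu\) to that identity produces exactly two terms. The first is the source operator term \(\gamma_1(I-\Pi_\mu)(I-\gamma_1\widehat P_1^\mu)^{-1}(\widehat P_1^\mu-P_1^\mu)q_1^\star\). The second is the residual term \(-(I-\Pi_\mu)(I-\gamma_1\widehat P_1^\mu)^{-1}\widehat b_1(\widehat q_1^{\rm mod})\). The lemma is an exact identity, so no additional remainder is needed from this step.

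The main obstacle is justifying that the second stage actually delivers an approximate zero of the empirical target residual. The minimax objective only guarantees small residuals when tested against \(\mathcal L_2\). If a fully rigorous argument is needed, I would first try to identify \(\widehat b_2(\widehat q_1^{\rm mod},\widehat q_2^{\rm mod})\) with its projection onto the dual class, and absorb the difference into \(\mathrm{Rem}_{\rm mod}\) via Assumption~\ref{ass:app_first_order_regularity}(iii)–(iv). The second point to check is that the empirical increment is \(o_p\) relative to the leading terms; this is the stochastic equicontinuity that Assumption~\ref{ass:app_first_order_regularity}(iii) posits.
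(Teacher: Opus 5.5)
Your proposal is correct and follows essentially the same route as the paper's proof. You subtract $b_2(q_1^\star,q_2^\star)=0$ from the approximately vanishing empirical target residual, Taylor-expand $\Omega$ around $q_2^\star$ to obtain $P_2^{\pi_2^\star}=P_2D\Omega(q_2^\star)$, put the empirical increment, the optimality error and $\gamma_2P_2R_\Omega(\Delta_2^{\rm mod})$ into $\mathrm{Rem}_{\rm mod}$, and substitute Lemma~\ref{lem:source_error_decomposition}; your explicit three-way add-and-subtract just spells out what the paper bundles into $R_{\rm emp}^{\rm mod}$.

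One cosmetic slip: since $\widehat b_2(\widehat q_1^{\rm mod},\widehat q_2^{\rm mod})-b_2(q_1^\star,q_2^\star)$ equals the remaining terms, the residual should enter your first display with a minus sign. This is harmless because that term ends up in $\mathrm{Rem}_{\rm mod}$.
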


\begin{proof}
The population target equation is
\[
b_2(q_1^\star,q_2^\star)
=
(I-\Pi_\mu)q_1^\star+g+C
+
\gamma_2P_2\Omega_{\tau_2,\pi_{2,\rm ref}}(q_2^\star)
-
q_2^\star
=
0.
\]
The modular estimator solves the empirical target equation up to a local
empirical residual. Thus,
\[
0
=
\widehat b_2(\widehat q_1^{\rm mod},\widehat q_2^{\rm mod})
+
\text{local target optimality error}.
\]
Subtract the population target equation at
\((q_1^\star,q_2^\star)\). The leading expansion is
\[
\begin{aligned}
0
&=
(I-\Pi_\mu)(\widehat q_1^{\rm mod}-q_1^\star)
+
\gamma_2(\widehat P_2-P_2)
\Omega_{\tau_2,\pi_{2,\rm ref}}(q_2^\star)
\\
&\quad
+
\gamma_2P_2
\left[
\Omega_{\tau_2,\pi_{2,\rm ref}}(\widehat q_2^{\rm mod})
-
\Omega_{\tau_2,\pi_{2,\rm ref}}(q_2^\star)
\right]
-
(\widehat q_2^{\rm mod}-q_2^\star)
+
R_{\rm emp}^{\rm mod}.
\end{aligned}
\]
Here \(R_{\rm emp}^{\rm mod}\) collects local empirical-process increments and
any local target optimality error.

By differentiability of the soft value operator,
\[
\Omega_{\tau_2,\pi_{2,\rm ref}}(\widehat q_2^{\rm mod})
-
\Omega_{\tau_2,\pi_{2,\rm ref}}(q_2^\star)
=
D\Omega_{\tau_2,\pi_{2,\rm ref}}(q_2^\star)[\Delta_2^{\rm mod}]
+
R_\Omega(\Delta_2^{\rm mod}).
\]
Since
\[
P_2D\Omega_{\tau_2,\pi_{2,\rm ref}}(q_2^\star)[\Delta_2^{\rm mod}]
=
P_2^{\pi_2^\star}\Delta_2^{\rm mod},
\]
we obtain
\[
\begin{aligned}
0
&=
(I-\Pi_\mu)\Delta_1^{\rm mod}
+
\gamma_2(\widehat P_2-P_2)
\Omega_{\tau_2,\pi_{2,\rm ref}}(q_2^\star)
\\
&\quad
-
(I-\gamma_2P_2^{\pi_2^\star})\Delta_2^{\rm mod}
+
R_{\rm emp}^{\rm mod}
+
\gamma_2P_2R_\Omega(\Delta_2^{\rm mod}).
\end{aligned}
\]
Rearranging gives
\[
\begin{aligned}
(I-\gamma_2P_2^{\pi_2^\star})\Delta_2^{\rm mod}
&=
(I-\Pi_\mu)\Delta_1^{\rm mod}
+
\gamma_2(\widehat P_2-P_2)
\Omega_{\tau_2,\pi_{2,\rm ref}}(q_2^\star)
\\
&\quad
+
R_{\rm emp}^{\rm mod}
+
\gamma_2P_2R_\Omega(\Delta_2^{\rm mod}).
\end{aligned}
\]
Now apply Lemma~\ref{lem:source_error_decomposition} with
\[
\widehat q_1=\widehat q_1^{\rm mod}.
\]
This yields
\[
\begin{aligned}
(I-\Pi_\mu)\Delta_1^{\rm mod}
&=
\gamma_1
(I-\Pi_\mu)
(I-\gamma_1\widehat P_1^\mu)^{-1}
(\widehat P_1^\mu-P_1^\mu)q_1^\star
\\
&\quad
-
(I-\Pi_\mu)
(I-\gamma_1\widehat P_1^\mu)^{-1}
\widehat b_1(\widehat q_1^{\rm mod}).
\end{aligned}
\]
Substituting this display into the target expansion proves the proposition, with
\[
\mathrm{Rem}_{\rm mod}
:=
R_{\rm emp}^{\rm mod}
+
\gamma_2P_2R_\Omega(\Delta_2^{\rm mod}).
\]
\end{proof}

\subsection{Coupled profiled target decomposition}

We now prove the coupled expansion. By
Corollary~\ref{cor:orthogonalized_target_residual}, the residual component of
the profiled target KKT equation is
\[
\widetilde b_{2,\rm coup}(q_1,q_2)
=
b_2(q_1,q_2)
+
(I-\Pi_\mu)
(I-\gamma_1P_1^\mu)^{-1}
b_1(q_1).
\]
Equivalently,
\[
\widetilde b_{2,\rm coup}(q_1,q_2)
=
(I-\Pi_\mu)q_1+g+C+\gamma_2P_2\Omega_{\tau_2,\pi_{2,\rm ref}}(q_2)-q_2
+
(I-\Pi_\mu)
(I-\gamma_1P_1^\mu)^{-1}
\{u_g^\star+\gamma_1P_1^\mu q_1-q_1\}.
\]
Because
\[
b_1(q_1)=u_g^\star-(I-\gamma_1P_1^\mu)q_1,
\]
we have
\[
(I-\gamma_1P_1^\mu)^{-1}b_1(q_1)
=
(I-\gamma_1P_1^\mu)^{-1}u_g^\star-q_1.
\]
Thus
\[
\widetilde b_{2,\rm coup}(q_1,q_2)
=
g+C+\gamma_2P_2\Omega_{\tau_2,\pi_{2,\rm ref}}(q_2)-q_2
+
(I-\Pi_\mu)
(I-\gamma_1P_1^\mu)^{-1}u_g^\star.
\]
In particular, at the population level, the profiled target residual is
independent of \(q_1\).

The empirical analogue replaces \(P_1^\mu\) and \(P_2\) by
\(\widehat P_1^\mu\) and \(\widehat P_2\). At the first-order scale, this gives
the empirical profiled target residual
\[
\widehat{\widetilde b}_{2,\rm coup}(q_2)
=
g+C+\gamma_2\widehat P_2\Omega_{\tau_2,\pi_{2,\rm ref}}(q_2)-q_2
+
(I-\Pi_\mu)
(I-\gamma_1\widehat P_1^\mu)^{-1}u_g^\star,
\]
up to higher-order profiling terms.

\begin{proposition}[Coupled first-order decomposition]
\label{prop:app_coup_first_order_decomposition}
Under Assumption~\ref{ass:app_first_order_regularity},
\[
\begin{aligned}
&
(I-\gamma_2P_2^{\pi_2^\star})\Delta_2^{\rm coup}
\\
&=
\gamma_1
(I-\Pi_\mu)
(I-\gamma_1\widehat P_1^\mu)^{-1}
(\widehat P_1^\mu-P_1^\mu)q_1^\star
+
\gamma_2(\widehat P_2-P_2)
\Omega_{\tau_2,\pi_{2,\rm ref}}(q_2^\star)
+
\mathrm{Rem}_{\rm coup}.
\end{aligned}
\]
\end{proposition}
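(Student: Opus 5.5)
We will mirror the modular argument of Proposition~\ref{prop:app_mod_first_order_decomposition}, but start from the empirical profiled target residual \(\widehat{\widetilde b}_{2,\rm coup}\) instead of \(\widehat b_2(\widehat q_1,\cdot)\). Under Assumption~\ref{ass:app_first_order_regularity}(3)--(4), the coupled estimator satisfies, up to local optimality error and higher-order profiling terms,
\[
0=\widehat{\widetilde b}_{2,\rm coup}(\widehat q_2^{\rm coup})
= g+C+\gamma_2\widehat P_2\Omega(\widehat q_2^{\rm coup})-\widehat q_2^{\rm coup}
+(I-\Pi_\mu)(I-\gamma_1\widehat P_1^\mu)^{-1}u_g^\star .
\]
This holds because the coupled KKT system solves the source and target blocks jointly. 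By Theorem~\ref{thm:profiled_target_orthogonality} and Lemma~\ref{lem:secSO_agreement}, this is equivalent, on the profile manifold, to the Schur-complement target equation. Corollary~\ref{cor:orthogonalized_target_residual} and the computation preceding the proposition show that the \(q_1\)-dependence, and hence \(\widehat b_1(\widehat q_1^{\rm coup})\), drops out identically. We put into \(R_{\rm emp}^{\rm coup}\) both the optimality error and the discrepancy from replacing population profiling operators by empirical ones.

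\emph{Step 1.} Subtract the population identity. Since \(q_1^\star=(I-\gamma_1P_1^\mu)^{-1}u_g^\star\), the population equation reads
\[
0=g+C+\gamma_2P_2\Omega(q_2^\star)-q_2^\star+(I-\Pi_\mu)(I-\gamma_1P_1^\mu)^{-1}u_g^\star .
\]
The difference of the two source resolvent terms is handled by the resolvent identity
\[
(I-\gamma_1\widehat P_1^\mu)^{-1}-(I-\gamma_1P_1^\mu)^{-1}
=\gamma_1(I-\gamma_1\widehat P_1^\mu)^{-1}(\widehat P_1^\mu-P_1^\mu)(I-\gamma_1P_1^\mu)^{-1}.
\]
Applied to \(u_g^\star\), it yields exactly \(\gamma_1(I-\Pi_\mu)(I-\gamma_1\widehat P_1^\mu)^{-1}(\widehat P_1^\mu-P_1^\mu)q_1^\star\). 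This is the same algebra as Lemma~\ref{lem:source_error_decomposition} with the residual term set to zero.

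\emph{Step 2.} Write the target difference as
\[
\gamma_2(\widehat P_2-P_2)\Omega(q_2^\star)+\gamma_2\widehat P_2[\Omega(\widehat q_2^{\rm coup})-\Omega(q_2^\star)]-\Delta_2^{\rm coup}.
\]
Replacing \(\widehat P_2\) by \(P_2\) in the middle term costs an empirical-process increment, which goes into \(R_{\rm emp}^{\rm coup}\). Taylor expanding \(\Omega\) by Assumption~\ref{ass:app_first_order_regularity}(2) then gives \(\gamma_2P_2^{\pi_2^\star}\Delta_2^{\rm coup}+\gamma_2P_2R_\Omega(\Delta_2^{\rm coup})\). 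Rearranging gives the claim with
\[
\mathrm{Rem}_{\rm coup}:=R_{\rm emp}^{\rm coup}+\gamma_2P_2R_\Omega(\Delta_2^{\rm coup}),
\]
which includes the profiling discrepancy.

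The main obstacle is the first step. We must justify that the empirical saddle point of \eqref{eq:coup_minimax_estimator} actually satisfies the empirical profiled equation up to negligible error, rather than the raw \(\widehat b_2(\widehat q_1^{\rm coup},\cdot)=0\). We plan to argue this via the empirical KKT system and Proposition~\ref{prop:secSO_profile_map}. The empirical source block is eliminated by the implicit function theorem near \(\eta^\star\), and the correction term \(D_\eta S_T[D_\eta S_N]^{-1}\widehat S_N\) vanishes at the empirical solution because \(\widehat S_N=0\) there. The resulting empirical-versus-population operator mismatch in the correction is pushed into Assumption~\ref{ass:app_first_order_regularity}(4). We will state this reduction explicitly as the content absorbed by the remainder, rather than prove a quantitative rate.
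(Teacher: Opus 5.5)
Your proposal is correct and follows essentially the same route as the paper's proof. You start from the empirical profiled target equation $\widehat{\widetilde b}_{2,\rm coup}(\widehat q_2^{\rm coup})\approx 0$ and subtract the population profiled equation at $q_2^\star$, then use the resolvent identity to turn the difference of source resolvents applied to $u_g^\star$ into the source-operator term. You Taylor expand $\Omega$, and you collect the empirical-process increment $\gamma_2(\widehat P_2-P_2)[\Omega(\widehat q_2^{\rm coup})-\Omega(q_2^\star)]$, the optimality error, and the profiling discrepancy into $\mathrm{Rem}_{\rm coup}$. Like the paper, you treat the fact that the coupled saddle point satisfies the empirical profiled equation as an assumption absorbed into the remainder, not something you prove.
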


\begin{proof}
The empirical profiled target equation at the coupled estimator gives
\[
0
=
\widehat{\widetilde b}_{2,\rm coup}(\widehat q_2^{\rm coup})
+
\text{local profiled optimality error}.
\]
Subtract the population profiled equation at \(q_2^\star\):
\[
\widetilde b_{2,\rm coup}(q_2^\star)=0.
\]
The truth-level empirical discrepancy is
\[
\begin{aligned}
&
\widehat{\widetilde b}_{2,\rm coup}(q_2^\star)
-
\widetilde b_{2,\rm coup}(q_2^\star)
\\
&=
\gamma_2(\widehat P_2-P_2)
\Omega_{\tau_2,\pi_{2,\rm ref}}(q_2^\star)
+
(I-\Pi_\mu)
\left[
(I-\gamma_1\widehat P_1^\mu)^{-1}
-
(I-\gamma_1P_1^\mu)^{-1}
\right]
u_g^\star.
\end{aligned}
\]
Use the resolvent identity
\[
\begin{aligned}
&
(I-\gamma_1\widehat P_1^\mu)^{-1}
-
(I-\gamma_1P_1^\mu)^{-1}
\\
&\qquad
=
(I-\gamma_1\widehat P_1^\mu)^{-1}
\left[
(I-\gamma_1P_1^\mu)
-
(I-\gamma_1\widehat P_1^\mu)
\right]
(I-\gamma_1P_1^\mu)^{-1}.
\end{aligned}
\]
Since
\[
(I-\gamma_1P_1^\mu)
-
(I-\gamma_1\widehat P_1^\mu)
=
\gamma_1(\widehat P_1^\mu-P_1^\mu),
\]
and
\[
(I-\gamma_1P_1^\mu)^{-1}u_g^\star=q_1^\star,
\]
we obtain
\[
\begin{aligned}
&
(I-\Pi_\mu)
\left[
(I-\gamma_1\widehat P_1^\mu)^{-1}
-
(I-\gamma_1P_1^\mu)^{-1}
\right]
u_g^\star
\\
&\qquad =
\gamma_1
(I-\Pi_\mu)
(I-\gamma_1\widehat P_1^\mu)^{-1}
(\widehat P_1^\mu-P_1^\mu)q_1^\star.
\end{aligned}
\]
Therefore,
\[
\begin{aligned}
&
\widehat{\widetilde b}_{2,\rm coup}(q_2^\star)
-
\widetilde b_{2,\rm coup}(q_2^\star)
\\
&=
\gamma_1
(I-\Pi_\mu)
(I-\gamma_1\widehat P_1^\mu)^{-1}
(\widehat P_1^\mu-P_1^\mu)q_1^\star
+
\gamma_2(\widehat P_2-P_2)
\Omega_{\tau_2,\pi_{2,\rm ref}}(q_2^\star).
\end{aligned}
\]

Next, expand the population profiled target residual in the \(q_2\)-direction:
\[
\widetilde b_{2,\rm coup}(\widehat q_2^{\rm coup})
-
\widetilde b_{2,\rm coup}(q_2^\star)
=
\gamma_2P_2
\left[
\Omega_{\tau_2,\pi_{2,\rm ref}}(\widehat q_2^{\rm coup})
-
\Omega_{\tau_2,\pi_{2,\rm ref}}(q_2^\star)
\right]
-
(\widehat q_2^{\rm coup}-q_2^\star).
\]
By Taylor expansion,
\[
\Omega_{\tau_2,\pi_{2,\rm ref}}(\widehat q_2^{\rm coup})
-
\Omega_{\tau_2,\pi_{2,\rm ref}}(q_2^\star)
=
D\Omega_{\tau_2,\pi_{2,\rm ref}}(q_2^\star)[\Delta_2^{\rm coup}]
+
R_\Omega(\Delta_2^{\rm coup}).
\]
Thus
\[
\widetilde b_{2,\rm coup}(\widehat q_2^{\rm coup})
-
\widetilde b_{2,\rm coup}(q_2^\star)
=
-(I-\gamma_2P_2^{\pi_2^\star})\Delta_2^{\rm coup}
+
\gamma_2P_2R_\Omega(\Delta_2^{\rm coup}).
\]

Combining the truth-level empirical discrepancy and the population drift gives
\[
\begin{aligned}
0
&=
\gamma_1
(I-\Pi_\mu)
(I-\gamma_1\widehat P_1^\mu)^{-1}
(\widehat P_1^\mu-P_1^\mu)q_1^\star
\\
&\quad
+
\gamma_2(\widehat P_2-P_2)
\Omega_{\tau_2,\pi_{2,\rm ref}}(q_2^\star)
-
(I-\gamma_2P_2^{\pi_2^\star})\Delta_2^{\rm coup}
+
R_{\rm emp}^{\rm coup}
+
\gamma_2P_2R_\Omega(\Delta_2^{\rm coup}),
\end{aligned}
\]
where \(R_{\rm emp}^{\rm coup}\) collects the local empirical-process increment,
local profiled optimality error, and higher-order profiling terms. Rearranging
proves the proposition, with
\[
\mathrm{Rem}_{\rm coup}
:=
R_{\rm emp}^{\rm coup}
+
\gamma_2P_2R_\Omega(\Delta_2^{\rm coup}).
\]
\end{proof}

\subsection{Proof of the main first-order propagation theorem}

\begin{proof}[Proof of Theorem~\ref{thm:first_order_transfer_error_propagation}]
The modular expansion
\eqref{eq:mod_first_order_decomposition} is exactly
Proposition~\ref{prop:app_mod_first_order_decomposition}. The coupled expansion
\eqref{eq:coup_first_order_decomposition} is exactly
Proposition~\ref{prop:app_coup_first_order_decomposition}. This proves the
theorem.
\end{proof}

\subsection{Remainder negligibility}

The decompositions above are first-order statements. We now record one standard
set of sufficient conditions under which the remainder terms are negligible at
the first-order scale.

\begin{lemma}[Negligibility of local empirical and nonlinear remainders]
\label{lem:app_error_propagation_remainder_negligible}
Assume:
\begin{enumerate}
    \item \(\widehat q_1\to q_1^\star\) and \(\widehat q_2\to q_2^\star\) in
    probability;

    \item the target empirical process is locally stochastically equicontinuous
    near \((q_1^\star,q_2^\star)\), so that its local increment between the
    estimator and the truth is
    \[
    o_p(n_1^{-1/2}+n_2^{-1/2});
    \]

    \item the profiled empirical process for the coupled estimator is locally
    stochastically equicontinuous near the truth, with local increment
    \[
    o_p(n_1^{-1/2}+n_2^{-1/2});
    \]

    \item \(\Omega_{\tau_2,\pi_{2,\rm ref}}\) is twice continuously Fr\'echet
    differentiable near \(q_2^\star\), and
    \[
    \|\widehat q_2-q_2^\star\|
    =
    O_p(n_1^{-1/2}+n_2^{-1/2}).
    \]
\end{enumerate}
Then
\[
\mathrm{Rem}_{\rm mod}
=
o_p(n_1^{-1/2}+n_2^{-1/2}),
\qquad
\mathrm{Rem}_{\rm coup}
=
o_p(n_1^{-1/2}+n_2^{-1/2}).
\]
\end{lemma}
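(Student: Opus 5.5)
The remainders were defined in Propositions~\ref{prop:app_mod_first_order_decomposition} and \ref{prop:app_coup_first_order_decomposition} as
\[
\mathrm{Rem}_{\rm mod}=R_{\rm emp}^{\rm mod}+\gamma_2P_2R_\Omega(\Delta_2^{\rm mod}),
\qquad
\mathrm{Rem}_{\rm coup}=R_{\rm emp}^{\rm coup}+\gamma_2P_2R_\Omega(\Delta_2^{\rm coup}).
\]
We treat each remainder as an empirical-process piece plus a nonlinear Taylor piece, and bound the two pieces separately. Write $r_n:=n_1^{-1/2}+n_2^{-1/2}$.

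\textbf{Empirical-process pieces.} For the modular remainder, $R_{\rm emp}^{\rm mod}$ contains two things. The first is the difference between $\gamma_2(\widehat P_2-P_2)\Omega(\widehat q_2^{\rm mod})$ and $\gamma_2(\widehat P_2-P_2)\Omega(q_2^\star)$. Since $\widehat q_2\to q_2^\star$ in probability (condition 1), this is exactly the local increment controlled by condition 2, hence $o_p(r_n)$. The second is the local target optimality error. We take it to be absorbed into the same increment: at the empirical saddle point, the empirical residual is zero up to the stated negligible optimization slack. For the coupled remainder, $R_{\rm emp}^{\rm coup}$ additionally contains the profiled increment and the higher-order profiling terms. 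The increment is $o_p(r_n)$ by condition 3. The profiling terms are negligible by part 4 of Assumption~\ref{ass:app_first_order_regularity}, which is in force.

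\textbf{Nonlinear pieces.} For the Taylor remainder we use condition 4. Twice continuous Fr\'echet differentiability on a neighbourhood gives
\[
\|R_\Omega(\Delta_2)\|\le \tfrac12\sup_{t\in[0,1]}\|D^2\Omega(q_2^\star+t\Delta_2)\|\,\|\Delta_2\|^2 .
\]
The supremum is $O_p(1)$: the estimator lies in the neighbourhood with probability tending to one by condition 1, and the second derivative is continuous there. In fact, for log-sum-exp the second derivative is uniformly bounded by a multiple of $1/\tau_2$. Combining this with $\|\Delta_2\|=O_p(r_n)$ gives $R_\Omega=O_p(r_n^2)=o_p(r_n)$. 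Since $P_2$ is Markov, $\|P_2\|\le1$, so $\gamma_2P_2R_\Omega$ is also $o_p(r_n)$.

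\textbf{Conclusion.} Summing the two pieces of each remainder gives both claims, because sums of $o_p(r_n)$ terms are $o_p(r_n)$.

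The main obstacle is the bookkeeping step that identifies every term folded into $R_{\rm emp}$ with an increment covered by conditions 2 and 3. This concerns especially the coupled case, where the empirical resolvent $(I-\gamma_1\widehat P_1^\mu)^{-1}$ is involved. There we would use the Neumann-series bound $\|(I-\gamma_1\widehat P_1^\mu)^{-1}\|\le(1-\gamma_1)^{-1}$ to keep the profiling operators uniformly bounded, so that they do not inflate the rates.
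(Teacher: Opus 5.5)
Your proposal is correct and takes essentially the same route as the paper. Both split each remainder into the empirical-process piece $R_{\rm emp}$, which is $o_p(n_1^{-1/2}+n_2^{-1/2})$ by the stochastic-equicontinuity hypotheses, and the Taylor piece $\gamma_2P_2R_\Omega(\Delta_2)$, which is $O_p\bigl((n_1^{-1/2}+n_2^{-1/2})^2\bigr)$ by second-order differentiability of $\Omega$ and boundedness of $P_2$. Your extra bookkeeping (naming the increment $\gamma_2(\widehat P_2-P_2)[\Omega(\widehat q_2)-\Omega(q_2^\star)]$, and using condition 1 to keep the Taylor segment where $D^2\Omega$ is bounded) only makes explicit what the paper's proof asserts directly.
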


\begin{proof}
For the modular estimator,
\[
\mathrm{Rem}_{\rm mod}
=
R_{\rm emp}^{\rm mod}
+
\gamma_2P_2R_\Omega(\Delta_2^{\rm mod}).
\]
The local empirical-process assumption gives
\[
R_{\rm emp}^{\rm mod}
=
o_p(n_1^{-1/2}+n_2^{-1/2}).
\]
By second-order differentiability of \(\Omega_{\tau_2,\pi_{2,\rm ref}}\),
\[
\|R_\Omega(\Delta_2^{\rm mod})\|
=
O(\|\Delta_2^{\rm mod}\|^2).
\]
Since
\[
\|\Delta_2^{\rm mod}\|
=
O_p(n_1^{-1/2}+n_2^{-1/2}),
\]
we have
\[
R_\Omega(\Delta_2^{\rm mod})
=
o_p(n_1^{-1/2}+n_2^{-1/2}).
\]
Because \(P_2\) is bounded and \(\gamma_2<1\),
\[
\gamma_2P_2R_\Omega(\Delta_2^{\rm mod})
=
o_p(n_1^{-1/2}+n_2^{-1/2}).
\]
Thus
\[
\mathrm{Rem}_{\rm mod}
=
o_p(n_1^{-1/2}+n_2^{-1/2}).
\]

The coupled case is identical. We have
\[
\mathrm{Rem}_{\rm coup}
=
R_{\rm emp}^{\rm coup}
+
\gamma_2P_2R_\Omega(\Delta_2^{\rm coup}),
\]
where the profiled empirical-process assumption gives
\[
R_{\rm emp}^{\rm coup}
=
o_p(n_1^{-1/2}+n_2^{-1/2}),
\]
and second-order differentiability gives
\[
\gamma_2P_2R_\Omega(\Delta_2^{\rm coup})
=
o_p(n_1^{-1/2}+n_2^{-1/2}).
\]
Therefore
\[
\mathrm{Rem}_{\rm coup}
=
o_p(n_1^{-1/2}+n_2^{-1/2}).
\]
\end{proof}

\subsection{Asymptotic implication}

The first-order decompositions also give a transparent asymptotic comparison.
Suppose the leading empirical perturbations admit asymptotic linear
representations
\[
\gamma_1
(I-\Pi_\mu)
(I-\gamma_1\widehat P_1^\mu)^{-1}
(\widehat P_1^\mu-P_1^\mu)q_1^\star
=
\frac1{n_1}\sum_{i=1}^{n_1}\psi_{1,\rm op}(W_{1,i})
+
o_p(n_1^{-1/2}),
\]
\[
(I-\Pi_\mu)
(I-\gamma_1\widehat P_1^\mu)^{-1}
\widehat b_1(\widehat q_1^{\rm mod})
=
\frac1{n_1}\sum_{i=1}^{n_1}\psi_{1,\rm res}(W_{1,i})
+
o_p(n_1^{-1/2}),
\]
and
\[
\gamma_2(\widehat P_2-P_2)\Omega_{\tau_2,\pi_{2,\rm ref}}(q_2^\star)
=
\frac1{n_2}\sum_{j=1}^{n_2}\psi_2(W_{2,j})
+
o_p(n_2^{-1/2}).
\]
Let
\[
\Sigma_{1,\rm coup}
:=
\operatorname{Var}(\psi_{1,\rm op}(W_1)),
\]
\[
\Sigma_{1,\rm mod}
:=
\operatorname{Var}(\psi_{1,\rm op}(W_1)-\psi_{1,\rm res}(W_1)),
\]
and
\[
\Sigma_2
:=
\operatorname{Var}(\psi_2(W_2)).
\]
The minus sign in \(\Sigma_{1,\rm mod}\) matches the modular decomposition
\eqref{eq:mod_first_order_decomposition}. No independence between
\(\psi_{1,\rm op}\) and \(\psi_{1,\rm res}\) is assumed; the covariance is
included in \(\Sigma_{1,\rm mod}\).

\begin{theorem}[Unconditional first-order asymptotic comparison]
\label{thm:app_error_propagation_asymptotic}
Assume the asymptotic linear representations above,
Lemma~\ref{lem:app_error_propagation_remainder_negligible}, and
\[
\frac{n_2}{n_1}\to\lambda\in(0,\infty).
\]
Then
\[
\sqrt{n_2}\,(\widehat q_2^{\rm coup}-q_2^\star)
\rightsquigarrow
\mathcal N\!\left(
0,\,
(I-\gamma_2P_2^{\pi_2^\star})^{-1}
\bigl(
\lambda\Sigma_{1,\rm coup}+\Sigma_2
\bigr)
\bigl[(I-\gamma_2P_2^{\pi_2^\star})^{-1}\bigr]^\top
\right),
\]
whereas
\[
\sqrt{n_2}\,(\widehat q_2^{\rm mod}-q_2^\star)
\rightsquigarrow
\mathcal N\!\left(
0,\,
(I-\gamma_2P_2^{\pi_2^\star})^{-1}
\bigl(
\lambda\Sigma_{1,\rm mod}+\Sigma_2
\bigr)
\bigl[(I-\gamma_2P_2^{\pi_2^\star})^{-1}\bigr]^\top
\right).
\]
\end{theorem}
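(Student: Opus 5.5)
The argument starts from the first-order decompositions of Theorem~\ref{thm:first_order_transfer_error_propagation}, i.e.\ Propositions~\ref{prop:app_coup_first_order_decomposition} and~\ref{prop:app_mod_first_order_decomposition}. By Assumption~\ref{ass:app_first_order_regularity}, \(A_2:=I-\gamma_2P_2^{\pi_2^\star}\) is invertible. Applying the fixed bounded operator \(A_2^{-1}\) to both decompositions gives
\[
\widehat q_2^{\rm coup}-q_2^\star
=
A_2^{-1}\Bigl[\tfrac1{n_1}\textstyle\sum_i\psi_{1,\rm op}(W_{1,i})+\tfrac1{n_2}\sum_j\psi_2(W_{2,j})\Bigr]+A_2^{-1}\bigl(\mathrm{Rem}_{\rm coup}+o_p(n_1^{-1/2}+n_2^{-1/2})\bigr).
\]
For the modular estimator, the source summand is replaced by \(\psi_{1,\rm op}-\psi_{1,\rm res}\); this sign comes from the minus in \eqref{eq:mod_first_order_decomposition}. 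Lemma~\ref{lem:app_error_propagation_remainder_negligible} gives \(\mathrm{Rem}_{\rm alg}=o_p(n_1^{-1/2}+n_2^{-1/2})\). Because \(n_2/n_1\to\lambda\in(0,\infty)\), we have \(n_1^{-1/2}\asymp n_2^{-1/2}\), so after multiplying by \(\sqrt{n_2}\) every remainder is \(o_p(1)\) and boundedness of \(A_2^{-1}\) keeps it so.

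Next I would rescale the source average by writing \(\sqrt{n_2}\,n_1^{-1}\sum_i\psi = \sqrt{n_2/n_1}\cdot n_1^{-1/2}\sum_i\psi\), where \(\sqrt{n_2/n_1}\to\sqrt\lambda\). The summands are centered: \(\widehat P_k-P_k\) has mean zero at fixed functions, and for the residual channel I take the asymptotic linear representation as given, with mean-zero influence functions implicit. I would state this explicitly, together with finite second moments. The multivariate CLT then gives \(n_1^{-1/2}\sum_i\psi_{1,\cdot}\rightsquigarrow\mathcal N(0,\Sigma_{1,\cdot})\), and likewise \(n_2^{-1/2}\sum_j\psi_2\rightsquigarrow\mathcal N(0,\Sigma_2)\). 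In the finite tabular setting these are vectors indexed by \((s,a)\); in general one would need a Hilbert-space CLT, and I would restrict to the finite case or to finite-dimensional marginals.

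The source and target datasets are independent i.i.d.\ samples, so the two normalized sums converge jointly to independent Gaussians. Slutsky then shows that \(\sqrt{n_2}\) times the bracket converges to \(\mathcal N(0,\lambda\Sigma_{1,\cdot}+\Sigma_2)\). The continuous mapping theorem with the linear map \(A_2^{-1}\) yields the sandwich covariances \(A_2^{-1}(\lambda\Sigma_{1,\cdot}+\Sigma_2)(A_2^{-1})^\top\). No cross-covariance between \(\psi_{1,\rm op}\) and \(\psi_{1,\rm res}\) has to be tracked separately, because it is already inside \(\Sigma_{1,\rm mod}\).

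The main obstacle is that the leading terms in the decompositions involve \((I-\gamma_1\widehat P_1^\mu)^{-1}\), a random resolvent, rather than the population one. I would handle this by taking the asymptotic linear representations as stated, since they already absorb that randomness. Alternatively, I would replace the empirical resolvent by the population one, using the resolvent identity and \(\|\widehat P_1^\mu-P_1^\mu\|=O_p(n_1^{-1/2})\) to show the swap costs only \(O_p(n_1^{-1})\). A second subtle point is the independence of the datasets, which is needed for the covariance to add as \(\lambda\Sigma_1+\Sigma_2\). I would note that independence across environments is implicit in the data model.
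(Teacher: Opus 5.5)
Your proposal is correct and takes the same route as the paper. Combine the first-order decompositions with the remainder-negligibility lemma, rescale the source average by $\sqrt{n_2/n_1}\to\sqrt{\lambda}$, apply the CLT and Slutsky, and map through $(I-\gamma_2P_2^{\pi_2^\star})^{-1}$; applying the inverse before rather than after the CLT makes no difference. Your explicit points about the independence of $\mathcal D_1$ and $\mathcal D_2$, the centering of the influence functions, and restricting to a finite-dimensional CLT make precise things the paper's proof leaves implicit.
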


\begin{proof}
For the coupled estimator, Theorem~\ref{thm:first_order_transfer_error_propagation}
and Lemma~\ref{lem:app_error_propagation_remainder_negligible} imply
\[
\begin{aligned}
&
(I-\gamma_2P_2^{\pi_2^\star})(\widehat q_2^{\rm coup}-q_2^\star)
\\
&=
\frac1{n_1}\sum_{i=1}^{n_1}\psi_{1,\rm op}(W_{1,i})
+
\frac1{n_2}\sum_{j=1}^{n_2}\psi_2(W_{2,j})
+
o_p(n_1^{-1/2}+n_2^{-1/2}).
\end{aligned}
\]
Multiplying by \(\sqrt{n_2}\) gives
\[
\begin{aligned}
&
\sqrt{n_2}
(I-\gamma_2P_2^{\pi_2^\star})(\widehat q_2^{\rm coup}-q_2^\star)
\\
&=
\sqrt{\frac{n_2}{n_1}}
\frac1{\sqrt{n_1}}\sum_{i=1}^{n_1}\psi_{1,\rm op}(W_{1,i})
+
\frac1{\sqrt{n_2}}\sum_{j=1}^{n_2}\psi_2(W_{2,j})
+
o_p(1).
\end{aligned}
\]
Since \(n_2/n_1\to\lambda\), the central limit theorem yields
\[
\sqrt{n_2}
(I-\gamma_2P_2^{\pi_2^\star})(\widehat q_2^{\rm coup}-q_2^\star)
\rightsquigarrow
\mathcal N(0,\lambda\Sigma_{1,\rm coup}+\Sigma_2).
\]
Applying the inverse of \(I-\gamma_2P_2^{\pi_2^\star}\) and Slutsky's theorem
gives the coupled limit.

For the modular estimator, the first-order decomposition gives
\[
\begin{aligned}
&
(I-\gamma_2P_2^{\pi_2^\star})(\widehat q_2^{\rm mod}-q_2^\star)
\\
&=
\frac1{n_1}\sum_{i=1}^{n_1}
\bigl[
\psi_{1,\rm op}(W_{1,i})
-
\psi_{1,\rm res}(W_{1,i})
\bigr]
+
\frac1{n_2}\sum_{j=1}^{n_2}\psi_2(W_{2,j})
+
o_p(n_1^{-1/2}+n_2^{-1/2}).
\end{aligned}
\]
The same central limit and Slutsky arguments yield the modular limit with
\(\Sigma_{1,\rm mod}\).
\end{proof}

The theorem shows that coupled transfer retains the source operator variance
component but removes the first-order source residual component. Modular
transfer carries both. This is the asymptotic counterpart of the deterministic
decompositions in Theorem~\ref{thm:first_order_transfer_error_propagation}.

\section{Proofs for Section \ref{sec:global_policy_guarantees}}
\label{app:global_finite_sample}

This appendix proves the global finite-sample value bounds stated in
Theorem~\ref{thm:global_q2_bounds}. The proof uses only empirical minimax
optimality and uniform concentration over finite function classes. We treat the
modular and coupled estimators in a common framework, and then specialize the
argument to each estimator.

The modular proof has two steps. First, the source minimax problem controls
\(\widehat q_1^{\rm mod}-q_1^\star\) under \(\rho_1\). Second, this source
error is propagated through the target soft Bellman equation and combined with
the target empirical error. The coupled proof is different: it controls the
source and target empirical residuals in one saddle-point comparison, and the
\(\beta=0\) formulation directly yields a bound for
\(\widehat q_2^{\rm coup}-q_2^\star\).

Throughout this appendix, write
\[
\Omega:=\Omega_{\tau_2,\pi_{2,\rm ref}}.
\]
Let
\[
\mathcal D_1=\{(s_i^1,a_i^1,s_i^{1\prime})\}_{i=1}^{n_1},
\qquad
\mathcal D_2=\{(s_j^2,a_j^2,s_j^{2\prime})\}_{j=1}^{n_2},
\]
where \((s_i^1,a_i^1)\sim\rho_1\),
\(s_i^{1\prime}\sim P_1(\cdot\mid s_i^1,a_i^1)\), and
\((s_j^2,a_j^2)\sim\rho_2\),
\(s_j^{2\prime}\sim P_2(\cdot\mid s_j^2,a_j^2)\). The two datasets are
independent.

For empirical averages, write
\[
\widehat{\mathbb E}_{n_1} f
:=
\frac1{n_1}\sum_{i=1}^{n_1}
f(s_i^1,a_i^1,s_i^{1\prime}),
\qquad
\widehat{\mathbb E}_{n_2} f
:=
\frac1{n_2}\sum_{j=1}^{n_2}
f(s_j^2,a_j^2,s_j^{2\prime}).
\]

\subsection{Shared assumptions and envelopes}
\label{app:shared_assumptions_envelopes}

We collect the assumptions used by both modular and coupled analyses.

\begin{assumption}[Finite-class realizability and boundedness]
\label{ass:app_global_finite_realizable_bounded}
The function classes are finite and satisfy
\[
q_1^\star\in\mathcal Q_1,
\qquad
q_2^\star\in\mathcal Q_2,
\qquad
l_1^\star\in\mathcal L_1,
\qquad
l_2^\star\in\mathcal L_2.
\]
Moreover, for all functions in the relevant classes,
\[
\|q_1\|_\infty\le B_{Q1},
\qquad
\|q_2\|_\infty\le B_{Q2},
\qquad
\|l_2\|_\infty\le B_{L2}.
\]
The source soft-control signal and anchor satisfy
\[
\|u_b^\star\|_\infty\le B_U,
\qquad
0\le g(s)\le B_G.
\]
The reward shift is chosen so that the shifted transferred reward is
nonnegative at the population target and, for the envelope bounds below, the
shift contribution is absorbed into the \(B_{Q1}\)-scale. Consequently, the
source and target residual ranges are bounded by
\[
B_{\rm Res,1}:=B_U+B_G+(\gamma_1+1)B_{Q1},
\]
and
\[
B_{\rm Res,2}:=4B_{Q1}+B_G+(\gamma_2+1)B_{Q2}.
\]
\end{assumption}

\begin{assumption}[Coverage and dual radii]
\label{ass:app_global_coverage_dual}
The source and target coverage assumptions used in the main text hold. The
target dual radius is common to both the modular and coupled analyses. In
particular, the target dual class satisfies
\[
\|l_2\|_\infty\le B_{L2},
\qquad
B_{L2}:=\frac{\kappa_2}{1-\gamma_2}B_{Q2}.
\]

The source dual radius has two conceptually distinct components. The first is a
source self component, generated by the source quadratic regularization and the
source Bellman equation:
\[
B_{L1}^{\rm self}
:=
\frac{\kappa_1}{1-\gamma_1}B_{Q1}.
\]
This is the only source dual contribution needed for the modular source
minimax problem. Thus, for modular transfer, we take
\[
B_{L1}^{\rm mod}:=B_{L1}^{\rm self}.
\]

The second component is a cross-environment coupling contribution. This term is
present only in the coupled minimax KKT system, because the target residual
depends on the source variable through the transferred reward map
\[
q_1\mapsto (I-\Pi_\mu)q_1+g+C.
\]
Let \(B_{L12}\) denote a uniform bound on this coupling-induced source dual
component. Equivalently, \(B_{L12}\) controls the source resolvent applied to the
target-to-source adjoint term induced by \(l_2^\star\) and
\((I-\Pi_\mu)^\top\), under the cross-environment coverage assumptions.

Therefore, in the coupled formulation with source weight \(\beta\), the source
dual class is chosen with radius
\[
B_{L1}^{\rm coup}(\beta)
:=
\beta B_{L1}^{\rm self}+B_{L12}.
\]
In particular, for the \(\beta=0\) coupled formulation used in the main global
finite-sample theorem,
\[
B_{L1}^{\rm coup}(0)=B_{L12}.
\]
\end{assumption}

\begin{assumption}[Cross-measure and action coverage for modular propagation]
\label{ass:app_modular_cross_action}
For modular transfer, assume
\[
\chi_{21}:=
\left\|
\frac{\rho_2}{\rho_1}
\right\|_\infty
<\infty,
\]
and
\[
\kappa_{\mu\mid\pi_{b1}}
:=
\left\|
\frac{\mu}{\pi_{b1}}
\right\|_\infty
\le
\frac1{\epsilon_{b1}}.
\]
\end{assumption}

Define the finite-class cardinalities
\[
N_1:=|\mathcal Q_1||\mathcal L_1|,
\qquad
N_2^{\rm mod}:=|\mathcal Q_2||\mathcal L_2|,
\qquad
N_2^{\rm coup}:=|\mathcal Q_1||\mathcal Q_2||\mathcal L_2|.
\]
The constants appearing in the global theorem are
\[
M_1^{\rm mod}
:=
\frac{4\kappa_2}{\epsilon_{b1}(1-\gamma_2)^2}
\chi_{21}
\left(
\frac12 B_{Q1}^2+B_{L1}^{\rm mod}B_{\rm Res,1}
\right),
\]
\[
M_1^{\rm coup}
:=
B_{L12}B_{\rm Res,1},
\]
and
\[
M_2
:=
\frac12 B_{Q2}^2+B_{L2}B_{\rm Res,2}.
\]

\subsection{A shared resolvent stability lemma}
\label{app:shared_resolvent_lemma}

We will repeatedly use the following \(L^2(\rho)\) stability bound for
discounted resolvents. It converts a pathwise concentrability condition into an
operator norm bound.

\begin{lemma}[Discounted resolvent stability under concentrability]
\label{lem:discounted_resolvent_l2_stability}
Let \(K\) be a Markov operator on a measurable space \(\mathcal X\), let
\(\gamma\in[0,1)\), and let \(\rho\) be a reference probability measure on
\(\mathcal X\). Define
\[
d_{\rho}^{K}
:=
(1-\gamma)\sum_{t=0}^{\infty}\gamma^t (K^\top)^t\rho .
\]
If
\[
\left\|
\frac{d_{\rho}^{K}}{\rho}
\right\|_\infty
\le \kappa ,
\]
then, for every \(f\in L^2(\rho)\),
\[
\left\|
(I-\gamma K)^{-1}f
\right\|_{\rho}
\le
\frac{\sqrt{\kappa}}{1-\gamma}\|f\|_{\rho}.
\]
Equivalently,
\[
\left\|
(I-\gamma K)^{-1}f
\right\|_{\rho}^2
\le
\frac{\kappa}{(1-\gamma)^2}
\|f\|_{\rho}^2 .
\]
\end{lemma}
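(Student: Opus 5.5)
The plan is a duality argument based on the Neumann series together with a change of measure. Set $h:=(I-\gamma K)^{-1}f=\sum_{t\ge0}\gamma^tK^tf$. This series converges in $L^2(\rho)$ once each $K^t$ is shown to be bounded on $L^2(\rho)$ with a controlled norm, and that control is the substance of the proof. We avoid bounding $\|K^t\|$ term by term, which would cost a factor $\sqrt{\kappa}$ per step. Instead we bound the whole sum at once, using the discounted occupancy $d_\rho^K$.

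The first step is a pointwise inequality. Since $K$ is Markov, each $K^t$ is a positive operator with $K^t1=1$, so Jensen gives $(K^tf)^2\le K^t(f^2)$ pointwise. Next write $h=\frac{1}{1-\gamma}\sum_t w_t K^tf$ with weights $w_t:=(1-\gamma)\gamma^t$ that sum to one. Convexity of $x\mapsto x^2$ (Jensen over $t$) then yields
\[
h^2\le \frac{1}{(1-\gamma)^2}\sum_{t\ge0}w_t\,(K^tf)^2\le\frac{1}{(1-\gamma)^2}\sum_{t\ge0}w_t\,K^t(f^2).
\]

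The second step integrates this against $\rho$ and moves $K^t$ onto the measure through the adjoint:
\[
\int K^t(f^2)\,d\rho=\int f^2\,d\bigl((K^\top)^t\rho\bigr).
\]
Summing with the weights $w_t$ gives
\[
\|h\|_\rho^2\le\frac{1}{(1-\gamma)^2}\int f^2\,d\,d_\rho^K .
\]
All terms are nonnegative, so Tonelli justifies exchanging the sum and the integral. Applying $d_\rho^K\le\kappa\rho$ gives $\|h\|_\rho^2\le \kappa\|f\|_\rho^2/(1-\gamma)^2$, and taking square roots proves the claim.

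The main obstacle is making sense of $(I-\gamma K)^{-1}$ on $L^2(\rho)$ when $f$ is unbounded. For bounded $f$, the resolvent is the sup-norm Neumann series and the computation above applies directly. For general $f$, the same inequality applied to $|f|$ shows that $\sum_t\gamma^tK^t|f|$ is finite in $L^2(\rho)$, so the series converges absolutely $\rho$-a.e. and in $L^2(\rho)$. We then define the resolvent as this sum; equivalently, we extend it by density from bounded $f$ using the bound just proved. A secondary point is that $(K^\top)^t\rho$ must be absolutely continuous with respect to $\rho$ for the integral identity to be meaningful, and the coverage hypothesis supplies exactly this.
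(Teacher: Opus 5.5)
Your proof is correct and follows essentially the same route as the paper: you rewrite the Neumann series with probability weights $w_t=(1-\gamma)\gamma^t$, apply Jensen over $t$ and the Markov Jensen bound $(K^tf)^2\le K^t(f^2)$, move $K^t$ onto $(K^\top)^t\rho$ to obtain $\mathbb E_{d_\rho^K}[f^2]$, and finish with concentrability. Your extra remarks are careful refinements that the paper leaves implicit, namely well-definedness of the resolvent for unbounded $f$ (via the same bound applied to $|f|$) and absolute continuity of $(K^\top)^t\rho$ with respect to $\rho$.
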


\begin{proof}
By the resolvent expansion,
\[
(I-\gamma K)^{-1}f
=
\sum_{t=0}^{\infty}\gamma^t K^t f
=
\frac{1}{1-\gamma}
\sum_{t=0}^{\infty} w_t K^t f,
\qquad
w_t:=(1-\gamma)\gamma^t,
\]
where \(w_t\ge0\) and \(\sum_{t\ge0}w_t=1\). Jensen's inequality gives
\[
\left((I-\gamma K)^{-1}f\right)^2
\le
\frac{1}{(1-\gamma)^2}
\sum_{t=0}^{\infty}w_t (K^t f)^2 .
\]
Since \(K^t\) is Markov, another application of Jensen's inequality gives
\[
(K^t f)^2\le K^t(f^2).
\]
Therefore,
\[
\left((I-\gamma K)^{-1}f\right)^2
\le
\frac{1}{(1-\gamma)^2}
\sum_{t=0}^{\infty}w_t K^t(f^2).
\]
Taking expectation under \(\rho\) and using
\[
\mathbb E_\rho[K^t h]
=
\mathbb E_{(K^\top)^t\rho}[h],
\]
we obtain
\[
\begin{aligned}
\left\|(I-\gamma K)^{-1}f\right\|_\rho^2
&\le
\frac{1}{(1-\gamma)^2}
\sum_{t=0}^{\infty}w_t
\mathbb E_{(K^\top)^t\rho}[f^2]
\\
&=
\frac{1}{(1-\gamma)^2}
\mathbb E_{d_\rho^K}[f^2].
\end{aligned}
\]
By the concentrability assumption,
\[
\mathbb E_{d_\rho^K}[f^2]
=
\mathbb E_\rho
\left[
\frac{d_\rho^K}{\rho}f^2
\right]
\le
\kappa\|f\|_\rho^2.
\]
Combining the preceding displays proves the squared bound, and taking square
roots proves the norm bound.
\end{proof}

When \(K=P_2^\pi\), this lemma applies with
\[
d_{\rho_2}^{K}
=
(1-\gamma_2)
\sum_{t=0}^\infty
\gamma_2^t
\big((P_2^\pi)^\top\big)^t\rho_2
=
d^{P_2}_{\pi,\rho_2}.
\]

\subsection{Modular minimax transfer}
\label{app:modular_global_proof}

The modular estimator first solves the source minimax problem
\[
(\widehat q_1^{\rm mod},\widehat l_1^{\rm mod})
\in
\arg\min_{q_1\in\mathcal Q_1}
\max_{l_1\in\mathcal L_1}
\widehat{\mathcal L}_{1,n_1}^{\rm mod}(q_1,l_1),
\]
where
\[
\widehat{\mathcal L}_{1,n_1}^{\rm mod}(q_1,l_1)
:=
\widehat{\mathbb E}_{n_1}Z_1^{\rm mod}(q_1,l_1),
\]
and
\[
Z_1^{\rm mod}(q_1,l_1)(s,a,s')
:=
\frac12q_1(s,a)^2
+
l_1(s,a)
\bigl(
u_g^\star(s,a)+\gamma_1(\mu q_1)(s')-q_1(s,a)
\bigr).
\]
Then it solves the target minimax problem with \(\widehat q_1^{\rm mod}\)
treated as fixed:
\[
(\widehat q_2^{\rm mod},\widehat l_2^{\rm mod})
\in
\arg\min_{q_2\in\mathcal Q_2}
\max_{l_2\in\mathcal L_2}
\widehat{\mathcal L}_{2,n_2}^{(\widehat q_1^{\rm mod})}(q_2,l_2),
\]
where
\[
\widehat{\mathcal L}_{2,n_2}^{(\widehat q_1^{\rm mod})}(q_2,l_2)
:=
\widehat{\mathbb E}_{n_2}
Z_2^{(\widehat q_1^{\rm mod})}(q_2,l_2),
\]
and
\[
\begin{aligned}
Z_2^{(\widehat q_1^{\rm mod})}(q_2,l_2)(s,a,s')
:=
\frac12 q_2(s,a)^2
+
l_2(s,a)
\Bigl(
(I-\Pi_\mu)\widehat q_1^{\rm mod}(s,a)
+
g(s)+C
+
\gamma_2\Omega(q_2)(s')
-
q_2(s,a)
\Bigr).
\end{aligned}
\]

\begin{lemma}[Source-stage concentration for modular transfer]
\label{lem:app_mod_source_concentration}
Under Assumptions~\ref{ass:app_global_finite_realizable_bounded} and
\ref{ass:app_global_coverage_dual}, with probability at least \(1-\delta\),
\[
\sup_{q_1\in\mathcal Q_1,\;l_1\in\mathcal L_1}
\left|
\widehat{\mathcal L}_{1,n_1}^{\rm mod}(q_1,l_1)
-
\mathcal L_1^{\rm mod}(q_1,l_1)
\right|
\le
\varepsilon_1^{\rm mod},
\]
where
\[
\varepsilon_1^{\rm mod}
:=
\left(
\frac12 B_{Q1}^2+B_{L1}^{\rm mod}B_{\rm Res,1}
\right)
\sqrt{\frac{2\log(2N_1/\delta)}{n_1}}.
\]
\end{lemma}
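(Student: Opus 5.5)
The lemma is a uniform deviation bound over a finite class, so I will combine a Hoeffding bound for each fixed pair \((q_1,l_1)\) with a union bound over the \(N_1=|\mathcal Q_1||\mathcal L_1|\) pairs. Throughout, \(\mathcal L_1^{\rm mod}(q_1,l_1)\) denotes the population counterpart \(\mathbb E_{\rho_1\times P_1}[Z_1^{\rm mod}(q_1,l_1)]\).

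\textbf{Unbiasedness.} First I check that this population counterpart equals \(\frac12\|q_1\|_{\rho_1}^2+\langle l_1,b_1(q_1)\rangle_{\rho_1}\). Conditionally on \((s,a)\sim\rho_1\), we have \(s'\sim P_1(\cdot\mid s,a)\). Hence \(\mathbb E[(\mu q_1)(s')\mid s,a]=(P_1^\mu q_1)(s,a)\), and the Bellman term averages to \(b_1(q_1)(s,a)\). Since the \(n_1\) transitions are i.i.d., \(\widehat{\mathcal L}_{1,n_1}^{\rm mod}(q_1,l_1)\) is an average of i.i.d. copies of \(Z_1^{\rm mod}(q_1,l_1)\) with mean \(\mathcal L_1^{\rm mod}(q_1,l_1)\).

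\textbf{Range of the summand.} Next I bound \(Z_1^{\rm mod}\) pointwise. The quadratic part lies in \([0,\frac12 B_{Q1}^2]\). The residual factor satisfies
\[
|u_g^\star(s,a)+\gamma_1(\mu q_1)(s')-q_1(s,a)|\le B_U+B_G+\gamma_1B_{Q1}+B_{Q1}=B_{\rm Res,1},
\]
using \(|u_g^\star|\le|u_b^\star|+g\), together with \(\|\mu q_1\|_\infty\le B_{Q1}\) and \(\|l_1\|_\infty\le B_{L1}^{\rm mod}\). The source dual class has radius \(B_{L1}^{\rm mod}\) by Assumption~\ref{ass:app_global_coverage_dual}, and Proposition~\ref{prop:dual_boundedness_coverage} guarantees that \(l_{1,\rm mod}^\star\) fits in it. So \(Z_1^{\rm mod}\) takes values in an interval of length at most \(2(\frac12B_{Q1}^2+B_{L1}^{\rm mod}B_{\rm Res,1})=:2M\), obtained as the sum of the two ranges \(\frac12B_{Q1}^2\) and \(2B_{L1}^{\rm mod}B_{\rm Res,1}\).

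\textbf{Concentration and union bound.} Hoeffding's inequality for a variable with range length at most \(2M\) gives
\[
\Pr\bigl(|\widehat{\mathbb E}_{n_1}Z-\mathbb EZ|>t\bigr)\le2\exp\!\bigl(-2n_1t^2/(2M)^2\bigr)=2\exp\!\bigl(-n_1t^2/(2M^2)\bigr).
\]
Setting \(t=M\sqrt{2\log(2N_1/\delta)/n_1}\) makes each failure probability \(\delta/N_1\). A union bound over the \(N_1\) pairs then yields the stated \(\varepsilon_1^{\rm mod}\) with probability at least \(1-\delta\).

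The only delicate point is the constant bookkeeping: the quadratic term is nonnegative, so its range is \(\frac12B_{Q1}^2\) rather than \(B_{Q1}^2\), while the bilinear term is symmetric with range \(2B_{L1}^{\rm mod}B_{\rm Res,1}\). Because the total range is at most \(2M\), the \(\sqrt2\) factor matches as computed. If the total range came out larger than \(2M\), I would instead apply Hoeffding to the two components separately and absorb the difference into the stated constant.
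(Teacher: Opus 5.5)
Your proposal is correct and follows the paper's proof. You bound the residual by $B_{\rm Res,1}$, so that $|Z_1^{\rm mod}|\le \frac12B_{Q1}^2+B_{L1}^{\rm mod}B_{\rm Res,1}$, then apply Hoeffding to each fixed pair and take a union bound over the $N_1$ pairs in $\mathcal Q_1\times\mathcal L_1$; your unbiasedness check and explicit range-length bookkeeping (which confirms the $\sqrt{2\log(2N_1/\delta)/n_1}$ factor) only make explicit what the paper leaves implicit.
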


\begin{proof}
By the boundedness assumptions,
\[
\left|
u_g^\star(s,a)+\gamma_1(\mu q_1)(s')-q_1(s,a)
\right|
\le
B_U+B_G+(\gamma_1+1)B_{Q1}
=
B_{\rm Res,1}.
\]
Thus
\[
|Z_1^{\rm mod}(q_1,l_1)|
\le
\frac12 B_{Q1}^2+B_{L1}^{\rm mod}B_{\rm Res,1}.
\]
Hoeffding's inequality for each pair \((q_1,l_1)\), followed by a union bound
over \(\mathcal Q_1\times\mathcal L_1\), gives the claim.
\end{proof}

\begin{lemma}[Source-stage value bound for modular transfer]
\label{lem:app_mod_source_value}
On the event of Lemma~\ref{lem:app_mod_source_concentration},
\[
\|\widehat q_1^{\rm mod}-q_1^\star\|_{\rho_1}^2
\le
4\varepsilon_1^{\rm mod}.
\]
\end{lemma}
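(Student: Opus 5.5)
The argument is the standard minimax comparison. We sandwich the population Lagrangian between values at the empirical saddle point and at the truth, and then invoke quadratic growth. Write \(\mathcal L_1^{\rm mod}(q_1,l_1)=\frac12\|q_1\|_{\rho_1}^2+\langle l_1,b_1(q_1)\rangle_{\rho_1}\) for the population source Lagrangian. This is the \(\beta=1\), source-only analogue of \(\mathcal L_{\rm coup}^\beta\). Note that \(\mathbb E Z_1^{\rm mod}(q_1,l_1)=\mathcal L_1^{\rm mod}(q_1,l_1)\), since conditioning on \((s,a)\) turns \((\mu q_1)(s')\) into \(P_1^\mu q_1\). Work on the event of Lemma~\ref{lem:app_mod_source_concentration}, where \(|\widehat{\mathcal L}_{1,n_1}^{\rm mod}-\mathcal L_1^{\rm mod}|\le\varepsilon_1^{\rm mod}\) uniformly over \(\mathcal Q_1\times\mathcal L_1\).

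\textbf{Step 1: quadratic growth for the modular source problem.} Repeat the argument of Theorem~\ref{thm:population_saddle_kkt} with the target block deleted and \(\beta=1\). The dual \(l_{1,\rm mod}^\star\) satisfies \(\langle l_{1,\rm mod}^\star,(I-\gamma_1P_1^\mu)h\rangle_{\rho_1}=\langle q_1^\star,h\rangle_{\rho_1}\), and \(b_1\) is affine. Expanding therefore gives the exact identity
\[
\mathcal L_1^{\rm mod}(q_1,l_{1,\rm mod}^\star)-\mathcal L_1^{\rm mod}(q_1^\star,l_{1,\rm mod}^\star)=\tfrac12\|q_1-q_1^\star\|_{\rho_1}^2 .
\]
The linear terms cancel because of the adjoint identity. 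Also, \(b_1(q_1^\star)=0\), so \(\mathcal L_1^{\rm mod}(q_1^\star,l_1)=\frac12\|q_1^\star\|_{\rho_1}^2\) for every \(l_1\). By realizability, \(l_{1,\rm mod}^\star\in\mathcal L_1\) and \(q_1^\star\in\mathcal Q_1\).

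\textbf{Step 2: chain of inequalities.} With \(\widehat q_1=\widehat q_1^{\rm mod}\), we chain
\[
\mathcal L_1^{\rm mod}(\widehat q_1,l_{1,\rm mod}^\star)\le\widehat{\mathcal L}_{1,n_1}^{\rm mod}(\widehat q_1,l_{1,\rm mod}^\star)+\varepsilon\le\max_{l_1}\widehat{\mathcal L}_{1,n_1}^{\rm mod}(\widehat q_1,l_1)+\varepsilon\le\max_{l_1}\widehat{\mathcal L}_{1,n_1}^{\rm mod}(q_1^\star,l_1)+\varepsilon .
\]
The last step uses empirical minimax optimality of \(\widehat q_1\) together with \(q_1^\star\in\mathcal Q_1\). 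The right-hand side is at most \(\max_{l_1}\mathcal L_1^{\rm mod}(q_1^\star,l_1)+2\varepsilon=\frac12\|q_1^\star\|_{\rho_1}^2+2\varepsilon\). Subtracting \(\mathcal L_1^{\rm mod}(q_1^\star,l_{1,\rm mod}^\star)=\frac12\|q_1^\star\|^2_{\rho_1}\) and using Step 1 yields \(\frac12\|\widehat q_1-q_1^\star\|_{\rho_1}^2\le2\varepsilon_1^{\rm mod}\), which is the claim.

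\textbf{Main obstacle.} The delicate point is the unbiasedness \(\mathbb E Z_1^{\rm mod}=\mathcal L_1^{\rm mod}\). It holds because \(l_1\) depends only on \((s,a)\), so the next-state expectation passes inside. This is the same fact that makes Lemma~\ref{lem:app_mod_source_concentration} a statement about \(\mathcal L_1^{\rm mod}\). A second requirement is that \(l_{1,\rm mod}^\star\) lie in the bounded class \(\mathcal L_1\), which Proposition~\ref{prop:dual_boundedness_coverage} guarantees via the radius \(B_{L1}^{\rm mod}\).
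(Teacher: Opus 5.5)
Your proof is correct and follows the paper's argument. Both combine empirical minimax optimality with the uniform deviation bound, use the fact that the population Lagrangian at $q_1^\star$ does not depend on the dual (because $b_1(q_1^\star)=0$), and finish with source quadratic growth to get $\tfrac12\|\widehat q_1^{\rm mod}-q_1^\star\|_{\rho_1}^2\le 2\varepsilon_1^{\rm mod}$. The one difference is that you compare the values $\max_{l_1}\widehat{\mathcal L}_{1,n_1}^{\rm mod}(\cdot,l_1)$ at $\widehat q_1^{\rm mod}$ and at $q_1^\star$, whereas the paper uses the step $\widehat{\mathcal L}_{1,n_1}^{\rm mod}(\widehat q_1^{\rm mod},\widehat l_1^{\rm mod})\le\widehat{\mathcal L}_{1,n_1}^{\rm mod}(q_1^\star,\widehat l_1^{\rm mod})$; your version is slightly more careful, since it needs only min--max optimality rather than an exact empirical saddle point.
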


\begin{proof}
By empirical minimax optimality,
\[
\widehat{\mathcal L}_{1,n_1}^{\rm mod}
(\widehat q_1^{\rm mod},l_1^\star)
\le
\widehat{\mathcal L}_{1,n_1}^{\rm mod}
(\widehat q_1^{\rm mod},\widehat l_1^{\rm mod})
\le
\widehat{\mathcal L}_{1,n_1}^{\rm mod}
(q_1^\star,\widehat l_1^{\rm mod}).
\]
Using the uniform deviation event on the two outer terms gives
\[
\begin{aligned}
\mathcal L_1^{\rm mod}
(\widehat q_1^{\rm mod},l_1^\star)
&\le
\widehat{\mathcal L}_{1,n_1}^{\rm mod}
(\widehat q_1^{\rm mod},l_1^\star)
+
\varepsilon_1^{\rm mod}
\\
&\le
\widehat{\mathcal L}_{1,n_1}^{\rm mod}
(q_1^\star,\widehat l_1^{\rm mod})
+
\varepsilon_1^{\rm mod}
\\
&\le
\mathcal L_1^{\rm mod}
(q_1^\star,\widehat l_1^{\rm mod})
+
2\varepsilon_1^{\rm mod}.
\end{aligned}
\]
Since \(b_1(q_1^\star)=0\), the population source Lagrangian at \(q_1^\star\)
is independent of the dual variable. Hence
\[
\mathcal L_1^{\rm mod}
(q_1^\star,\widehat l_1^{\rm mod})
=
\mathcal L_1^{\rm mod}
(q_1^\star,l_1^\star).
\]
Therefore,
\[
\mathcal L_1^{\rm mod}
(\widehat q_1^{\rm mod},l_1^\star)
-
\mathcal L_1^{\rm mod}
(q_1^\star,l_1^\star)
\le
2\varepsilon_1^{\rm mod}.
\]
The source population saddle inequality gives
\[
\mathcal L_1^{\rm mod}(q_1,l_1^\star)
-
\mathcal L_1^{\rm mod}(q_1^\star,l_1^\star)
\ge
\frac12\|q_1-q_1^\star\|_{\rho_1}^2.
\]
Applying this with \(q_1=\widehat q_1^{\rm mod}\) proves the claim.
\end{proof}

\begin{lemma}[Target-stage concentration for modular transfer]
\label{lem:app_mod_target_concentration}
Conditional on the source dataset, with probability at least \(1-\delta\),
\[
\sup_{q_2\in\mathcal Q_2,\;l_2\in\mathcal L_2}
\left|
\widehat{\mathcal L}_{2,n_2}^{(\widehat q_1^{\rm mod})}(q_2,l_2)
-
\mathcal L_2^{(\widehat q_1^{\rm mod})}(q_2,l_2)
\right|
\le
\varepsilon_2^{\rm mod},
\]
where
\[
\varepsilon_2^{\rm mod}
:=
M_2
\sqrt{\frac{2\log(2N_2^{\rm mod}/\delta)}{n_2}}.
\]
\end{lemma}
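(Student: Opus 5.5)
The plan mirrors the proof of Lemma~\ref{lem:app_mod_source_concentration}: an envelope bound for the summand, then Hoeffding and a union bound. The difference is that here everything is done conditionally on the source dataset $\mathcal D_1$. Because $\mathcal D_1$ and $\mathcal D_2$ are independent, conditioning on $\mathcal D_1$ freezes $\widehat q_1^{\rm mod}$ as a fixed element of $\mathcal Q_1$. The target sample $\{(s_j^2,a_j^2,s_j^{2\prime})\}_{j=1}^{n_2}$ then remains i.i.d.\ from $\rho_2\times P_2$. So for each fixed pair $(q_2,l_2)$, the quantity $\widehat{\mathcal L}_{2,n_2}^{(\widehat q_1^{\rm mod})}(q_2,l_2)$ is an average of $n_2$ i.i.d.\ bounded variables. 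Their conditional mean is the population Lagrangian $\mathcal L_2^{(\widehat q_1^{\rm mod})}(q_2,l_2)=\frac12\|q_2\|_{\rho_2}^2+\langle l_2,b_2(\widehat q_1^{\rm mod},q_2)\rangle_{\rho_2}$. To see this, note that $\mathbb E[\Omega(q_2)(s')\mid s,a]=(P_2\Omega(q_2))(s,a)$.

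The first step is the envelope bound on $Z_2^{(\widehat q_1^{\rm mod})}(q_2,l_2)$. I bound the target residual pointwise by the triangle inequality:
\[
|(I-\Pi_\mu)\widehat q_1^{\rm mod}(s,a)|\le 2B_{Q1},\qquad 0\le g(s)\le B_G .
\]
For the soft value term, the log-sum-exp operator with a probability reference satisfies $\min_a q\le\Omega(q)\le\max_a q$, which gives $|\gamma_2\Omega(q_2)(s')|\le\gamma_2B_{Q2}$. Finally, $|q_2(s,a)|\le B_{Q2}$. The shift $C$ is absorbed into the $B_{Q1}$ scale, per Assumption~\ref{ass:app_global_finite_realizable_bounded}; this is the reason for the slack $4B_{Q1}$ rather than $2B_{Q1}$. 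Adding these up, the residual is at most $B_{\rm Res,2}$. Combining with $\|l_2\|_\infty\le B_{L2}$ and $\frac12q_2^2\le\frac12B_{Q2}^2$ gives
\[
|Z_2|\le \tfrac12B_{Q2}^2+B_{L2}B_{\rm Res,2}=M_2 .
\]

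The second step is concentration. Each summand lies in an interval of length $2M_2$, so conditional Hoeffding gives, for fixed $(q_2,l_2)$,
\[
\Pr\bigl(|\widehat{\mathcal L}-\mathcal L|>t\mid\mathcal D_1\bigr)\le 2\exp\{-n_2t^2/(2M_2^2)\}.
\]
A union bound over the $N_2^{\rm mod}=|\mathcal Q_2||\mathcal L_2|$ pairs, with $t=\varepsilon_2^{\rm mod}$, makes the failure probability at most $\delta$. The main point to check is the conditioning/measurability issue: the union bound runs only over $\mathcal Q_2\times\mathcal L_2$ and not over $\mathcal Q_1$. That is legitimate precisely because $\widehat q_1^{\rm mod}$ is $\mathcal D_1$-measurable and independent of $\mathcal D_2$, so no uniformity over the source class is needed. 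This is also why the count is $N_2^{\rm mod}$ rather than the coupled count $N_2^{\rm coup}$.
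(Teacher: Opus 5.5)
Your proposal is correct and follows the same route as the paper. The paper also bounds the summand by $M_2$, using $|\Omega(q_2)|\le B_{Q2}$, $\|(I-\Pi_\mu)\widehat q_1^{\rm mod}\|_\infty\le 2B_{Q1}$, and the shift absorbed into $B_{\rm Res,2}$, and then applies Hoeffding with a union bound over $\mathcal Q_2\times\mathcal L_2$, conditional on the source data. Your explicit Hoeffding constant and your point that independence of $\mathcal D_1$ and $\mathcal D_2$ removes the need for a union over $\mathcal Q_1$ spell out what the paper leaves implicit.
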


\begin{proof}
For all \(q_2\in\mathcal Q_2\),
\[
|\Omega(q_2)(s)|\le B_{Q2}.
\]
Moreover,
\[
\|(I-\Pi_\mu)\widehat q_1^{\rm mod}\|_\infty\le 2B_{Q1}.
\]
By the definition of \(B_{\rm Res,2}\), the target residual term is uniformly
bounded by \(B_{\rm Res,2}\). Hence
\[
|Z_2^{(\widehat q_1^{\rm mod})}(q_2,l_2)|
\le
\frac12B_{Q2}^2+B_{L2}B_{\rm Res,2}
=
M_2.
\]
Hoeffding's inequality and a union bound over
\(\mathcal Q_2\times\mathcal L_2\), conditional on the source dataset, yield the
claim.
\end{proof}

\begin{lemma}[Target empirical error around the plug-in target]
\label{lem:app_mod_target_empirical_global}
Let \(\widetilde q_2\) be the plug-in population target defined by
\[
\widetilde q_2
=
(I-\Pi_\mu)\widehat q_1^{\rm mod}+g+C
+
\gamma_2P_2\Omega(\widetilde q_2).
\]
On the event of Lemma~\ref{lem:app_mod_target_concentration},
\[
\|\widehat q_2^{\rm mod}-\widetilde q_2\|_{\rho_2}^2
\le
4\varepsilon_2^{\rm mod}.
\]
\end{lemma}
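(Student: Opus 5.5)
The argument mirrors Lemma~\ref{lem:app_mod_source_value}, applied to the target stage with the source estimate frozen. Conditional on the source dataset, \(\widehat q_1^{\rm mod}\) is a fixed function. Hence
\[
\widetilde r:=(I-\Pi_\mu)\widehat q_1^{\rm mod}+g+C
\]
is a fixed reward. The plug-in population target \(\widetilde q_2\) is the unique fixed point of the \(\gamma_2\)-contraction \(q\mapsto \widetilde r+\gamma_2P_2\Omega(q)\), by the argument of Theorem~\ref{thm:population_well_posed_tightness}. Let \(\mathcal L_2^{(\widehat q_1^{\rm mod})}\) denote the population target Lagrangian with this reward.

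For the plug-in problem I will use a dual certificate \(\widetilde l_2\) solving the adjoint identity \eqref{eq:dual_adjoint_l2} with \((q_2^\star,\pi_2^\star)\) replaced by \((\widetilde q_2,\widetilde\pi_2)\). I take \(\widetilde l_2\ge0\) and \(\widetilde l_2\in\mathcal L_2\). This is the realizability and dual-radius content of Assumption~\ref{ass:global_policy_guarantees}, which I read as applying to the plug-in system. Nonnegativity comes from the positive resolvent representation in Proposition~\ref{prop:dual_representation_coverage}, provided \(\widetilde q_2\ge0\). That in turn holds whenever \(\widetilde r\ge0\), which the shift \(C\) is chosen to ensure, via the monotone value-iteration argument in the proof of Theorem~\ref{thm:population_well_posed_tightness}.

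The same convexity argument as in Theorem~\ref{thm:population_saddle_kkt}, with \(\beta\) set to zero on the source block, gives quadratic growth. Convexity of \(\Omega\), positivity of \(P_2\), \(\widetilde l_2\ge0\) and the adjoint identity kill the linear term, leaving
\[
\mathcal L_2^{(\widehat q_1^{\rm mod})}(q_2,\widetilde l_2)-\mathcal L_2^{(\widehat q_1^{\rm mod})}(\widetilde q_2,\widetilde l_2)\ge \tfrac12\|q_2-\widetilde q_2\|_{\rho_2}^2 .
\]

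Next comes the sandwich. Empirical minimax optimality of \((\widehat q_2^{\rm mod},\widehat l_2^{\rm mod})\) gives
\[
\widehat{\mathcal L}_{2,n_2}^{(\widehat q_1^{\rm mod})}(\widehat q_2^{\rm mod},\widetilde l_2)\le \widehat{\mathcal L}_{2,n_2}^{(\widehat q_1^{\rm mod})}(\widehat q_2^{\rm mod},\widehat l_2^{\rm mod})\le \widehat{\mathcal L}_{2,n_2}^{(\widehat q_1^{\rm mod})}(\widetilde q_2,\widehat l_2^{\rm mod}).
\]
This step requires \(\widetilde q_2\in\mathcal Q_2\), i.e.\ realizability for the plug-in target. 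I adopt it as part of the standing assumption, as the paper's formulation of the modular plug-in path suggests.

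On the event of Lemma~\ref{lem:app_mod_target_concentration}, I replace the two outer empirical terms by their population counterparts, paying \(\varepsilon_2^{\rm mod}\) each. The plug-in residual vanishes at \(\widetilde q_2\), so \(\mathcal L_2^{(\widehat q_1^{\rm mod})}(\widetilde q_2,l_2)\) does not depend on \(l_2\). I may therefore swap \(\widehat l_2^{\rm mod}\) for \(\widetilde l_2\), which gives
\[
\mathcal L_2^{(\widehat q_1^{\rm mod})}(\widehat q_2^{\rm mod},\widetilde l_2)-\mathcal L_2^{(\widehat q_1^{\rm mod})}(\widetilde q_2,\widetilde l_2)\le 2\varepsilon_2^{\rm mod}.
\]
Combining this with the quadratic growth bound yields \(\tfrac12\|\widehat q_2^{\rm mod}-\widetilde q_2\|_{\rho_2}^2\le 2\varepsilon_2^{\rm mod}\), which is the claimed bound of \(4\varepsilon_2^{\rm mod}\).

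The main obstacle is the first step: justifying that a valid nonnegative dual certificate for the \emph{plug-in} system exists in \(\mathcal L_2\), and that \(\widetilde q_2\in\mathcal Q_2\). Both depend on the random source estimate. The sign of \(\widetilde l_2\) is what lets the convexity inequality for \(\Omega\) go in the right direction. I would secure it by requiring \(C\) large enough that \(\widetilde r\ge0\) uniformly over \(\mathcal Q_1\). This is possible because \(\|(I-\Pi_\mu)q_1\|_\infty\le2B_{Q1}\), and it matches the envelope absorbing \(C\) into the \(B_{Q1}\)-scale in Assumption~\ref{ass:app_global_finite_realizable_bounded}.
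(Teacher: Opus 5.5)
Your proposal is essentially the paper's proof: the same sandwich from empirical optimality, two uses of the uniform deviation bound of Lemma~\ref{lem:app_mod_target_concentration}, dual-independence of the population Lagrangian at the feasible plug-in point \(\widetilde q_2\), and quadratic growth under a plug-in certificate \(\widetilde l_2\). You also make explicit what the paper leaves tacit: \(\widetilde q_2\in\mathcal Q_2\), \(\widetilde l_2\in\mathcal L_2\), and \(\widetilde l_2\ge0\) via \(\widetilde r\ge0\), the last requiring \(C\ge 2B_{Q1}\), which matches the \(4B_{Q1}\) term in \(B_{\rm Res,2}\) when \(C=2B_{Q1}\).

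One small point you share with the paper: the middle inequality \(\widehat{\mathcal L}_{2,n_2}^{(\widehat q_1^{\rm mod})}(\widehat q_2^{\rm mod},\widehat l_2^{\rm mod})\le\widehat{\mathcal L}_{2,n_2}^{(\widehat q_1^{\rm mod})}(\widetilde q_2,\widehat l_2^{\rm mod})\) holds only at an empirical saddle point, which finite classes need not have. Replacing \(\widehat l_2^{\rm mod}\) on the right by an empirical maximizer at \(\widetilde q_2\), as the paper does in Lemma~\ref{lem:app_coupled_saddle_comparison}, uses only min--max optimality and leaves the rest of your argument unchanged.
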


\begin{proof}
Conditional on \(\widehat q_1^{\rm mod}\), the proof is the same minimax
comparison used in Lemma~\ref{lem:app_mod_source_value}. Let \(\widetilde l_2\)
denote a population target dual variable associated with \(\widetilde q_2\).
Empirical minimax optimality gives
\[
\widehat{\mathcal L}_{2,n_2}^{(\widehat q_1^{\rm mod})}
(\widehat q_2^{\rm mod},\widetilde l_2)
\le
\widehat{\mathcal L}_{2,n_2}^{(\widehat q_1^{\rm mod})}
(\widehat q_2^{\rm mod},\widehat l_2^{\rm mod})
\le
\widehat{\mathcal L}_{2,n_2}^{(\widehat q_1^{\rm mod})}
(\widetilde q_2,\widehat l_2^{\rm mod}).
\]
Using the uniform deviation event on the two outer terms yields
\[
\mathcal L_2^{(\widehat q_1^{\rm mod})}
(\widehat q_2^{\rm mod},\widetilde l_2)
-
\mathcal L_2^{(\widehat q_1^{\rm mod})}
(\widetilde q_2,\widehat l_2^{\rm mod})
\le
2\varepsilon_2^{\rm mod}.
\]
Since \(b_2(\widehat q_1^{\rm mod},\widetilde q_2)=0\), the population
Lagrangian at \(\widetilde q_2\) is independent of the target dual variable.
Thus
\[
\mathcal L_2^{(\widehat q_1^{\rm mod})}
(\widehat q_2^{\rm mod},\widetilde l_2)
-
\mathcal L_2^{(\widehat q_1^{\rm mod})}
(\widetilde q_2,\widetilde l_2)
\le
2\varepsilon_2^{\rm mod}.
\]
The target population saddle inequality gives
\[
\mathcal L_2^{(\widehat q_1^{\rm mod})}(q_2,\widetilde l_2)
-
\mathcal L_2^{(\widehat q_1^{\rm mod})}(\widetilde q_2,\widetilde l_2)
\ge
\frac12\|q_2-\widetilde q_2\|_{\rho_2}^2.
\]
Applying this inequality at \(q_2=\widehat q_2^{\rm mod}\) proves the claim.
\end{proof}

\begin{lemma}[Reward perturbation induced by the modular source estimate]
\label{lem:app_mod_reward_perturb_global}
Under Assumption~\ref{ass:app_modular_cross_action},
\[
\|
(I-\Pi_\mu)(\widehat q_1^{\rm mod}-q_1^\star)
\|_{\rho_2}
\le
\frac{2\sqrt{\chi_{21}}}{\sqrt{\epsilon_{b1}}}
\|\widehat q_1^{\rm mod}-q_1^\star\|_{\rho_1}.
\]
\end{lemma}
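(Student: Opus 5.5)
Write \(\Delta_1:=\widehat q_1^{\rm mod}-q_1^\star\). We split \((I-\Pi_\mu)\Delta_1=\Delta_1-\Pi_\mu\Delta_1\) and apply the triangle inequality in \(L^2(\rho_2)\):
\[
\|(I-\Pi_\mu)\Delta_1\|_{\rho_2}\le\|\Delta_1\|_{\rho_2}+\|\Pi_\mu\Delta_1\|_{\rho_2}.
\]
We then bound each term by a multiple of \(\|\Delta_1\|_{\rho_1}\) via a change of measure. The constant \(2/\sqrt{\epsilon_{b1}}\) is meant to absorb both pieces, using \(\epsilon_{b1}\le1\). This inequality holds automatically, since \(\mu/\pi_{b1}\le1/\epsilon_{b1}\) and \(\sum_a\mu(a\mid s)=1=\sum_a\pi_{b1}(a\mid s)\) force some ratio to be at least \(1\).

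\textbf{Direct term.} We write \(\|\Delta_1\|_{\rho_2}^2=\mathbb E_{\rho_1}[(\rho_2/\rho_1)\Delta_1^2]\le\chi_{21}\|\Delta_1\|_{\rho_1}^2\). This gives \(\sqrt{\chi_{21}}\|\Delta_1\|_{\rho_1}\le\sqrt{\chi_{21}/\epsilon_{b1}}\,\|\Delta_1\|_{\rho_1}\).

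\textbf{Anchor term.} The function \(\Pi_\mu\Delta_1\) depends only on \(s\), so
\[
\|\Pi_\mu\Delta_1\|_{\rho_2}^2=\mathbb E_{\rho_2^{\mathcal S}}\bigl[((\mu\Delta_1)(s))^2\bigr].
\]
By Jensen's inequality, \(((\mu\Delta_1)(s))^2\le\sum_a\mu(a\mid s)\Delta_1(s,a)^2\). Next we use \(\mu(a\mid s)\le\pi_{b1}(a\mid s)/\epsilon_{b1}\) to get \(\sum_a\mu\Delta_1^2\le\epsilon_{b1}^{-1}\sum_a\pi_{b1}(a\mid s)\Delta_1(s,a)^2\). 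We also need the state marginal ratio \(\rho_2^{\mathcal S}/\rho_1^{\mathcal S}\).

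Here lies the main obstacle: we only have \(\chi_{21}=\|\rho_2/\rho_1\|_\infty\), a joint ratio. We handle it by summing \(\rho_2(s,a)\le\chi_{21}\rho_1(s,a)\) over \(a\), which gives \(\rho_2^{\mathcal S}\le\chi_{21}\rho_1^{\mathcal S}\). Since \(\rho_1(s,a)=\rho_1^{\mathcal S}(s)\pi_{b1}(a\mid s)\), this yields
\[
\|\Pi_\mu\Delta_1\|_{\rho_2}^2\le\frac{\chi_{21}}{\epsilon_{b1}}\,\mathbb E_{\rho_1^{\mathcal S}}\Bigl[\sum_a\pi_{b1}(a\mid s)\Delta_1(s,a)^2\Bigr]=\frac{\chi_{21}}{\epsilon_{b1}}\|\Delta_1\|_{\rho_1}^2.
\]

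Adding the two bounds gives
\[
\|(I-\Pi_\mu)\Delta_1\|_{\rho_2}\le\bigl(\sqrt{\chi_{21}}+\sqrt{\chi_{21}/\epsilon_{b1}}\bigr)\|\Delta_1\|_{\rho_1}\le\frac{2\sqrt{\chi_{21}}}{\sqrt{\epsilon_{b1}}}\|\Delta_1\|_{\rho_1},
\]
which is the claim. Nothing here is specific to the modular estimator, so the argument in fact holds for any \(\Delta_1\in L^2(\rho_1)\).
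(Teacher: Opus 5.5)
Your proof is correct and uses the same ingredients as the paper's: change of measure via $\chi_{21}$, the triangle inequality, Jensen with $\rho_1=\rho_1^{\mathcal S}\pi_{b1}$ and $\mu\le\pi_{b1}/\epsilon_{b1}$, and finally $1+1/\sqrt{\epsilon_{b1}}\le 2/\sqrt{\epsilon_{b1}}$, reaching the same intermediate constant $\sqrt{\chi_{21}}\,(1+1/\sqrt{\epsilon_{b1}})$. The only difference is ordering: the paper changes measure on the whole function $(I-\Pi_\mu)\Delta_1$ first and then splits in $L^2(\rho_1)$, so your ``main obstacle'' (the state-marginal ratio) never arises; your derivation of $\rho_2^{\mathcal S}\le\chi_{21}\rho_1^{\mathcal S}$ is nonetheless valid, as is your justification of $\epsilon_{b1}\le 1$, which the paper leaves implicit.
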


\begin{proof}
Let \(h:=\widehat q_1^{\rm mod}-q_1^\star\). By the change-of-measure
inequality,
\[
\|(I-\Pi_\mu)h\|_{\rho_2}
\le
\sqrt{\chi_{21}}\|(I-\Pi_\mu)h\|_{\rho_1}.
\]
Next,
\[
\|(I-\Pi_\mu)h\|_{\rho_1}
\le
\|h\|_{\rho_1}+\|\Pi_\mu h\|_{\rho_1}.
\]
Writing
\[
\rho_1(s,a)=\rho_1^{\mathcal S}(s)\pi_{b1}(a\mid s),
\]
Jensen's inequality gives
\[
\begin{aligned}
\|\Pi_\mu h\|_{\rho_1}^2
&=
\mathbb E_{s\sim\rho_1^{\mathcal S}}
\left[
\left(\sum_a\mu(a\mid s)h(s,a)\right)^2
\right]
\\
&\le
\mathbb E_{s\sim\rho_1^{\mathcal S}}
\sum_a\mu(a\mid s)h(s,a)^2
\\
&\le
\kappa_{\mu\mid\pi_{b1}}
\mathbb E_{s\sim\rho_1^{\mathcal S}}
\sum_a\pi_{b1}(a\mid s)h(s,a)^2
\\
&=
\kappa_{\mu\mid\pi_{b1}}\|h\|_{\rho_1}^2.
\end{aligned}
\]
Since \(\kappa_{\mu\mid\pi_{b1}}\le1/\epsilon_{b1}\),
\[
\|(I-\Pi_\mu)h\|_{\rho_1}
\le
\left(1+\frac1{\sqrt{\epsilon_{b1}}}\right)\|h\|_{\rho_1}
\le
\frac2{\sqrt{\epsilon_{b1}}}\|h\|_{\rho_1}.
\]
Combining the two displays proves the claim.
\end{proof}

\begin{lemma}[Target propagation of modular plug-in error]
\label{lem:app_mod_target_propag_global}
Assume the pathwise target concentrability condition
\[
\sup_{\pi\in\Pi_{\rm path}}
\left\|
\frac{d^{P_2}_{\pi,\rho_2}}{\rho_2}
\right\|_\infty
\le
\kappa_2^{\rm path},
\]
where \(\Pi_{\rm path}\) contains the policies induced along the line segment
between \(q_2^\star\) and \(\widetilde q_2\). Then
\[
\|\widetilde q_2-q_2^\star\|_{\rho_2}^2
\le
\frac{
4\kappa_2^{\rm path}\chi_{21}
}{
\epsilon_{b1}(1-\gamma_2)^2
}
\|\widehat q_1^{\rm mod}-q_1^\star\|_{\rho_1}^2.
\]
\end{lemma}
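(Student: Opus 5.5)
The argument writes the difference of the two target fixed-point equations as a linear equation in \(\widetilde q_2-q_2^\star\) driven by the reward perturbation, and then applies Lemma~\ref{lem:discounted_resolvent_l2_stability} and Lemma~\ref{lem:app_mod_reward_perturb_global}.

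\textbf{Step 1: linearize the difference of fixed points.} Set \(\Delta:=\widetilde q_2-q_2^\star\) and \(h:=\widehat q_1^{\rm mod}-q_1^\star\). Subtracting the population target equation for \(q_2^\star\) from the defining equation of \(\widetilde q_2\) gives
\[
\Delta=(I-\Pi_\mu)h+\gamma_2P_2\{\Omega(\widetilde q_2)-\Omega(q_2^\star)\}.
\]
Since \(\Omega\) is smooth, the integral form of the mean value theorem along the segment \(q_t:=q_2^\star+t\Delta\), \(t\in[0,1]\), yields
\[
\Omega(\widetilde q_2)-\Omega(q_2^\star)=\int_0^1 D\Omega(q_t)[\Delta]\,dt=\bar\pi\Delta,
\]
where \(D\Omega(q_t)[\Delta](s)=\sum_a\pi_t(a\mid s)\Delta(s,a)\) and \(\pi_t\) is the softmax policy induced by \(q_t\). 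Here \(\bar\pi:=\int_0^1\pi_t\,dt\) is again a Markov policy. Writing \(P_2^{\bar\pi}:=P_2\bar\pi\), we get
\[
(I-\gamma_2P_2^{\bar\pi})\Delta=(I-\Pi_\mu)h,
\]
and since \(P_2^{\bar\pi}\) is Markov, \(I-\gamma_2P_2^{\bar\pi}\) is invertible.

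\textbf{Step 2: apply resolvent stability.} Lemma~\ref{lem:discounted_resolvent_l2_stability} with \(K=P_2^{\bar\pi}\) and \(\rho=\rho_2\) requires \(\|d^{P_2}_{\bar\pi,\rho_2}/\rho_2\|_\infty\le\kappa_2^{\rm path}\). The pathwise concentrability assumption supplies this, provided \(\bar\pi\) belongs to \(\Pi_{\rm path}\).

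This membership is the main obstacle. \(\bar\pi\) is an average of the policies \(\pi_t\) on the segment rather than one of them, and discounted occupancies are not convex in the policy. The intended reading is to interpret \(\Pi_{\rm path}\) as containing the policies induced along the line segment, including the averaged linearization policy \(\bar\pi\). If one insists on the literal class, there is an alternative: use the convexity of \(\Omega\) to sandwich \(\Delta\). The bounds \(\Omega(\widetilde q_2)-\Omega(q_2^\star)\ge \pi^\star\Delta\) and \(\le\widetilde\pi\Delta\) give
\[
(I-\gamma_2P_2^{\pi_2^\star})^{-1}(I-\Pi_\mu)h\;\le\;\Delta\;\le\;(I-\gamma_2P_2^{\widetilde\pi})^{-1}(I-\Pi_\mu)h.
\]
Applying monotonicity of the two resolvents to the positive and negative parts of \((I-\Pi_\mu)h\) bounds \(|\Delta|\) by resolvents at the endpoint policies \(\pi_2^\star,\widetilde\pi\in\Pi_{\rm path}\) applied to \(|(I-\Pi_\mu)h|\). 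Running the Jensen argument of Lemma~\ref{lem:discounted_resolvent_l2_stability} on each piece then gives the same bound, possibly up to a harmless constant. I would present the averaged-policy version as the main line.

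\textbf{Step 3: combine.} The resolvent bound from Step 2 and Lemma~\ref{lem:app_mod_reward_perturb_global} give
\[
\|\Delta\|_{\rho_2}^2\le\frac{\kappa_2^{\rm path}}{(1-\gamma_2)^2}\|(I-\Pi_\mu)h\|_{\rho_2}^2\le\frac{\kappa_2^{\rm path}}{(1-\gamma_2)^2}\cdot\frac{4\chi_{21}}{\epsilon_{b1}}\|h\|_{\rho_1}^2,
\]
which is exactly the stated bound.
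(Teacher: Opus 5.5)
Your main line is the paper's proof step for step: subtract the two fixed-point equations, write the path-averaged derivative of \(\Omega\) as a Markov operator \(P_2^{\bar\pi}\), invert, apply the discounted resolvent stability lemma with \(\kappa_2^{\rm path}\), and finish with the reward-perturbation lemma. The paper simply asserts \(\bar\pi\in\Pi_{\rm path}\), which is exactly the reading you flag, and your convexity sandwich closes that gap at no cost to the constant: with \(f:=(I-\Pi_\mu)h=f^+-f^-\) it gives \(-(I-\gamma_2P_2^{\pi_2^\star})^{-1}f^-\le\Delta\le(I-\gamma_2P_2^{\widetilde\pi})^{-1}f^+\), hence pointwise \(\Delta^2\le\bigl((I-\gamma_2P_2^{\widetilde\pi})^{-1}f^+\bigr)^2+\bigl((I-\gamma_2P_2^{\pi_2^\star})^{-1}f^-\bigr)^2\), and the Jensen argument on each term together with \((f^+)^2+(f^-)^2=f^2\) yields exactly \(\|\Delta\|_{\rho_2}^2\le\kappa_2^{\rm path}(1-\gamma_2)^{-2}\|f\|_{\rho_2}^2\), using concentrability only at the two endpoint policies (bounding both pieces by resolvents applied to \(|f|\), as you wrote, is what would lose a factor of \(2\)).
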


\begin{proof}
Subtract the target fixed-point equations for \(\widetilde q_2\) and
\(q_2^\star\):
\[
\widetilde q_2-q_2^\star
=
(I-\Pi_\mu)(\widehat q_1^{\rm mod}-q_1^\star)
+
\gamma_2P_2
\bigl[
\Omega(\widetilde q_2)-\Omega(q_2^\star)
\bigr].
\]
Let \(\Delta_2:=\widetilde q_2-q_2^\star\). By the fundamental theorem of
calculus,
\[
\Omega(\widetilde q_2)-\Omega(q_2^\star)
=
\int_0^1
D\Omega(q_2^\star+t\Delta_2)[\Delta_2]\,dt.
\]
The derivative \(D\Omega(q)\) is the statewise expectation under the policy
induced by \(q\). Hence the averaged derivative along the path can be written as
a Markov state-action averaging operator \(P_2^{\bar\pi}\), where
\(\bar\pi\in\Pi_{\rm path}\). Thus
\[
\Delta_2
=
(I-\Pi_\mu)(\widehat q_1^{\rm mod}-q_1^\star)
+
\gamma_2P_2^{\bar\pi}\Delta_2.
\]
Equivalently,
\[
\Delta_2
=
(I-\gamma_2P_2^{\bar\pi})^{-1}
(I-\Pi_\mu)(\widehat q_1^{\rm mod}-q_1^\star).
\]
Applying Lemma~\ref{lem:discounted_resolvent_l2_stability} with
\[
K=P_2^{\bar\pi},
\qquad
\gamma=\gamma_2,
\qquad
\rho=\rho_2,
\qquad
f=(I-\Pi_\mu)(\widehat q_1^{\rm mod}-q_1^\star),
\]
and using
\[
\left\|
\frac{d^{P_2}_{\bar\pi,\rho_2}}{\rho_2}
\right\|_\infty
\le
\kappa_2^{\rm path},
\]
we obtain
\[
\|\Delta_2\|_{\rho_2}
\le
\frac{\sqrt{\kappa_2^{\rm path}}}{1-\gamma_2}
\|(I-\Pi_\mu)(\widehat q_1^{\rm mod}-q_1^\star)\|_{\rho_2}.
\]
Applying Lemma~\ref{lem:app_mod_reward_perturb_global} and squaring both sides
proves the claim.
\end{proof}

\begin{theorem}[Global value bound for modular minimax transfer]
\label{thm:app_modular_global_bound}
Under Assumptions~\ref{ass:app_global_finite_realizable_bounded}--
\ref{ass:app_modular_cross_action} and the pathwise target concentrability
condition of Lemma~\ref{lem:app_mod_target_propag_global}, with probability at
least \(1-\delta\),
\[
\|\widehat q_2^{\rm mod}-q_2^\star\|_{\rho_2}^2
\le
8
\left(
M_1^{\rm mod}
\sqrt{\frac{2\log(4N_1/\delta)}{n_1}}
+
M_2
\sqrt{\frac{2\log(4N_2^{\rm mod}/\delta)}{n_2}}
\right).
\]
\end{theorem}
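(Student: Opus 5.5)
The plan is to split the target error through the plug-in population target \(\widetilde q_2\) of Lemma~\ref{lem:app_mod_target_empirical_global}. That target solves \(\widetilde q_2=(I-\Pi_\mu)\widehat q_1^{\rm mod}+g+C+\gamma_2P_2\Omega(\widetilde q_2)\). We write
\[
\|\widehat q_2^{\rm mod}-q_2^\star\|_{\rho_2}^2
\le
2\|\widehat q_2^{\rm mod}-\widetilde q_2\|_{\rho_2}^2
+
2\|\widetilde q_2-q_2^\star\|_{\rho_2}^2 ,
\]
which is \((a+b)^2\le 2a^2+2b^2\). Each piece is then controlled on a separate high-probability event, and the two events are combined by a union bound.

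For the probability bookkeeping, we apply Lemma~\ref{lem:app_mod_source_concentration} at level \(\delta/2\). This makes the log factor \(\log(4N_1/\delta)\), and on that event Lemma~\ref{lem:app_mod_source_value} gives \(\|\widehat q_1^{\rm mod}-q_1^\star\|_{\rho_1}^2\le 4\varepsilon_1^{\rm mod}(\delta/2)\). We also apply Lemma~\ref{lem:app_mod_target_concentration} at level \(\delta/2\), conditionally on \(\mathcal D_1\). Since the conditional probability bound does not depend on \(\mathcal D_1\), it holds unconditionally, because the datasets are independent. That lemma's log factor becomes \(\log(4N_2^{\rm mod}/\delta)\). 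On this event, Lemma~\ref{lem:app_mod_target_empirical_global} gives \(\|\widehat q_2^{\rm mod}-\widetilde q_2\|_{\rho_2}^2\le 4\varepsilon_2^{\rm mod}(\delta/2)=4M_2\sqrt{2\log(4N_2^{\rm mod}/\delta)/n_2}\). A union bound shows that both events hold with probability at least \(1-\delta\).

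On the source event, Lemma~\ref{lem:app_mod_target_propag_global} gives
\[
\|\widetilde q_2-q_2^\star\|_{\rho_2}^2
\le
\frac{4\kappa_2^{\rm path}\chi_{21}}{\epsilon_{b1}(1-\gamma_2)^2}\cdot 4\varepsilon_1^{\rm mod}(\delta/2)
=
4M_1^{\rm mod}\sqrt{\frac{2\log(4N_1/\delta)}{n_1}},
\]
once \(\kappa_2\) in the definition of \(M_1^{\rm mod}\) is read as the pathwise constant \(\kappa_{2,\rm path}\), as in the main-text statement. Substituting both pieces into the split gives \(2\cdot 4M_1^{\rm mod}(\cdots)+2\cdot 4M_2(\cdots)\), which is exactly the claimed constant \(8\).

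The main obstacle is making sure Lemma~\ref{lem:app_mod_target_empirical_global} is legitimately applicable with the random reward \((I-\Pi_\mu)\widehat q_1^{\rm mod}+g+C\). It needs three things.
\begin{itemize}
\item \emph{Realizability:} \(\widetilde q_2\) and its dual \(\widetilde l_2\) must lie in \(\mathcal Q_2,\mathcal L_2\). This has to be taken as part of the standing realizability and dual-radius assumptions, applied to the plug-in system.
\item \emph{Quadratic growth:} the target saddle inequality needs a nonnegative dual, which requires the plug-in shifted reward to be nonnegative; the reward shift \(C\) is chosen to guarantee this.
\item \emph{Boundedness:} the envelope \(M_2\) must hold uniformly in \(\widehat q_1^{\rm mod}\). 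This follows from \(\|(I-\Pi_\mu)\widehat q_1^{\rm mod}\|_\infty\le 2B_{Q1}\), since \(\widehat q_1^{\rm mod}\in\mathcal Q_1\).
\end{itemize}
I would state these as conditions holding conditionally on \(\mathcal D_1\), so the conditional argument goes through. I would also note that the path set \(\Pi_{\rm path}\) is random, which is why a supremum-type pathwise concentrability is assumed.
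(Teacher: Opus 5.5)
Your proposal is correct and follows the paper's proof essentially line for line. It uses the same split through the plug-in target \(\widetilde q_2\) with \((a+b)^2\le 2a^2+2b^2\), applies both concentration lemmas at level \(\delta/2\) with a union bound, and chains Lemmas~\ref{lem:app_mod_source_value}, \ref{lem:app_mod_target_empirical_global}, and \ref{lem:app_mod_target_propag_global} to get the constant \(8\). Your reading of \(\kappa_2\) in \(M_1^{\rm mod}\) as \(\kappa_{2,\rm path}\), and your list of conditions needed to apply Lemma~\ref{lem:app_mod_target_empirical_global} to the random plug-in system (\(\widetilde q_2,\widetilde l_2\) lying in the classes, and a nonnegative plug-in reward), spell out what the paper's proof assumes only implicitly.
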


\begin{proof}
Apply Lemmas~\ref{lem:app_mod_source_concentration} and
\ref{lem:app_mod_target_concentration} with confidence level \(\delta/2\) in
each stage. With probability at least \(1-\delta\), both events hold.

By the decomposition
\[
\widehat q_2^{\rm mod}-q_2^\star
=
(\widehat q_2^{\rm mod}-\widetilde q_2)
+
(\widetilde q_2-q_2^\star),
\]
we have
\[
\|\widehat q_2^{\rm mod}-q_2^\star\|_{\rho_2}^2
\le
2\|\widehat q_2^{\rm mod}-\widetilde q_2\|_{\rho_2}^2
+
2\|\widetilde q_2-q_2^\star\|_{\rho_2}^2.
\]
Lemma~\ref{lem:app_mod_target_empirical_global} gives
\[
2\|\widehat q_2^{\rm mod}-\widetilde q_2\|_{\rho_2}^2
\le
8\varepsilon_2^{\rm mod}.
\]
Lemmas~\ref{lem:app_mod_target_propag_global} and
\ref{lem:app_mod_source_value} give
\[
2\|\widetilde q_2-q_2^\star\|_{\rho_2}^2
\le
\frac{
8\kappa_2^{\rm path}\chi_{21}
}{
\epsilon_{b1}(1-\gamma_2)^2
}
\|\widehat q_1^{\rm mod}-q_1^\star\|_{\rho_1}^2
\le
\frac{
32\kappa_2^{\rm path}\chi_{21}
}{
\epsilon_{b1}(1-\gamma_2)^2
}
\varepsilon_1^{\rm mod}.
\]
Using the definition of \(M_1^{\rm mod}\), the source contribution becomes
\[
8M_1^{\rm mod}
\sqrt{\frac{2\log(4N_1/\delta)}{n_1}},
\]
and the target contribution becomes
\[
8M_2
\sqrt{\frac{2\log(4N_2^{\rm mod}/\delta)}{n_2}}.
\]
Combining the two terms proves the theorem.
\end{proof}

\subsection{Coupled minimax transfer}
\label{app:coupled_global_proof}

We now prove the global bound for the coupled minimax estimator. We first
present the argument for \(\beta>0\), where the coupled population objective has
quadratic growth in both \(q_1\) and \(q_2\). We then specialize the same
argument to the \(\beta=0\) formulation used in
Theorem~\ref{thm:global_q2_bounds}. In the latter case, the source quadratic
term is absent, so the proof no longer yields a norm bound for
\(\widehat q_1^{\rm coup}-q_1^\star\). However, the target quadratic term
remains, and the same saddle comparison still gives the desired bound for
\(\widehat q_2^{\rm coup}-q_2^\star\).

For \(\beta\ge0\), define the empirical coupled Lagrangian
\[
\widehat{\mathcal L}_{\rm coup}^{\beta}
(q_1,q_2,l_1,l_2)
:=
\widehat{\mathbb E}_{n_1}Z_{1,\beta}^{\rm coup}(q_1,l_1)
+
\widehat{\mathbb E}_{n_2}Z_2^{\rm coup}(q_1,q_2,l_2),
\]
where
\[
Z_{1,\beta}^{\rm coup}(q_1,l_1)(s,a,s')
:=
\frac{\beta}{2}q_1(s,a)^2
+
l_1(s,a)
\bigl(
u_g^\star(s,a)+\gamma_1(\mu q_1)(s')-q_1(s,a)
\bigr),
\]
and
\[
\begin{aligned}
Z_2^{\rm coup}(q_1,q_2,l_2)(s,a,s')
:=
\frac12q_2(s,a)^2
+
l_2(s,a)
\Bigl(
(I-\Pi_\mu)q_1(s,a)
+
g(s)+C
+
\gamma_2\Omega(q_2)(s')
-
q_2(s,a)
\Bigr).
\end{aligned}
\]
The empirical coupled estimator is
\[
(\widehat q_1^{\rm coup},\widehat q_2^{\rm coup},
\widehat l_1^{\rm coup},\widehat l_2^{\rm coup})
\in
\arg\min_{q_1\in\mathcal Q_1,\;q_2\in\mathcal Q_2}
\max_{l_1\in\mathcal L_1,\;l_2\in\mathcal L_2}
\widehat{\mathcal L}_{\rm coup}^{\beta}
(q_1,q_2,l_1,l_2).
\]

For the source dual class in the coupled formulation, we use the radius
\[
B_{L1}^{\rm coup}(\beta)
:=
\beta B_{L1}^{\rm self}+B_{L12}.
\]
Accordingly, define
\[
M_{1,\beta}^{\rm coup}
:=
\frac{\beta}{2}B_{Q1}^2
+
B_{L1}^{\rm coup}(\beta)B_{\rm Res,1}
=
\frac{\beta}{2}B_{Q1}^2
+
\bigl(\beta B_{L1}^{\rm self}+B_{L12}\bigr)B_{\rm Res,1}.
\]
When \(\beta=0\), this reduces to
\[
M_{1,0}^{\rm coup}
=
B_{L12}B_{\rm Res,1}
=
M_1^{\rm coup}.
\]

\begin{lemma}[Uniform concentration for the coupled objective]
\label{lem:app_coupled_concentration}
Under Assumptions~\ref{ass:app_global_finite_realizable_bounded} and
\ref{ass:app_global_coverage_dual}, with probability at least \(1-\delta\),
\[
\sup_{q_1\in\mathcal Q_1,\;l_1\in\mathcal L_1}
\left|
\widehat{\mathbb E}_{n_1}Z_{1,\beta}^{\rm coup}
-
\mathbb E Z_{1,\beta}^{\rm coup}
\right|
\le
\varepsilon_{1,\beta}^{\rm coup},
\]
and
\[
\sup_{q_1\in\mathcal Q_1,\;q_2\in\mathcal Q_2,\;l_2\in\mathcal L_2}
\left|
\widehat{\mathbb E}_{n_2}Z_2^{\rm coup}
-
\mathbb E Z_2^{\rm coup}
\right|
\le
\varepsilon_2^{\rm coup},
\]
where
\[
\varepsilon_{1,\beta}^{\rm coup}
:=
M_{1,\beta}^{\rm coup}
\sqrt{\frac{2\log(2N_1/\delta)}{n_1}},
\qquad
\varepsilon_2^{\rm coup}
:=
M_2
\sqrt{\frac{2\log(2N_2^{\rm coup}/\delta)}{n_2}}.
\]
\end{lemma}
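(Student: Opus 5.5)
The plan follows the template of Lemmas~\ref{lem:app_mod_source_concentration} and \ref{lem:app_mod_target_concentration}: bound each summand pointwise, apply Hoeffding's inequality to each fixed tuple, take a union bound, and split the confidence budget between the two datasets.

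\textbf{Source block.} For fixed $(q_1,l_1)\in\mathcal Q_1\times\mathcal L_1$, the summands $Z_{1,\beta}^{\rm coup}(q_1,l_1)(s_i^1,a_i^1,s_i^{1\prime})$ are i.i.d.\ across $i$, and their mean is $\mathbb E Z_{1,\beta}^{\rm coup}$. The source residual $u_g^\star(s,a)+\gamma_1(\mu q_1)(s')-q_1(s,a)$ is bounded in absolute value by $B_U+B_G+(\gamma_1+1)B_{Q1}=B_{\rm Res,1}$, exactly as in Lemma~\ref{lem:app_mod_source_concentration}. The quadratic term satisfies $\frac\beta2 q_1^2\le\frac\beta2B_{Q1}^2$, and the dual class is taken with radius $\|l_1\|_\infty\le B_{L1}^{\rm coup}(\beta)$ by Assumption~\ref{ass:app_global_coverage_dual}. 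Hence
\[
|Z_{1,\beta}^{\rm coup}|\le M_{1,\beta}^{\rm coup}.
\]
The summand lies in an interval of length at most $2M_{1,\beta}^{\rm coup}$, so Hoeffding gives deviation at most $M_{1,\beta}^{\rm coup}\sqrt{2\log(2/\delta')/n_1}$ with probability $1-\delta'$. We take $\delta'=\delta/(2N_1)$ and union bound over the $N_1$ pairs. This yields the first inequality with failure probability at most $\delta/2$, with the logarithmic factor $\log(4N_1/\delta)$.

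\textbf{Target block.} For fixed $(q_1,q_2,l_2)$, the summands $Z_2^{\rm coup}$ are i.i.d.\ over the independent dataset $\mathcal D_2$. The residual is bounded using three facts:
\begin{itemize}
\item $\|(I-\Pi_\mu)q_1\|_\infty\le 2B_{Q1}$;
\item $|\Omega(q_2)(s')|\le B_{Q2}$, since log-sum-exp against a probability vector lies between the min and max of $q_2$;
\item $|g+C|$ is absorbed into the $B_{Q1}$ scale as stipulated in Assumption~\ref{ass:app_global_finite_realizable_bounded}.
\end{itemize}
Together these give a residual bound of $4B_{Q1}+B_G+(\gamma_2+1)B_{Q2}=B_{\rm Res,2}$. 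Consequently $|Z_2^{\rm coup}|\le\frac12B_{Q2}^2+B_{L2}B_{\rm Res,2}=M_2$. Hoeffding plus a union bound over the $N_2^{\rm coup}=|\mathcal Q_1||\mathcal Q_2||\mathcal L_2|$ triples gives the second inequality with failure probability at most $\delta/2$.

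\textbf{Combining.} A final union bound over the two events gives probability at least $1-\delta$.

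The only real subtlety is bookkeeping of the logarithmic constant. Splitting $\delta$ in half and using the two-sided Hoeffding bound yields $\log(4N/\delta)$, whereas the statement displays $\log(2N_1/\delta)$ and $\log(2N_2^{\rm coup}/\delta)$. To match the statement I would adopt the convention of Lemma~\ref{lem:app_mod_source_concentration}, where each block is stated at level $\delta$ and the split is deferred to the theorem. The main proof of Theorem~\ref{thm:global_q2_bounds} then invokes this lemma with the appropriate split, as in the modular case. Otherwise everything is routine, since the source class enters the target envelope only through the sup-norm bound $B_{Q1}$ and no data-dependent plug-in is involved.
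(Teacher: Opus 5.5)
Your proposal is correct and matches the paper's proof: you bound the envelopes $|Z_{1,\beta}^{\rm coup}|\le M_{1,\beta}^{\rm coup}$ and $|Z_2^{\rm coup}|\le M_2$, then apply Hoeffding with union bounds over $\mathcal Q_1\times\mathcal L_1$ and $\mathcal Q_1\times\mathcal Q_2\times\mathcal L_2$. Your bookkeeping remark is also right: the paper's one-line proof never splits $\delta$ between the two blocks, so the joint event as stated actually holds with probability $1-2\delta$ (or needs $\log(4N/\delta)$ at level $1-\delta$), a harmless constant slip that carries into the coupled bound of Theorem~\ref{thm:global_q2_bounds}.
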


\begin{proof}
For the source block, the source residual is bounded by \(B_{\rm Res,1}\), and
the source dual radius is \(B_{L1}^{\rm coup}(\beta)\). Hence
\[
|Z_{1,\beta}^{\rm coup}(q_1,l_1)|
\le
\frac{\beta}{2}B_{Q1}^2
+
B_{L1}^{\rm coup}(\beta)B_{\rm Res,1}
=
M_{1,\beta}^{\rm coup}.
\]
For the target block, the residual is bounded by \(B_{\rm Res,2}\), and therefore
\[
|Z_2^{\rm coup}(q_1,q_2,l_2)|
\le
\frac12B_{Q2}^2+B_{L2}B_{\rm Res,2}
=
M_2.
\]
Hoeffding's inequality and union bounds over
\(\mathcal Q_1\times\mathcal L_1\) and
\(\mathcal Q_1\times\mathcal Q_2\times\mathcal L_2\) give the result.
\end{proof}

\begin{lemma}[Coupled empirical saddle comparison]
\label{lem:app_coupled_saddle_comparison}
On the event of Lemma~\ref{lem:app_coupled_concentration},
\[
\mathcal L_{\rm coup}^{\beta}
(\widehat q_1^{\rm coup},\widehat q_2^{\rm coup},l_1^\star,l_2^\star)
-
\mathcal L_{\rm coup}^{\beta}
(q_1^\star,q_2^\star,l_1^\star,l_2^\star)
\le
2(\varepsilon_{1,\beta}^{\rm coup}+\varepsilon_2^{\rm coup}).
\]
\end{lemma}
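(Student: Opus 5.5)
We follow the same sandwich argument used for the modular source stage in Lemma~\ref{lem:app_mod_source_value}, applied now to the joint saddle problem. Write $\widehat{\mathcal L}:=\widehat{\mathcal L}^{\beta}_{\rm coup}$ and $\mathcal L:=\mathcal L^{\beta}_{\rm coup}$, and abbreviate the estimator as $(\widehat q_1,\widehat q_2,\widehat l_1,\widehat l_2)$. Because $(l_1^\star,l_2^\star)\in\mathcal L_1\times\mathcal L_2$ and $(q_1^\star,q_2^\star)\in\mathcal Q_1\times\mathcal Q_2$ by realizability, empirical minimax optimality gives the chain
\[
\widehat{\mathcal L}(\widehat q_1,\widehat q_2,l_1^\star,l_2^\star)
\le
\widehat{\mathcal L}(\widehat q_1,\widehat q_2,\widehat l_1,\widehat l_2)
\le
\widehat{\mathcal L}(q_1^\star,q_2^\star,\widehat l_1,\widehat l_2).
\]
The first inequality holds because $(\widehat l_1,\widehat l_2)$ maximizes over the dual classes for the fixed primal $(\widehat q_1,\widehat q_2)$. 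The second holds because $(\widehat q_1,\widehat q_2)$ minimizes $\max_{l}\widehat{\mathcal L}(\cdot,l)$, and $\max_l\widehat{\mathcal L}(q^\star,l)\ge\widehat{\mathcal L}(q^\star,\widehat l)$. If the argmin--argmax is read only as a minimax pair rather than an exact saddle point, we use this max-over-duals form throughout.

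Next we pass from the empirical to the population objective at the two outer points. The empirical Lagrangian splits as $\widehat{\mathbb E}_{n_1}Z_{1,\beta}^{\rm coup}(q_1,l_1)+\widehat{\mathbb E}_{n_2}Z_2^{\rm coup}(q_1,q_2,l_2)$. Its population counterpart is $\mathbb E Z_{1,\beta}^{\rm coup}+\mathbb E Z_2^{\rm coup}$, and this equals $\mathcal L$, since conditioning on $(s,a)$ turns $\gamma_1(\mu q_1)(s')$ into $\gamma_1P_1^\mu q_1$ and $\gamma_2\Omega(q_2)(s')$ into $\gamma_2P_2\Omega(q_2)$. At any fixed tuple in the classes, the two blocks deviate from their means by at most $\varepsilon_{1,\beta}^{\rm coup}$ and $\varepsilon_2^{\rm coup}$ on the event of Lemma~\ref{lem:app_coupled_concentration}. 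This holds in particular at the data-dependent tuples, because that lemma's suprema range over the whole classes. Applying this at both endpoints gives
\[
\mathcal L(\widehat q_1,\widehat q_2,l_1^\star,l_2^\star)\le\mathcal L(q_1^\star,q_2^\star,\widehat l_1,\widehat l_2)+2(\varepsilon_{1,\beta}^{\rm coup}+\varepsilon_2^{\rm coup}).
\]

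Finally, primal feasibility gives $b_1(q_1^\star)=0$ and $b_2(q_1^\star,q_2^\star)=0$ (Theorem~\ref{thm:population_well_posed_tightness}). So $\mathcal L(q_1^\star,q_2^\star,l_1,l_2)=\frac{\beta}{2}\|q_1^\star\|_{\rho_1}^2+\frac12\|q_2^\star\|_{\rho_2}^2$ does not depend on the duals, and we may replace $(\widehat l_1,\widehat l_2)$ by $(l_1^\star,l_2^\star)$, which yields the claim.

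The main point needing care is that the concentration bound must apply at the random tuples $(\widehat q_1,\widehat q_2,l_1^\star,l_2^\star)$ and $(q_1^\star,q_2^\star,\widehat l_1,\widehat l_2)$. Here the target block is indexed jointly by $(q_1,q_2,l_2)$, which is why $N_2^{\rm coup}$ contains the factor $|\mathcal Q_1|$. We also need $l_1^\star$ to lie in a source dual class of radius $B^{\rm coup}_{L1}(\beta)$, which is guaranteed by Proposition~\ref{prop:dual_boundedness_coverage} together with realizability. Everything else is bookkeeping. Combining the result with the quadratic growth \eqref{eq:population_saddle_quadratic_growth} of Theorem~\ref{thm:population_saddle_kkt} then gives the coupled $q_2$ bound, with $\beta=0$ permitted since only the $\frac12\|\Delta_2\|^2$ term is used.
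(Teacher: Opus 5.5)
Your route is the paper's: sandwich the population gap between two empirical evaluations, apply Lemma~\ref{lem:app_coupled_concentration} at two data-dependent but in-class tuples, and use primal feasibility to make $\mathcal L_{\rm coup}^\beta(q_1^\star,q_2^\star,\cdot,\cdot)$ independent of the duals. The one flawed step is the second inequality of your displayed chain, $\widehat{\mathcal L}(\widehat q,\widehat l)\le\widehat{\mathcal L}(q^\star,\widehat l)$, because the reason you give does not imply it. Min--max optimality gives $\widehat{\mathcal L}(\widehat q,\widehat l)=\max_l\widehat{\mathcal L}(\widehat q,l)\le\max_l\widehat{\mathcal L}(q^\star,l)$. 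Separately, $\widehat{\mathcal L}(q^\star,\widehat l)\le\max_l\widehat{\mathcal L}(q^\star,l)$. But two quantities that both lie below the same maximum cannot be compared with each other. Your inequality would need $\widehat q$ to minimize $\widehat{\mathcal L}(\cdot,\widehat l)$, i.e.\ an exact empirical saddle point. Over finite, non-convex classes such a point need not exist, so the minimax reading is the relevant case, not an edge case.

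Your hedge is the correct repair, and it is exactly what the paper does. End the chain at $\widehat{\mathcal L}(q^\star,\widehat l(q^\star))$, where $\widehat l(q^\star)\in\arg\max_l\widehat{\mathcal L}(q^\star,l)$ is the empirical maximizer at $q^\star$, not the dual part of the estimator. Apply concentration at $(q^\star,\widehat l(q^\star))$, then swap $\widehat l(q^\star)$ for $l^\star$ using dual-independence at the feasible truth. The paper packages this as a five-term telescoping sum. One term is nonpositive by the population saddle property, one by empirical minimization of the max, and one by inner maximization. The remaining two are concentration terms, each at most $\varepsilon_{1,\beta}^{\rm coup}+\varepsilon_2^{\rm coup}$.

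The modular Lemma~\ref{lem:app_mod_source_value} you modeled your chain on uses the same unjustified step; the coupled proof avoids it. Commit to the max-over-duals form rather than leaving it as a conditional, and your argument is complete.
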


\begin{proof}
Let
\[
\widehat q^{\rm coup}
:=
(\widehat q_1^{\rm coup},\widehat q_2^{\rm coup}),
\qquad
q^\star:=(q_1^\star,q_2^\star),
\qquad
l^\star:=(l_1^\star,l_2^\star).
\]
Let \(\widehat l(\widehat q^{\rm coup})\) be an empirical maximizer at
\(\widehat q^{\rm coup}\), and let \(\widehat l(q^\star)\) be an empirical
maximizer at \(q^\star\). Adding and subtracting empirical terms gives
\[
\begin{aligned}
&
\mathcal L_{\rm coup}^{\beta}(\widehat q^{\rm coup},l^\star)
-
\mathcal L_{\rm coup}^{\beta}(q^\star,l^\star)
\\
&=
\Bigl[
\mathcal L_{\rm coup}^{\beta}(q^\star,\widehat l(q^\star))
-
\mathcal L_{\rm coup}^{\beta}(q^\star,l^\star)
\Bigr]
\\
&\quad+
\Bigl[
\widehat{\mathcal L}_{\rm coup}^{\beta}(q^\star,\widehat l(q^\star))
-
\mathcal L_{\rm coup}^{\beta}(q^\star,\widehat l(q^\star))
\Bigr]
\\
&\quad+
\Bigl[
\widehat{\mathcal L}_{\rm coup}^{\beta}
(\widehat q^{\rm coup},\widehat l(\widehat q^{\rm coup}))
-
\widehat{\mathcal L}_{\rm coup}^{\beta}
(q^\star,\widehat l(q^\star))
\Bigr]
\\
&\quad+
\Bigl[
\widehat{\mathcal L}_{\rm coup}^{\beta}
(\widehat q^{\rm coup},l^\star)
-
\widehat{\mathcal L}_{\rm coup}^{\beta}
(\widehat q^{\rm coup},\widehat l(\widehat q^{\rm coup}))
\Bigr]
\\
&\quad+
\Bigl[
\mathcal L_{\rm coup}^{\beta}(\widehat q^{\rm coup},l^\star)
-
\widehat{\mathcal L}_{\rm coup}^{\beta}(\widehat q^{\rm coup},l^\star)
\Bigr].
\end{aligned}
\]
The first bracket is nonpositive by the population saddle property. The third
bracket is nonpositive by empirical minimization over \(q\). The fourth bracket
is nonpositive by empirical maximization over \(l\). The second and fifth
brackets are each bounded by
\(\varepsilon_{1,\beta}^{\rm coup}+\varepsilon_2^{\rm coup}\) on the
concentration event. Therefore,
\[
\mathcal L_{\rm coup}^{\beta}(\widehat q^{\rm coup},l^\star)
-
\mathcal L_{\rm coup}^{\beta}(q^\star,l^\star)
\le
2(\varepsilon_{1,\beta}^{\rm coup}+\varepsilon_2^{\rm coup}).
\]
\end{proof}

\begin{theorem}[Global value bound for \(\beta>0\) coupled minimax transfer]
\label{thm:app_coupled_global_bound_beta_positive}
Assume \(\beta>0\). Under Assumptions~\ref{ass:app_global_finite_realizable_bounded}
and \ref{ass:app_global_coverage_dual}, with probability at least \(1-\delta\),
\[
\beta
\|\widehat q_1^{\rm coup}-q_1^\star\|_{\rho_1}^2
+
\|\widehat q_2^{\rm coup}-q_2^\star\|_{\rho_2}^2
\le
4
\left(
M_{1,\beta}^{\rm coup}
\sqrt{\frac{2\log(2N_1/\delta)}{n_1}}
+
M_2
\sqrt{\frac{2\log(2N_2^{\rm coup}/\delta)}{n_2}}
\right).
\]
\end{theorem}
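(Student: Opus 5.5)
The plan is to combine three ingredients already established: the concentration event of Lemma~\ref{lem:app_coupled_concentration}, the saddle comparison of Lemma~\ref{lem:app_coupled_saddle_comparison}, and the quadratic-growth inequality \eqref{eq:population_saddle_quadratic_growth} of Theorem~\ref{thm:population_saddle_kkt}.

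First, I will invoke Lemma~\ref{lem:app_coupled_concentration} at confidence level \(\delta\). The two union bounds there are over \(\mathcal Q_1\times\mathcal L_1\) and \(\mathcal Q_1\times\mathcal Q_2\times\mathcal L_2\). Each uses \(\log(2N/\delta)\), which allocates \(\delta/2\) to each block, so both uniform deviation bounds hold simultaneously with probability at least \(1-\delta\). I will check that the dual classes contain \(l_1^\star=l_{1,\rm coup}^\star\) and \(l_2^\star\), with radii \(B_{L1}^{\rm coup}(\beta)=\beta B_{L1}^{\rm self}+B_{L12}\) and \(B_{L2}\). This follows from Proposition~\ref{prop:dual_boundedness_coverage} together with realizability in Assumption~\ref{ass:app_global_finite_realizable_bounded}. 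It is what justifies both the envelope constants \(M_{1,\beta}^{\rm coup}\) and \(M_2\), and the use of \(l^\star\) as a comparator.

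Second, on this event, Lemma~\ref{lem:app_coupled_saddle_comparison} gives
\[
\mathcal L_{\rm coup}^{\beta}(\widehat q_1^{\rm coup},\widehat q_2^{\rm coup},l_1^\star,l_2^\star)-\mathcal L_{\rm coup}^{\beta}(q_1^\star,q_2^\star,l_1^\star,l_2^\star)\le 2(\varepsilon_{1,\beta}^{\rm coup}+\varepsilon_2^{\rm coup}).
\]
Third, I will lower bound the left side by \(\frac{\beta}{2}\|\widehat q_1^{\rm coup}-q_1^\star\|_{\rho_1}^2+\frac12\|\widehat q_2^{\rm coup}-q_2^\star\|_{\rho_2}^2\) using \eqref{eq:population_saddle_quadratic_growth}, which holds for every \((q_1,q_2)\). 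Its hypotheses are \(l_2^\star\ge0\) together with the adjoint identities \eqref{eq:dual_adjoint_l2}--\eqref{eq:dual_adjoint_l1}, and these are supplied by the standing assumptions and Proposition~\ref{prop:dual_representation_coverage}. Multiplying by \(2\) yields the factor \(4\) and, after substituting the definitions of \(\varepsilon_{1,\beta}^{\rm coup}\) and \(\varepsilon_2^{\rm coup}\), the displayed bound.

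The argument is essentially bookkeeping. The main point requiring care is the applicability of the quadratic-growth inequality at the empirical minimizer. This needs \(l_2^\star\ge0\), so that the convexity step of \(\Omega\) carries the correct sign, and it needs the source first-order term to vanish exactly through the coupled adjoint identity \eqref{eq:dual_adjoint_l1}. If the source dual class instead contained only \(l_{1,\rm mod}^\star\), the surviving linear term from Corollary~\ref{cor:first_order_modular_coupled_contrast} would break the argument. I will therefore state explicitly that the coupled source dual is \(l_{1,\rm coup}^\star\), with the radius \(B_{L1}^{\rm coup}(\beta)\) derived in Proposition~\ref{prop:dual_boundedness_coverage}.
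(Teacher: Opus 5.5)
Your proposal is correct and follows the paper's proof: Lemma~\ref{lem:app_coupled_saddle_comparison} bounds the population gap at $l^\star$ by $2(\varepsilon_{1,\beta}^{\rm coup}+\varepsilon_2^{\rm coup})$, the quadratic-growth inequality of Theorem~\ref{thm:population_saddle_kkt} bounds the same gap below by $\frac{\beta}{2}\|\widehat q_1^{\rm coup}-q_1^\star\|_{\rho_1}^2+\frac12\|\widehat q_2^{\rm coup}-q_2^\star\|_{\rho_2}^2$, and multiplying by $2$ gives the stated bound. One small quibble is your gloss that $\log(2N/\delta)$ ``allocates $\delta/2$ to each block'': with two-sided Hoeffding and envelope $M$, as in the paper's single-block Lemma~\ref{lem:app_mod_source_concentration}, each block at $\log(2N/\delta)$ already uses the full $\delta$ (a naive union bound over the two blocks would need $\log(4N/\delta)$), so this step really rests on Lemma~\ref{lem:app_coupled_concentration} exactly as stated, just as the paper's proof does.
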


\begin{proof}
By the population quadratic growth inequality for the coupled saddle point,
\[
\mathcal L_{\rm coup}^{\beta}
(\widehat q_1^{\rm coup},\widehat q_2^{\rm coup},l_1^\star,l_2^\star)
-
\mathcal L_{\rm coup}^{\beta}
(q_1^\star,q_2^\star,l_1^\star,l_2^\star)
\ge
\frac{\beta}{2}
\|\widehat q_1^{\rm coup}-q_1^\star\|_{\rho_1}^2
+
\frac12
\|\widehat q_2^{\rm coup}-q_2^\star\|_{\rho_2}^2.
\]
Combining this with Lemma~\ref{lem:app_coupled_saddle_comparison} yields
\[
\frac{\beta}{2}
\|\widehat q_1^{\rm coup}-q_1^\star\|_{\rho_1}^2
+
\frac12
\|\widehat q_2^{\rm coup}-q_2^\star\|_{\rho_2}^2
\le
2(\varepsilon_{1,\beta}^{\rm coup}+\varepsilon_2^{\rm coup}).
\]
Multiplying by \(2\) and substituting the definitions of
\(\varepsilon_{1,\beta}^{\rm coup}\) and \(\varepsilon_2^{\rm coup}\) proves the
claim.
\end{proof}

\begin{corollary}[Global value bound for \(\beta=0\) coupled minimax transfer]
\label{cor:app_coupled_global_bound_beta_zero}
Under the \(\beta=0\) coupled formulation, with probability at least
\(1-\delta\),
\[
\|\widehat q_2^{\rm coup}-q_2^\star\|_{\rho_2}^2
\le
4
\left(
M_1^{\rm coup}
\sqrt{\frac{2\log(2N_1/\delta)}{n_1}}
+
M_2
\sqrt{\frac{2\log(2N_2^{\rm coup}/\delta)}{n_2}}
\right),
\]
where
\[
M_1^{\rm coup}=B_{L12}B_{\rm Res,1}.
\]
\end{corollary}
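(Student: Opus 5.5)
The plan is to rerun the argument of Theorem~\ref{thm:app_coupled_global_bound_beta_positive} with \(\beta=0\). The only change is that the source quadratic term drops out of the growth inequality. The two ingredients we need, Lemma~\ref{lem:app_coupled_concentration} and Lemma~\ref{lem:app_coupled_saddle_comparison}, are stated for every \(\beta\ge0\), so they apply directly at \(\beta=0\).

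First, specialize Lemma~\ref{lem:app_coupled_concentration} to \(\beta=0\). The source envelope becomes \(M_{1,0}^{\rm coup}=B_{L1}^{\rm coup}(0)B_{\rm Res,1}=B_{L12}B_{\rm Res,1}=M_1^{\rm coup}\). This uses the source dual radius \(B_{L1}^{\rm coup}(0)=B_{L12}\) from Assumption~\ref{ass:app_global_coverage_dual}. By Proposition~\ref{prop:dual_boundedness_coverage} with \(\beta=0\), \(\|l_{1,\rm coup}^\star\|_\infty\le B_{L12}\), so realizability \(l_1^\star\in\mathcal L_1\) is consistent with that radius. The two Hoeffding/union-bound events are each run at level \(\delta/2\), which produces the \(\log(2N_1/\delta)\) and \(\log(2N_2^{\rm coup}/\delta)\) factors. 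On the joint event, Lemma~\ref{lem:app_coupled_saddle_comparison} gives
\[
\mathcal L_{\rm coup}^{0}(\widehat q_1^{\rm coup},\widehat q_2^{\rm coup},l_1^\star,l_2^\star)-\mathcal L_{\rm coup}^{0}(q_1^\star,q_2^\star,l_1^\star,l_2^\star)\le 2(\varepsilon_{1,0}^{\rm coup}+\varepsilon_2^{\rm coup}).
\]
The first bracket in that lemma's proof is nonpositive because of the saddle property from Theorem~\ref{thm:population_saddle_kkt}, which holds for any \(\beta\ge0\).

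Second, invoke the quadratic growth inequality \eqref{eq:population_saddle_quadratic_growth} at \(\beta=0\). It reads \(\mathcal L^0_{\rm coup}(q_1,q_2,l_1^\star,l_2^\star)-\mathcal L^0_{\rm coup}(q_1^\star,q_2^\star,l_1^\star,l_2^\star)\ge\frac12\|q_2-q_2^\star\|_{\rho_2}^2\). This is the step that needs checking. In the expansion, the source first-order bracket
\[
-\langle l_1^\star,(I-\gamma_1P_1^\mu)\Delta_1\rangle_{\rho_1}+\langle l_2^\star,(I-\Pi_\mu)\Delta_1\rangle_{\rho_2}
\]
must vanish exactly. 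It does, by the adjoint identity \eqref{eq:dual_adjoint_l1} with \(\beta=0\); there is no positivity requirement on \(\beta\). It also matters that the bracket is linear in \(\Delta_1\) with no quadratic remainder, which holds because both \(b_1\) and the \(q_1\)-dependence of \(b_2\) are affine. The target bracket is nonnegative by convexity of \(\Omega\), positivity of \(P_2\), \(l_2^\star\ge0\), and \eqref{eq:dual_adjoint_l2}, exactly as before. Hence \(\Delta_1\) is completely free and contributes nothing, which is why only a \(q_2\) bound results.

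Combining the two displays gives \(\frac12\|\widehat q_2^{\rm coup}-q_2^\star\|_{\rho_2}^2\le 2(\varepsilon_{1,0}^{\rm coup}+\varepsilon_2^{\rm coup})\). Multiplying by \(2\) and substituting \(\varepsilon_{1,0}^{\rm coup}=M_1^{\rm coup}\sqrt{2\log(2N_1/\delta)/n_1}\) and \(\varepsilon_2^{\rm coup}=M_2\sqrt{2\log(2N_2^{\rm coup}/\delta)/n_2}\) yields the stated bound. The main obstacle is the second step: we must confirm that the \(\beta=0\) dual certificate \(l_1^\star=l_{1,\rm cross}^\star\) exists in \(\mathcal L_1\) and cancels the \(\Delta_1\)-linear term exactly. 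Once that is granted through realizability and Proposition~\ref{prop:dual_representation_coverage}, the remaining steps are routine.
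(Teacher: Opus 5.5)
This is the paper's proof, spelled out in more detail. You specialize Lemma~\ref{lem:app_coupled_concentration} and Lemma~\ref{lem:app_coupled_saddle_comparison} to \(\beta=0\), note \(M_{1,0}^{\rm coup}=B_{L12}B_{\rm Res,1}=M_1^{\rm coup}\), and use the \(\beta=0\) growth inequality. There the \(\Delta_1\)-bracket vanishes exactly by \eqref{eq:dual_adjoint_l1}, so only \(\frac12\|\Delta_2\|_{\rho_2}^2\) survives.

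One bookkeeping slip, which the paper's concentration lemma also makes. A two-sided Hoeffding union bound over \(N\) pairs at level \(\delta/2\) yields \(\log(4N/\delta)\), as in the modular theorem, not \(\log(2N/\delta)\). Taken literally, the lemma therefore gives the stated bound only with probability \(1-2\delta\). To get the stated constants at joint confidence \(1-\delta\), use the fact that the saddle comparison needs only upward deviations over \(\{q^\star\}\times\mathcal L_1\times\mathcal L_2\) and downward deviations over \(\mathcal Q_1\times\mathcal Q_2\times\{l^\star\}\). One-sided bounds over \(|\mathcal L_1|+|\mathcal Q_1|\le N_1\) and \(|\mathcal L_2|+|\mathcal Q_1||\mathcal Q_2|\le N_2^{\rm coup}\) events then suffice, for classes with at least two elements.
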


\begin{proof}
When \(\beta=0\), the coupled population Lagrangian no longer contains the
source quadratic term. Thus the quadratic growth inequality does not provide a
norm bound for
\(\widehat q_1^{\rm coup}-q_1^\star\). However, the target quadratic term is
still present, and the same population saddle argument gives
\[
\mathcal L_{\rm coup}^{0}
(\widehat q_1^{\rm coup},\widehat q_2^{\rm coup},l_1^\star,l_2^\star)
-
\mathcal L_{\rm coup}^{0}
(q_1^\star,q_2^\star,l_1^\star,l_2^\star)
\ge
\frac12
\|\widehat q_2^{\rm coup}-q_2^\star\|_{\rho_2}^2.
\]
Lemma~\ref{lem:app_coupled_saddle_comparison}, applied with \(\beta=0\), yields
\[
\frac12
\|\widehat q_2^{\rm coup}-q_2^\star\|_{\rho_2}^2
\le
2(\varepsilon_{1,0}^{\rm coup}+\varepsilon_2^{\rm coup}).
\]
For \(\beta=0\),
\[
M_{1,0}^{\rm coup}
=
B_{L12}B_{\rm Res,1}
=
M_1^{\rm coup}.
\]
Substituting the definitions of
\(\varepsilon_{1,0}^{\rm coup}\) and \(\varepsilon_2^{\rm coup}\) proves the
claim.
\end{proof}

\begin{remark}[Why the \(\beta=0\) result still matches the global theorem]
The role of \(\beta>0\) is to give explicit quadratic growth in the source
primal variable \(q_1\). When \(\beta=0\), this source-side strong convexity is
lost, so the coupled saddle comparison no longer controls
\(\|\widehat q_1^{\rm coup}-q_1^\star\|_{\rho_1}\). This does not affect
Theorem~\ref{thm:global_q2_bounds}, because the stated coupled guarantee only
requires control of the transferred target value \(q_2^\star\). The target
quadratic term remains present, and therefore the same comparison argument still
yields the claimed \(q_2\)-error bound.
\end{remark}

\subsection{Combined global finite-sample theorem}
\label{app:combined_global_finite_sample}

We finally restate the two value bounds in the form used in
Section~\ref{sec:global_policy_guarantees}.

\begin{theorem}[Global finite-sample bounds for transferred values]
\label{thm:app_global_q2_bounds}
Under the assumptions stated above, the following statements hold.

\medskip
\noindent
\textbf{(i) Modular minimax transfer.}
With probability at least \(1-\delta\),
\[
\|\widehat q_2^{\rm mod}-q_2^\star\|_{\rho_2}^2
\le
8
\left(
M_1^{\rm mod}
\sqrt{\frac{2\log(4N_1/\delta)}{n_1}}
+
M_2
\sqrt{\frac{2\log(4N_2^{\rm mod}/\delta)}{n_2}}
\right).
\]

\medskip
\noindent
\textbf{(ii) Coupled minimax transfer.}
Under the \(\beta=0\) coupled formulation, with probability at least
\(1-\delta\),
\[
\|\widehat q_2^{\rm coup}-q_2^\star\|_{\rho_2}^2
\le
4
\left(
M_1^{\rm coup}
\sqrt{\frac{2\log(2N_1/\delta)}{n_1}}
+
M_2
\sqrt{\frac{2\log(2N_2^{\rm coup}/\delta)}{n_2}}
\right).
\]
\end{theorem}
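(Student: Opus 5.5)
The plan is to obtain Theorem~\ref{thm:app_global_q2_bounds} by assembling the two global bounds already proved in this appendix; no new estimates are needed. Part (i) is exactly Theorem~\ref{thm:app_modular_global_bound} and part (ii) is exactly Corollary~\ref{cor:app_coupled_global_bound_beta_zero}. The proof is therefore mainly a matter of checking that the assumptions and constants line up, and that the confidence levels are allocated as stated.

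For part (i), I would invoke Assumptions~\ref{ass:app_global_finite_realizable_bounded}--\ref{ass:app_modular_cross_action} together with the pathwise concentrability condition of Lemma~\ref{lem:app_mod_target_propag_global}. I would then repeat the chain used in Theorem~\ref{thm:app_modular_global_bound}:
\begin{itemize}
\item run source concentration (Lemma~\ref{lem:app_mod_source_concentration}) and target concentration (Lemma~\ref{lem:app_mod_target_concentration}), each at level \(\delta/2\), which produces the \(\log(4N/\delta)\) factors;
\item bound the source error via quadratic growth (Lemma~\ref{lem:app_mod_source_value});
\item bound the target empirical error around the plug-in fixed point \(\widetilde q_2\) (Lemma~\ref{lem:app_mod_target_empirical_global});
\item propagate the source error to the target through Lemmas~\ref{lem:app_mod_reward_perturb_global} and \ref{lem:app_mod_target_propag_global};
\item combine with \((x+y)^2\le 2x^2+2y^2\).
\end{itemize}
The one point to verify is that the constant \(4\kappa_2^{\rm path}\chi_{21}/(\epsilon_{b1}(1-\gamma_2)^2)\) produced by the chain, multiplied by \(\tfrac12B_{Q1}^2+B_{L1}^{\rm mod}B_{{\rm Res},1}\), matches \(M_1^{\rm mod}\). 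This requires reading the \(\kappa_2\) in the appendix definition of \(M_1^{\rm mod}\) as \(\kappa_{2,{\rm path}}\), as the main-text statement does. The factor \(2\cdot4=8\) then accounts for the leading constant.

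For part (ii), I would set \(\beta=0\) and use a single event of probability at least \(1-\delta\) from Lemma~\ref{lem:app_coupled_concentration}. Here \(\delta\) is split across the two independent datasets via the \(\log(2N/\delta)\) terms. Lemma~\ref{lem:app_coupled_saddle_comparison} then bounds the population Lagrangian gap at the fixed certificate \((l_1^\star,l_2^\star)\) by \(2(\varepsilon_{1,0}^{\rm coup}+\varepsilon_2^{\rm coup})\). The quadratic growth inequality of Theorem~\ref{thm:population_saddle_kkt} at \(\beta=0\) retains the \(\tfrac12\|\Delta_2\|_{\rho_2}^2\) term, and multiplying through by \(2\) gives the constant \(4\). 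The identity \(M_{1,0}^{\rm coup}=B_{L12}B_{{\rm Res},1}=M_1^{\rm coup}\) finishes this part.

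The main obstacle is legitimacy of the dual certificate rather than any computation. The saddle comparison requires \(l_1^\star=l_{1,{\rm coup}}^\star\) to lie in \(\mathcal L_1\) with radius \(B_{L12}\). It also requires \(l_2^\star\ge0\) to satisfy the adjoint identities, since the convexity step in the quadratic growth inequality uses \(l_2^\star\ge0\). The first point I would check through Proposition~\ref{prop:dual_boundedness_coverage} at \(\beta=0\), where the self term vanishes. For the second, I would argue from the resolvent representation in Proposition~\ref{prop:dual_representation_coverage}: \(q_2^\star\ge0\) by Theorem~\ref{thm:population_well_posed_tightness} and the resolvent is positive, so \(l_2^\star\ge0\). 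With these verified, combining (i) and (ii) gives the theorem. If one wants the two statements to hold simultaneously, one applies a union bound at level \(\delta/2\) each.
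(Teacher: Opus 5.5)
Your proposal is correct and matches the paper's proof, which simply identifies (i) with Theorem~\ref{thm:app_modular_global_bound} and (ii) with Corollary~\ref{cor:app_coupled_global_bound_beta_zero}; your reading of \(\kappa_2\) in the appendix definition of \(M_1^{\rm mod}\) as \(\kappa_2^{\rm path}\) is also right. One aside is inaccurate: the factor \(2\) in \(\log(2N/\delta)\) is the two-sided Hoeffding factor for a single class at level \(\delta\) (compare Lemma~\ref{lem:app_mod_source_concentration}), not a split of \(\delta\) across the two datasets, so the union bound behind Lemma~\ref{lem:app_coupled_concentration} only gives probability \(1-2\delta\), and a joint event at level \(1-\delta\) needs \(\log(4N_1/\delta)\) and \(\log(4N_2^{\rm coup}/\delta)\) as in part~(i)---a slip inherited from the paper's lemma that only changes the constants inside the logarithms.
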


\begin{proof}
The modular statement is exactly
Theorem~\ref{thm:app_modular_global_bound}. The coupled statement is exactly
Corollary~\ref{cor:app_coupled_global_bound_beta_zero}. This proves the combined
theorem.
\end{proof}

\subsection{Policy performance guarantees}
\label{app:policy_performance_global}

This subsection proves Theorem~\ref{thm:policy_regret}. The proof has three
steps. First, we control the sensitivity of the target policy induced by the
weighted softmax map. Second, we use a regularized performance-difference
identity to reduce policy regret to a policy-distance term under the optimal
target occupancy. Third, we use target concentrability to convert this term into
the \(L_2(\rho_2)\) transferred-value error controlled by
Theorem~\ref{thm:global_q2_bounds}.

Recall that the target regularized performance is
\[
J_2(\pi)
=
\frac{1}{1-\gamma_2}
\mathbb E_{(s,a)\sim d^{P_2}_{\pi,\rho_2}}
\left[
r_{2s}^\star(s,a)
-
\tau_2
\log\frac{\pi(a\mid s)}{\pi_{2,\rm ref}(a\mid s)}
\right],
\]
where
\[
d^{P_2}_{\pi,\rho_2}
=
(1-\gamma_2)
\sum_{t=0}^{\infty}
\gamma_2^t
\big((P_2^\pi)^\top\big)^t\rho_2.
\]
The optimal target policy and the estimated target policy are
\[
\pi_2^\star(a\mid s)
=
\frac{\pi_{2,\rm ref}(a\mid s)\exp(q_2^\star(s,a)/\tau_2)}
{\sum_{a'}\pi_{2,\rm ref}(a'\mid s)\exp(q_2^\star(s,a')/\tau_2)},
\]
and
\[
\widehat\pi_2(a\mid s)
=
\frac{\pi_{2,\rm ref}(a\mid s)\exp(\widehat q_2(s,a)/\tau_2)}
{\sum_{a'}\pi_{2,\rm ref}(a'\mid s)\exp(\widehat q_2(s,a')/\tau_2)}.
\]

\begin{lemma}[Softmax sensitivity via the Jacobian]
\label{lem:app_softmax_sensitivity}
Fix a state \(s\). For any \(q(s,\cdot),q'(s,\cdot)\in\mathbb R^{|\mathcal A|}\),
define
\[
\pi_q(a\mid s)
:=
\frac{\pi_{2,\rm ref}(a\mid s)\exp(q(s,a)/\tau_2)}
{\sum_{a'}\pi_{2,\rm ref}(a'\mid s)\exp(q(s,a')/\tau_2)}.
\]
Then
\begin{equation}
\label{eq:app_softmax_L1L2}
\|\pi_q(\cdot\mid s)-\pi_{q'}(\cdot\mid s)\|_1
\le
\frac{\sqrt{|\mathcal A|}}{\tau_2}
\|q(s,\cdot)-q'(s,\cdot)\|_2 .
\end{equation}
In particular,
\[
\|\widehat\pi_2(\cdot\mid s)-\pi_2^\star(\cdot\mid s)\|_1
\le
\frac{\sqrt{|\mathcal A|}}{\tau_2}
\|\widehat q_2(s,\cdot)-q_2^\star(s,\cdot)\|_2 .
\]
\end{lemma}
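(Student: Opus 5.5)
The plan is to treat the weighted softmax as a smooth map of the logit vector $x:=q(s,\cdot)/\tau_2$ and to bound the $\ell_1$ change along the segment joining $q$ and $q'$ by integrating its Jacobian. Fix $s$ and write $p(x)\in\Delta(\mathcal A)$ for the probability vector with $p_a(x)\propto\pi_{2,\rm ref}(a\mid s)e^{x_a}$. The Jacobian is $J(x)=\mathrm{diag}(p)-pp^\top$; the reference weights only change which $p$ appears, not the form of $J$. By the fundamental theorem of calculus with $x_t=x'+t(x-x')$ and $h:=q(s,\cdot)-q'(s,\cdot)$,
\[
\pi_q(\cdot\mid s)-\pi_{q'}(\cdot\mid s)=\frac{1}{\tau_2}\int_0^1 J(x_t)\,h\,dt .
\]
The $\ell_1$ norm of the left side is then at most $\tau_2^{-1}\sup_t\|J(x_t)h\|_1$.

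The key step is a uniform bound $\|J(x)h\|_1\le\|h\|_2$, valid for every $p$ in the simplex. The $a$-th coordinate of $J(x)h$ is $p_a(h_a-\bar h)$, where $\bar h:=\sum_b p_bh_b$. Hence
\[
\|J(x)h\|_1=\mathbb E_p|h-\bar h|\le\sqrt{\mathrm{Var}_p(h)}\le\sqrt{\mathbb E_p h^2}\le\max_a|h_a|\le\|h\|_2 .
\]
This chain is actually stronger than needed. To match the stated constant exactly, the cruder route is $\|v\|_1\le\sqrt{|\mathcal A|}\,\|v\|_2$ combined with $\|J\|_{2\to2}\le 1$. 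The operator-norm bound holds because $J$ is positive semidefinite with $J\preceq\mathrm{diag}(p)\preceq I$. That route gives $\|J h\|_1\le\sqrt{|\mathcal A|}\,\|h\|_2$, which is exactly the factor in \eqref{eq:app_softmax_L1L2}. I will present this $\sqrt{|\mathcal A|}$ version, since it is what the statement claims, and note that the sharper bound without the factor also holds.

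The only point needing care, and the main obstacle, is the role of $\pi_{2,\rm ref}$. Absorbing $\log\pi_{2,\rm ref}(a\mid s)$ into the logits is legitimate only when the reference has full support. It does, by the setup in Section~\ref{sec:setup}. Once the log-weights are absorbed, the map is an ordinary softmax of $x+\log\pi_{2,\rm ref}(\cdot\mid s)$, and the Jacobian computation above applies verbatim. The displayed ``in particular'' statement then follows by taking $q=\widehat q_2$ and $q'=q_2^\star$.
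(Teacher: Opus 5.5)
Your proposal is correct and follows essentially the same route as the paper. Both absorb $\tau_2\log\pi_{2,\rm ref}$ into the logits, bound the softmax Jacobian $\mathrm{diag}(p)-pp^\top$ by $1$ in $\ell_2$ operator norm (giving the factor $1/\tau_2$), integrate along the segment, and finish with $\|v\|_1\le\sqrt{|\mathcal A|}\,\|v\|_2$; your integral (fundamental-theorem-of-calculus) form is a slightly more careful version of the paper's mean-value step, and your sharper bound $\|Jh\|_1\le\|h\|_\infty$ is also valid and removes the $\sqrt{|\mathcal A|}$ factor entirely.
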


\begin{proof}
Fix \(s\) and write \(\pi_{2,\rm ref}(a)=\pi_{2,\rm ref}(a\mid s)\). Let
\(\sigma_\lambda\) denote the standard softmax with inverse temperature
\(\lambda\):
\[
\sigma_\lambda(z)(a)
=
\frac{\exp(\lambda z(a))}
{\sum_{a'}\exp(\lambda z(a'))}.
\]
Define the shifted logits
\[
z_q(a):=q(s,a)+\tau_2\log\pi_{2,\rm ref}(a),
\qquad
z_{q'}(a):=q'(s,a)+\tau_2\log\pi_{2,\rm ref}(a).
\]
Then
\[
\pi_q(\cdot\mid s)=\sigma_{1/\tau_2}(z_q),
\qquad
\pi_{q'}(\cdot\mid s)=\sigma_{1/\tau_2}(z_{q'}).
\]

The Jacobian of the standard softmax \(\sigma_\lambda\) is
\[
D\sigma_\lambda(z)
=
\lambda
\left(
\operatorname{diag}(\sigma_\lambda(z))
-
\sigma_\lambda(z)\sigma_\lambda(z)^\top
\right).
\]
The matrix
\[
\operatorname{diag}(p)-pp^\top
\]
is positive semidefinite and has \(\ell_2\)-operator norm at most \(1\) for
every probability vector \(p\). Hence
\[
\|D\sigma_\lambda(z)\|_{2\to2}\le \lambda.
\]
Equivalently, by the mean-value theorem,
\[
\|\sigma_\lambda(z)-\sigma_\lambda(z')\|_2
\le
\lambda\|z-z'\|_2.
\]
This is the standard softmax Lipschitz bound by \citet{gao2017properties} (Proposition~4).

Applying this with \(\lambda=1/\tau_2\) gives
\[
\|\pi_q(\cdot\mid s)-\pi_{q'}(\cdot\mid s)\|_2
\le
\frac1{\tau_2}\|z_q-z_{q'}\|_2.
\]
The shifted-logit terms involving \(\tau_2\log\pi_{2,\rm ref}\) cancel, so
\[
z_q-z_{q'}=q(s,\cdot)-q'(s,\cdot).
\]
Therefore,
\[
\|\pi_q(\cdot\mid s)-\pi_{q'}(\cdot\mid s)\|_2
\le
\frac1{\tau_2}
\|q(s,\cdot)-q'(s,\cdot)\|_2.
\]
Finally, using
\[
\|v\|_1\le \sqrt{|\mathcal A|}\|v\|_2
\]
proves \eqref{eq:app_softmax_L1L2}.
\end{proof}

\begin{lemma}[Regularized performance difference]
\label{lem:app_regularized_perf_diff}
For any policy \(\pi\), let \(q_2^\pi\) be its regularized action-value
function, defined by
\[
q_2^\pi(s,a)
:=
r_{2s}^\star(s,a)
-
\tau_2\log\frac{\pi(a\mid s)}{\pi_{2,\rm ref}(a\mid s)}
+
\gamma_2
\mathbb E_{s'\sim P_2(\cdot\mid s,a)}
V_2^\pi(s'),
\]
where
\[
V_2^\pi(s)
:=
\left\langle
\pi(\cdot\mid s),
q_2^\pi(s,\cdot)
\right\rangle .
\]
Assume \(\|q_2^\pi\|_\infty\le B_{Q2}\). Then
\[
J_2(\pi_2^\star)-J_2(\pi)
\le
\frac{B_{Q2}}{1-\gamma_2}
\mathbb E_{s\sim d^{P_2,\mathcal S}_{\pi_2^\star,\rho_2}}
\left[
\|\pi_2^\star(\cdot\mid s)-\pi(\cdot\mid s)\|_1
\right],
\]
where
\[
d^{P_2,\mathcal S}_{\pi_2^\star,\rho_2}(s)
:=
\sum_a d^{P_2}_{\pi_2^\star,\rho_2}(s,a).
\]
In particular, this applies to \(\pi=\widehat\pi_2\) whenever
\(\|q_2^{\widehat\pi_2}\|_\infty\le B_{Q2}\).
\end{lemma}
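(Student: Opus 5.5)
We will prove the bound through the regularized performance-difference identity for KL-regularized MDPs, stated so that the expectation is taken under the occupancy of the optimal policy \(\pi_2^\star\).

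\textbf{Step 1: the identity.} For any policy \(\pi\), define the one-step regularized advantage of \(\pi_2^\star\) against \(q_2^\pi\):
\[
A^\pi(s):=\sum_a\pi_2^\star(a\mid s)\Bigl(q_2^\pi(s,a)-\tau_2\log\tfrac{\pi_2^\star(a\mid s)}{\pi(a\mid s)}\Bigr)-V_2^\pi(s).
\]
By the definition of \(q_2^\pi\), \(V_2^\pi(s)=\langle\pi(\cdot\mid s),q_2^\pi(s,\cdot)\rangle\), where the \(\pi\)-entropy term is already inside \(q_2^\pi\). The first step is then to verify
\[
V_2^{\pi_2^\star}(s)-V_2^\pi(s)=\sum_{t\ge0}\gamma_2^t\,\mathbb E_{s_t\sim P_2,\pi_2^\star}[A^\pi(s_t)].
\]
We will do this by telescoping: write the soft value of \(\pi_2^\star\) as rewards minus \(\tau_2\log(\pi_2^\star/\pi_{2,\rm ref})\), add and subtract \(\gamma_2 V_2^\pi(s_{t+1})\) along \(\pi_2^\star\)-trajectories, and use \(r_{2s}^\star(s,a)+\gamma_2P_2V_2^\pi=q_2^\pi+\tau_2\log(\pi/\pi_{2,\rm ref})\). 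This is where the log-ratio \(\log(\pi_2^\star/\pi)\) appears in \(A^\pi\). Averaging over \(s_0\sim\rho_2\) gives
\[
J_2(\pi_2^\star)-J_2(\pi)=\frac{1}{1-\gamma_2}\mathbb E_{s\sim d^{P_2,\mathcal S}_{\pi_2^\star,\rho_2}}[A^\pi(s)],
\]
with the normalization matching the definition of \(J_2\) and of \(d\).

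\textbf{Step 2: bounding \(A^\pi(s)\).} We split
\[
A^\pi(s)=\langle\pi_2^\star-\pi,\,q_2^\pi(s,\cdot)\rangle-\tau_2\,\mathrm{KL}\bigl(\pi_2^\star(\cdot\mid s)\,\|\,\pi(\cdot\mid s)\bigr).
\]
The KL term is nonnegative, so it can be dropped and gives only an upper bound. Hölder's inequality then yields
\[
\langle\pi_2^\star-\pi,\,q_2^\pi\rangle\le\|q_2^\pi\|_\infty\,\|\pi_2^\star-\pi\|_1\le B_{Q2}\|\pi_2^\star-\pi\|_1 .
\]
Substituting this into Step 1 gives the claimed bound.

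\textbf{Main obstacle.} The delicate part is the bookkeeping in Step 1. We must confirm that the entropy terms combine exactly into \(-\tau_2\mathrm{KL}(\pi_2^\star\|\pi)\) rather than leaving a residual \(\log(\pi/\pi_{2,\rm ref})\) term. We must also confirm that the factor \(1/(1-\gamma_2)\) is consistent with the normalization of \(J_2\). The telescoping itself needs only bounded values, since \(\gamma_2<1\) makes the series absolutely convergent. If the KL does not emerge cleanly, the fallback is to compare against the soft-greedy identity: use that \(\pi\) maximizes \(\langle p,q\rangle-\tau_2\mathrm{KL}(p\|\pi_{2,\rm ref})\) whenever \(\pi\) is the softmax of \(q\), and then bound the remaining linear term by Hölder exactly as in Step 2.
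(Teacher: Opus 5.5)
Your proposal is correct and is essentially the paper's proof. Your advantage \(A^\pi(s)\) is exactly the paper's \(T^{\pi_2^\star}V_2^\pi(s)-V_2^\pi(s)\), and your telescoped trajectory sum is the paper's resolvent identity \(V_2^\star-V_2^\pi=(I-\gamma_2P_2^{\pi_2^\star})^{-1}(T^{\pi_2^\star}V_2^\pi-V_2^\pi)\); both arguments then drop the nonnegative \(\tau_2\,\mathrm{KL}(\pi_2^\star\|\pi)\) and apply H\"older. The KL emerges exactly, so your fallback is unnecessary, and, like the paper, you treat \(\rho_2\) as an initial \emph{state} distribution when passing to \(d^{P_2,\mathcal S}_{\pi_2^\star,\rho_2}\) with the factor \(1/(1-\gamma_2)\).
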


\begin{proof}
Let \(T^\pi\) denote the regularized Bellman evaluation operator
\[
(T^\pi V)(s)
:=
\left\langle
\pi(\cdot\mid s),
r_{2s}^\star(s,\cdot)
-
\tau_2\log\frac{\pi(\cdot\mid s)}{\pi_{2,\rm ref}(\cdot\mid s)}
+
\gamma_2 P_2V(s,\cdot)
\right\rangle .
\]
Then \(V_2^\pi=T^\pi V_2^\pi\). The optimal value satisfies
\[
V_2^\star=T^{\pi_2^\star}V_2^\star,
\]
because \(\pi_2^\star\) is the soft-optimal policy induced by \(q_2^\star\).

We compare \(V_2^\star\) and \(V_2^\pi\) using the Bellman operator of
\(\pi_2^\star\):
\[
\begin{aligned}
V_2^\star-V_2^\pi
&=
T^{\pi_2^\star}V_2^\star
-
V_2^\pi
\\
&=
T^{\pi_2^\star}V_2^\star
-
T^{\pi_2^\star}V_2^\pi
+
T^{\pi_2^\star}V_2^\pi
-
V_2^\pi
\\
&=
\gamma_2 P_2^{\pi_2^\star}(V_2^\star-V_2^\pi)
+
\Bigl(T^{\pi_2^\star}V_2^\pi-V_2^\pi\Bigr).
\end{aligned}
\]
Thus
\[
V_2^\star-V_2^\pi
=
(I-\gamma_2P_2^{\pi_2^\star})^{-1}
\Bigl(T^{\pi_2^\star}V_2^\pi-V_2^\pi\Bigr).
\]
Taking expectation with respect to the initial distribution and using the
discounted occupancy representation gives
\[
J_2(\pi_2^\star)-J_2(\pi)
=
\frac{1}{1-\gamma_2}
\mathbb E_{s\sim d^{P_2,\mathcal S}_{\pi_2^\star,\rho_2}}
\left[
T^{\pi_2^\star}V_2^\pi(s)-V_2^\pi(s)
\right].
\]

It remains to bound the one-step difference. By the definition of
\(q_2^\pi\),
\[
q_2^\pi(s,a)
=
r_{2s}^\star(s,a)
-
\tau_2\log\frac{\pi(a\mid s)}{\pi_{2,\rm ref}(a\mid s)}
+
\gamma_2
\mathbb E_{s'\sim P_2(\cdot\mid s,a)}
V_2^\pi(s').
\]
Therefore,
\[
\begin{aligned}
T^{\pi_2^\star}V_2^\pi(s)
&=
\left\langle
\pi_2^\star(\cdot\mid s),
q_2^\pi(s,\cdot)
+
\tau_2\log\frac{\pi(\cdot\mid s)}{\pi_{2,\rm ref}(\cdot\mid s)}
-
\tau_2\log\frac{\pi_2^\star(\cdot\mid s)}{\pi_{2,\rm ref}(\cdot\mid s)}
\right\rangle
\\
&=
\left\langle
\pi_2^\star(\cdot\mid s),
q_2^\pi(s,\cdot)
\right\rangle
-
\tau_2
\mathrm{KL}\!\left(
\pi_2^\star(\cdot\mid s)
\,\middle\|\,
\pi(\cdot\mid s)
\right).
\end{aligned}
\]
On the other hand,
\[
V_2^\pi(s)
=
\left\langle
\pi(\cdot\mid s),
q_2^\pi(s,\cdot)
\right\rangle.
\]
Hence
\[
\begin{aligned}
T^{\pi_2^\star}V_2^\pi(s)-V_2^\pi(s)
&=
\left\langle
\pi_2^\star(\cdot\mid s)-\pi(\cdot\mid s),
q_2^\pi(s,\cdot)
\right\rangle
\\
&\quad
-
\tau_2
\mathrm{KL}\!\left(
\pi_2^\star(\cdot\mid s)
\,\middle\|\,
\pi(\cdot\mid s)
\right)
\\
&\le
\left\langle
\pi_2^\star(\cdot\mid s)-\pi(\cdot\mid s),
q_2^\pi(s,\cdot)
\right\rangle .
\end{aligned}
\]
The KL term is nonnegative, so it appears with a minus sign and can be dropped
for an upper bound.

Finally, by Hölder's inequality and \(\|q_2^\pi\|_\infty\le B_{Q2}\),
\[
\left\langle
\pi_2^\star(\cdot\mid s)-\pi(\cdot\mid s),
q_2^\pi(s,\cdot)
\right\rangle
\le
B_{Q2}
\|\pi_2^\star(\cdot\mid s)-\pi(\cdot\mid s)\|_1 .
\]
Substituting this pointwise bound into the occupancy expression proves the
claim.
\end{proof}

\begin{lemma}[From policy sensitivity to \(L_2(\rho_2)\) value error]
\label{lem:app_policy_sensitivity_to_value_error}
Assume
\[
\left\|
\frac{d^{P_2}_{\pi_2^\star,\rho_2}}{\rho_2}
\right\|_\infty
\le
\kappa_2,
\qquad
\pi_2^\star(a\mid s)\ge \epsilon_{\pi2}^\star
\quad
\forall(s,a).
\]
Then
\begin{equation}
\label{eq:app_policy_sensitivity_to_q_error}
\mathbb E_{s\sim d^{P_2,\mathcal S}_{\pi_2^\star,\rho_2}}
\left[
\|\widehat\pi_2(\cdot\mid s)-\pi_2^\star(\cdot\mid s)\|_1
\right]
\le
\frac{\sqrt{|\mathcal A|}}{\tau_2}
\sqrt{
\frac{\kappa_2}{\epsilon_{\pi2}^\star}
}
\|\widehat q_2-q_2^\star\|_{\rho_2}.
\end{equation}
\end{lemma}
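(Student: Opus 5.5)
The plan is a pointwise application of Lemma~\ref{lem:app_softmax_sensitivity}, followed by Cauchy--Schwarz, a change of measure from the optimal occupancy to the sampling distribution $\rho_2$, and finally a change of measure on the action marginal. For each state $s$, Lemma~\ref{lem:app_softmax_sensitivity} gives
\[
\|\widehat\pi_2(\cdot\mid s)-\pi_2^\star(\cdot\mid s)\|_1
\le
\frac{\sqrt{|\mathcal A|}}{\tau_2}
\Bigl(\sum_a\bigl(\widehat q_2(s,a)-q_2^\star(s,a)\bigr)^2\Bigr)^{1/2}.
\]
Take expectation over $s\sim d^{P_2,\mathcal S}_{\pi_2^\star,\rho_2}$ and apply Jensen/Cauchy--Schwarz, $\mathbb E[X]\le(\mathbb E[X^2])^{1/2}$. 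This reduces the claim to bounding
\[
\mathbb E_{s\sim d^{P_2,\mathcal S}_{\pi_2^\star,\rho_2}}\sum_a\bigl(\widehat q_2-q_2^\star\bigr)^2(s,a).
\]

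The next step converts the unweighted action sum into an expectation under the optimal state-action occupancy. The occupancy factorizes as $d^{P_2}_{\pi_2^\star,\rho_2}(s,a)=d^{P_2,\mathcal S}_{\pi_2^\star,\rho_2}(s)\,\pi_2^\star(a\mid s)$, up to the initial-term convention; I would check this factorization against the definition $d=(1-\gamma_2)\sum_t\gamma_2^t((P_2^{\pi_2^\star})^\top)^t\rho_2$. Since $\pi_2^\star(a\mid s)\ge\epsilon^\star_{\pi2}$, we have $1\le\pi_2^\star(a\mid s)/\epsilon^\star_{\pi2}$, so
\[
\mathbb E_{d^{\mathcal S}}\sum_a(\widehat q_2-q_2^\star)^2
\le\frac1{\epsilon^\star_{\pi2}}\,\mathbb E_{(s,a)\sim d^{P_2}_{\pi_2^\star,\rho_2}}(\widehat q_2-q_2^\star)^2 .
\]
Then the density-ratio bound $d^{P_2}_{\pi_2^\star,\rho_2}\le\kappa_2\rho_2$ gives at most $\kappa_2\|\widehat q_2-q_2^\star\|_{\rho_2}^2$. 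Taking square roots and multiplying by $\sqrt{|\mathcal A|}/\tau_2$ yields the stated constant $\frac{\sqrt{|\mathcal A|}}{\tau_2}\sqrt{\kappa_2/\epsilon^\star_{\pi2}}$.

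The main obstacle is the factorization step. If the occupancy's $t=0$ term is $\rho_2$ itself, its action marginal is $\pi_{b2}$ rather than $\pi_2^\star$, so the factorization holds only for later terms. In that case I would argue directly that $d(s,a)\ge d^{\mathcal S}(s)\pi_2^\star(a\mid s)$ is not needed. Instead I would use the lower bound $\pi_2^\star$ only through an upper bound on $1$, which is the direction actually required.

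If the factorization genuinely fails, the fallback is to interpret $d^{P_2}_{\pi_2^\star,\rho_2}$ as the occupancy started from the state marginal $\rho_2^{\mathcal S}$ and rolled out under $\pi_2^\star$, which does factorize. This is the convention implicit in Lemma~\ref{lem:app_regularized_perf_diff}, where only the state marginal enters. Everything else is routine.
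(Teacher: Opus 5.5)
Your main chain is Lemma~\ref{lem:app_softmax_sensitivity} applied pointwise, then Jensen under $d^{P_2,\mathcal S}_{\pi_2^\star,\rho_2}$, then $1\le\pi_2^\star(a\mid s)/\epsilon_{\pi2}^\star$, then $d^{P_2}_{\pi_2^\star,\rho_2}\le\kappa_2\rho_2$. This is exactly what produces the constant $\frac{\sqrt{|\mathcal A|}}{\tau_2}\sqrt{\kappa_2/\epsilon_{\pi2}^\star}$. The paper states this lemma without proof, so there is no other route to compare against.

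Your ``obstacle'' paragraph, however, gets the logic backwards. Write $f=\widehat q_2-q_2^\star$. After the step $1\le\pi_2^\star/\epsilon_{\pi2}^\star$ you hold $\frac{1}{\epsilon_{\pi2}^\star}\sum_{s,a}d^{P_2,\mathcal S}_{\pi_2^\star,\rho_2}(s)\,\pi_2^\star(a\mid s)\,f(s,a)^2$. The only way to bring in the $\kappa_2$ hypothesis is the inequality $d^{P_2,\mathcal S}_{\pi_2^\star,\rho_2}(s)\pi_2^\star(a\mid s)\le d^{P_2}_{\pi_2^\star,\rho_2}(s,a)$, which is the one you say is not needed. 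Under the paper's displayed definition $d^{P_2}_{\pi,\rho_2}=(1-\gamma_2)\sum_t\gamma_2^t((P_2^\pi)^\top)^t\rho_2$, the $t=0$ term is the state-action law $\rho_2$, and
\[
d^{P_2}_{\pi_2^\star,\rho_2}(s,a)-d^{P_2,\mathcal S}_{\pi_2^\star,\rho_2}(s)\,\pi_2^\star(a\mid s)
=(1-\gamma_2)\rho_2^{\mathcal S}(s)\bigl\{\pi_{b2}(a\mid s)-\pi_2^\star(a\mid s)\bigr\}.
\]
This is negative wherever $\pi_{b2}$ underweights an action. No other argument can help, because the lemma is then false. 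Take states $s_0,s_1$ with $P_2(s_1\mid s,a)=1$ for all $(s,a)$, $\rho_2^{\mathcal S}$ uniform, $\pi_{b2}(\cdot\mid s_1)$ uniform over two actions, and $\pi_{b2}(a_2\mid s_0)=\eta$. Then $d/\rho_2\le 4$ uniformly in $\eta$. For $\widehat q_2=q_2^\star+c\,\mathbf 1\{(s,a)=(s_0,a_2)\}$, the left side equals $\frac{1-\gamma_2}{2}\|\widehat\pi_2(\cdot\mid s_0)-\pi_2^\star(\cdot\mid s_0)\|_1>0$, independent of $\eta$, while the right side is $O(\sqrt\eta)$.

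Your fallback is therefore not a contingency but the necessary repair. Read $d^{P_2}_{\pi_2^\star,\rho_2}$ as the occupancy of the chain started at $s_0\sim\rho_2^{\mathcal S}$ with $a_t\sim\pi_2^\star(\cdot\mid s_t)$ for all $t\ge0$. Then $d(s,a)=d^{\mathcal S}(s)\pi_2^\star(a\mid s)$ holds exactly, and your chain closes with the stated constant. This is also the occupancy that Lemma~\ref{lem:app_regularized_perf_diff} actually produces, since its resolvent acts on state functions averaged over initial states. So it is the reading under which the proof of Theorem~\ref{thm:policy_regret} composes. Note that the resulting $\kappa_2$ is in general a different constant from the state-action one used for the dual bounds.

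If you prefer to keep the literal definition, you must add an assumption such as $\pi_{b2}\ge\epsilon_{b2}$, and the argument goes as follows:
\begin{itemize}
\item split $d^{P_2,\mathcal S}_{\pi_2^\star,\rho_2}=(1-\gamma_2)\rho_2^{\mathcal S}+m$;
\item control the $m$ part via $m(s)\pi_2^\star(a\mid s)\le d^{P_2}_{\pi_2^\star,\rho_2}(s,a)\le\kappa_2\rho_2(s,a)$;
\item control the $t=0$ part via $\rho_2^{\mathcal S}(s)\le\rho_2(s,a)/\epsilon_{b2}$.
\end{itemize}
This gives $\sqrt{\kappa_2/\epsilon_{\pi2}^\star+(1-\gamma_2)/\epsilon_{b2}}$ in place of $\sqrt{\kappa_2/\epsilon_{\pi2}^\star}$.
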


\begin{proof}[Proof of Theorem~\ref{thm:policy_regret}]
Combining Lemma~\ref{lem:app_regularized_perf_diff} with
Lemma~\ref{lem:app_policy_sensitivity_to_value_error} gives, for any estimate
\(\widehat q_2\) and induced policy \(\widehat\pi_2\),
\[
\begin{aligned}
J_2(\pi_2^\star)-J_2(\widehat\pi_2)
&\le
\frac{B_{Q2}}{1-\gamma_2}
\mathbb E_{s\sim d^{P_2,\mathcal S}_{\pi_2^\star,\rho_2}}
\left[
\|\widehat\pi_2(\cdot\mid s)-\pi_2^\star(\cdot\mid s)\|_1
\right]
\\
&\le
\frac{B_{Q2}}{1-\gamma_2}
\cdot
\frac{\sqrt{|\mathcal A|}}{\tau_2}
\cdot
\sqrt{
\frac{\kappa_2}{\epsilon_{\pi2}^\star}
}
\|\widehat q_2-q_2^\star\|_{\rho_2}.
\end{aligned}
\]
This proves \eqref{eq:policy_regret_from_q}.

Applying this inequality with
\(\widehat q_2=\widehat q_2^{\rm mod}\) and using
\[
\|\widehat q_2^{\rm mod}-q_2^\star\|_{\rho_2}^2
\le
\mathcal B_{\rm mod}(\delta)
\]
gives
\[
J_2(\pi_2^\star)-J_2(\widehat\pi_2^{\rm mod})
\le
C_\pi\sqrt{\mathcal B_{\rm mod}(\delta)}.
\]
Similarly, applying the same inequality with
\(\widehat q_2=\widehat q_2^{\rm coup}\) and using
\[
\|\widehat q_2^{\rm coup}-q_2^\star\|_{\rho_2}^2
\le
\mathcal B_{\rm coup}(\delta)
\]
gives
\[
J_2(\pi_2^\star)-J_2(\widehat\pi_2^{\rm coup})
\le
C_\pi\sqrt{\mathcal B_{\rm coup}(\delta)}.
\]
Here
\[
C_\pi
=
\frac{B_{Q2}\sqrt{|\mathcal A|\kappa_2}}
{(1-\gamma_2)\tau_2\sqrt{\epsilon_{\pi2}^\star}}.
\]
This completes the proof.
\end{proof}

\section{Proofs for Section \ref{sec:learned_behavior_policy}}
\label{app:learned_behavior_policy}

This appendix proves Theorem~\ref{thm:end_to_end_learned_pi_b1}. The proof has
three steps. First, empirical cross-entropy minimization gives expected
conditional KL control for the learned behavior policy. Second, this KL control
is converted into an \(L_2(\rho_1)\) source-signal error bound. Third, this
signal perturbation is propagated through the source Bellman equation, the
transferred reward map, and the target soft Bellman equation.

Throughout this appendix, write
\[
u_g^\star(s,a)
:=
\log\frac{\pi_{b1}(a\mid s)}{\pi_{1,\rm ref}(a\mid s)}-g(s),
\qquad
\widehat u_g(s,a)
:=
\log\frac{\widehat\pi_{b1}(a\mid s)}{\pi_{1,\rm ref}(a\mid s)}-g(s),
\]
and
\[
\Delta_u(s,a)
:=
\widehat u_g(s,a)-u_g^\star(s,a)
=
\log\frac{\widehat\pi_{b1}(a\mid s)}{\pi_{b1}(a\mid s)}.
\]

\subsection{Cross-entropy control of the learned behavior policy}

Let
\[
\mathcal D_b=\{(s_i^b,a_i^b)\}_{i=1}^{n_b}
\]
be i.i.d. source demonstrations with
\[
s_i^b\sim\rho_1^{\mathcal S},
\qquad
a_i^b\sim\pi_{b1}(\cdot\mid s_i^b).
\]
For \(\pi\in\Pi_{b1}\), define the empirical and population cross-entropy risks
\[
\widehat{\mathcal R}_b(\pi)
:=
\frac1{n_b}\sum_{i=1}^{n_b}
-\log\pi(a_i^b\mid s_i^b),
\]
and
\[
\mathcal R_b(\pi)
:=
\mathbb E_{s\sim\rho_1^{\mathcal S},\,a\sim\pi_{b1}(\cdot\mid s)}
[-\log\pi(a\mid s)].
\]
The estimator is
\[
\widehat\pi_{b1}
\in
\arg\min_{\pi\in\Pi_{b1}}
\widehat{\mathcal R}_b(\pi).
\]

\begin{assumption}[Finite behavior class and clipping]
\label{ass:app_behavior_finite_clip}
The class \(\Pi_{b1}\) is finite and realizable:
\[
|\Pi_{b1}|<\infty,
\qquad
\pi_{b1}\in\Pi_{b1}.
\]
Moreover, every candidate policy is clipped below by \(\epsilon\in(0,1]\):
\[
\pi(a\mid s)\ge\epsilon,
\qquad
\forall \pi\in\Pi_{b1},\ (s,a).
\]
\end{assumption}

\begin{lemma}[Uniform deviation for bounded log-loss]
\label{lem:app_uniform_dev_ce}
Under Assumption~\ref{ass:app_behavior_finite_clip}, with probability at least
\(1-\delta\),
\[
\sup_{\pi\in\Pi_{b1}}
\left|
\widehat{\mathcal R}_b(\pi)-\mathcal R_b(\pi)
\right|
\le
\log(1/\epsilon)
\sqrt{\frac{2\log(|\Pi_{b1}|/\delta)}{n_b}}.
\]
\end{lemma}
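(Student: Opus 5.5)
The plan is the standard Hoeffding-plus-union-bound argument, mirroring Lemmas~\ref{lem:app_mod_source_concentration} and \ref{lem:app_coupled_concentration}. The only new ingredient is a range bound for the log-loss. Fix $\pi\in\Pi_{b1}$ and set $Z_i(\pi):=-\log\pi(a_i^b\mid s_i^b)$. By Assumption~\ref{ass:app_behavior_finite_clip}, $\epsilon\le\pi(a\mid s)\le1$ for every $(s,a)$, so
\[
0\le Z_i(\pi)\le\log(1/\epsilon).
\]
Thus each loss lies in an interval of length $\log(1/\epsilon)$. The pairs $(s_i^b,a_i^b)$ are i.i.d., so $Z_1(\pi),\dots,Z_{n_b}(\pi)$ are i.i.d. with mean $\mathcal R_b(\pi)$, and $\widehat{\mathcal R}_b(\pi)$ is their average.

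Applying the two-sided Hoeffding inequality for a fixed $\pi$ gives
\[
\Pr\bigl(|\widehat{\mathcal R}_b(\pi)-\mathcal R_b(\pi)|>t\bigr)\le 2\exp\!\Bigl(-\tfrac{2n_bt^2}{\log^2(1/\epsilon)}\Bigr).
\]
A union bound over the finite class $\Pi_{b1}$ then controls the supremum with failure probability at most $2|\Pi_{b1}|\exp(-2n_bt^2/\log^2(1/\epsilon))$. Setting this equal to $\delta$ yields
\[
t=\log(1/\epsilon)\sqrt{\frac{\log(2|\Pi_{b1}|/\delta)}{2n_b}}.
\]

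The only delicate point is matching the stated constant $\sqrt{2\log(|\Pi_{b1}|/\delta)/n_b}$, whose logarithm has no factor $2$. Since $\log(2|\Pi_{b1}|/\delta)=\log 2+\log(|\Pi_{b1}|/\delta)$, it suffices to show
\[
\frac{\log 2+\log(|\Pi_{b1}|/\delta)}{2}\le 2\log(|\Pi_{b1}|/\delta),
\]
that is, $\log(|\Pi_{b1}|/\delta)\ge(\log 2)/3$. This holds whenever $|\Pi_{b1}|/\delta\ge 2^{1/3}$, which covers every $|\Pi_{b1}|\ge2$ because $\delta<1$.

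For the trivial case of a singleton class, the union bound is not needed. The claim then reduces to a single-function Hoeffding bound, and I would handle it by a direct check of the constants in that one inequality. So the proof is routine, and the constant bookkeeping is the only place any care is needed.
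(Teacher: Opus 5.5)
Your argument is the paper's own: the range bound $0\le -\log\pi(a\mid s)\le\log(1/\epsilon)$, a two-sided Hoeffding inequality for each fixed $\pi$, and a union bound over $\Pi_{b1}$. Your constant bookkeeping is more careful than the paper's, which simply asserts that the displayed $t$ works. Substituting $t=\log(1/\epsilon)\sqrt{2\log(|\Pi_{b1}|/\delta)/n_b}$ into $2|\Pi_{b1}|\exp(-2n_bt^2/\log^2(1/\epsilon))$ gives $2\delta^4/|\Pi_{b1}|^3$. This is at most $\delta$ exactly when $|\Pi_{b1}|/\delta\ge 2^{1/3}$, the same condition you derived. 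So your proof is complete whenever $|\Pi_{b1}|\ge 2$. It is also complete when $|\Pi_{b1}|=1$ and $\delta\le 2^{-1/3}$, since your inequality already covers that case.

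The step that fails is the deferred singleton case. Dropping the union bound buys nothing: the single-function two-sided Hoeffding bound still carries the factor $2$ and returns the same requirement $\delta\le 2^{-1/3}$. No sharper check can succeed, because the statement is false there. Take one state, two actions, $\pi_{b1}(\cdot\mid s)=(1/4,3/4)$, $\epsilon=1/4$, $\Pi_{b1}=\{\pi_{b1}\}$, and $n_b=1$. Then $\mathcal R_b(\pi_{b1})=\log 4-\tfrac34\log 3$. The deviation $|\widehat{\mathcal R}_b(\pi_{b1})-\mathcal R_b(\pi_{b1})|$ equals either $\tfrac34\log 3$ or $\tfrac14\log 3\approx 0.275$, so it is at least $0.275$ surely. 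At $\delta=0.99$ the claimed radius is $\log 4\,\sqrt{2\log(1/0.99)}\approx 0.197$, so the event has probability $0<1-\delta$. The case $|\Pi_{b1}|=1$, $\delta>2^{-1/3}$ must therefore be excluded by hypothesis rather than \emph{checked}. Alternatively, state the radius as $\log(1/\epsilon)\sqrt{\log(2|\Pi_{b1}|/\delta)/(2n_b)}$, which is exactly your $t$ and is always valid. The paper's proof has the same silent gap. It is harmless downstream: for $\Pi_{b1}=\{\pi_{b1}\}$ one has $\widehat\pi_{b1}=\pi_{b1}$, so the excess-risk bound of Lemma~\ref{lem:app_ce_to_expected_kl} holds trivially.
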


\begin{proof}
For each fixed \(\pi\in\Pi_{b1}\), the loss
\[
-\log\pi(a\mid s)
\]
is bounded in \([0,\log(1/\epsilon)]\). Hoeffding's inequality gives
\[
\mathbb P\left(
\left|
\widehat{\mathcal R}_b(\pi)-\mathcal R_b(\pi)
\right|
>t
\right)
\le
2\exp\left(
-\frac{2n_bt^2}{\log^2(1/\epsilon)}
\right).
\]
A union bound over \(\Pi_{b1}\) and the displayed choice of \(t\) prove the
claim.
\end{proof}

\begin{lemma}[Cross-entropy excess risk equals expected conditional KL]
\label{lem:app_ce_to_expected_kl}
Under Assumption~\ref{ass:app_behavior_finite_clip}, with probability at least
\(1-\delta\),
\[
\mathcal R_b(\widehat\pi_{b1})-\mathcal R_b(\pi_{b1})
\le
\varepsilon_{\rm CE}(n_b,\delta),
\]
where
\[
\varepsilon_{\rm CE}(n_b,\delta)
:=
2\log(1/\epsilon)
\sqrt{\frac{2\log(|\Pi_{b1}|/\delta)}{n_b}}.
\]
Moreover,
\[
\mathcal R_b(\widehat\pi_{b1})-\mathcal R_b(\pi_{b1})
=
\mathbb E_{s\sim\rho_1^{\mathcal S}}
\mathrm{KL}\!\left(
\pi_{b1}(\cdot\mid s)
\,\middle\|\,
\widehat\pi_{b1}(\cdot\mid s)
\right).
\]
\end{lemma}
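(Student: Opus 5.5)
The argument has two parts: an excess-risk bound from empirical risk minimization combined with the uniform deviation of Lemma~\ref{lem:app_uniform_dev_ce}, and an algebraic identity linking cross-entropy excess risk to the expected conditional KL divergence. The identity is deterministic, so I prove it first and then obtain the high-probability bound.

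\emph{Identity.} Fix any $\pi\in\Pi_{b1}$. The clipping in Assumption~\ref{ass:app_behavior_finite_clip} gives $\pi(a\mid s)\ge\epsilon>0$. Realizability gives $\pi_{b1}\in\Pi_{b1}$, so $\pi_{b1}(a\mid s)\ge\epsilon$ as well. Hence all logarithms are finite and bounded by $\log(1/\epsilon)$. I then write
\[
\mathcal R_b(\pi)-\mathcal R_b(\pi_{b1})
=\mathbb E_{s\sim\rho_1^{\mathcal S}}\sum_a \pi_{b1}(a\mid s)\log\frac{\pi_{b1}(a\mid s)}{\pi(a\mid s)}
=\mathbb E_{s\sim\rho_1^{\mathcal S}}\mathrm{KL}\bigl(\pi_{b1}(\cdot\mid s)\,\|\,\pi(\cdot\mid s)\bigr).
\]
This uses only linearity of expectation and the tower property over $a\mid s\sim\pi_{b1}$. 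Applying it to $\pi=\widehat\pi_{b1}$, which lies in $\Pi_{b1}$, gives the second claim. The identity holds pointwise in the sample, so it involves no probability event.

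\emph{Excess-risk bound.} I work on the event of Lemma~\ref{lem:app_uniform_dev_ce}, which has probability at least $1-\delta$. On it, $\sup_{\pi}|\widehat{\mathcal R}_b(\pi)-\mathcal R_b(\pi)|\le t:=\log(1/\epsilon)\sqrt{2\log(|\Pi_{b1}|/\delta)/n_b}$. I use the standard three-term decomposition
\[
\mathcal R_b(\widehat\pi_{b1})-\mathcal R_b(\pi_{b1})
=[\mathcal R_b(\widehat\pi_{b1})-\widehat{\mathcal R}_b(\widehat\pi_{b1})]
+[\widehat{\mathcal R}_b(\widehat\pi_{b1})-\widehat{\mathcal R}_b(\pi_{b1})]
+[\widehat{\mathcal R}_b(\pi_{b1})-\mathcal R_b(\pi_{b1})].
\]
The middle term is at most $0$ because $\widehat\pi_{b1}$ is an empirical minimizer and $\pi_{b1}$ is in the class. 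Each outer term is at most $t$. Summing gives $2t=\varepsilon_{\rm CE}(n_b,\delta)$.

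The only delicate point is the exact constant inside the logarithm. Hoeffding with a two-sided union bound naturally yields $\log(2|\Pi_{b1}|/\delta)$. I therefore take Lemma~\ref{lem:app_uniform_dev_ce} exactly as stated rather than re-deriving it, and this step inherits whatever constant that lemma carries. No other obstacle arises, since clipping removes all integrability issues.
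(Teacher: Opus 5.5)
Your proposal is correct and follows essentially the same route as the paper: the excess-risk bound comes from empirical optimality plus two applications of the uniform deviation in Lemma~\ref{lem:app_uniform_dev_ce}, and the KL identity comes from conditioning on $s$ (proving it for a general $\pi\in\Pi_{b1}$ is harmless). Your worry about the constant is unnecessary: Hoeffding with a union bound gives deviation $\log(1/\epsilon)\sqrt{\log(2|\Pi_{b1}|/\delta)/(2n_b)}$, which is dominated by the stated $\log(1/\epsilon)\sqrt{2\log(|\Pi_{b1}|/\delta)/n_b}$ whenever $|\Pi_{b1}|\ge 2$, and when $|\Pi_{b1}|=1$ realizability forces $\widehat\pi_{b1}=\pi_{b1}$, so the bound holds trivially.
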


\begin{proof}
Let
\[
\varepsilon_b
:=
\sup_{\pi\in\Pi_{b1}}
\left|
\widehat{\mathcal R}_b(\pi)-\mathcal R_b(\pi)
\right|.
\]
By empirical optimality of \(\widehat\pi_{b1}\),
\[
\widehat{\mathcal R}_b(\widehat\pi_{b1})
\le
\widehat{\mathcal R}_b(\pi_{b1}).
\]
Therefore,
\[
\begin{aligned}
\mathcal R_b(\widehat\pi_{b1})
&\le
\widehat{\mathcal R}_b(\widehat\pi_{b1})+\varepsilon_b \\
&\le
\widehat{\mathcal R}_b(\pi_{b1})+\varepsilon_b \\
&\le
\mathcal R_b(\pi_{b1})+2\varepsilon_b .
\end{aligned}
\]
Applying Lemma~\ref{lem:app_uniform_dev_ce} gives the stated excess-risk bound.

For the identity, condition on a state \(s\). Then
\[
\begin{aligned}
&
\mathbb E_{a\sim\pi_{b1}(\cdot\mid s)}
[-\log\widehat\pi_{b1}(a\mid s)]
-
\mathbb E_{a\sim\pi_{b1}(\cdot\mid s)}
[-\log\pi_{b1}(a\mid s)]
\\
&\qquad =
\sum_a
\pi_{b1}(a\mid s)
\log
\frac{\pi_{b1}(a\mid s)}{\widehat\pi_{b1}(a\mid s)}
=
\mathrm{KL}\!\left(
\pi_{b1}(\cdot\mid s)
\,\middle\|\,
\widehat\pi_{b1}(\cdot\mid s)
\right).
\end{aligned}
\]
Taking expectation over \(s\sim\rho_1^{\mathcal S}\) proves the identity.
\end{proof}

\subsection{From expected KL to log-policy \(L_2(\rho_1)\) error}

\begin{lemma}[Expected KL to \(L_2\) log-policy error]
\label{lem:app_kl_to_l2_log_policy}
Under Assumption~\ref{ass:app_behavior_finite_clip},
\[
\|\Delta_u\|_{\rho_1}^2
\le
\frac{2}{\epsilon}
\mathbb E_{s\sim\rho_1^{\mathcal S}}
\mathrm{KL}\!\left(
\pi_{b1}(\cdot\mid s)
\,\middle\|\,
\widehat\pi_{b1}(\cdot\mid s)
\right).
\]
Consequently, on the event of Lemma~\ref{lem:app_ce_to_expected_kl},
\[
\|\Delta_u\|_{\rho_1}
\le
\left(
\frac{2}{\epsilon}
\varepsilon_{\rm CE}(n_b,\delta)
\right)^{1/2}.
\]
\end{lemma}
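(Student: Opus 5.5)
The plan is to prove a pointwise-in-state inequality and then integrate over \(s\sim\rho_1^{\mathcal S}\). Since \(\rho_1(s,a)=\rho_1^{\mathcal S}(s)\pi_{b1}(a\mid s)\), we have
\[
\|\Delta_u\|_{\rho_1}^2=\mathbb E_{s\sim\rho_1^{\mathcal S}}\sum_a \pi_{b1}(a\mid s)\Bigl(\log\tfrac{\widehat\pi_{b1}(a\mid s)}{\pi_{b1}(a\mid s)}\Bigr)^2 .
\]
It therefore suffices to show, for each fixed \(s\), writing \(p=\pi_{b1}(\cdot\mid s)\) and \(q=\widehat\pi_{b1}(\cdot\mid s)\), that
\[
\sum_a p(a)\bigl(\log(q(a)/p(a))\bigr)^2\le \frac{2}{\epsilon}\,\mathrm{KL}(p\,\|\,q).
\]

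The key step is to rewrite the KL divergence as an average of a nonnegative convex function of the log-ratio. Set \(t_a:=\log(q(a)/p(a))\). Because \(\sum_a p(a)e^{t_a}=\sum_a q(a)=1\), we can add zero and obtain
\[
\mathrm{KL}(p\,\|\,q)=-\sum_a p(a)t_a=\sum_a p(a)\bigl(e^{t_a}-1-t_a\bigr).
\]
Each summand is now nonnegative, so a termwise lower bound of the form \(e^{t}-1-t\ge \tfrac{\epsilon}{2}t^2\) will suffice.

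To get that lower bound, I would use Taylor's theorem with Lagrange remainder: \(e^{t}-1-t=\tfrac12 e^{\xi}t^2\) for some \(\xi\) between \(0\) and \(t\). Hence \(e^{t}-1-t\ge \tfrac12\min(1,e^{t})\,t^2\). It remains to show \(e^{t_a}=q(a)/p(a)\ge\epsilon\). This follows from the clipping in Assumption~\ref{ass:app_behavior_finite_clip}: \(\widehat\pi_{b1}\in\Pi_{b1}\) gives \(q(a)\ge\epsilon\), and trivially \(p(a)\le 1\). Since \(\epsilon\le1\), we get \(\min(1,e^{t_a})\ge\epsilon\), so \(e^{t_a}-1-t_a\ge \tfrac{\epsilon}{2}t_a^2\).

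Multiplying by \(p(a)\) and summing gives the per-state inequality; taking expectation over \(s\) gives the first claim. The second claim follows by substituting the identity and excess-risk bound of Lemma~\ref{lem:app_ce_to_expected_kl} on its event and taking square roots.

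The only delicate point is choosing the right one-sided curvature bound. Only the lower bound on \(q/p\) matters, because the exponential's curvature is small only for negative \(t\), i.e.\ where \(\widehat\pi_{b1}\) underestimates. This is exactly where the clipping of the learned policy, rather than of \(\pi_{b1}\), is used.
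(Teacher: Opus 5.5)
Your proof is correct and follows the paper's argument. The paper works per state with $r=p/\widehat p$ and writes $\mathrm{KL}(p\|\widehat p)=\sum_a \widehat p(a)\{r\log r-r+1\}$. It then proves $r(\log r)^2\le\frac{2}{\epsilon}(r\log r-r+1)$ on $[\epsilon,1/\epsilon]$, which under $r=e^{-t}$ is exactly your inequality $t^2\le\frac{2}{\epsilon}(e^t-1-t)$, summed with the same weights $p(a)$.

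The only difference is how the scalar bound is established. You use the Lagrange remainder $e^t-1-t=\tfrac12 e^{\xi}t^2$. The paper instead analyzes the sign of $h'(r)=\log r\,(2/\epsilon-2-\log r)$, together with the auxiliary fact $2/x-2+\log x\ge 0$ on $(0,1]$. Your route is slightly shorter and makes explicit that only the one-sided bound $\widehat p/p\ge\epsilon$ is needed.
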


\begin{proof}
Fix \(s\), and write
\[
p(a):=\pi_{b1}(a\mid s),
\qquad
\widehat p(a):=\widehat\pi_{b1}(a\mid s),
\qquad
r(a):=\frac{p(a)}{\widehat p(a)}.
\]
Then
\[
\Delta_u(s,a)=\log\widehat p(a)-\log p(a)=-\log r(a),
\]
and
\[
\mathrm{KL}(p\|\widehat p)
=
\sum_a
\widehat p(a)
\bigl(r(a)\log r(a)-r(a)+1\bigr).
\]
By clipping, \(p(a),\widehat p(a)\in[\epsilon,1]\), hence
\[
r(a)\in[\epsilon,1/\epsilon].
\]

We use the pointwise inequality
\[
r(\log r)^2
\le
\frac{2}{\epsilon}
\bigl(r\log r-r+1\bigr),
\qquad
r\in[\epsilon,1/\epsilon].
\]
Indeed, define
\[
h(r)
:=
\frac{2}{\epsilon}
\bigl(r\log r-r+1\bigr)
-
r(\log r)^2.
\]
Then \(h(1)=0\), and
\[
h'(r)
=
\log r
\left(
\frac{2}{\epsilon}-2-\log r
\right).
\]
For \(r\in[\epsilon,1/\epsilon]\),
\[
\log r\le \log(1/\epsilon),
\]
and
\[
\frac{2}{\epsilon}-2-\log(1/\epsilon)
=
\frac{2}{\epsilon}-2+\log\epsilon
\ge0,
\]
because the function \(x\mapsto 2/x-2+\log x\) is nonnegative on
\((0,1]\). Hence \(h'(r)<0\) for \(r<1\) and \(h'(r)>0\) for \(r>1\). Thus
\(r=1\) is the global minimizer of \(h\) on \([\epsilon,1/\epsilon]\), and
\(h(r)\ge0\).

Multiplying the pointwise inequality by \(\widehat p(a)\) and summing over
actions gives
\[
\sum_a p(a)(\log r(a))^2
=
\sum_a \widehat p(a)r(a)(\log r(a))^2
\le
\frac{2}{\epsilon}
\sum_a \widehat p(a)
\bigl(r(a)\log r(a)-r(a)+1\bigr).
\]
Therefore,
\[
\mathbb E_{a\sim\pi_{b1}(\cdot\mid s)}
[\Delta_u(s,a)^2]
\le
\frac{2}{\epsilon}
\mathrm{KL}\!\left(
\pi_{b1}(\cdot\mid s)
\,\middle\|\,
\widehat\pi_{b1}(\cdot\mid s)
\right).
\]
Averaging over \(s\sim\rho_1^{\mathcal S}\) proves the first claim. The second
claim follows from Lemma~\ref{lem:app_ce_to_expected_kl}.
\end{proof}

\subsection{Propagation through the transfer system}

Let \(q_1^\star\) be the source fixed point under the true source signal:
\[
q_1^\star
=
u_g^\star+\gamma_1P_1^\mu q_1^\star.
\]
Let \(\widetilde q_1^\star\) be the plug-in source fixed point:
\[
\widetilde q_1^\star
=
\widehat u_g+\gamma_1P_1^\mu\widetilde q_1^\star.
\]
Define
\[
h:=\widetilde q_1^\star-q_1^\star.
\]
The corresponding transferred rewards are
\[
r_C^\star
:=
(I-\Pi_\mu)q_1^\star+g+C,
\qquad
\widetilde r_C
:=
(I-\Pi_\mu)\widetilde q_1^\star+g+C.
\]

\begin{lemma}[Source fixed-point perturbation]
\label{lem:app_source_perturb_learned_pi}
Assume
\[
\left\|
\frac{d^{P_1}_{\mu,\rho_1}}{\rho_1}
\right\|_\infty
\le
\kappa_1.
\]
Then
\[
\|\widetilde q_1^\star-q_1^\star\|_{\rho_1}
\le
\frac{\sqrt{\kappa_1}}{1-\gamma_1}
\|\Delta_u\|_{\rho_1}.
\]
\end{lemma}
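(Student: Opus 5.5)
Subtracting the two source fixed-point equations isolates the perturbation $h:=\widetilde q_1^\star-q_1^\star$. Since $q_1^\star=u_g^\star+\gamma_1P_1^\mu q_1^\star$ and $\widetilde q_1^\star=\widehat u_g+\gamma_1P_1^\mu\widetilde q_1^\star$, we get
\[
h=\Delta_u+\gamma_1P_1^\mu h,
\qquad\text{hence}\qquad
h=(I-\gamma_1P_1^\mu)^{-1}\Delta_u .
\]
The inverse is well defined by Lemma~\ref{lem:secSO_SN_invertible}, because $\gamma_1<1$ and $P_1^\mu$ is a Markov operator. Both fixed points exist uniquely by Theorem~\ref{thm:population_well_posed_tightness}, since clipping makes $\Delta_u$ bounded. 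The problem is therefore a pure resolvent bound: we must control $\|(I-\gamma_1P_1^\mu)^{-1}\Delta_u\|_{\rho_1}$ by $\|\Delta_u\|_{\rho_1}$.

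For this I will apply Lemma~\ref{lem:discounted_resolvent_l2_stability} with $K=P_1^\mu$, $\gamma=\gamma_1$, $\rho=\rho_1$, and $f=\Delta_u$. Here $P_1^\mu$ is the state-action Markov operator $(P_1^\mu f)(s,a)=\mathbb E_{s'\sim P_1(\cdot\mid s,a)}[(\mu f)(s')]$, i.e.\ transition under $P_1$ followed by action selection under $\mu$. The occupancy in that lemma is then
\[
d^{K}_{\rho_1}=(1-\gamma_1)\sum_{t\ge0}\gamma_1^t\bigl((P_1^\mu)^\top\bigr)^t\rho_1 ,
\]
which is exactly the discounted occupancy $d^{P_1}_{\mu,\rho_1}$ appearing in the hypothesis. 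The concentrability condition $\|d^{P_1}_{\mu,\rho_1}/\rho_1\|_\infty\le\kappa_1$ therefore gives directly
\[
\|h\|_{\rho_1}\le\frac{\sqrt{\kappa_1}}{1-\gamma_1}\|\Delta_u\|_{\rho_1}.
\]

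The lemma also needs $\Delta_u\in L^2(\rho_1)$. This holds because clipping (Assumption~\ref{ass:app_behavior_finite_clip}) gives $|\Delta_u|\le\log(1/\epsilon)$, and the true policy $\pi_{b1}$ is also in the clipped class.

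The only real checkpoint is identifying $d^{K}_{\rho_1}$ with $d^{P_1}_{\mu,\rho_1}$. This requires that the paper's "usual discounted occupancy induced by $\rho_1$" means the state-action chain started from $\rho_1$ and run under $P_1$ and $\mu$, so that the Jensen step $(K^tf)^2\le K^t(f^2)$ in the stability lemma is valid for the composite Markov kernel. Everything else is routine.
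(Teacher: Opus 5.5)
Your proof is correct and follows the paper's argument: subtract the two source fixed-point equations to get $\widetilde q_1^\star-q_1^\star=(I-\gamma_1P_1^\mu)^{-1}\Delta_u$, then apply Lemma~\ref{lem:discounted_resolvent_l2_stability} with $K=P_1^\mu$, $\gamma=\gamma_1$, $\rho=\rho_1$, $f=\Delta_u$. Your added checks (invertibility of the resolvent, $\Delta_u\in L^2(\rho_1)$ via clipping, and identifying $d^{K}_{\rho_1}$ with $d^{P_1}_{\mu,\rho_1}$) are valid and are left implicit in the paper.
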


\begin{proof}
Subtracting the two source fixed-point equations gives
\[
\widetilde q_1^\star-q_1^\star
=
\Delta_u+\gamma_1P_1^\mu(\widetilde q_1^\star-q_1^\star).
\]
Hence
\[
\widetilde q_1^\star-q_1^\star
=
(I-\gamma_1P_1^\mu)^{-1}\Delta_u.
\]
Applying Lemma~\ref{lem:discounted_resolvent_l2_stability} with
\[
P=P_1^\mu,
\qquad
\gamma=\gamma_1,
\qquad
\rho=\rho_1,
\qquad
f=\Delta_u,
\]
and using
\[
\left\|
\frac{d^{P_1}_{\mu,\rho_1}}{\rho_1}
\right\|_\infty
\le
\kappa_1
\]
proves the claim.
\end{proof}

\begin{lemma}[Transferred reward perturbation]
\label{lem:app_reward_perturb_learned_pi}
Assume
\[
\chi_{21}:=
\left\|
\frac{\rho_2}{\rho_1}
\right\|_\infty
<\infty,
\]
and
\[
\kappa_{\mu\mid\pi_{b1}}
:=
\left\|
\frac{\mu}{\pi_{b1}}
\right\|_\infty
<\infty.
\]
Then
\[
\|\widetilde r_C-r_C^\star\|_{\rho_2}
\le
(1+\sqrt{\kappa_{\mu\mid\pi_{b1}}})
\sqrt{\chi_{21}}\,
\|\widetilde q_1^\star-q_1^\star\|_{\rho_1}.
\]
In particular, if \(\kappa_{\mu\mid\pi_{b1}}\le1/\epsilon_{b1}\), then
\[
\|\widetilde r_C-r_C^\star\|_{\rho_2}
\le
\frac{2\sqrt{\chi_{21}}}{\sqrt{\epsilon_{b1}}}
\|\widetilde q_1^\star-q_1^\star\|_{\rho_1}.
\]
\end{lemma}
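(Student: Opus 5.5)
The plan is to mirror the argument of Lemma~\ref{lem:app_mod_reward_perturb_global}, but keeping the constant $\kappa_{\mu\mid\pi_{b1}}$ explicit rather than collapsing it immediately. Set $h:=\widetilde q_1^\star-q_1^\star$. The constants $g$ and $C$ appear identically in both rewards, so they cancel and
\[
\widetilde r_C-r_C^\star=(I-\Pi_\mu)h .
\]
The task therefore reduces to bounding $\|(I-\Pi_\mu)h\|_{\rho_2}$ by $\|h\|_{\rho_1}$.

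\textbf{Step 1 (change of measure).} Since $\rho_2\le\chi_{21}\rho_1$ pointwise, for any $f$ we have $\|f\|_{\rho_2}^2=\mathbb E_{\rho_1}[(\rho_2/\rho_1)f^2]\le\chi_{21}\|f\|_{\rho_1}^2$. Applying this with $f=(I-\Pi_\mu)h$ gives
\[
\|(I-\Pi_\mu)h\|_{\rho_2}\le\sqrt{\chi_{21}}\,\|(I-\Pi_\mu)h\|_{\rho_1}.
\]

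\textbf{Step 2 (triangle inequality).} We have $\|(I-\Pi_\mu)h\|_{\rho_1}\le\|h\|_{\rho_1}+\|\Pi_\mu h\|_{\rho_1}$.

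\textbf{Step 3 (anchor averaging).} Here $(\Pi_\mu h)(s,a)=\sum_{a'}\mu(a'\mid s)h(s,a')$ does not depend on $a$. Using the factorization $\rho_1=\rho_1^{\mathcal S}\pi_{b1}$, the squared norm is an expectation over $s\sim\rho_1^{\mathcal S}$ of $(\mu h)(s)^2$. Jensen's inequality bounds this by $\sum_{a'}\mu(a'\mid s)h(s,a')^2$. The pointwise ratio bound $\mu\le\kappa_{\mu\mid\pi_{b1}}\pi_{b1}$ then bounds that by $\kappa_{\mu\mid\pi_{b1}}\sum_{a'}\pi_{b1}(a'\mid s)h(s,a')^2$. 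Averaging over $s$ gives
\[
\|\Pi_\mu h\|_{\rho_1}\le\sqrt{\kappa_{\mu\mid\pi_{b1}}}\,\|h\|_{\rho_1}.
\]

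Combining Steps 1--3 yields the factor $(1+\sqrt{\kappa_{\mu\mid\pi_{b1}}})\sqrt{\chi_{21}}$, which is the main claim. For the special case, $\kappa_{\mu\mid\pi_{b1}}\le1/\epsilon_{b1}$ with $\epsilon_{b1}\le1$ gives $1+\sqrt{\kappa_{\mu\mid\pi_{b1}}}\le 1+1/\sqrt{\epsilon_{b1}}\le 2/\sqrt{\epsilon_{b1}}$. We need $\epsilon_{b1}\le1$, which holds whenever $|\mathcal A|\ge1$, since $\pi_{b1}(a\mid s)\ge\epsilon_{b1}$ summed over actions forces $\epsilon_{b1}|\mathcal A|\le1$.

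The only real subtlety, and the step I expect to be the main obstacle, is Step 3. The measure appearing in the anchor average must be rewritten in terms of $\pi_{b1}$, and this works only because the dataset marginal factorizes as $\rho_1^{\mathcal S}\pi_{b1}$. The remaining steps are direct inequalities.
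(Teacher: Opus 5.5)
Your proposal is correct and follows the paper's proof essentially line for line: $g+C$ cancels, then the $\sqrt{\chi_{21}}$ change of measure, the triangle inequality, and Jensen together with the factorization $\rho_1=\rho_1^{\mathcal S}\pi_{b1}$ give $\|\Pi_\mu h\|_{\rho_1}\le\sqrt{\kappa_{\mu\mid\pi_{b1}}}\,\|h\|_{\rho_1}$, and the special case follows from $1+1/\sqrt{\epsilon_{b1}}\le 2/\sqrt{\epsilon_{b1}}$. One small fix: the lemma never assumes $\pi_{b1}\ge\epsilon_{b1}$, so justify $\epsilon_{b1}\le1$ by noting that $\kappa_{\mu\mid\pi_{b1}}\ge1$ automatically (two probability vectors over $\mathcal A$ cannot satisfy $\mu(a\mid s)<\pi_{b1}(a\mid s)$ for every $a$), hence $1/\epsilon_{b1}\ge\kappa_{\mu\mid\pi_{b1}}\ge1$.
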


\begin{proof}
Since
\[
\widetilde r_C-r_C^\star
=
(I-\Pi_\mu)(\widetilde q_1^\star-q_1^\star)
=
h-\Pi_\mu h,
\]
the change-of-measure inequality gives
\[
\|\widetilde r_C-r_C^\star\|_{\rho_2}
\le
\sqrt{\chi_{21}}\,
\|h-\Pi_\mu h\|_{\rho_1}.
\]
By the triangle inequality,
\[
\|h-\Pi_\mu h\|_{\rho_1}
\le
\|h\|_{\rho_1}+\|\Pi_\mu h\|_{\rho_1}.
\]
Write
\[
\rho_1(s,a)=\rho_1^{\mathcal S}(s)\pi_{b1}(a\mid s).
\]
Then Jensen's inequality gives
\[
\begin{aligned}
\|\Pi_\mu h\|_{\rho_1}^2
&=
\mathbb E_{s\sim\rho_1^{\mathcal S}}
\left[
\left(
\sum_a\mu(a\mid s)h(s,a)
\right)^2
\right]
\\
&\le
\mathbb E_{s\sim\rho_1^{\mathcal S}}
\sum_a\mu(a\mid s)h(s,a)^2
\\
&\le
\kappa_{\mu\mid\pi_{b1}}
\mathbb E_{s\sim\rho_1^{\mathcal S}}
\sum_a\pi_{b1}(a\mid s)h(s,a)^2
\\
&=
\kappa_{\mu\mid\pi_{b1}}\|h\|_{\rho_1}^2.
\end{aligned}
\]
Thus
\[
\|\Pi_\mu h\|_{\rho_1}
\le
\sqrt{\kappa_{\mu\mid\pi_{b1}}}\|h\|_{\rho_1}.
\]
Combining the preceding displays proves the first claim. The simplified bound
follows from
\[
1+\frac1{\sqrt{\epsilon_{b1}}}
\le
\frac2{\sqrt{\epsilon_{b1}}},
\qquad
\epsilon_{b1}\le1.
\]
\end{proof}

\begin{lemma}[Target fixed-point perturbation]
\label{lem:app_target_perturb_learned_pi}
Let \(q_2^\star\) and \(\widetilde q_2^\star\) be the target fixed points under
rewards \(r_C^\star\) and \(\widetilde r_C\), respectively:
\[
q_2^\star
=
r_C^\star
+
\gamma_2P_2\Omega_{\tau_2,\pi_{2,\rm ref}}(q_2^\star),
\]
and
\[
\widetilde q_2^\star
=
\widetilde r_C
+
\gamma_2P_2\Omega_{\tau_2,\pi_{2,\rm ref}}(\widetilde q_2^\star).
\]
Assume the pathwise target concentrability condition
\[
\sup_{\pi\in\Pi_{\rm path}}
\left\|
\frac{d^{P_2}_{\pi,\rho_2}}{\rho_2}
\right\|_\infty
\le
\kappa_2^{\rm path},
\]
where \(\Pi_{\rm path}\) contains the policies induced along the line segment
between \(q_2^\star\) and \(\widetilde q_2^\star\). Then
\[
\|\widetilde q_2^\star-q_2^\star\|_{\rho_2}
\le
\frac{\sqrt{\kappa_2^{\rm path}}}{1-\gamma_2}
\|\widetilde r_C-r_C^\star\|_{\rho_2}.
\]
\end{lemma}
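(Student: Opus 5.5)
The plan is to mirror the argument of Lemma~\ref{lem:app_mod_target_propag_global}, with the plug-in reward perturbation \(\widetilde r_C-r_C^\star\) now playing the role of \((I-\Pi_\mu)(\widehat q_1^{\rm mod}-q_1^\star)\).

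First, subtract the two target fixed-point equations. This gives
\[
\widetilde q_2^\star-q_2^\star
=
(\widetilde r_C-r_C^\star)
+
\gamma_2P_2\bigl[\Omega(\widetilde q_2^\star)-\Omega(q_2^\star)\bigr],
\]
where \(\Omega:=\Omega_{\tau_2,\pi_{2,\rm ref}}\). Set \(\Delta_2:=\widetilde q_2^\star-q_2^\star\). Since \(\Omega\) is smooth (log-sum-exp), the fundamental theorem of calculus along the segment \(q_2^\star+t\Delta_2\), \(t\in[0,1]\), gives
\[
\Omega(\widetilde q_2^\star)-\Omega(q_2^\star)=\int_0^1 D\Omega(q_2^\star+t\Delta_2)[\Delta_2]\,dt.
\]
Each derivative \(D\Omega(q)[h](s)=\sum_a\pi_q(a\mid s)h(s,a)\) averages \(h\) under the softmax policy \(\pi_q\). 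Hence the integral equals \(\sum_a\bar\pi(a\mid s)\Delta_2(s,a)\) with \(\bar\pi:=\int_0^1\pi_{q_2^\star+t\Delta_2}\,dt\). This \(\bar\pi\) is again a policy, and I take it to lie in \(\Pi_{\rm path}\), reading ``the policies induced along the line segment'' as including this average, exactly as in the modular lemma. Writing \(P_2^{\bar\pi}:=P_2\bar\pi\), a Markov operator on state-action functions, we obtain
\[
\Delta_2=(\widetilde r_C-r_C^\star)+\gamma_2P_2^{\bar\pi}\Delta_2.
\]

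Since \(\gamma_2<1\) and \(P_2^{\bar\pi}\) is Markov, \(I-\gamma_2P_2^{\bar\pi}\) is invertible via its Neumann series. Therefore
\[
\Delta_2=(I-\gamma_2P_2^{\bar\pi})^{-1}(\widetilde r_C-r_C^\star).
\]
Apply Lemma~\ref{lem:discounted_resolvent_l2_stability} with \(K=P_2^{\bar\pi}\), \(\gamma=\gamma_2\), \(\rho=\rho_2\) and \(f=\widetilde r_C-r_C^\star\). The occupancy \(d^K_{\rho_2}\) coincides with \(d^{P_2}_{\bar\pi,\rho_2}\), whose density ratio is at most \(\kappa_2^{\rm path}\). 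This yields
\[
\|\Delta_2\|_{\rho_2}\le\frac{\sqrt{\kappa_2^{\rm path}}}{1-\gamma_2}\,\|\widetilde r_C-r_C^\star\|_{\rho_2},
\]
which is the claim.

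The main obstacle is justifying that the averaged operator is a Markov kernel induced by a policy covered by the pathwise concentrability assumption. The point is that a convex combination of softmax policies is a valid policy but generally not itself a softmax. If one insists on \(\Pi_{\rm path}\) containing only softmax policies, I would instead invoke the mean value theorem statewise. This fails for vector-valued maps, so the cleaner route is to define \(\Pi_{\rm path}\) to include path averages. I will state this interpretation explicitly. Everything else is routine and matches the modular proof.
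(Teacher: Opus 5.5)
Your proposal is correct and matches the paper's proof step for step. You subtract the two fixed-point equations and use the fundamental theorem of calculus to write $\Omega(\widetilde q_2^\star)-\Omega(q_2^\star)$ as averaging of $\Delta_2$ under a path-averaged policy $\bar\pi$. You then invert $I-\gamma_2P_2^{\bar\pi}$ and apply Lemma~\ref{lem:discounted_resolvent_l2_stability} with $\kappa_2^{\rm path}$, where the paper likewise simply asserts $\bar\pi\in\Pi_{\rm path}$.

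One small correction to your side remark: the statewise mean value theorem does not fail here. For each $s$, the map $t\mapsto\Omega(q_2^\star+t\Delta_2)(s)$ is scalar-valued, so that route also works and gives a policy that is a softmax at every state. The only cost is a state-dependent point $t_s$ on the segment rather than a single point.
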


\begin{proof}
Let
\[
\Delta_2:=\widetilde q_2^\star-q_2^\star,
\qquad
\Delta_r:=\widetilde r_C-r_C^\star.
\]
Subtracting the two target fixed-point equations gives
\[
\Delta_2
=
\Delta_r
+
\gamma_2P_2
\left[
\Omega_{\tau_2,\pi_{2,\rm ref}}(\widetilde q_2^\star)
-
\Omega_{\tau_2,\pi_{2,\rm ref}}(q_2^\star)
\right].
\]
By the fundamental theorem of calculus,
\[
\Omega_{\tau_2,\pi_{2,\rm ref}}(\widetilde q_2^\star)(s)
-
\Omega_{\tau_2,\pi_{2,\rm ref}}(q_2^\star)(s)
=
\int_0^1
\left\langle
\nabla\Omega_{\tau_2,\pi_{2,\rm ref}}(q_2^\star+t\Delta_2)(s,\cdot),
\Delta_2(s,\cdot)
\right\rangle
dt.
\]
The derivative \(\nabla\Omega_{\tau_2,\pi_{2,\rm ref}}(q)(s,\cdot)\) is the policy
induced by \(q\) at state \(s\). Therefore the averaged derivative along the
path defines a Markov state-action averaging operator, which we denote
\(P_2^{\bar\pi}\), with \(\bar\pi\in\Pi_{\rm path}\). Hence
\[
\Delta_2
=
\Delta_r+\gamma_2P_2^{\bar\pi}\Delta_2,
\]
or
\[
\Delta_2
=
(I-\gamma_2P_2^{\bar\pi})^{-1}\Delta_r.
\]
Applying Lemma~\ref{lem:discounted_resolvent_l2_stability} with
\[
P=P_2^{\bar\pi},
\qquad
\gamma=\gamma_2,
\qquad
\rho=\rho_2,
\qquad
f=\Delta_r,
\]
and using
\[
\left\|
\frac{d^{P_2}_{\bar\pi,\rho_2}}{\rho_2}
\right\|_\infty
\le
\kappa_2^{\rm path}
\]
proves the claim.
\end{proof}

\begin{corollary}[Behavior-policy estimation bias]
\label{cor:app_behavior_bias}
Under the assumptions of Lemmas~\ref{lem:app_source_perturb_learned_pi}--
\ref{lem:app_target_perturb_learned_pi},
\[
\|\widetilde q_2^\star-q_2^\star\|_{\rho_2}
\le
\frac{
(1+\sqrt{\kappa_{\mu\mid\pi_{b1}}})
\sqrt{\kappa_1\kappa_2^{\rm path}\chi_{21}}
}{
(1-\gamma_1)(1-\gamma_2)
}
\|\Delta_u\|_{\rho_1}.
\]
Consequently, on the event of Lemma~\ref{lem:app_kl_to_l2_log_policy},
\[
\|\widetilde q_2^\star-q_2^\star\|_{\rho_2}
\le
\frac{
(1+\sqrt{\kappa_{\mu\mid\pi_{b1}}})
\sqrt{\kappa_1\kappa_2^{\rm path}\chi_{21}}
}{
(1-\gamma_1)(1-\gamma_2)
}
\left(
\frac{2}{\epsilon}\varepsilon_{\rm CE}(n_b,\delta)
\right)^{1/2}.
\]
\end{corollary}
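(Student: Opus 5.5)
The corollary follows by chaining the three perturbation lemmas just proved. Write \(h=\widetilde q_1^\star-q_1^\star\), \(\Delta_r=\widetilde r_C-r_C^\star\), and \(\Delta_2=\widetilde q_2^\star-q_2^\star\). Then proceed as follows.

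\begin{enumerate}
\item Lemma~\ref{lem:app_target_perturb_learned_pi} gives
\(\|\Delta_2\|_{\rho_2}\le \frac{\sqrt{\kappa_2^{\rm path}}}{1-\gamma_2}\|\Delta_r\|_{\rho_2}\).
\item Lemma~\ref{lem:app_reward_perturb_learned_pi} then gives
\(\|\Delta_r\|_{\rho_2}\le(1+\sqrt{\kappa_{\mu\mid\pi_{b1}}})\sqrt{\chi_{21}}\,\|h\|_{\rho_1}\).
\item Finally, Lemma~\ref{lem:app_source_perturb_learned_pi} gives
\(\|h\|_{\rho_1}\le\frac{\sqrt{\kappa_1}}{1-\gamma_1}\|\Delta_u\|_{\rho_1}\).
\end{enumerate}

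Multiplying the three constants gives exactly
\(\frac{(1+\sqrt{\kappa_{\mu\mid\pi_{b1}}})\sqrt{\kappa_1\kappa_2^{\rm path}\chi_{21}}}{(1-\gamma_1)(1-\gamma_2)}\),
since \(\sqrt{\kappa_2^{\rm path}}\sqrt{\chi_{21}}\sqrt{\kappa_1}=\sqrt{\kappa_1\kappa_2^{\rm path}\chi_{21}}\). This proves the first display. Each lemma is deterministic and holds under its own hypotheses, which together are exactly the hypotheses assumed here, so no probabilistic argument is needed for this step.

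For the second display, restrict to the event of Lemma~\ref{lem:app_kl_to_l2_log_policy}. On that event \(\|\Delta_u\|_{\rho_1}\le(\frac{2}{\epsilon}\varepsilon_{\rm CE}(n_b,\delta))^{1/2}\). Substituting this into the first display finishes the proof.

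The only point needing care is the pathwise concentrability in Lemma~\ref{lem:app_target_perturb_learned_pi}. The averaged policy \(\bar\pi\) must lie in \(\Pi_{\rm path}\) for the segment between \(q_2^\star\) and \(\widetilde q_2^\star\). I would note that this segment is precisely the one in the lemma's hypothesis, so the condition is assumed directly rather than derived.
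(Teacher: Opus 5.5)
Your proposal is correct and is the same argument as the paper's: you chain the target, reward, and source perturbation lemmas and multiply their constants to get the first display. You then substitute the bound on \(\|\Delta_u\|_{\rho_1}\) from Lemma~\ref{lem:app_kl_to_l2_log_policy} to get the second.
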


\begin{proof}
Combine Lemma~\ref{lem:app_target_perturb_learned_pi},
Lemma~\ref{lem:app_reward_perturb_learned_pi}, and
Lemma~\ref{lem:app_source_perturb_learned_pi}. This gives
\[
\|\widetilde q_2^\star-q_2^\star\|_{\rho_2}
\le
\frac{\sqrt{\kappa_2^{\rm path}}}{1-\gamma_2}
(1+\sqrt{\kappa_{\mu\mid\pi_{b1}}})
\sqrt{\chi_{21}}
\frac{\sqrt{\kappa_1}}{1-\gamma_1}
\|\Delta_u\|_{\rho_1}.
\]
The second claim follows by substituting
Lemma~\ref{lem:app_kl_to_l2_log_policy}.
\end{proof}

\subsection{Proof of the end-to-end theorem}

\begin{proof}[Proof of Theorem~\ref{thm:end_to_end_learned_pi_b1}]
Let \((\widetilde q_1^\star,\widetilde q_2^\star)\) be the plug-in population
target induced by \(\widehat u_g\). By the triangle inequality,
\[
\|\widehat q_2^{\rm alg}-q_2^\star\|_{\rho_2}
\le
\|\widehat q_2^{\rm alg}-\widetilde q_2^\star\|_{\rho_2}
+
\|\widetilde q_2^\star-q_2^\star\|_{\rho_2}.
\]
The first term is controlled by Theorem~\ref{thm:global_q2_bounds}, applied to
the plug-in population system:
\[
\|\widehat q_2^{\rm alg}-\widetilde q_2^\star\|_{\rho_2}
\le
\sqrt{\mathcal B_{\rm alg}(\delta)}.
\]
The second term is controlled by
Corollary~\ref{cor:app_behavior_bias}. Combining the two bounds gives
\[
\|\widehat q_2^{\rm alg}-q_2^\star\|_{\rho_2}
\le
\sqrt{\mathcal B_{\rm alg}(\delta)}
+
\frac{
(1+\sqrt{\kappa_{\mu\mid\pi_{b1}}})
\sqrt{\kappa_1\kappa_2^{\rm path}\chi_{21}}
}{
(1-\gamma_1)(1-\gamma_2)
}
\left(
\frac{2}{\epsilon}\varepsilon_{\rm CE}(n_b,\delta)
\right)^{1/2}.
\]
This proves \eqref{eq:end_to_end_q2_learned_pi_b1}. The simplified
\(\epsilon_{b1}\)-version follows from
\(\kappa_{\mu\mid\pi_{b1}}\le1/\epsilon_{b1}\).
\end{proof}
\section{Experimental Details and Additional Results}
\label{app:experimental_details}

This appendix provides additional details for the experiments in
Section~\ref{sec:experiments}. The goal is to make the experimental pipeline
fully reproducible and to document the additional diagnostics supporting the
main empirical claims.

Sections~\ref{app:simulator_mdp}--\ref{app:shift_construction} describe the
construction of the tabular sepsis environment and the source--target dynamics
shift. Section~\ref{app:outcome_reward_expert_policy} explains how the simulator
outcome reward is used only to construct the expert-like source behavior policy,
while Section~\ref{app:oracle_construction} defines the oracle source and target
quantities used for evaluation. Section~\ref{app:data_generation} gives the data
generation protocol, and Section~\ref{app:training_details} describes the three
estimators and their optimization settings. Section~\ref{app:metric_definitions}
defines all reported metrics. Section~\ref{app:additional_source_size_results} reports
additional source-size experiments, and Section~\ref{app:temperature_sensitivity}
summarizes the effect of target soft-control temperature. Sections~\ref{app:v2_decomposition} and
\ref{app:reward_recovery_diagnostic} provide diagnostics explaining the large
target state-value improvement and the reward-recovery behavior of
Coupled-Offset.  Finally,
Sections~\ref{app:beta_zero_ablation} and \ref{app:large_shift_results} provide
two robustness checks: an ablation with \(\beta=0\) and an experiment under a
larger source--target transition shift. Section~\ref{app:additional_findings_summary}
summarizes the resulting empirical conclusions.

\subsection{Simulator and tabular MDP construction}
\label{app:simulator_mdp}

We use the Gumbel-Max structural causal model (SCM) simulator of
\citet{oberst2019counterfactual} as a controlled healthcare decision-making
benchmark for sequential sepsis treatment. The simulator models patient-state
evolution through clinical variables and binary treatment decisions. We use a
finite tabular reduction of the simulator with \(|\mathcal S|=128\) states and
\(|\mathcal A|=8\) actions. Each action corresponds to a vector of three binary
treatment decisions,
\[
a=(a_{\rm abx},a_{\rm vent},a_{\rm vaso})\in\{0,1\}^3,
\]
where the components indicate antibiotics, ventilation, and vasopressors. We
encode actions by
\[
\mathrm{idx}(a)=4a_{\rm abx}+2a_{\rm vent}+a_{\rm vaso}.
\]
Thus action \(0\) corresponds to no treatment, and action \(7\) corresponds to
all three treatments.

The upstream simulator code is available in the
\href{https://github.com/clinicalml/gumbel-max-scm}{\texttt{clinicalml/gumbel-max-scm}}
repository under the MIT License (copyright 2019 Michael Oberst). Our submitted
supplement includes the code and configuration files used to generate the
tabular environments and reproduce the reported figures and tables.

Each episode has horizon \(20\). We use the simulator to generate source and
target trajectories and then work with the induced finite state-action
representation. This tabular construction allows us to compute oracle quantities
such as \(q_1^\star\), \(r^\star\), \(q_2^\star\), \(\pi_2^\star\), and
\(V_2^\star\), while retaining a clinically motivated sequential decision
structure.

\begin{table}[h]
\centering
\caption{Summary of the experimental environment.}
\label{tab:env_summary}
\begin{tabular}{ll}
\toprule
Component & Description \\
\midrule
Simulator & Gumbel-Max SCM sepsis simulator \\
State space & \(128\) tabular states \\
Action space & \(8\) treatment actions from three binary decisions \\
Episode horizon & \(20\) steps \\
Source environment & Original simulator-induced dynamics \(P_1\) \\
Target environment & Shifted dynamics \(P_2\) with preserved support \\
anchor policy \(\mu\) & Point mass on no-treatment action \(a_0\) \\
Target reference policy & Uniform policy over actions \\
\bottomrule
\end{tabular}
\end{table}

Table~\ref{tab:env_summary} summarizes the resulting experimental environment.

\subsection{Source--target shift construction}
\label{app:shift_construction}

The source and target stages share the same tabular state-action space but use
different transition kernels and discount factors; the target stage is further
evaluated under different soft-control temperatures.
We construct the source transition kernel \(P_1\) from the original
simulator dynamics. The target transition kernel \(P_2\) is obtained by
perturbing the same clinical transition mechanisms in the SCM while preserving
support. Thus, if a transition is infeasible under \(P_1\), it remains infeasible
under \(P_2\); otherwise, its probability may change under the target
perturbation. This creates a controlled source--target distribution shift while
avoiding support mismatch.

In the tabular representation, this gives two Markov kernels
\[
P_1(s'\mid s,a),
\qquad
P_2(s'\mid s,a),
\]
defined on the same finite state-action space. The source environment is used
for reward recovery, while the target environment is used for evaluating the
transferred reward and the induced target soft-control policy.

As a diagnostic for the magnitude of the source--target dynamics shift, we
compute the average per-state-action total variation distance
\[
d_{\rm TV}^{\rm avg}(P_1,P_2)
:=
\frac{1}{|\mathcal S||\mathcal A|}
\sum_{s,a}
\frac12
\sum_{s'}
\left|
P_1(s'\mid s,a)-P_2(s'\mid s,a)
\right|,
\]
together with the worst-case per-state-action distance
\[
d_{\rm TV}^{\rm max}(P_1,P_2)
:=
\max_{s,a}
\frac12
\sum_{s'}
\left|
P_1(s'\mid s,a)-P_2(s'\mid s,a)
\right|.
\]
In the main experiments, these quantities are
\[
d_{\rm TV}^{\rm avg}(P_1,P_2)=0.015,
\qquad
d_{\rm TV}^{\rm max}(P_1,P_2)=0.026.
\]
Thus the one-step kernel perturbation is mild at each state-action pair.
However, because policies are evaluated over multi-step trajectories, this
local transition shift can accumulate over time and induce a nontrivial shift in
the state-occupancy distribution.

We also consider a larger source--target dynamics shift in Appendix
Section~\ref{app:large_shift_results}. For this larger-shift target kernel, the
corresponding distances are
\[
d_{\rm TV}^{\rm avg}(P_1,P_2)=0.088,
\qquad
d_{\rm TV}^{\rm max}(P_1,P_2)=0.157.
\]
This setting provides a robustness check for the qualitative conclusions
obtained under the mild-shift target kernel.

\begin{remark}[Support preservation]
Support preservation isolates the effect of statistical source-to-target error
propagation. Since \(P_1\) and \(P_2\) share support, performance differences are
not driven by target transitions that are completely absent in the source
environment. Instead, the experiment focuses on how errors in the source
reward-recovery equation propagate into the shifted target Bellman equation.
\end{remark}

\subsection{Simulator outcome reward and expert-policy construction}
\label{app:outcome_reward_expert_policy}

The simulator provides an outcome reward \(R(s')\) depending on the next state.
Following the simulator convention, the agent receives reward \(-1\) when the
next state has at least two abnormal physiological variables, and reward \(+1\)
when the next state satisfies the simulator's discharge condition. All other
next states receive reward \(0\):
\[
R(s')
=
\begin{cases}
+1, & \text{if all physiological variables are normal and no treatment is active},\\
-1, & \text{if \(s'\) has at least two abnormal physiological variables},\\
0, & \text{otherwise}.
\end{cases}
\]
This simulator outcome reward is used only to construct the expert-like source
behavior policy \(\pi_{b1}\). It is not the reward used for target-stage
learning or evaluation. The transferred reward used in the target stage is the
anchor-normalized reward \(r^\star(s,a)\), recovered from the source IRL
construction.

Given \(P_1\), we first form the expected one-step simulator outcome reward
\[
\bar R_1(s,a)
=
\mathbb E_{s'\sim P_1(\cdot\mid s,a)}[R(s')]
=
\sum_{s'}P_1(s'\mid s,a)R(s').
\]
An expert-like source policy \(\pi_{b1}\) is then constructed by solving an
entropy-regularized soft Bellman equation under \(P_1\) and \(\bar R_1\):
\[
Q_b(s,a)
=
\bar R_1(s,a)
+
\gamma_b
\sum_{s'}P_1(s'\mid s,a)
\Xi_{\tau_b}(Q_b)(s'),
\]
where
\[
\Xi_\tau(Q)(s)
=
\tau\log\sum_{a'\in\mathcal A}\exp\{Q(s,a')/\tau\}.
\]
The resulting soft policy is
\[
\pi_{b1}^{\rm soft}(a\mid s)
=
\frac{\exp(Q_b(s,a)/\tau_b)}
{\sum_{a'}\exp(Q_b(s,a')/\tau_b)}.
\]

This policy is used to generate source demonstrations and to define
\(u_b^\star(s,a)=\log\pi_{b1}(a\mid s)\) in the source IRL module.

\subsection{Oracle construction}
\label{app:oracle_construction}

We compute oracle source and target quantities using the tabular kernels. In the
source stage, let \(u_b^\star(s,a):=\log\pi_{b1}(a\mid s)\). We use the
point-mass distribution on the no-treatment action \(a_0\) as the anchor policy:
\[
\mu(a\mid s)=\mathbf 1\{a=a_0\}.
\]
The anchor statewise reward is
\[
g(s)=\bar R_1(s,a_0),
\]
We set the anchor function \(g\) to the desired no-treatment reward level, so
that the recovered reward satisfies \(r^\star(s,a_0)=g(s)\).

so the normalization condition fixes the reward of the no-treatment action. In
the no-treatment anchor case, the recovered reward can be written as an action
contrast against \(a_0\).

The oracle source value \(q_1^\star\) solves
\[
q_1^\star
=
u_b^\star-g+\gamma_1P_1^\mu q_1^\star,
\]
or equivalently
\[
q_1^\star=(I-\gamma_1P_1^\mu)^{-1}(u_b^\star-g).
\]
The normalized oracle reward is
\[
r^\star=(I-\Pi_\mu)q_1^\star+g.
\]
Under the no-treatment anchor, this reduces to
\[
r^\star(s,a)=q_1^\star(s,a)-q_1^\star(s,a_0)+g(s).
\]

For target experiments, we use the shifted reward
\[
r_C^\star=r^\star+C,
\]
where \(C\) is chosen so that \(r_C^\star(s,a)\ge0\) for all state-action pairs.
The constant shift is useful for the minimax formulation and does not change the
induced soft policy.

In the target stage, we take the reference policy \(\pi_{2,\rm ref}\) to be
uniform over actions, corresponding to entropy-regularized soft control. This
avoids introducing an additional target behavior-policy estimation step. The
target oracle \(q_2^\star\) solves
\[
q_2^\star
=
r_C^\star+\gamma_2P_2\Omega_{\tau_2,\pi_{2,\rm ref}}(q_2^\star),
\]
where
\[
\Omega_{\tau_2,\pi_{2,\rm ref}}(q)(s)
=
\tau_2\log\sum_{a\in\mathcal A}
\pi_{2,\rm ref}(a\mid s)\exp\{q(s,a)/\tau_2\}.
\]
The corresponding oracle policy is
\[
\pi_2^\star(a\mid s)
=
\frac{
\pi_{2,\rm ref}(a\mid s)\exp(q_2^\star(s,a)/\tau_2)
}{
\sum_{a'}\pi_{2,\rm ref}(a'\mid s)\exp(q_2^\star(s,a')/\tau_2)
}.
\]
Finally, the target oracle state value is
\[
V_2^\star(s)
=
\sum_{a\in\mathcal A}\pi_2^\star(a\mid s)q_2^\star(s,a).
\]

\subsection{Data generation and evaluation protocol}
\label{app:data_generation}

For each experimental configuration, we generate independent source and target
datasets. Each dataset consists of trajectories of length \(20\). The source
dataset is generated from
\[
s_0\sim \rho,\qquad
a_t\sim \pi_{b1}(\cdot\mid s_t),\qquad
s_{t+1}\sim P_1(\cdot\mid s_t,a_t),
\]
and the target dataset is generated from
\[
s_0\sim \rho,\qquad
a_t\sim \pi_t(\cdot\mid s_t),\qquad
s_{t+1}\sim P_2(\cdot\mid s_t,a_t).
\]
The target logging policy \(\pi_t\) is an exploratory policy used only for data
collection. In our implementation, it is constructed as a mixture of the
expert-like source policy and the uniform policy:
\[
\pi_t(a\mid s)
=
(1-\epsilon_2)\pi_{b1}(a\mid s)
+
\epsilon_2\frac1{|\mathcal A|},
\]
where $\epsilon_2=0.2$.
This gives better action coverage than the expert-like policy while preserving
a clinically meaningful behavior pattern.

The target dataset \(\mathcal D_2\) is fixed at \(25{,}000\) episodes. The source dataset
\(\mathcal D_1\) is varied as a fraction of \(12{,}500\) episodes:
\[
\mathcal D_1\text{ fraction}\in\{0.2,0.4,0.6,0.8,1.0\}.
\]
Thus \(\mathcal D_1\) fraction \(0.2\) corresponds to \(2{,}500\) source episodes, and
\(\mathcal D_1\) fraction \(1.0\) corresponds to \(12{,}500\) source episodes. This
choice ensures that the largest source dataset is only half the size of the
target dataset, reflecting the source-scarce regime motivated in the main text.

For each configuration, we report means and standard deviations over \(100\)
random experiments, corresponding to \(10\) independent dataset draws and
\(10\) optimization seeds per dataset draw. The same generated datasets are used
across all compared methods, so that performance differences are attributable to
the transfer procedure rather than to differences in sampled data.

The source behavior policy is estimated empirically from \(\mathcal D_1\) as
\[
\widehat\pi_{b1}(a\mid s)
=
\frac{
\#\{(s_t,a_t)\in D_1:s_t=s,\ a_t=a\}
}{
\#\{s_t\in D_1:s_t=s\}
}.
\]
Oracle quantities are
computed using the data-generating \(\pi_{b1}\).
The target
evaluation is performed under the fixed target environment
\[
(\rho_2,P_2,r_C^\star,\gamma_2,\tau_2).
\]

\subsection{Methods and optimization}
\label{app:training_details}

We compare three transfer procedures.

\paragraph{Modular.}
The modular estimator follows a two-stage reward-transfer pipeline. In the
first stage, it solves the source-side residual objective to obtain
\(\widehat q_1^{\rm mod}\). It then constructs the plug-in transferred reward
\[
\widehat r_{\rm mod}
=
(I-\Pi_\mu)\widehat q_1^{\rm mod}+g+C,
\]
where \(\Pi_\mu q(s,a)=q(s,a_0)\) in the anchor-action/no-treatment case. In the
second stage, this reward is held fixed and the target soft-control residual
objective is optimized to obtain \(\widehat q_2^{\rm mod}\) and the induced
target policy \(\widehat\pi_2^{\rm mod}\).

\paragraph{Coupled.}
The coupled estimator jointly optimizes the source and target residual
objectives. It treats \(q_1,l_1,q_2,l_2\) as trainable tabular variables and
solves the joint saddle-point objective
\[
\min_{q_1,q_2}\max_{l_1,l_2}
\frac{\beta}{2}\|q_1\|_{\rho_1}^2
+
\frac{1}{2}\|q_2\|_{\rho_2}^2
+
\langle l_1,b_1(q_1)\rangle_{\rho_1}
+
\langle l_2,b_2(q_1,q_2)\rangle_{\rho_2}.
\]
Here \(b_1\) is the source residual and \(b_2\) is the target residual using the
reward recovered from \(q_1\). This formulation allows target-side residual
information to feed back into the source representation.

\paragraph{Coupled-Offset.}
Coupled-Offset first runs the Modular procedure. It then initializes from the
modular solution and jointly learns tabular offset corrections:
\[
q_1 = \widehat q_1^{\rm mod} + \Delta q_1,
\qquad
q_2 = \widehat q_2^{\rm mod} + \Delta q_2 .
\]
The offset variables are initialized at zero, so the initial coupled solution is
exactly the Modular estimator. This baseline tests whether joint correction of a
modular initialization is sufficient to remove the main plug-in propagation
error.

\paragraph{Parameterization and initialization.}
All methods use a tabular parameterization over the state-action space
\(|\mathcal S|\times|\mathcal A|=128\times 8\). We set the discount factors to
\(\gamma_1=0.95\) and \(\gamma_2=0.975\), and vary the target soft-control
temperature \(\tau_2\) across experiments. Thus \(q_1,l_1,q_2,l_2\) are
represented as direct tabular tensors. To isolate the statistical transfer
effect from optimization failures, Modular and Coupled use the same controlled
initialization: the \(q\)- and \(l\)-tensors are initialized around a fixed
reference initialization with additive Gaussian noise of standard deviation
\(1.5\). Coupled-Offset is initialized from the trained Modular solution, and
its learned offset variables are initialized at zero.

\paragraph{Optimization.}
All optimization is performed with Adam on the tabular tensors. We use learning
rate \(10^{-3}\) for all \(q\)-variables and \(10^{-4}\) for all dual
\(l\)-variables. Each training round performs 10 ascent steps for the dual
variables and 1 descent step for the primal \(q\)-variables. We use the
empirical state-action weights \(\rho_d\) and optimize only over visited
state-action pairs. Gradients are unclipped in the reported runs. All runs are
performed on CPU with one PyTorch thread per worker.

\paragraph{Training horizon.}
For Modular, we train the source stage for \(40{,}000\) steps and the target
stage for \(70{,}000\) steps. For Coupled and Coupled-Offset, we train the joint
source--target objective for \(40{,}000\) steps. These horizons were chosen so
that the corresponding training objectives had converged; additional training
did not materially change the reported metrics. The joint objective uses
\(\beta=100\) as the quadratic penalty weight on the source value \(q_1\). This
choice is not tuned for performance, but is used to stabilize optimization by
balancing the different numerical scales of the source- and target-stage
quantities. We also do a ablation study when $\beta=0$ in Appendix~\ref{app:beta_zero_ablation}. We evaluate the training trajectory periodically and report the
final iterate.

The target soft-control operator uses
\[
\Omega_{\tau_2,\pi_{2,\rm ref}}(q)(s)
=
\tau_2\log\sum_a \pi_{2,\rm ref}(a\mid s)\exp(q(s,a)/\tau_2),
\]
and the learned policy is
\[
\widehat\pi_2(a\mid s)
=
\frac{\pi_{2,\rm ref}(a\mid s)\exp(\widehat q_2(s,a)/\tau_2)}
{\sum_{a'}\pi_{2,\rm ref}(a'\mid s)\exp(\widehat q_2(s,a')/\tau_2)},
\]
where \(\pi_{2,\rm ref}\) denotes the uniform policy over actions.

All methods are evaluated on the same generated datasets and random seeds. For
the source-size experiments, we use 10 random seeds and 10 independently drawn
source subsets per seed for each \(\mathcal D_1\) fraction.

\subsection{Metric definitions}
\label{app:metric_definitions}

We evaluate each method using target-side and source-side metrics.

\paragraph{Target policy regret.}
The target policy regret is
\[
J_2(\pi_2^\star)-J_2(\widehat\pi_2),
\]
where \(J_2\) is evaluated in the target environment using
\((P_2,r_C^\star,\gamma_2,\tau_2)\). Smaller regret indicates better downstream
target-control performance.

\paragraph{Target action-value error.}
The target action-value error is
\[
\|\widehat q_2-q_2^\star\|_{\rho_2}^2
=
\sum_{s,a}\rho_2(s,a)
\left(\widehat q_2(s,a)-q_2^\star(s,a)\right)^2.
\]

\paragraph{Target state-value error.}
For any policy-value pair \((\pi_2,q_2)\), the target state value is computed as
\[
V_2(s):=(\pi_2q_2)(s)
=
\sum_{a\in\mathcal A}\pi_2(a\mid s)q_2(s,a).
\]
The unweighted target state-value error is
\[
\|\widehat V_2-V_2^\star\|_{\rm unif}^2
=
\frac1{|\mathcal S|}
\sum_{s\in\mathcal S}
\left(\widehat V_2(s)-V_2^\star(s)\right)^2.
\]
This metric is uniform over states but remains policy-weighted over actions
through the construction of \(V_2\).

\paragraph{Source reward error.}
The source reward error is
\[
\|\widehat r-r^\star\|_{\rho_1}^2
=
\sum_{s,a}\rho_1(s,a)
\left(\widehat r(s,a)-r^\star(s,a)\right)^2.
\]

\paragraph{Source value error.}
The source value error is
\[
\|\widehat q_1-q_1^\star\|_{\rho_1}^2
=
\sum_{s,a}\rho_1(s,a)
\left(\widehat q_1(s,a)-q_1^\star(s,a)\right)^2.
\]

\paragraph{Relative improvement.}
In the relative-improvement plots, we report improvement over Modular as
\[
\frac{
\mathrm{Metric}_{\rm Modular}
-
\mathrm{Metric}_{\rm Method}
}{
\mathrm{Metric}_{\rm Modular}
}
\times 100\%.
\]
For all reported metrics, smaller values are better, so larger relative
improvement indicates better performance.

\begin{figure}[h]
    \centering
    \includegraphics[width=0.95\linewidth]{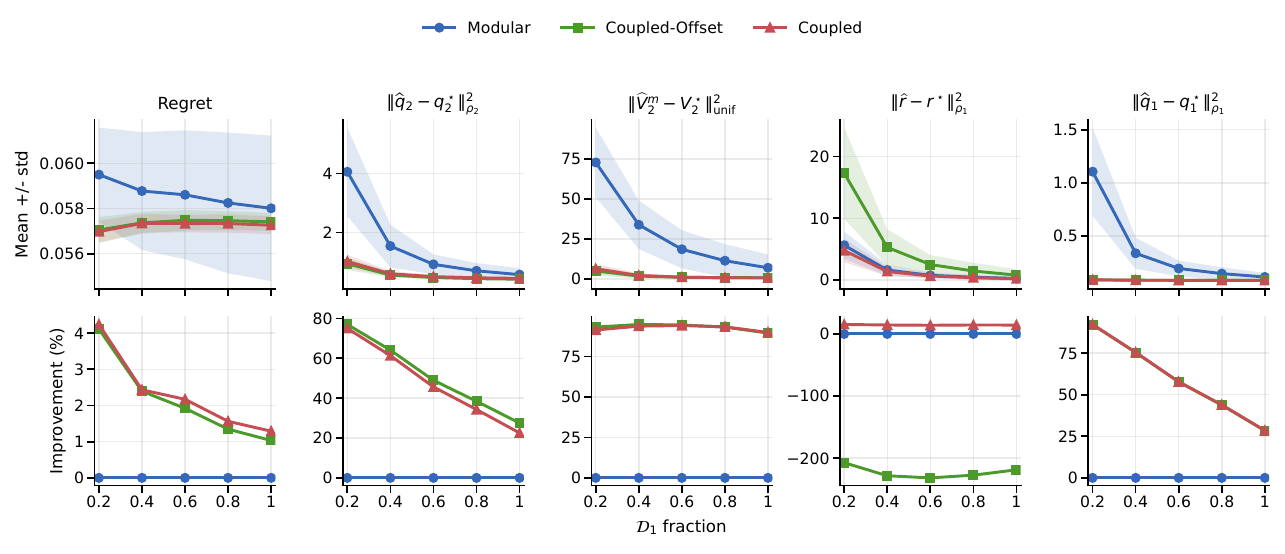}
    \caption{
    Effect of source sample size on modular and coupled transfer for
    \(\tau_2=0.2\): absolute metrics over \(\mathcal D_1\)
    fractions \(0.2\)--\(1.0\) (top) and percent improvement over Modular
    (bottom); larger is better.
    }
    \label{fig:d1_tau02}
\end{figure}

\begin{figure}[h]
    \centering
    \includegraphics[width=0.95\linewidth]{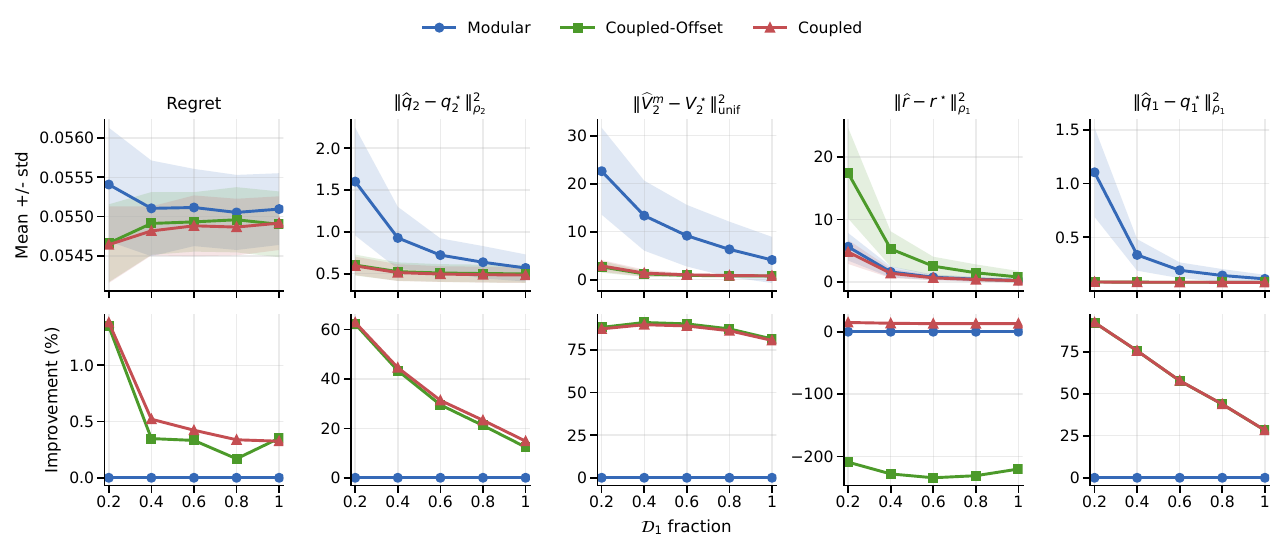}
    \caption{
    Effect of source sample size on modular and coupled transfer for
    \(\tau_2=0.4\): absolute metrics over \(\mathcal D_1\)
    fractions \(0.2\)--\(1.0\) (top) and percent improvement over Modular
    (bottom); larger is better.
    }
    \label{fig:d1_tau06}
\end{figure}

\subsection{Additional source-size results}
\label{app:additional_source_size_results}

Figure~\ref{fig:d1_subset} in the main text reports the source-size experiment
for \(\tau_2=0.05\). Figures~\ref{fig:d1_tau02}--\ref{fig:d1_tau06} show the
same experiment at higher target temperatures. We observe the same qualitative
behavior: the coupled methods are most beneficial when source data are scarce,
and the gap between coupled and modular transfer narrows as the source dataset
grows. This is consistent with the first-order error propagation theory: when
the source residual is large, modular plug-in transfer passes this
error directly into the target Bellman equation, while coupled transfer removes
the direct first-order source-residual channel.

\subsection{Temperature sensitivity}
\label{app:temperature_sensitivity}

Table~\ref{tab:target_transfer_tau} reports the main temperature sweep. The
target soft-control temperature \(\tau_2\) controls how concentrated the
oracle target policy is. Lower temperature produces a more deterministic policy,
while higher temperature produces a softer policy. In our experiments, the
average top-action probability of the oracle target policy,
\[
\frac1{|\mathcal S|}
\sum_{s\in\mathcal S}
\max_a \pi_2^\star(a\mid s),
\]
is \(0.844\), \(0.560\), and \(0.391\) for
\(\tau_2=0.05\), \(0.2\), and \(0.4\) respectively.

As shown in Table~\ref{tab:target_transfer_tau}, smaller \(\tau_2\) amplifies
the downstream effect of target \(q_2\)-estimation errors. The larger empirical
gains of the coupled methods in this regime suggest that Modular is more
affected by source-induced target errors, whereas the coupled methods are less
sensitive to the direct first-order source-residual propagation channel. This
behavior is consistent with Theorem~\ref{thm:policy_regret}: as \(\tau_2\)
decreases, the bound translating target \(q_2\)-estimation error into policy
regret becomes larger, scaling as \(1/\tau_2\).

\subsection{Value-error decomposition}
\label{app:v2_decomposition}

The main text observes that the reduction in \(V_2\) error is much larger than
the reduction in the overall \(\rho_2\)-weighted \(q_2\) MSE. This is because
the state-value metric averages the target action-value function under the
learned policy. Specifically,
\[
\widehat V_2(s)-V_2^\star(s)
=
\sum_a \widehat\pi_2(a\mid s)
\{\widehat q_2(s,a)-q_2^\star(s,a)\}
+
\sum_a
\{\widehat\pi_2(a\mid s)-\pi_2^\star(a\mid s)\}
q_2^\star(s,a).
\]
The first term is the policy-weighted \(q_2\) estimation error, while the second
term is the policy mismatch error. In the low-temperature setting
\(\tau_2=0.05\), the target policies are highly concentrated; the corresponding
oracle target policy has average top-action probability \(0.844\). Therefore,
the first term is close to the \(q_2\) error on the action selected by the
learned policy.

\begin{table}[h]
\centering
\caption{Decomposition of \(V_2\) error at \(\tau_2=0.05\) and \(\mathcal D_1\) fraction \(0.2\).}
\label{tab:v2_decomposition}
\begin{tabular}{lccc}
\toprule
Method
& \(\|\widehat V_2-V_2^\star\|_{\rm unif}^2\)
& Policy-weighted \(q_2\) term
& Policy mismatch term \\
\midrule
Modular & \(83.38\) & \(82.74\) & \(0.016\) \\
Coupled-Offset & \(6.415\) & \(6.38\) & \(0.016\) \\
Coupled & \(6.739\) & \(6.72\) & \(0.016\) \\
\bottomrule
\end{tabular}
\end{table}

Table~\ref{tab:v2_decomposition} reports this decomposition at
\(\tau_2=0.05\) and \(\mathcal D_1\) fraction \(0.2\). The policy mismatch term is small
for all methods. Thus, the large \(V_2\) error of Modular is primarily caused by
large \(q_2\) error on policy-selected actions, rather than by differences
between \(\widehat\pi_2\) and \(\pi_2^\star\).

\subsection{Reward-recovery diagnostic}
\label{app:reward_recovery_diagnostic}

The main text notes that Coupled-Offset can have much larger reward error than
Coupled, despite having nearly identical weighted \(q_1\) MSE. This is due to
the contrast structure of anchor-normalized reward recovery. Under the
no-treatment anchor,
\[
\widehat r(s,a)-r^\star(s,a)
=
(\widehat q_1-q_1^\star)(s,a)
-
(\widehat q_1-q_1^\star)(s,a_0),
\]
where \(a_0\) is the no-treatment action. Therefore, an error in the
no-treatment baseline \(q_1(s,a_0)\) affects the recovered reward for every
action \(a\). As a result, small weighted \(q_1\) error does not necessarily
imply accurate reward recovery.

\begin{table}[h]
\centering
\caption{Reward contrast diagnostic at \(\tau_2=0.05\) and \(\mathcal D_1\) fraction \(0.2\).}
\label{tab:reward_contrast_diagnostic}
\begin{tabular}{lccc}
\toprule
Method
& \(\|\widehat q_1-q_1^\star\|_{\rho_1}^2\)
& No-treatment \(q_1\) MSE
& \(\|\widehat r-r^\star\|_{\rho_1}^2\) \\
\midrule
Coupled-Offset & \(0.083\) & \(489.2\) & \(18.62\) \\
Coupled & \(0.083\) & \(121.8\) & \(4.76\) \\
\bottomrule
\end{tabular}
\end{table}

Table~\ref{tab:reward_contrast_diagnostic} illustrates this effect at
\(\tau_2=0.05\) and \(\mathcal D_1\) fraction \(0.2\). Coupled-Offset and Coupled have
nearly identical weighted \(q_1\) MSE, but Coupled-Offset has a much larger
no-treatment baseline error. This explains its substantially larger reward MSE.

\begin{figure}[t]
    \centering
    \includegraphics[width=0.9\linewidth]{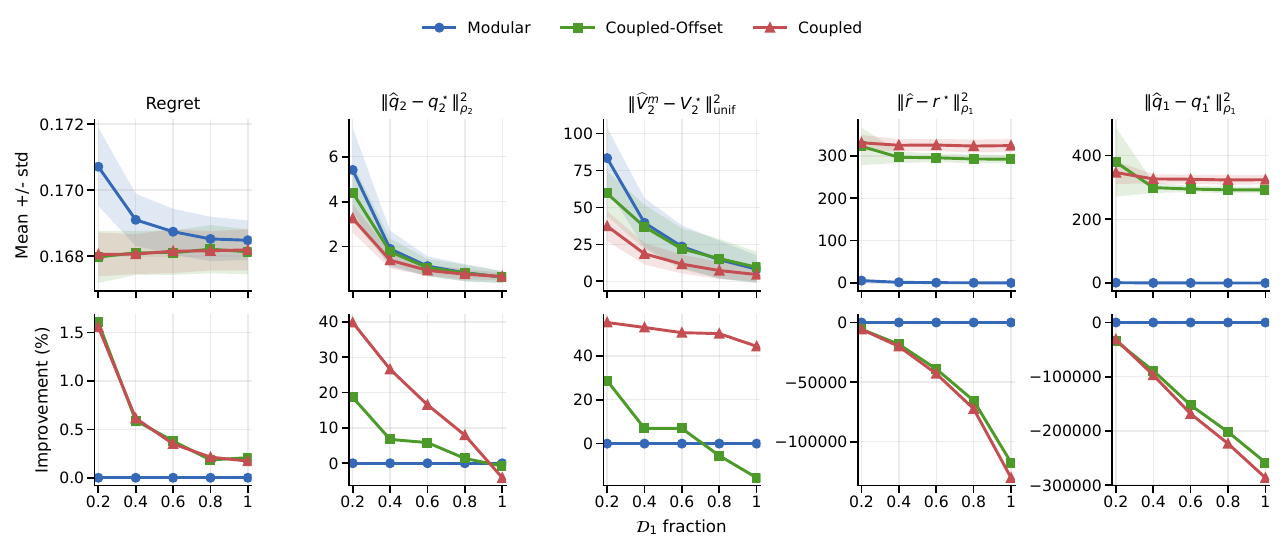}
    \caption{
    Comparison of Modular, Coupled-Offset, and Coupled with \(\beta=0\) under
    the mild-shift target kernel \(P_2\) at \(\tau_2=0.05\): absolute metrics over \(\mathcal D_1\)
    fractions \(0.2\)--\(1.0\) (top) and percent improvement over Modular
    (bottom); larger is better.
    }
    \label{fig:beta0_tau005}
\end{figure}

\begin{figure}[t]
    \centering
    \includegraphics[width=0.9\linewidth]{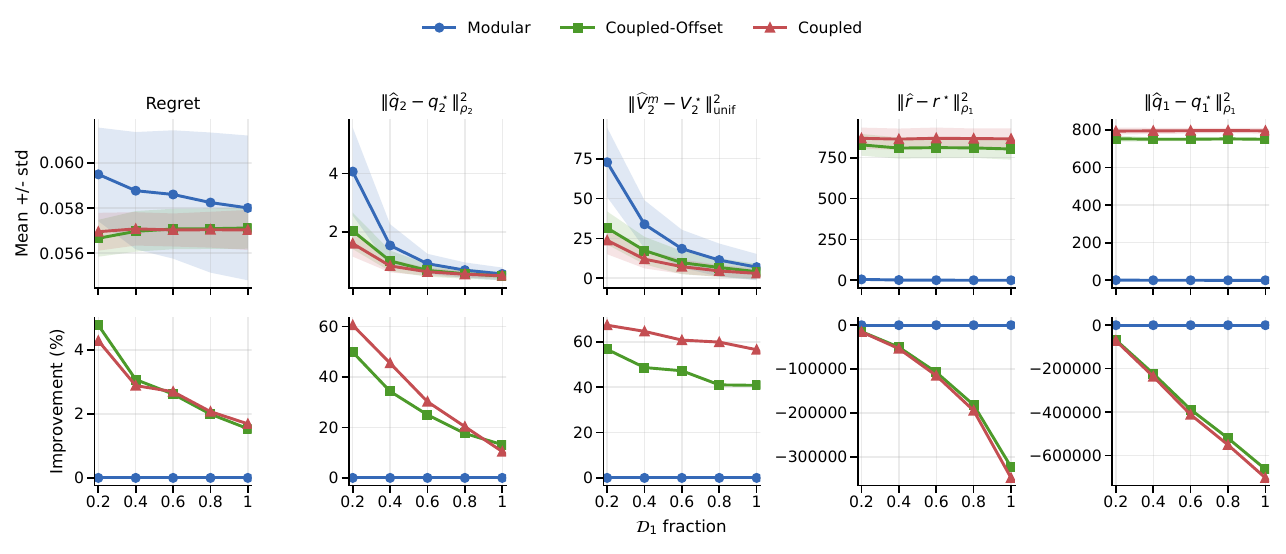}
    \caption{
    Comparison of Modular, Coupled-Offset, and Coupled with \(\beta=0\) under
    the mild-shift target kernel \(P_2\) at \(\tau_2=0.2\): absolute metrics over \(\mathcal D_1\)
    fractions \(0.2\)--\(1.0\) (top) and percent improvement over Modular
    (bottom); larger is better.
    }
    \label{fig:beta0_tau02}
\end{figure}

\begin{figure}[t]
    \centering
    \includegraphics[width=0.9\linewidth]{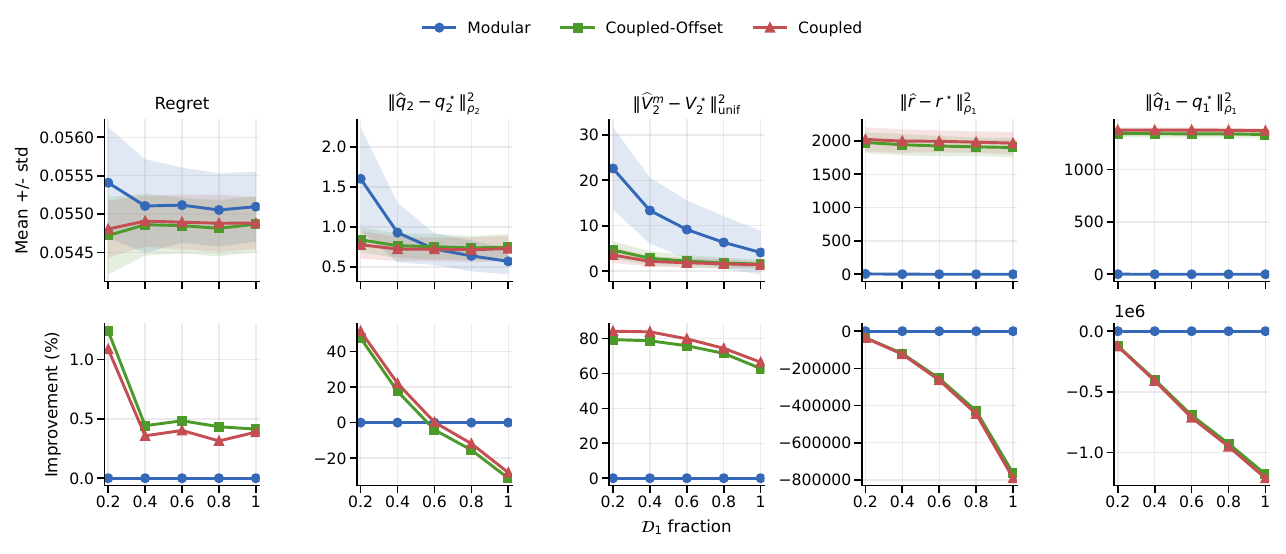}
    \caption{
    Comparison of Modular, Coupled-Offset, and Coupled with \(\beta=0\) under
    the mild-shift target kernel \(P_2\) at \(\tau_2=0.4\): absolute metrics over \(\mathcal D_1\)
    fractions \(0.2\)--\(1.0\) (top) and percent improvement over Modular
    (bottom); larger is better.
    }
    \label{fig:beta0_tau04}
\end{figure}

\begin{figure}[t]
    \centering
    \includegraphics[width=0.9\linewidth]{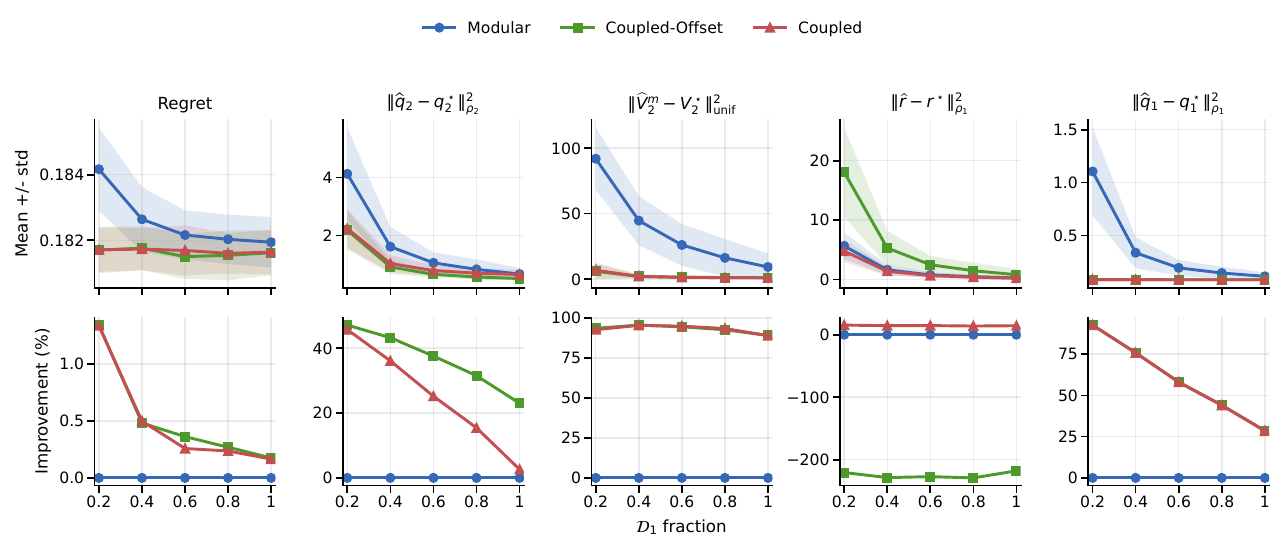}
    \caption{
    Comparison of Modular, Coupled-Offset, and Coupled under the larger-shift
    target kernel \(P_2\) at \(\tau_2=0.05\): absolute metrics over \(\mathcal D_1\)
    fractions \(0.2\)--\(1.0\) (top) and percent improvement over Modular
    (bottom); larger is better.
    }
    \label{fig:large_tv_tau005}
\end{figure}

\subsection{Ablation on the source quadratic penalty: \(\beta=0\)}
\label{app:beta_zero_ablation}

We conduct an ablation study to examine the role of the source-side quadratic
penalty in the coupled objective. Specifically, we set the source penalty weight
to \(\beta=0\), keep the mild-shift target kernel \(P_2\) used in the main
experiments, and compare the same three methods,\emph{Modular},
\emph{Coupled-Offset}, and \emph{Coupled}, at target temperatures
\(\tau_2\in\{0.05,0.2,0.4\}\). For this mild-shift setting,
\(d_{\rm TV}^{\rm avg}(P_1,P_2)=0.01461\) and
\(d_{\rm TV}^{\rm max}(P_1,P_2)=0.026\). All other aspects of the
data-generation and optimization setup are kept unchanged.

Figures~\ref{fig:beta0_tau005}--\ref{fig:beta0_tau04} summarize the results.
The main observation is that when \(\beta=0\), the coupled methods still control
the target-side quantities well: both the target value error
\(\|\widehat q_2-q_2^\star\|_{\rho_2}^2\) and the target policy regret remain
competitive across all three temperatures. In contrast, the source-side
quantities deteriorate substantially: the reward error
\(\|\widehat r-r^\star\|_{\rho_1}^2\) and the source value error
\(\|\widehat q_1-q_1^\star\|_{\rho_1}^2\) are no longer well controlled.

This behavior is consistent with the coupled minimax objective. When
\(\beta>0\), the primal objective contains an explicit quadratic penalty on
\(q_1\), which helps stabilize the source component and ties the coupled
solution more tightly to the oracle source representation. When \(\beta=0\),
this direct control disappears. Consequently, unlike the \(\beta>0\) case in
Theorem~\ref{thm:population_saddle_kkt}, the primal--dual gap
\(\mathcal L_{\rm coup}^{\beta}(q_1,q_2,l_1^\star,l_2^\star)
-
\mathcal L_{\rm coup}^{\beta}(q_1^\star,q_2^\star,l_1^\star,l_2^\star)\) no longer directly
controls the source error \(\|q_1-q_1^\star\|_{\rho_1}^2\). Consequently, the optimization can still find source
representations that induce a good target policy and accurate \(q_2\), while
allowing substantial drift in \(q_1\) and the recovered reward \(r\).

In short, this ablation shows that the source quadratic penalty is not crucial
for obtaining strong target-stage performance, but it is important for accurate
source recovery and for stabilizing the learned source representation.

\subsection{Robustness under a larger source--target dynamics shift}
\label{app:large_shift_results}

We next examine robustness under a larger source--target dynamics shift. In this
experiment, we replace the mild-shift target kernel used in the main text with a
larger-shift target kernel \(P_2\), described in
Section~\ref{app:shift_construction}. This larger shift has
\(d_{\rm TV}^{\rm avg}(P_1,P_2)=0.08766\) and
\(d_{\rm TV}^{\rm max}(P_1,P_2)=0.15675\). We again compare
\emph{Modular}, \emph{Coupled-Offset}, and \emph{Coupled} at
$\tau_2=0.05$, keeping all other aspects of the setup
unchanged.

Figures~\ref{fig:large_tv_tau005} shows that the same
qualitative conclusions continue to hold under the larger transition shift. In
particular, the coupled methods remain superior to the Modular estimator on the
target-side metrics, and Coupled-Offset continues to closely track Coupled on
most target-side quantities. Thus, the main empirical phenomenon in the paper is
not specific to the mild-shift regime.

The source-side diagnostic is also qualitatively unchanged. In particular,
Coupled-Offset can still match Coupled well on target-side performance while
showing noticeably worse reward recovery than Coupled. This is again consistent
with the contrast-based structure of reward recovery: accurate target transfer
does not by itself guarantee accurate source reward reconstruction.

Overall, these results show that the empirical advantages of coupled minimax
transfer are robust to a substantially larger one-step transition shift. The
same source-to-target error-propagation picture observed in the main text
persists beyond the mild-shift setting.

\subsection{Summary of additional findings}
\label{app:additional_findings_summary}

The additional diagnostics support five conclusions. First, coupled transfer is
most useful in the source-scarce regime, where source residuals have higher
variance and modular plug-in transfer is most vulnerable to first-order
source-to-target error propagation. Second, the \(V_2\) error reduction is
driven primarily by improved \(q_2\) estimation on policy-selected actions, not
by changes in policy mismatch. Third, reward recovery must be evaluated through
action contrasts: small weighted \(q_1\) error can hide large baseline-action
errors that induce large reward error after anchor normalization. Fourth, the
\(\beta=0\) ablation shows that the source quadratic penalty is important for
stabilizing \(q_1\) and the recovered reward, even though the coupled methods
can still control target-side quantities such as \(q_2\) and regret. Finally,
the larger-shift experiments show that the qualitative advantage of coupled
transfer persists under a substantially larger source--target transition shift.

\section{Limitations and Future Work}
\label{app:future_work}

Our analysis assumes that the source-stage temperature \(\tau_1\) in the IRL
model is known, following a common convention in the literature. In practice,
however, demonstration datasets may have different levels of stochasticity,
suboptimality, or measurement noise, which correspond to different effective
temperatures. Misspecifying \(\tau_1\) may therefore affect both reward recovery
and downstream target performance.

An important direction for future work is to relax this assumption by estimating
or adapting \(\tau_1\) from data. Possible approaches include jointly learning
\(\tau_1\) with the reward representation, calibrating \(\tau_1\) using a
validation criterion, or designing reward-transfer algorithms that remain stable
when \(\tau_1\) is misspecified. These extensions would make the framework
better suited to real-world demonstrations collected from heterogeneous sources,
such as different experts, institutions, or noise levels.

The theory also relies on realizability, coverage, boundedness, and known
source- and target-reference policies. These assumptions isolate the statistical
effect studied here: how source IRL error propagates into target control and how
coupling changes that propagation. The experiments use a finite tabular
reduction of one synthetic sepsis simulator, which makes oracle diagnostics
available but does not test function approximation, continuous states or actions,
or real clinical deployment. Extending the analysis and algorithms to those
settings is an important next step.

\end{document}